\documentclass[12pt]{article}

\usepackage[numbers, compress]{natbib}
\bibliographystyle{abbrvnat}

\usepackage[utf8]{inputenc} % allow utf-8 input
\usepackage[T1]{fontenc}    % use 8-bit T1 fonts
\usepackage{hyperref}       % hyperlinks
\usepackage{url}            % simple URL typesetting
\usepackage{booktabs}       % professional-quality tables
\usepackage{amsfonts}       % blackboard math symbols
\usepackage{nicefrac}       % compact symbols for 1/2, etc.
\usepackage{microtype}      % microtypography
\usepackage{xcolor}         % colors

% custom packages
\usepackage{amsmath, nccmath, amssymb}
\usepackage{amsthm}
\usepackage[ruled]{algorithm2e}
\usepackage{cleveref}
\usepackage{graphicx}
\usepackage{tcolorbox}
\usepackage{mathrsfs}
\usepackage{graphbox}
\usepackage{caption}
\usepackage{subcaption}
\usepackage{wrapfig}
\usepackage{float}
\usepackage{enumitem}
\usepackage{tabu}
\usepackage{doi}
\usepackage{tikz}
\usepackage[mode=buildnew]{standalone}
\usepackage{pgfplots}

\newtheorem{theorem}{Theorem}
\newtheorem{lemma}{Lemma}
\newtheorem{assumption}{Assumption}
\newtheorem{definition}{Definition}
\newtheorem{proposition}{Proposition}
\newtheorem{corollary}{Corollary}

\DeclareMathOperator{\tr}{tr}
\DeclareMathOperator{\poly}{poly}

\newtcolorbox{boxedeq}[1][]{colback=white, sharp corners, fonttitle=\bfseries, title=#1, center title}
\newcommand*\cvec[2]{\begin{pmatrix} #1 \\ #2\end{pmatrix}}
\newcommand{\norm}[1]{\left\lVert #1 \right\rVert}
\newcommand{\abs}[1]{\left\lvert #1 \right\rvert}
\newcommand{\qqtext}[1]{\qquad\text{#1}\qquad}
\newcommand{\eval}[1]{\left.#1\right\rvert}
\newcommand{\E}{\mathbb{E}}
\newcommand{\jnote}[1]{}
\newcommand{\anote}[1]{}
\newcommand{\tnote}[1]{}

\ifdefined\usebigfont
%\usepackage[yyyymmdd,hhmmss]{datetime}
%\AtBeginDocument{Compiled on {\ddmmyyyydate\today} at \currenttime  \\}

 % just to make sure
\usepackage{times}
\usepackage[fontsize=13pt]{scrextend}
\usepackage[letterpaper,left=1.56in,right=1.56in,top=1.71in,bottom=1.77in]{geometry}
\newgeometry{letterpaper,left=1.56in,right=1.56in,top=1.71in,bottom=1.77in}
%\AtBeginDocument{\newgeometry{letterpaper,left=1.56in,right=1.56in,top=1.71in,bottom=1.77in}}
%\pagenumbering{gobble}
%\AtBeginDocument{\eject \pdfpagewidth=8.5in \pdfpageheight=11in}
%\usepackage[shortlabels]{enumitem}
%\setlist[itemize]{leftmargin=*}
%\setlist[enumerate]{leftmargin=*}
\else
\usepackage{times}
\usepackage[margin=1.25in]{geometry}
\fi

\begin{document}

% paragraph formatting
\setlength{\parindent}{0pt}
\setlength{\parskip  }{5.5pt}

\title{Label Noise SGD Provably Prefers Flat Global Minimizers}

\author{%
  Alex Damian \\
  Princeton University\\
  \texttt{ad27@princeton.edu}
  \and
  Tengyu Ma \\
  Stanford University \\
  \texttt{tengyuma@stanford.edu} \\
  \and
  Jason D. Lee \\
  Princeton University \\
  \texttt{jasonlee@princeton.edu}
}

\maketitle

\begin{abstract}
In overparametrized models, the noise in stochastic gradient descent (SGD) implicitly regularizes the optimization trajectory and determines which local minimum SGD converges to. Motivated by empirical studies that demonstrate that training with noisy labels improves generalization, we study the implicit regularization effect of SGD with label noise. We show that SGD with label noise converges to a stationary point of a regularized loss $L(\theta) +\lambda R(\theta)$, where $L(\theta)$ is the training loss, $\lambda$ is an effective regularization parameter depending on the step size, strength of the label noise, and the batch size, and $R(\theta)$ is an explicit regularizer that penalizes sharp minimizers. Our analysis uncovers an additional regularization effect of large learning rates beyond the linear scaling rule that penalizes large eigenvalues of the Hessian more than small ones. We also prove extensions to classification with general loss functions, SGD with momentum, and SGD with general noise covariance, significantly strengthening the prior work of \citet{blanc2019implicit} to global convergence and large learning rates and of \citet{haochen2020shape} to general models.
\end{abstract}

\section{Introduction}
One of the central questions in modern machine learning theory is the generalization capability of overparametrized models trained by stochastic gradient descent (SGD). Recent work identifies the implicit regularization effect due to the optimization algorithm as one key factor in explaining the generalization of overparameterized models~\citep{soudry2018implicit,gunasekar2018characterizing,li2017algorithmic,gunasekar2017implicit}. This implicit regularization is controlled by many properties of the optimization algorithm including search direction~\citep{gunasekar2018characterizing}, learning rate~\citep{li2019towards}, batch size~\citep{smith2017don}, momentum~\citep{liu2019bad} and dropout~\citep{mianjy2018implicit}.

The parameter-dependent noise distribution in SGD is a crucial source of regularization~\citep{keskar2016large,lecun2012efficient}. \citet{blanc2019implicit} initiated the study of the regularization effect of label noise SGD with square loss\footnote{Label noise SGD computes the stochastic gradient by first drawing a sample $(x_i,y_i)$, perturbing $y'_i= y_i+\epsilon$ with $\epsilon \sim \{-\sigma,\sigma\}$, and computing the gradient with respect to $(x_i,y'_i)$.} by characterizing the local stability of global minimizers of the training loss. By identifying a data-dependent regularizer $R(\theta)$, \citet{blanc2019implicit} proved that label noise SGD locally diverges from the global minimizer $\theta^\ast$ if and only if $\theta^\ast$ is not a first-order stationary point of $$\min_\theta R(\theta) \text{ subject to } L(\theta) = 0.$$ The analysis is only able to demonstrate that with sufficiently small step size $\eta$, label noise SGD initialized at $\theta^\ast$ locally diverges by a distance of $\eta^{0.4}$ and correspondingly decreases the regularizer by $\eta^{0.4}$. This is among the first results that establish that the noise distribution alters the local stability of stochastic gradient descent. 
However, the parameter movement of $\eta^{0.4}$ is required to be inversely polynomially small in dimension and condition number and is thus  too small to affect the predictions of the model.

\citet{haochen2020shape}, motivated by the local nature of \citet{blanc2019implicit}, analyzed label noise SGD in the quadratically-parametrized linear regression model~\citep{vaskevicius2019implicit,woodworth2020kernel,moroshko2020implicit}. Under a well-specified sparse linear regression model and with isotropic features, \citet{haochen2020shape} proved that label noise SGD recovers the sparse ground-truth despite overparametrization, which demonstrated a global implicit bias towards sparsity in the quadratically-parametrized linear regression model.

This work seeks to identify the global implicit regularization effect of label noise SGD. Our primary result, which supports \citet{blanc2019implicit}, proves that label noise SGD converges to a stationary point of $L(\theta) +\lambda R(\theta)$, where the regularizer $R(\theta)$ penalizes sharp regions of the loss landscape.

The focus of this paper is on label noise SGD due to its strong regularization effects in both real and synthetic experiments~\citep{shallue2018measuring,szegedy2016rethinking,wen2019interplay}. Furthermore, label noise is used in large-batch training as an additional regularizer~\citep{shallue2018measuring} when the regularization from standard regularizers (e.g. mini-batch, batch-norm, and dropout) is not sufficient. Label noise SGD is also known to be less sensitive to initialization, as shown in \citet{haochen2020shape}. 
%Similarly, feature noise~\citep{liu2019bad}\footnote{The authors implemented feature noise $x_i \to x_i+N(0,\sigma^2)$ which is first-order equivalent to a mean zero noise on $y_i$.} can escape adversarial initializations\tnote{this sentence perhaps should be moved after the next two sentences; or even removing this sentence to appendix is okay; it seems to break the flow}. 
In stark contrast, mini-batch SGD remains stuck when initialized at any poor global minimizer. Our analysis demonstrates a global regularization effect of label noise SGD by proving it converges to a stationary point of a regularized loss $ L(\theta) +\lambda R(\theta)$, even when initialized at a zero error global minimum.

The learning rate and minibatch size in SGD are also known to be important sources of regularization~\citep{goyal2017accurate}.
%In large-batch training, the learning rate is generally much smaller in the initial epochs to prevent divergence, but this leads to poorer generalization error.
Our main theorem highlights the importance of learning rate and batch size as the hyperparameters that control the balance between the loss and the regularizer -- larger learning rate and smaller batch size leads to stronger regularization.

\Cref{sec:setup} reviews the notation and assumptions used throughout the paper. \Cref{sec:main} formally states the main result and \Cref{sec:sketch} sketches the proof. \Cref{sec:experiments} presents experimental results which support our theory. Finally, \Cref{sec:discussion} discusses the implications of this work.

\section{Problem Setup and Main Result}
\label{sec:setup}

\Cref{sec:notation} describes our notation and the SGD with label noise algorithm. \Cref{sec:regularizer} introduces the explicit formula for the regularizer $R(\theta)$. \Cref{sec:epsgammasp,sec:main} formally state our main result.

\subsection{Notation}\label{sec:notation}

We focus on the regression setting (see \Cref{sec:classification} for the extension to the classification setting). Let $\{(x_i,y_i)\}_{i \in [n]}$ be $n$ datapoints with $x_i \in \mathcal{D}$ and $y_i \in \mathbb{R}$. Let $f:\mathcal{D} \times \mathbb{R}^d \to \mathbb{R}$ and let $f_i(\theta) = f(x_i,\theta)$ denote the value of $f$ on the datapoint $x_i$. Define $\ell_i(\theta) = \frac{1}{2}\left(f_i(\theta)-y_i\right)^2$ and $L(\theta) = \frac{1}{n} \sum_{i=1}^n \ell_i(\theta)$.
%\begin{align*}
%	\ell_i(\theta) = \frac{1}{2}\left(f_i(\theta)-y_i\right)^2,	\qqtext{and} L(\theta) = \frac{1}{n} \sum_{i=1}^n \ell_i(\theta).
%\end{align*}
Then we will follow \Cref{alg:sgdln} which adds fresh additive noise to the labels $y_i$ at every step before computing the gradient:

\begin{algorithm}[H]
\SetAlgoLined
\KwIn{$\theta_0$, step size $\eta$, noise variance $\sigma^2$, batch size $B$, steps $T$}
\For{$k = 0$ to $T-1$}
{
Sample batch $\mathcal{B}^{(k)} \subset [n]^{B}$ uniformly and label noise $\epsilon^{(k)}_i \sim \{-\sigma,\sigma\}$ for $i \in \mathcal{B}^{(k)}$. \\
Let $\hat{\ell}_i^{(k)}(\theta) = \frac{1}{2} \left(f_{i}(\theta)-y_{i}-\epsilon^{(k)}_i\right)^2$ and $\hat L^{(k)} = \frac{1}{B} \sum_{i \in \mathcal{B}^{(k)}} \hat{\ell}_i^{(k)}$. \\
$\theta_{k+1} \leftarrow \theta_k - \eta \nabla \hat{L}^{(k)}(\theta_k)$
}
\caption{SGD with Label Noise}
\label{alg:sgdln}
\end{algorithm}

Note that $\sigma$ controls the strength of the label noise and will control the strength of the implicit regularization in \Cref{thm:sgdsp}. Throughout the paper we will use $\|\cdot\| = \|\cdot\|_2$. We make the following standard assumption on $f$:
\begin{assumption}[Smoothness]\label{asm:smooth}
	We assume that each $f_i$ is $\ell_f$-Lipschitz, $\nabla f_i$ is $\rho_f$-Lipschitz, and $\nabla^2 f_i$ is $\kappa_f$-Lipschitz with respect to $\|\cdot\|_2$ for $i = 1,\ldots,n$.
\end{assumption}
%\anote{I added this paragraph so that $\ell$ upper bounds $\|G\|_2$ so that I can state all results in terms of $2/\ell$ instead of $2/\ell^2$ where $\ell$ used to be Lipschitz of $f$ which is probably confusing to readers.}
We will define $\ell = \ell_f^2$ to be an upper bound on $\|\frac{1}{n} \sum_i \nabla f_i(\theta)\nabla f_i(\theta)^T\|_2$, which is equal to $\|\nabla^2 L(\theta)\|_2$ at any global minimizer $\theta$. Our results extend to any learning rate $\eta \in (0,\frac{2}{\ell})$. However, they do not extend to the limit as $\eta \to \frac{2}{\ell}$. Because we still want to track the dependence on $\frac{1}{\eta}$, we do not assume $\eta$ is a fixed constant and instead assume some constant separation:
\begin{assumption}[Learning Rate Separation]\label{asm:eta}
	There exists a constant $\nu \in (0,1)$ such that $\eta \le \frac{2-\nu}{\ell}$.
\end{assumption}
%With momentum parameter $\gamma$, our results hold for any $\eta \in (0,\frac{2(1+\gamma)}{\ell})$ and we assume the following separation:
%\begin{assumption}[Learning Rate Separation (Momentum)]\label{asm:etagamma}
%	There exists an absolute constant $\nu \in (0,1)$ such that the smallest eigenvalue of the matrix $\begin{bmatrix}
%			1+\gamma - \eta \ell & -\gamma \\
%			1 & 0
%		\end{bmatrix}$ is at least $-1 + \nu$.
%\end{assumption}

In addition, we make the following local Kurdyka-\L ojasiewicz assumption (KL assumption) which ensures that there are no regions where the loss is very flat. The KL assumption is very general and holds for some $\delta>0$ for any analytic function defined on a compact domain (see \Cref{lem:klanalytic}).

\begin{assumption}[KL]\label{asm:kl}
	Let $\theta^*$ be any global minimizer of $L$. Then there exist $\epsilon_{KL}>0,\mu > 0$ and $0 < \delta \le 1/2$ such that if $L(\theta) - L(\theta^*) \le \epsilon_{KL}$, then $L(\theta) - L(\theta^*) \le \mu \| \nabla L(\theta) \|^{1+\delta}$.
\end{assumption}

We assume $L(\theta^*) = 0$ for any global minimizer $\theta^*$. Note that if $L$ satisfies \Cref{asm:kl} for some $\delta$ then it also satisfies \Cref{asm:kl} for any $\delta'<\delta$. \Cref{asm:kl} with $\delta = 1$ is equivalent to the much stronger Polyak-\L ojasiewicz condition which is equivalent to local strong convexity.

We will use $O,\Theta,\Omega$ to hide any polynomial dependence on $ \mu, \ell_f, \rho_f, \kappa_f, \nu, 1/\sigma, n, d$ and $\tilde O$ to hide additional polynomial dependence on $\log 1/\eta, \log B$. 

\subsection{The Implicit Regularizer $R(\theta)$}\label{sec:regularizer}

For $L,\sigma^2,B,\eta$ as defined above, we define the implicit regularizer $R(\theta)$, the effective regularization parameter $\lambda$, and the regularized loss $\tilde{L}(\theta)$:
\begin{align}
	R(\theta) = -\frac{1}{2 \eta}\tr \log \left(1-\frac{\eta}{2} \nabla^2 L(\theta)\right), && \lambda = \frac{\eta \sigma^2}{B}, && \tilde{L}(\theta) = L(\theta) + \lambda R(\theta). \label{eq:regdef}
\end{align}

\begin{figure}
%	\vspace{-12pt}
	\centering
	\begin{minipage}[c]{0.36\textwidth}
    	\includestandalone[width=\linewidth]{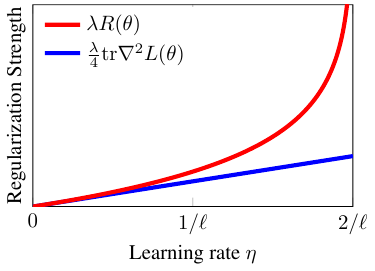}
	\end{minipage}\hfill
	\begin{minipage}[c]{0.58\textwidth}
    \caption{Comparison of regularization strength in one dimension for the implicit regularizer $\lambda R(\theta) \propto \log(1-\frac{\eta\ell}{2})$ and its linear approximation around $\eta = 0$, $\frac{\lambda}{4}\tr \nabla^2 L(\theta) \propto \eta \ell$. Here $\ell = \|\nabla^2 L(\theta)\|_2$ measures the sharpness at $\theta$.}
    \label{fig:regplot}
  	\end{minipage}
\end{figure}

Here $\log$ refers to the matrix logarithm. To better understand the regularizer $R(\theta)$, let $\lambda_1,\ldots,\lambda_d$ be the eigenvalues of $\nabla^2 L(\theta)$ and let $R(\lambda_i) = -\frac{1}{2\eta} \log(1-\frac{\eta\lambda_i}{2})$. Then,
\begin{align*}
	R(\theta) = \sum_{i=1}^d R(\lambda_i) = \sum_{i=1}^d \left(\frac{\lambda_i}{4} + \frac{\eta \lambda_i^2}{16} + \frac{\eta^2 \lambda_i^3}{48} + \ldots\right).
\end{align*}
In the limit as $\eta \to 0$, $R(\theta) \to \frac{1}{4} \tr \nabla^2 L(\theta)$, which matches the regularizer in \citet{blanc2019implicit} for infinitesimal learning rate near a global minimizer. However, in additional to the linear scaling rule, which is implicit in our definition of $\lambda$, our analysis uncovers an \textbf{additional regularization effect of large learning rates} that penalizes larger eigenvalues more than smaller ones (see \Cref{fig:regplot} and \Cref{sec:strengthlargeeta}).

The goal of this paper is to show that Algorithm \ref{alg:sgdln} converges to a stationary point of the regularized loss $\tilde{L}= L + \lambda R$. In particular, we will show convergence to an $(\epsilon,\gamma)$-stationary point, which is defined in the next section.
                          
\subsection{$(\epsilon,\gamma)$-Stationary Points}\label{sec:epsgammasp}
%\anote{At least one of the COLT reviewers wasn't convinced by this section and called us out for avoiding the problem or something - not sure how to clarify it.}

We begin with the standard definition of an approximate stationary point:
\begin{definition}[$\epsilon$-stationary point]
	$\theta$ is an $\epsilon$-stationary point of $f$ if $\|\nabla f(\theta)\| \le \epsilon.$
\end{definition}
In stochastic gradient descent it is often necessary to allow $\lambda = \frac{\eta \sigma^2}{B}$ to scale with $\epsilon$ to reach an $\epsilon$-stationary point~\citep{ge2015escaping,jin2019stochastic} (e.g., $\lambda$ may need to be less than $\epsilon^2$). However, for $\lambda = O(\epsilon)$, any local minimizer $\theta^*$ is an $\epsilon$-stationary point of $\tilde{L} = L + \lambda R$. Therefore, reaching a $\epsilon$-stationary point of $\tilde{L}$ would be equivalent to finding a local minimizer and would not be evidence for implicit regularization. To address this scaling issue, we consider the rescaled regularized loss:
\begin{align*}
	\frac{1}{\lambda} \tilde{L} = \frac{1}{\lambda} L + R.
\end{align*}
Reaching an $\epsilon$-stationary point of $\frac{1}{\lambda} \tilde{L}$ requires non-trivially taking the regularizer $R$ into account. However, it is not possible for \Cref{alg:sgdln} to reach an $\epsilon$-stationary point of $\frac{1}{\lambda} \tilde L$ even in the ideal setting when $\theta$ is initialized near a global minimizer $\theta^*$ of $\tilde{L}$. The label noise will cause fluctuations of order $\sqrt{\lambda}$ around $\theta^*$ (see section \ref{sec:sketch}) so $\|\nabla L\|$ will remain around $\sqrt{\lambda}$. This causes $\frac{1}{\lambda} \nabla L$ to become unbounded for $\lambda$ (and therefore $\epsilon$) sufficiently small, and thus \Cref{alg:sgdln} cannot converge to an $\epsilon$-stationary point. 
We therefore prove convergence to an \textit{$(\epsilon,\gamma)$-stationary point}:
\begin{definition}[$(\epsilon,\gamma)$-stationary point]\label{def:epsgamma}
	$\theta$ is an $(\epsilon,\gamma)$-stationary point of $f$ if there exists some $\theta^*$ such that $\|\nabla f(\theta^*)\|\le \epsilon$ and $\| \theta - \theta^* \| \le \gamma$.
\end{definition}
%	\begin{align*}
%		\exists \theta^* : \|\nabla f(\theta^*)\|\le \epsilon \qqtext{and} \| \theta - \theta^* \| \le \gamma.
%	\end{align*}
Intuitively, \Cref{alg:sgdln} converges to an $(\epsilon,\gamma)$-stationary point when it converges to a neighborhood of some $\epsilon$-stationary point $\theta^*$.

\subsection{Main Result}\label{sec:main}
Having defined an $(\epsilon,\gamma)$-stationary point we can now state our main result:

\begin{theorem}\label{thm:sgdsp}
	Assume that $f$ satisfies Assumption \ref{asm:smooth}, $\eta$ satisfies Assumption \ref{asm:eta}, and $L$ satisfies Assumption \ref{asm:kl}, i.e. $L(\theta) \le \mu \|\nabla L(\theta)\|^{1+\delta}$ for $L(\theta) \le \epsilon_{KL}$. Let  $\eta,B$ be chosen such that $\lambda := \frac{\eta \sigma^2}{B} = \tilde\Theta(\min(\epsilon^{2/\delta},\gamma^2))$, and let $T = \tilde\Theta(\eta^{-1}\lambda^{-1-\delta}) = \poly(\eta^{-1},\gamma^{-1})$. Assume that $\theta$ is initialized within $O(\sqrt{\lambda^{1+\delta}})$ of some $\theta^*$ satisfying $L(\theta^*) = O(\lambda^{1+\delta})$. Then for any $\zeta \in (0,1)$, with probability at least $1 - \zeta$, if $\{\theta_k\}$ follows \Cref{alg:sgdln} with parameters $\eta,\sigma,T$, there exists $k < T$ such that $\theta_k$ is an $(\epsilon,\gamma)$-stationary point of $\frac{1}{\lambda}\tilde{L}$.
\end{theorem}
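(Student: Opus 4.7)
The plan is to couple the SGD iterates with a slowly drifting mean trajectory and show that, on a time scale of roughly $\eta^{-1}$ steps, the averaged dynamics of label noise SGD approximately follow gradient descent on $\tilde L = L + \lambda R$. The key structural observation is that the label-noise stochastic gradient decomposes as $\nabla \hat L^{(k)}(\theta) = \nabla L(\theta) + \xi_k$, where, conditional on $\theta_k$, the noise $\xi_k$ has mean zero and covariance $\frac{\sigma^2}{B}\cdot\frac{1}{n}\sum_i \nabla f_i(\theta)\nabla f_i(\theta)^T + O(L(\theta))$. Near a global minimum this is essentially $\frac{\sigma^2}{B}\nabla^2 L(\theta)$, exactly the covariance needed to manufacture the trace-log form of $R$.

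First I would set up a local quadratic picture around a reference trajectory $\bar\theta_k$ that tracks an approximate minimizer, and write $\theta_k = \bar\theta_k + w_k$. The fluctuation $w_k$ obeys the linearized OU-type recursion $w_{k+1} = (I - \eta H_k)w_k + \eta\xi_k$ with $H_k = \nabla^2 L(\bar\theta_k)$. By \Cref{asm:eta}, $I - \eta H_k$ is spectrally bounded away from $1$ along the excited directions, so a martingale/moment argument shows $w_k$ relaxes to a stationary covariance $M_k$ satisfying the discrete Lyapunov equation $M = (I-\eta H)M(I-\eta H) + \eta^2\Sigma$; substituting $\Sigma = (\sigma^2/B)H$ gives the closed form $M_k = \tfrac{\lambda}{2}(I - \tfrac{\eta}{2}H_k)^{-1}$. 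Taylor-expanding $\nabla L(\bar\theta_k + w_k)$ to third order and taking expectation, the cubic contribution equals $\tfrac{1}{2}\tr\bigl(M_k \nabla^3 L(\bar\theta_k)\bigr) = \lambda \nabla R(\bar\theta_k)$, which is exactly what the definition $R(\theta) = -\tfrac{1}{2\eta}\tr\log(I - \tfrac{\eta}{2}\nabla^2 L)$ is engineered to reproduce. The slow trajectory therefore obeys $\bar\theta_{k+1} - \bar\theta_k \approx -\eta \nabla \tilde L(\bar\theta_k)$.

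Second, I would make this rigorous by a two-phase epoch argument. In a burn-in phase of length $T_0 = \tilde O(\eta^{-1})$, the OU contraction equilibrates $w_k$ to within $\tilde O(\sqrt{\lambda})$ of its stationary law, with high-probability control via Azuma/Freedman on the linearized recursion. In the drift phase I track $\tilde L(\bar\theta_k)$ using the Taylor expansion above and prove
\begin{align*}
\E[\tilde L(\bar\theta_{k+T_0})] \le \tilde L(\bar\theta_k) - c\,\eta T_0 \,\|\nabla \tilde L(\bar\theta_k)\|^2 + \text{err},
\end{align*}
where err absorbs the fourth-order Taylor remainder (via $\kappa_f$ from \Cref{asm:smooth}), the Hessian-Lipschitz drift of $\bar\theta_k$, and the residual mismatch between the instantaneous and stationary covariance of $w_k$. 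The role of \Cref{asm:kl} is to keep the iterate near the low-loss manifold: whenever $L(\bar\theta_k) \le \epsilon_{KL}$, the bound $L \le \mu \|\nabla L\|^{1+\delta}$ provides both the smallness of the $O(L)$ correction in the noise covariance and the low-loss witness point $\theta^*$ required by \Cref{def:epsgamma}.

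Finally, a stopping-time argument closes the proof. Define the stopping time as the first $k$ with $\|\nabla \tilde L(\bar\theta_k)\| \le \lambda\epsilon$, i.e.\ an $\epsilon$-stationary point of $\frac{1}{\lambda}\tilde L$; then $\bar\theta_k$ witnesses the $(\epsilon,\gamma)$-stationarity of $\theta_k$ since $\|\theta_k - \bar\theta_k\| = \tilde O(\sqrt{\lambda}) \le \gamma$. If the stopping time exceeds $T$, each epoch reduces $\tilde L$ by $\Omega(\eta T_0 \lambda^2 \epsilon^2) = \Omega(\eta T_0 \lambda^{2+\delta})$, using the scaling $\epsilon^2 \gtrsim \lambda^\delta$ baked into $\lambda = \tilde\Theta(\min(\epsilon^{2/\delta},\gamma^2))$; after $T = \tilde\Theta(\eta^{-1}\lambda^{-1-\delta})$ steps the accumulated decrease exceeds the initial $\tilde L(\theta_0) = O(\lambda)$, a contradiction. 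A union bound over the $\poly(\eta^{-1},\gamma^{-1})$ epochs gives the high-probability guarantee. The hardest step is juggling simultaneously (a) the OU equilibration of $w_k$, (b) the fourth-order Taylor and Hessian-Lipschitz remainders, and (c) the weak KL exponent $\delta \le 1/2$, which is what forces the specific $\lambda$ scaling and makes it delicate to ensure the higher-order residuals never overwhelm the $-\eta\lambda\nabla R$ drift while $L$ decays only polynomially toward zero.
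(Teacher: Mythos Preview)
Your high–level architecture matches the paper almost exactly: decompose the iterate into a deterministic ``regularized trajectory'' $\Phi_k(\theta^*)$ following gradient descent on $\tilde L$ plus an Ornstein--Uhlenbeck fluctuation, identify $\lambda\nabla R$ as the cubic Taylor term $\tfrac12\nabla^3 L$ applied to the OU stationary covariance $\lambda(2I-\eta\nabla^2 L)^{-1}$, and then run an epoch/stopping-time descent argument on $\tilde L$ with total budget $O(\lambda)$. The accounting of the decrease and the choice of $\lambda=\tilde\Theta(\min(\epsilon^{2/\delta},\gamma^2))$ are the same as in the paper.

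There is, however, a genuine gap in your burn-in step. You assert that ``by \Cref{asm:eta}, $I-\eta H_k$ is spectrally bounded away from $1$ along the excited directions, so \dots\ $w_k$ relaxes to its stationary law in $T_0=\tilde O(\eta^{-1})$ steps.'' \Cref{asm:eta} only says $\eta\le(2-\nu)/\ell$, which bounds the eigenvalues of $I-\eta H$ away from $-1$, not from $1$; nothing in the assumptions gives a lower bound on the nonzero eigenvalues of $\nabla^2 L$, and these can shrink to $0$ along the trajectory. In a direction with eigenvalue $\alpha$, the OU covariance takes $\Theta((\eta\alpha)^{-1})$ steps to approach its limit, so $\E[w_kw_k^\top]$ does \emph{not} converge to $M_k$ on the time scale you use, and your identification $\tfrac12\E[\nabla^3 L(w_k,w_k)]\approx\lambda\nabla R$ is unjustified. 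The paper explicitly flags this as the point where \citet{blanc2019implicit} needed condition-number-dependent $\eta,\tau$; it avoids the issue not by waiting for equilibration but by exploiting the square-loss structure $\tfrac12\nabla^3 L(S)=\tfrac1n\sum_i\bigl(H_i S g_i+\tfrac12 g_i\tr(SH_i)\bigr)+O(\sqrt{L})$, so that every term carries a factor $g_i=\nabla f_i(\theta^*)$ lying in the range of $G$, and then applying ``weak contraction'' bounds such as $\sum_{k<\tau}\|(I-\eta G)^k g_i\|^2=O(\eta^{-1})$ and $\|(I-\eta G)^\tau g_i\|=O((\eta\tau)^{-1/2})$ that hold uniformly in the spectrum of $G$. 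This is what lets the error $r_k=\theta_k-\xi_k-\Phi_k$ stay $\le\mathscr D$ without any eigenvalue lower bound; your sketch would need an analogous mechanism (or an extra assumption) to go through.

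A secondary point: the paper does not use fixed-length epochs $T_0$ but picks each $\tau_m$ adaptively so that the regularized trajectory moves a fixed distance $\Theta(\mathscr M)$; this is what makes the per-epoch descent $\tilde L(\theta_{m+1}^*)\le\tilde L(\theta_m^*)-\mathscr F$ uniform and allows the reference-point chaining (\Cref{lem:convergence:1step}) to compensate for the $\Delta_m$ offset between $\theta_{T_m}$ and $\theta_m^*$.
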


%\begin{theorem}[Classification]\label{thm:sgdlssp}
%	Assume that $L$ satisfies Assumptions \ref{asm:smooth} and \ref{asm:kl} for with KL exponent $1+\delta$ \tnote{spell this out---not very friendly for people who just read your theorem without reading the background section especially because it coincides with KL divergence}  and that $\bar l$ satisfies Assumption \ref{asm:quad}. Let $\eta = \tilde\Theta(\min(\epsilon^{2/\delta},\gamma^2))$, $p \in (0,1)$, and $T = \tilde\Theta(\eta^{-2-\delta}) = \poly(\eta^{-1},\gamma^{-1})$. Further assume that $L(\theta_0) =  O(\eta^{1+\delta})$. Then for any $\zeta \in (0,1)$, with probability at least $1 - \zeta$, if $\{\theta_k\}$ follows \Cref{alg:sgdls} with parameters $\eta,p,T$, there exists $k < T$ such that $\theta_k$ is an $(\epsilon,\gamma)$-stationary point of $\frac{1}{\eta}\tilde{L}$.
%\end{theorem}

Theorem \ref{thm:sgdsp} guarantees that Algorithm \ref{alg:sgdln} will hit an $(\epsilon,\gamma)$-stationary point of $\frac{1}{\lambda}\tilde{L}$ within a polynomial number of steps in $\epsilon^{-1},\gamma^{-1}$. In particular, when $\delta = \frac{1}{2}$, \Cref{thm:sgdsp} guarantees convergence within $\tilde O(\epsilon^{-6} + \gamma^{-3})$ steps. 
The condition that $\theta_0$ is close to an approximate global minimizer $\theta^*$ is not a strong assumption as recent methods have shown that overparameterized models can easily achieve zero training loss in the kernel regime (see \Cref{sec:NTK}).
However, in practice these minimizers of the training loss generalize poorly \citep{arora2019exact}. Theorem \ref{thm:sgdsp} shows that Algorithm \ref{alg:sgdln} can then converge to a stationary point of the regularized loss which has better generalization guarantees (see Section \ref{sec:generalization}). \Cref{thm:sgdsp} also generalizes the local analysis in \citet{blanc2019implicit} to a global result with weaker assumptions on the learning rate $\eta$. For a full comparison with \citet{blanc2019implicit}, see section \ref{sec:blanccomparison}.

\section{Proof Sketch}
\label{sec:sketch}
The proof of convergence to an $(\epsilon,\varphi)$-stationary point of $\frac{1}{\lambda} \tilde{L}$ has two components. In \Cref{sec:sketch:coupling}, we pick a reference point $\theta^*$ and analyze the behavior of \Cref{alg:sgdln} in a neighborhood of $\theta^*$. In \Cref{sec:sketch:convergence}, we repeat this local analysis with a sequence of reference points $\{\theta_m^*\}$.

\begin{figure}[t]
	\centering
	\includestandalone[width=0.8\textwidth]{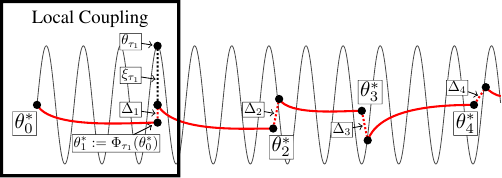}
	\caption{\textbf{Local Coupling: } The local coupling decomposes $\theta$ as $\theta_{\tau_1} = \Phi_{\tau_1}(\theta_0^*) + \xi_{\tau_1} + \Delta_1$. $\Phi_{\tau_1}(\theta_0^*)$ denotes $\tau_1$ steps of gradient descent on the regularized loss $\tilde{L}$ (denoted by the solid red curve), $\xi_{\tau_1}$ is a mean zero oscillating process (denoted by the dotted black line), and $\Delta_1$ is a small error term (denoted by the dotted red line). \textbf{Global Convergence:} By repeating this local coupling with a sequence of reference points $\{\theta_m^*\}_m$, we prove convergence to a stationary point of $\frac{1}{\lambda} \tilde L$.}
	\label{fig:localcouplingsketch}
\end{figure}

%\begin{figure}[t]
%	\centering
%	\includegraphics[align=c,width=0.45\textwidth]{Figures/localcouplingsketch.eps}
%	\hfill
%	\includegraphics[align=c,width=0.53\textwidth]{Figures/proofsketch.eps}
%%	\includegraphics[align=c,width=0.4\textwidth]{Figures/coupling.png}
%	\caption{The red line represents the trajectory of gradient descent on $\tilde{L}$ (left). To prove convergence to an $(\epsilon,\gamma)$-stationary point of $\frac{1}{\lambda} \tilde L$, we define a series of reference points $\theta_m^*$ (right).}
%	\label{fig:localcouplingsketch}
%\end{figure}

\subsection{Local Coupling}\label{sec:sketch:coupling}
Let $\Phi_k(\cdot)$ denote $k$ steps of gradient descent on the regularized loss $\tilde{L}$, i.e.
\begin{align}
	\Phi_0(\theta) = \theta \qqtext{and} \Phi_{k+1}(\theta) = \Phi_k(\theta) - \eta \nabla \tilde{L}(\Phi_k(\theta)), \label{eq:regularizedtrajectory}
\end{align}
where $\tilde{L}(\theta) = L(\theta) + \lambda R(\theta)$ is the    regularized loss defined in \Cref{eq:regdef}. \Cref{lem:sketch:coupling} states that if $\theta$ is initialized at an approximate global minimizer $\theta^*$ and follows \Cref{alg:sgdln}, there is a small mean zero random process $\xi$ such that $\theta_k \approx \Phi_k(\theta^*) + \xi_k$:

\begin{lemma}\label{lem:sketch:coupling}
	Let
	\begin{align*}
	\iota = c \log \frac{d}{\lambda \zeta}, && \mathscr{X} = \sqrt{\frac{2\lambda n d \iota}{\nu}}, && \mathscr{L} = c \lambda^{1+\delta}, && \mathscr{D} = c\sqrt{\mathscr{L}}\iota, && \mathscr{M} = \frac{\mathscr{D}}{\nu}, && \mathscr{T} = \frac{1}{c^2 \eta \mathscr{X}\iota},
	\end{align*}
	where $c$ is a sufficiently large constant.
	Assume $f$ satisfies \Cref{asm:smooth} and $\eta$ satisfies \Cref{asm:eta}. Let $\theta$ follow \Cref{alg:sgdln} starting at $\theta^*$ and assume that $L(\theta^*) \le \mathscr{L}$ for some $0 < \delta \le 1/2$. Then there exists a random process $\{\xi_k\}$ such that for any $\tau \le \mathscr{T}$ satisfying  $\max_{k \le \tau} \|\Phi_k(\theta^*) - \theta^*\| \le 8\mathscr{M}$, with probability at least $1-10 d \tau e^{-\iota}$ we have simultaneously for all $k \le \tau$,
	\begin{align*}
		\norm{\theta_k - \xi_k - \Phi_k(\theta^*)} \le \mathscr{D}, \qquad \E[\xi_k]=0, \qqtext{and} \|\xi_k\| \le \mathscr{X}.
	\end{align*}
\end{lemma}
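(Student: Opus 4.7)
The plan is to carry out the decomposition $\theta_k = \theta_k^* + \xi_k + r_k$ and control each piece, where $\theta_k^* := \Phi_k(\theta^*)$ is the noiseless regularized trajectory, $\xi_k$ is the mean-zero fluctuation driven by the label and batch noise, and $r_k$ is a higher-order remainder we must show satisfies $\|r_k\| \le \mathscr{D}$. Writing the stochastic gradient as $\nabla \hat{L}^{(k)}(\theta) = \nabla L(\theta) - Z^{(k)}(\theta)$ where $Z^{(k)}(\theta)$ is the mean-zero noise coming from both the label perturbations and the batch sampling, I would define $\xi_k$ by the linearization around the reference trajectory:
\[
\xi_0 = 0, \qquad \xi_{k+1} = \bigl(I - \eta \nabla^2 L(\theta_k^*)\bigr)\, \xi_k + \eta\, Z^{(k)}(\theta_k^*).
\]
Because the KL condition and smoothness keep $\theta_k^*$ inside a good region for $k \le \mathscr{T}$, the eigenvalues of $\nabla^2 L(\theta_k^*)$ lie in $[0,\ell]$, and \Cref{asm:eta} makes the linear map non-expansive with spectral gap $\eta \nu / \ell$.

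The first step is to bound $\xi_k$. The per-step noise $Z^{(k)}(\theta_k^*)$ is a sum of $B$ bounded independent contributions with covariance of order $\tfrac{\sigma^2}{B}\nabla^2 L(\theta_k^*)$ (plus a lower-order batch-noise term controlled by $L(\theta_k^*)\le \mathscr{L}$). A vector martingale concentration inequality (Freedman/Azuma type, applied after projecting onto eigendirections of the time-varying linear map) then yields $\E[\xi_k]=0$ and $\|\xi_k\|\le \mathscr{X}$ with probability at least $1 - d\tau e^{-\iota}$ uniformly for $k\le \tau$. Solving the driven Lyapunov recursion for the second moment gives, up to an exponentially decaying transient, $\Sigma_{\xi_k} \preceq \tfrac{\lambda}{2}\bigl(I - \tfrac{\eta}{2}\nabla^2 L(\theta_k^*)\bigr)^{-1}$, which is precisely the identity that motivates both $\mathscr{X}$ and the form of $R(\theta)$.

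The heart of the proof is the recursion for $r_k$. Subtracting the three update rules and Taylor-expanding $\nabla L(\theta_k^* + \xi_k + r_k)$ to second order around $\theta_k^*$ yields
\[
r_{k+1} = \bigl(I - \eta \nabla^2 L(\theta_k^*)\bigr) r_k + \eta \lambda \nabla R(\theta_k^*) - \tfrac{\eta}{2} \nabla^3 L(\theta_k^*)[\xi_k, \xi_k] + \mathcal{E}_k,
\]
where $\mathcal{E}_k$ collects cross terms involving $r_k$ (bounded by $\rho_f,\kappa_f$ and $\|r_k\|$), the cubic Taylor error (of order $\kappa_f\|\xi_k\|^3$), and the noise-discrepancy $\eta(Z^{(k)}(\theta_k)-Z^{(k)}(\theta_k^*))$ which is Lipschitz in $\xi_k+r_k$. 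The crucial cancellation is the identity $\nabla R(\theta) = \tfrac{1}{4} \nabla^3 L(\theta)\bigl[(I - \tfrac{\eta}{2}\nabla^2 L(\theta))^{-1}\bigr]$, obtained by differentiating $\tr\log$ directly; combining this with the stationary covariance gives $\tfrac{1}{2}\nabla^3 L(\theta_k^*)[\Sigma_{\xi_k}] = \lambda \nabla R(\theta_k^*)$ \emph{exactly}. Thus taking conditional expectations, the quadratic-in-$\xi$ drift is precisely cancelled by the regularizer gradient built into $\Phi_k$, leaving $r_{k+1}$ driven only by the martingale difference $\tfrac{\eta}{2}\nabla^3 L(\theta_k^*)[\xi_k\otimes\xi_k - \Sigma_{\xi_k}]$, the cubic terms, and a contracting linear part.

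The main obstacle I expect is twofold. First, $\Sigma_{\xi_k}$ only reaches its stationary value after the mixing time of the $\xi$ recursion, so during the transient the cancellation with $\eta \lambda \nabla R$ has a shortfall that must be absorbed into the martingale budget by showing mixing occurs within $O(1/(\eta\nu))$ steps, which is dominated by $\mathscr{T}$. Second, $\mathcal{E}_k$ depends on $\|r_k\|$ itself, so the target bound $\|r_k\|\le \mathscr{D}$ must be closed inductively through a stopping-time argument that simultaneously maintains $\|\xi_k\|\le\mathscr{X}$, the invariant $\|\Phi_k(\theta^*)-\theta^*\|\le 8\mathscr{M}$, and the KL-based control of $L(\theta_k^*)$ that ensures the Taylor expansions are valid throughout. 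Concentration of the martingale increments summed over $\tau\le \mathscr{T}$ steps produces an accumulated error of order $\sqrt{\tau}\cdot\eta\kappa_f\mathscr{X}^2 \lesssim \sqrt{\mathscr{L}}\,\iota = \mathscr{D}$, matching the claimed bound.
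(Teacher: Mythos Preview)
Your high-level decomposition $\theta_k = \Phi_k(\theta^*) + \xi_k + r_k$ and the inductive stopping-time strategy for closing $\|r_k\|\le\mathscr{D}$ are right and match the paper. Two of your intermediate claims, however, would not go through as written.

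First, there is no spectral gap. Assumption~\ref{asm:eta} only says $\eta\le(2-\nu)/\ell$, which bounds the eigenvalues of $I-\eta\nabla^2 L$ in absolute value by $1$; but the Hessian generically has a nontrivial kernel (the manifold of interpolating minimizers), so $I-\eta\nabla^2 L$ has eigenvalue exactly $1$ and is merely non-expansive. Your ``spectral gap $\eta\nu/\ell$'' and the associated mixing-time argument for $\Sigma_{\xi_k}$ are not available. The paper sidesteps this by linearizing around the \emph{fixed} reference $\theta^*$ with the Gauss--Newton matrix $G=G(\theta^*)$ (not the moving $\nabla^2 L(\theta_k^*)$, and not the full Hessian, whose error part $E$ is handled separately via Proposition~\ref{prop:hessiandecomp}), and by proving weak contraction estimates such as $\sum_{k<\tau}\|(I-\eta G)^k g_i\|^2=O(1/\eta)$ and $\|(I-\eta G)^\tau G\|\le 1/(\eta\nu\tau)$ that hold \emph{without} any gap because the relevant vectors $g_i=\nabla f_i(\theta^*)$ live in the range of $G$.

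Second, and more seriously, the term $\tfrac{\eta}{2}\nabla^3 L[\xi_k\xi_k^T-\Sigma_{\xi_k}]$ is \emph{not} a martingale difference sequence: $\xi_k$ depends on all of $\epsilon_0^*,\dots,\epsilon_{k-1}^*$, so $\xi_k\xi_k^T$ has nonzero conditional mean given $\mathcal{F}_{k-1}$, and you cannot get the $\sqrt{\tau}$ rate by a direct Azuma/Freedman bound. The paper's route is to set $D_k=\xi_k\xi_k^T-\bar S$ with $\bar S=\lambda(2-\eta G)^{-1}$, derive the matrix recursion
\[
D_{k+1}=(I-\eta G)D_k(I-\eta G)+W_k+Z_k,
\]
where $W_k=(I-\eta G)\xi_k(\epsilon_k^*)^T+\epsilon_k^*\xi_k^T(I-\eta G)$ and $Z_k=\epsilon_k^*(\epsilon_k^*)^T-\eta\lambda G$ \emph{are} martingale differences, unroll this inside $\sum_{k\le t}(I-\eta G)^{t-k}\nabla^3 L[D_k]$, swap the order of summation over $j$ and $k$, and then exploit the square-loss decomposition
\[
\tfrac12\nabla^3 L[M]=\tfrac1n\sum_i\Bigl(H_i M g_i+\tfrac12 g_i\tr(MH_i)\Bigr)+O(\sqrt{\mathscr{L}}\,\|M\|)
\]
together with the weak contraction bounds to obtain the $O(\sqrt{\eta t}\,\mathscr{X}^2)$ estimate. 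This double unrolling and the loss-specific splitting of $\nabla^3 L$ are the technical heart of the lemma and are absent from your plan.
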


Note that because $\mathscr{M} \ge \mathscr{D}$, the error term $\mathscr{D}$ is at least $8$ times smaller than the movement in the direction of the regularized trajectory $\Phi_\tau(\theta^*)$, which will allow us to prove convergence to an $(\epsilon,\gamma)$-stationary point of $\frac{1}{\lambda}\tilde L$ in \Cref{sec:sketch:convergence}.

%\tnote{why this is true? I am not sure what you compare $\mathscr{D}$ with}\anote{I'm comparing it with the maximum movement in parameter space which is $8\mathscr{M} > 8 \mathscr{D}$ because $\nu < 1$ (the $\nu$ has to be there for a technical reason in the convergence section)}

%\begin{figure*}
%	\centering
%	\includegraphics[width=0.4\textwidth]{Figures/localcouplingsketch.png}
%	\caption{Demonstration of \Cref{lem:sketch:coupling}. The red line represents the trajectory of gradient descent on $\tilde{L}$. We show that $\theta_\tau$ is approximately the sum of gradient descent on a regularized trajectory ($\Phi_\tau(\theta^*)$) and an Ornstein Uhlenbeck process $\xi$. \anote{Should probably combine this figure with Figure 1 above?} \tnote{it seems that it should be combined with figure 3}}
%	\label{fig:localcouplingsketch}
%\end{figure*}

%The proof of \Cref{lem:sketch:coupling} proceeds by first Taylor expanding the update in \Cref{alg:sgdln} to first order to identify the random process $\xi$ that drives most of the movement in $\theta$. It then shows that the effect of $\xi$ on the second order Taylor expansion of \Cref{eq:sketch:sgdln} causes implicit regularization.\tnote{maybe this paragraph can be removed even or rewritten}

Toward simplifying the update in \Cref{alg:sgdln}, we define $L^{(k)}$ to be the true loss without label noise on batch $\mathcal{B}^{(k)}$. The label-noise update $\hat{L}^{(k)}(\theta_k)$ is an unbiased perturbation of the mini-batch update: $\nabla \hat{L}^{(k)}(\theta_k) = \nabla L^{(k)}(\theta_k) - \frac{1}{B}\sum_{i\in \mathcal{B}^{(k)}}\epsilon_i^{(k)} \nabla f_i(\theta_k)$. We decompose the update rule into three parts:
\begin{align}
	\theta_{k+1} %&= \theta_k - \eta \nabla \hat L^{(k)}(\theta_k) \\
%	&= \theta_k - \frac{\eta}{B} \sum_{i \in \mathcal{B}^{(k)}}\left(f_i(\theta_k) - y_i - \epsilon_i^{(k)}\right) \nabla f_i(\theta_k) \\
%	&= \theta_k - \eta \nabla L^{(k)}(\theta_k) + \frac{\eta}{B} \sum_{i \in \mathcal{B}^{(k)}} \epsilon_i^{(k)} \nabla f_i(\theta_k) \\
	&= \theta_k - \underbrace{\eta \nabla L(\theta_k)}_\text{gradient descent} - \underbrace{\eta[\nabla L^{(k)}(\theta_k) - \nabla L(\theta_k)]}_\text{minibatch noise} + \underbrace{\frac{\eta}{B} \sum_{i \in \mathcal{B}^{(k)}} \epsilon_i^{(k)} \nabla f_i(\theta_k)}_\text{label noise}. \label{eq:labelnoisedecomposition}
\end{align}
Let $m_k = -\eta[\nabla L^{(k)}(\theta_k) - \nabla L(\theta_k)]$ denote the minibatch noise.  Throughout the proof we will show that the minibatch noise is dominated by the label noise. We will also decompose the label noise into two terms. The first, $\epsilon_k^*$ will represent the label noise if the gradient were evaluated at $\theta^*$ whose distribution does not vary with $k$. The other term, $z_k$ represents the change in the noise due to evaluating the gradient at $\theta_k$ rather than $\theta^*$. More precisely, we have
\begin{align*}
	\epsilon_k^* = \frac{\eta}{B} \sum_{i \in \mathcal{B}^{(k)}} \epsilon_i^{(k)} \nabla f_i(\theta^*) \qqtext{and} z_k = \frac{\eta}{B} \sum_{i \in \mathcal{B}^{(k)}} \epsilon_i^{(k)} [\nabla f_i(\theta_k) - \nabla f_i(\theta^*)].
\end{align*}
We define
$
	G(\theta) = \frac{1}{n} \sum_i \nabla f_i(\theta) \nabla f_i(\theta)^T
$
to be the covariance of the model gradients. Note that $\epsilon_k^*$ has covariance $\eta \lambda G(\theta^*)$. To simplify notation in the Taylor expansions, we will use the following shorthand to refer to various quantities evaluated at $\theta^*$:

\begin{align*}
	G = G(\theta^*), && \nabla^2 L = \nabla^2 L(\theta^*), && \nabla^3 L = \nabla^3 L(\theta^*), && \nabla R = \nabla R(\theta^*).
\end{align*}

First we need the following standard decompositions of the Hessian:
\begin{proposition}\label{prop:hessiandecomp}
	For any $\theta \in \mathbb{R}^d$ we can decompose $\nabla^2 L(\theta) = G(\theta) + E(\theta)$ where $E(\theta) = \frac{1}{n} \sum_{i=1}^n (f_i(\theta)-y_i)\nabla^2 f_i(\theta)$ satisfies $\|E(\theta)\| \le \sqrt{2\rho_fL(\theta)}$ where $\rho_f$ is defined in \Cref{asm:smooth}.
\end{proposition}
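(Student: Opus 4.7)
The plan is to obtain the decomposition by differentiating the per-example squared losses twice via the product rule, and then bound $\|E(\theta)\|$ by combining the operator-norm triangle inequality with Cauchy--Schwarz over the data index.

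First I would compute $\nabla \ell_i(\theta) = (f_i(\theta)-y_i)\nabla f_i(\theta)$, and apply the product rule one more time to obtain $\nabla^2 \ell_i(\theta) = \nabla f_i(\theta)\nabla f_i(\theta)^\top + (f_i(\theta)-y_i)\nabla^2 f_i(\theta)$. Averaging this identity over $i \in [n]$ gives $\nabla^2 L(\theta) = G(\theta) + E(\theta)$ with $E(\theta)$ as in the statement; this is exactly the standard Gauss--Newton decomposition of the Hessian of a squared loss.

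For the norm bound, the $\rho_f$-Lipschitzness of $\nabla f_i$ in Assumption~\ref{asm:smooth} immediately gives $\|\nabla^2 f_i(\theta)\| \le \rho_f$. The triangle inequality for the operator norm then yields $\|E(\theta)\| \le \frac{\rho_f}{n}\sum_{i=1}^n |f_i(\theta)-y_i|$. Applying Cauchy--Schwarz over the data index converts the $\ell^1$ average of residuals into an $\ell^2$ average:
\begin{equation*}
\frac{1}{n}\sum_{i=1}^n |f_i(\theta)-y_i| \;\le\; \sqrt{\frac{1}{n}\sum_{i=1}^n (f_i(\theta)-y_i)^2} \;=\; \sqrt{2L(\theta)}.
\end{equation*}
Combining these two inequalities yields the claimed spectral bound on $E(\theta)$.

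The proof is essentially a one-line calculation, so no step is a serious obstacle. The only conceptual point to keep straight is the distinction between the model Hessian $\nabla^2 f_i$ of the scalar-valued $f_i$ (whose operator norm is controlled by the Lipschitz constant $\rho_f$ of $\nabla f_i$) and the loss Hessian $\nabla^2 L$, together with the observation that after averaging the outer-product term $\nabla f_i(\theta)\nabla f_i(\theta)^\top$ is exactly the empirical gradient covariance $G(\theta)$ featured in the label-noise covariance computation above.
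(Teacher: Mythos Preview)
Your proof is correct and follows essentially the same approach as the paper: both derive the decomposition by the product rule on the per-sample squared loss and bound $\|E(\theta)\|$ via $\|\nabla^2 f_i(\theta)\|\le \rho_f$ together with Cauchy--Schwarz over the data index. The only cosmetic difference is that the paper applies Cauchy--Schwarz directly to the matrix sum as $\frac{1}{n}\bigl(\sum_i e_i^2\bigr)^{1/2}\bigl(\sum_i \|\nabla^2 f_i\|^2\bigr)^{1/2}$, whereas you first use the triangle inequality and then Cauchy--Schwarz on the scalar residuals; both routes yield the identical bound $\rho_f\sqrt{2L(\theta)}$.
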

The matrix $G$ in \Cref{prop:hessiandecomp} is known as the Gauss-Newton term of the Hessian. We can now Taylor expand \Cref{alg:sgdln} and \Cref{eq:regularizedtrajectory} to first order around $\theta^*$:
\begin{align*}
	\Phi_{k+1}(\theta^*) &\approx \Phi_{k}(\theta^*) - \eta \left[\nabla L + \nabla^2 L(\Phi_{k}(\theta^*)-\theta^*)\right], \\
	\theta_{k+1} &\approx \theta_k - \eta \left[\nabla L + \nabla^2 L(\theta_k - \theta^*)\right] + \epsilon_k^*.
\end{align*}
We define $v_k = \theta_k - \Phi_k(\theta^*)$ to be the deviation from the regularized trajectory. Then subtracting these two equations gives
\begin{align}
	v_{k+1}	&\approx (I - \eta \nabla^2 L)v_k + \epsilon_k^*  \approx (I - \eta G)v_k + \epsilon_k^*, \nonumber
\end{align}
\anote{I removed "+higher order terms" because it's implied by the $\approx$}
where we used \Cref{prop:hessiandecomp} to replace $\nabla^2 L$ with $G$. Temporarily ignoring the higher order terms, we define the random process $\xi$ by
\begin{align}
	\xi_{k+1} = (I - \eta G)\xi_k + \epsilon_k^* \qqtext{and} \xi_0 = 0. \label{eq:sketch:OUdef}
\end{align}
The process $\xi$ is referred to as an Ornstein Uhlenbeck process and it encodes the movement of $\theta$ to first order around $\theta^*$. We defer the proofs of the following properties of $\xi$ to \Cref{sec:appendix:missingproofs}:
%\tnote{we can perhaps either simplify or even remove the proposition 2 below. Just say that we know almost everything about this process and mention in inlined texts some important ones. E.g., bullet 3 below doesn't sound needing a formal treatment; maybe bullet 2 is still needed?}
\anote{@TengyuMa removed bullet 1 and made bullets 2/3 inline}
\begin{proposition}\label{prop:sketch:OUproperties}~
	For any $k \ge 0$, with probability at least $1-2de^{-\iota}$, $\|\xi_k\| \le \mathscr{X}$. In addition, as $k \to \infty$, $\E[\xi_k\xi_k^T] \to \lambda \Pi_G (2 - \eta G)^{-1}$ where $\Pi_G$ is the projection onto the span of $G$.
\end{proposition}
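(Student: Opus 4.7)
The plan is to leverage the linearity of the Ornstein--Uhlenbeck recursion $\xi_{k+1} = (I - \eta G)\xi_k + \epsilon_k^*$ with $\xi_0 = 0$. Unrolling it gives the explicit representation
\begin{equation*}
\xi_k = \sum_{j=0}^{k-1} (I - \eta G)^{k-1-j}\,\epsilon_j^*,
\end{equation*}
so $\xi_k$ is an independent sum of mean-zero bounded random vectors. This immediately gives $\E[\xi_k] = 0$. Moreover, since each $\epsilon_j^*$ lies almost surely in $\mathrm{span}\{\nabla f_i(\theta^*)\} = \mathrm{col}(G)$, so does every $\xi_k$.

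For the covariance, I would combine $\E[\epsilon_j^*(\epsilon_j^*)^T] = \eta\lambda G$ (derived in the text) with independence of the noises to compute
\begin{equation*}
\E[\xi_k\xi_k^T] = \eta\lambda \sum_{j=0}^{k-1}(I-\eta G)^{k-1-j}\,G\,(I-\eta G)^{k-1-j} = \eta\lambda\, G \sum_{j=0}^{k-1}(I-\eta G)^{2j},
\end{equation*}
using that $G$ commutes with $I-\eta G$. The key algebraic identity $I - (I-\eta G)^2 = \eta G(2-\eta G)$ lets me telescope the geometric series on $\mathrm{col}(G)$ to
\begin{equation*}
\E[\xi_k\xi_k^T] = \lambda\,\Pi_G\bigl[I-(I-\eta G)^{2k}\bigr](2-\eta G)^{-1}.
\end{equation*}
By \Cref{asm:eta}, the nonzero eigenvalues of $\eta G$ lie in $(0, 2-\nu]$, so $(I-\eta G)^{2k}\to 0$ on $\mathrm{col}(G)$, yielding the stated limit $\lambda\,\Pi_G(2-\eta G)^{-1}$.

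For the high-probability bound, I would invoke a vector-valued subgaussian concentration inequality. Writing $\xi_k$ as the independent sum above and noting $\|I-\eta G\|\le 1$ (by \Cref{asm:eta}) and $\|\epsilon_j^*\|\le \eta\sigma\ell_f$, each summand is bounded by $\eta\sigma\ell_f$ and hence subgaussian. The partial-covariance computation already done gives $\|\E[\xi_k\xi_k^T]\| \le \lambda/\nu$ uniformly in $k$, since $\|\Pi_G(2-\eta G)^{-1}\| \le 1/\nu$. A standard vector Hoeffding/Azuma argument then yields $\mathbb{P}(\|\xi_k\|>t)\le 2d\exp(-\Omega(t^2\nu/\lambda))$; plugging in $t = \mathscr{X} = \sqrt{2\lambda n d\iota/\nu}$ gives the required probability bound $2de^{-\iota}$, with comfortable slack contributed by the $nd$ factor inside $\mathscr{X}$.

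The main point requiring care is the handling of the possibly singular Gauss--Newton matrix $G$: the geometric series can only be inverted on $\mathrm{col}(G)$, and the factorization $\eta G(2-\eta G)$ of $I-(I-\eta G)^2$ is what makes the cancellation with the leading $G$ produce the projector $\Pi_G$ cleanly, without needing any lower bound on the smallest positive eigenvalue of $G$. Everything else reduces to a routine concentration calculation combined with the spectral bound $\|\Pi_G(2-\eta G)^{-1}\|\le 1/\nu$ supplied by the learning-rate-separation assumption.
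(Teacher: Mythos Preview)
Your proposal is correct and follows essentially the same route as the paper: unroll the recursion, sum the geometric series on $\mathrm{col}(G)$ via the factorization $I-(I-\eta G)^2=\eta G(2-\eta G)$ to get $\E[\xi_k\xi_k^T]=\lambda\,\Pi_G(2-\eta G)^{-1}\bigl(I-(I-\eta G)^{2k}\bigr)$, and then apply a vector Azuma inequality. One point to tighten: the paper's Azuma bound (\Cref{cor:azumacov}) requires an \emph{almost-sure} bound on the quadratic covariation $\sum_j (I-\eta G)^j\epsilon_j^*(\epsilon_j^*)^T(I-\eta G)^j$, not on $\E[\xi_k\xi_k^T]$; this comes from the pointwise inequality $\epsilon_j^*(\epsilon_j^*)^T\preceq n\eta\lambda G$ (since each $g_ig_i^T\preceq nG$), after which the same geometric-series telescoping gives a trace bound of $n\lambda d/\nu$ and produces exactly the extra $n$ you identified as slack in $\mathscr{X}$.
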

We can now analyze the effect of $\xi_k$ on the second order Taylor expansion. Let $r_k = \theta_k - \Phi_k(\theta^*) - \xi_k$ be the deviation of $\theta$ from the regularized trajectory after removing the Ornstein Uhlenbeck process $\xi$. \Cref{lem:sketch:coupling} is equivalent to $\Pr[\|r_\tau\| \ge \mathscr{D}] \le 10\tau d e^{-\iota}$.

We will prove by induction that $\|r_k\| \le \mathscr{D}$ for all $k \le t$ with probability at least $1-10 t d e^{-\iota}$ for all $t \le \tau$. The base case follows from $r_0 = 0$ so assume the result for some $t \ge 0$. The remainder of this section will be conditioned on the event $\|r_k\| \le \mathscr{D}$ for all $k \le t$. $O(\cdot)$ notation will only be used to hide absolute constants that do not change with $t$ and will additionally not hide dependence on the absolute constant $c$. The following proposition fills in the missing second order terms in the Taylor expansion around $\theta^*$ of $r_k$:

\begin{proposition}\label{prop:sketch:taylor2}
With probability at least $1-2de^{-\iota}$,
	\begin{align*}
	r_{k+1} 
	&= (I - \eta G)r_k - \eta\left[\frac{1}{2}\nabla^3 L(\xi_k,\xi_k) - \lambda \nabla R \right] + m_k + z_k + \tilde O\left(c^{5/2}\eta \lambda^{1+\delta}\right) %&\qquad\qquad + O(\eta \mathscr{X}(\sqrt{\mathscr{L}} + \mathscr{M} + \mathscr{X}^2)).
	\end{align*}
\end{proposition}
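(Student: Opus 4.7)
The plan is to compute $r_{k+1}$ exactly by subtracting the three defining recursions for $\theta_{k+1}$, $\Phi_{k+1}(\theta^*)$, and $\xi_{k+1}$, then Taylor-expand the only remaining non-closed-form piece, $\nabla L(\theta_k) - \nabla L(\Phi_k(\theta^*))$, to second order around $\theta^*$, and collect every term that is not one of the named quantities in the conclusion into the error budget $\tilde O(c^{5/2}\eta\lambda^{1+\delta})$.

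First I would write the three updates side by side. From \Cref{eq:labelnoisedecomposition}, $\theta_{k+1} = \theta_k - \eta\nabla L(\theta_k) + m_k + \epsilon_k^* + z_k$; from \Cref{eq:regularizedtrajectory}, $\Phi_{k+1}(\theta^*) = \Phi_k(\theta^*) - \eta\nabla L(\Phi_k(\theta^*)) - \eta\lambda\nabla R(\Phi_k(\theta^*))$; and from \Cref{eq:sketch:OUdef}, $\xi_{k+1} = (I - \eta G)\xi_k + \epsilon_k^*$. Subtracting the latter two from the first, the copies of $\epsilon_k^*$ cancel and I obtain the exact identity
\begin{align*}
r_{k+1} = r_k + \eta G \xi_k - \eta\bigl[\nabla L(\theta_k) - \nabla L(\Phi_k(\theta^*))\bigr] + \eta\lambda\nabla R(\Phi_k(\theta^*)) + m_k + z_k.
\end{align*}

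Next I would Taylor-expand. Setting $a_k = \Phi_k(\theta^*) - \theta^*$ so that $\theta_k - \theta^* = a_k + \xi_k + r_k$, and using the symmetry identity $\nabla^3 L(u,u) - \nabla^3 L(v,v) = \nabla^3 L(u-v, u+v)$,
\begin{align*}
\nabla L(\theta_k) - \nabla L(\Phi_k(\theta^*)) &= \nabla^2 L\,(\xi_k + r_k) + \nabla^3 L(\xi_k + r_k,\, a_k) + \tfrac{1}{2}\nabla^3 L(\xi_k + r_k,\, \xi_k + r_k) + \text{(Taylor remainder)}.
\end{align*}
I would then substitute $\nabla^2 L = G + E$ using \Cref{prop:hessiandecomp} and smoothness-expand $\nabla R(\Phi_k(\theta^*)) = \nabla R + O(\|a_k\|)$. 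The pieces that match the statement recover exactly $(I - \eta G)r_k - \eta\bigl[\tfrac{1}{2}\nabla^3 L(\xi_k,\xi_k) - \lambda\nabla R\bigr] + m_k + z_k$, and the residuals are $-\eta E(r_k + \xi_k)$, the cross term $-\eta\nabla^3 L(\xi_k + r_k, a_k)$, the remaining mixed terms $-\eta\nabla^3 L(\xi_k, r_k) - \tfrac{\eta}{2}\nabla^3 L(r_k, r_k)$, the cubic Taylor remainder in $\|\theta_k - \theta^*\|$ and $\|a_k\|$, and $O(\eta\lambda\|\nabla^2 R\|\,\|a_k\|)$.

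The final step is to show the residuals sit inside $\tilde O(c^{5/2}\eta\lambda^{1+\delta})$ using the inductive hypothesis $\|r_k\|\le\mathscr{D}$, the a.s. bound $\|\xi_k\|\le\mathscr{X}$ from \Cref{prop:sketch:OUproperties}, the standing assumption $\|a_k\|\le 8\mathscr{M}$, $\|E\|\le\sqrt{2\rho_f\mathscr{L}}$ from \Cref{prop:hessiandecomp}, and smoothness of $\nabla^2 f_i$. The main obstacle is the term $\eta E\xi_k$: the pointwise bound $\eta\|E\|\cdot\mathscr{X}$ alone only gives $\tilde O(\eta\lambda^{1+\delta/2})$, which is too weak. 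To sharpen it I would exploit that $\xi_k$ is a sub-Gaussian linear combination of the noise vectors $\epsilon_j^*$ with covariance $\lambda\,\Pi_G(2 - \eta G)^{-1}$ confined to the range of $G$, and apply a Hanson--Wright / vector-Bernstein concentration on $\|E\xi_k\|$; this is precisely the source of the $1 - 2de^{-\iota}$ failure probability in the statement. The other residuals are then lower-order: the cross term $\eta\nabla^3 L(\xi_k, a_k)$ is at most $\tilde O(\eta\kappa_f\mathscr{X}\mathscr{M})$, the quadratic-in-$r_k$ pieces are $\tilde O(\eta\kappa_f\mathscr{D}(\mathscr{X}+\mathscr{D}))$, and plugging in $\mathscr{L}=c\lambda^{1+\delta}$, $\mathscr{D}=c\sqrt{\mathscr{L}}\iota$, $\mathscr{M}=\mathscr{D}/\nu$ confirms they all fit within the stated budget.
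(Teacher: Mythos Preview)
Your Taylor-expansion approach is correct and essentially matches the paper's proof. However, you have been misled by what is a typo in the displayed statement: the paper's own proof of this proposition concludes with the error bound $\tilde O(c^{5/2}\eta\lambda^{1+\delta/2})$, not $\tilde O(c^{5/2}\eta\lambda^{1+\delta})$, and it is this weaker exponent that is carried forward verbatim into the proof of \Cref{prop:sketch:OUcov} as the summed error $O(\eta t\,\mathscr{X}(\sqrt{\mathscr{L}}+\mathscr{M}+\mathscr{X}^2))$. With the corrected exponent, the naive pointwise bound $\eta\|E\|\,\|\xi_k\| \le \eta\sqrt{2\rho_f\mathscr{L}}\,\mathscr{X} = \tilde O(\sqrt{c}\,\eta\lambda^{1+\delta/2})$ is already sufficient, and the $1-2de^{-\iota}$ failure probability in the statement comes \emph{solely} from the event $\|\xi_k\|\le\mathscr{X}$ of \Cref{prop:sketch:OUproperties}, not from any additional concentration on $E\xi_k$.

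Your proposed Hanson--Wright step is therefore unnecessary, and would not deliver the sharper $\lambda^{1+\delta}$ bound anyway: $E=\tfrac{1}{n}\sum_i(f_i-y_i)\nabla^2 f_i$ has no special smallness on the range of $G$, so concentrating $E\xi_k$ against the covariance $\lambda\,\Pi_G(2-\eta G)^{-1}$ still leaves you with a term of order $\|E\|\sqrt{\lambda}$. More importantly, your claim that the \emph{other} residuals ``fit within the stated budget'' is also off: the cross term $\eta\,\nabla^3 L(\xi_k,a_k)=O(\eta\mathscr{X}\mathscr{M})=\tilde O(c^{3/2}\eta\lambda^{1+\delta/2})$ is of exactly the same order as the $E\xi_k$ term you singled out, not lower order. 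Once you accept the exponent $1+\delta/2$, your residual accounting collapses to precisely the paper's $O(\eta\mathscr{X}(\sqrt{\mathscr{L}}+\mathscr{M}+\mathscr{X}^2))$ and the argument is complete.
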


The intuition for the implicit regularizer $R(\theta)$ is that by \Cref{prop:sketch:OUproperties,prop:hessiandecomp},
\begin{align*}
	\E[\xi_k\xi_k^T] \to \Pi_G\lambda(2-\eta G)^{-1} \approx \lambda(2-\eta \nabla^2 L)^{-1}.
\end{align*}
Therefore, when averaged over long timescales,
\begin{align*}
	\frac{1}{2}\E[\nabla^3 L(\xi_k,\xi_k)]
	&\approx \frac{\lambda}{2} \nabla^3 L\left[(2-\eta \nabla^2 L)^{-1}\right] \\
	&= \eval{\lambda \nabla \left[-\frac{1}{2 \eta}\tr \log \left(1-\frac{\eta}{2} \nabla^2 L(\theta)\right)\right]}_{\theta = \theta^*} \\
	&= \lambda \nabla R.
\end{align*}
The second equality follows from the more general equality that for any matrix function $A$ and any scalar function $h$ that acts independently on each eigenvalue, $\nabla (\tr h(A(\theta))) = (\nabla A(\theta))(h'(A(\theta)))$ which follows from the chain rule. The above equality is the special case when $A(\theta) = \nabla^2 L(\theta)$ and $h(x) = -\frac{1}{\eta} \log\left(1 - \frac{\eta}{2} x\right)$, which satisfies $h'(x) = \frac{1}{2-\eta x}$.

The remaining details involve concentrating the mean zero error terms $m_k,z_k$ and showing that $\E[\xi_k\xi_k^T]$ \textit{does} concentrate in the directions with large eigenvalues and that the directions with small eigenvalues, in which the covariance does not concentrate, do not contribute much to the error. This yields the following bound:
\begin{proposition}\label{prop:sketch:OUcov} With probability at least $1-10d e^{-\iota}$, $\|r_{t+1}\| = \tilde O\left(\frac{\lambda^{1/2+\delta/2}}{\sqrt{c}}\right)$.
\end{proposition}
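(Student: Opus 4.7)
My plan is to unroll the one-step recursion from Proposition \ref{prop:sketch:taylor2} into a sum over $k = 0,\ldots,t$ and bound each type of contribution separately. Setting $A := I - \eta G$ (so $\|A\| \le 1$ by Assumption \ref{asm:eta}) and $Y_k := \tfrac{1}{2}\nabla^3 L(\xi_k,\xi_k) - \lambda \nabla R$, iteration from $r_0 = 0$ yields
\begin{align*}
r_{t+1} = -\eta \sum_{k=0}^{t} A^{t-k} Y_k \;+\; \sum_{k=0}^{t} A^{t-k}(m_k + z_k) \;+\; \text{(aggregated $\tilde O(c^{5/2}\eta \lambda^{1+\delta})$ residuals)}.
\end{align*}
The residual sum is bounded by $\tilde O(\mathscr{T}\cdot c^{5/2}\eta\lambda^{1+\delta}) = \tilde O(c^{1/2}\lambda^{1/2+\delta})$ after plugging in $\mathscr{T}\eta = 1/(c^2\mathscr{X}\iota)$ and $\mathscr{X} = \tilde\Theta(\sqrt\lambda)$, which is absorbed into the target once $c$ is large and $\lambda^{\delta/2} \le 1/c$.

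For the second (noise) sum, conditioned on the filtration at step $k$, both $m_k$ and $z_k$ are mean zero. Using the inductive hypothesis $\|\theta_k - \theta^*\| \le \mathscr{M}$ together with Assumption \ref{asm:smooth} and the control $L(\theta_k) = \tilde O(\lambda^{1+\delta})$ implied by the inductive bounds on $\xi_k$ and $r_k$, the per-step sizes satisfy $\|m_k\|,\|z_k\| = \tilde O(\eta\sqrt{\lambda^{1+\delta}/B})$. A vector Freedman inequality applied to the martingale difference sequence $A^{t-k}(m_k+z_k)$, exploiting the geometric factor $\sum_{k\le t}\|A^{t-k}u\|^2 = \tilde O(1/(\eta\lambda_i(G)))$ in the $i$-th eigendirection, produces a contribution of order $\tilde O(\lambda^{1/2+\delta/2}/\sqrt c)$.

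The main work is the first sum. Decompose
\begin{align*}
Y_k = \tfrac{1}{2}\nabla^3 L\bigl(\xi_k\xi_k^T - \E\xi_k\xi_k^T\bigr) \;+\; \tfrac{1}{2}\nabla^3 L\bigl(\E\xi_k\xi_k^T - \Sigma^*\bigr) \;+\; \bigl[\tfrac{1}{2}\nabla^3 L(\Sigma^*) - \lambda \nabla R\bigr],
\end{align*}
where $\Sigma^* := \lambda\Pi_G(2-\eta G)^{-1}$ is the OU stationary covariance from Proposition \ref{prop:sketch:OUproperties}. The first piece is a martingale difference in $\xi_k$; since $\|\xi_k\| \le \mathscr{X}$ and $\nabla^3 L$ is bounded by Assumption \ref{asm:smooth}, matrix Freedman on $\eta\sum_k A^{t-k}\nabla^3 L(\xi_k\xi_k^T - \E\xi_k\xi_k^T)$ yields $\tilde O(\eta\sqrt{\mathscr{T}}\mathscr{X}^2) = \tilde O(\lambda^{1/2+\delta/2}/\sqrt c)$. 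The third (deterministic) piece equals $\tfrac{\lambda}{2}\nabla^3 L\bigl(\Pi_G(2-\eta G)^{-1} - (2-\eta\nabla^2 L)^{-1}\bigr)$ and is controlled by $\|\nabla^2 L - G\| \le \sqrt{2\rho_f L(\theta^*)} = \tilde O(\lambda^{(1+\delta)/2})$ via Proposition \ref{prop:hessiandecomp}, with the $\eta \mathscr{T}$ prefactor leaving a contribution of the target order.

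The main obstacle is the middle bias term. The Lyapunov recursion gives $\E\xi_k\xi_k^T - \Sigma^* = -A^k\Sigma^*A^k$, which decays exponentially in the large-eigenvalue directions of $G$ but stays of size $\|\Sigma^*\|$ in small-eigenvalue directions, where $A$ is nearly the identity; a naive bound would grow linearly in $\mathscr{T}$ on that subspace. The remedy is to resum $\eta\sum_{k=0}^{t} A^{t-k}\nabla^3 L(A^k\Sigma^*A^k)$ using the defining identity $I - A^2 = \eta G(2 - \eta G)$: telescoping adjacent terms, the large-eigenvalue directions contribute geometrically while in the small-eigenvalue directions the prefactor $\eta G$ produced by the telescoping exactly cancels the slow decay. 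This leaves a boundary-term bound of $\tilde O(\|\Sigma^*\|/\sqrt c) = \tilde O(\lambda^{1/2+\delta/2}/\sqrt c)$ holding uniformly across the spectrum of $G$. Combining the three contributions yields the claimed $\|r_{t+1}\| = \tilde O(\lambda^{1/2+\delta/2}/\sqrt c)$ with failure probability at most $10d e^{-\iota}$ after a union bound.
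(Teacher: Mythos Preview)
There is a genuine gap in your handling of the ``first piece'' of the $Y_k$ decomposition. The sequence $\xi_k\xi_k^T - \E[\xi_k\xi_k^T]$ is \emph{not} a martingale difference sequence: since $\xi_k = (I-\eta G)\xi_{k-1} + \epsilon_{k-1}^*$, one has $\E[\xi_k\xi_k^T\mid \mathcal{F}_{k-1}] = A\xi_{k-1}\xi_{k-1}^T A + \eta\lambda G$, which depends on the random $\xi_{k-1}$, not on the deterministic marginal $\E[\xi_k\xi_k^T]$. Consequently Freedman (or any martingale inequality) does not apply to $\eta\sum_k A^{t-k}\nabla^3 L(\xi_k\xi_k^T-\E\xi_k\xi_k^T)$, and the claimed $\tilde O(\eta\sqrt{\mathscr{T}}\,\mathscr{X}^2)$ bound is unjustified; a priori this sum could be as large as $\eta\mathscr{T}\mathscr{X}^2$, which is too big.

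The paper avoids this by working with $D_k := \xi_k\xi_k^T - \bar S$ (with $\bar S = \lambda(2-\eta G)^{-1}$) and using the exact recursion $D_{k+1} = A D_k A + W_k + Z_k$, where $W_k = A\xi_k(\epsilon_k^*)^T + \epsilon_k^*\xi_k^T A$ and $Z_k = \epsilon_k^*(\epsilon_k^*)^T - \eta\lambda G$ \emph{are} honest martingale increments. Unrolling gives $D_k = A^k D_0 A^k + \sum_{j<k} A^{k-j-1}(W_j+Z_j)A^{k-j-1}$; after switching the order of summation, Azuma is applied in $j$ with coefficients bounded via the contraction lemmas. A second ingredient you are missing is that the paper expands $\tfrac{1}{2}\nabla^3 L(M) = \tfrac{1}{n}\sum_i\bigl(H_i M g_i + \tfrac{1}{2}g_i\tr(M H_i)\bigr) + O(\sqrt{\mathscr{L}}\,\|M\|)$, using the square-loss structure. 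The point is the factor $g_i \in \mathrm{span}(G)$ on the right: it is what makes both the ``bias'' term $\sum_k A^{t-k}H_i A^k D_0 A^k g_i$ (your middle piece) and the martingale coefficients summable uniformly over the spectrum of $G$, via bounds of the form $\sum_k\|A^k g_i\|^2 = O(1/\eta)$. Your proposed telescoping for the bias term does not obviously produce such a cancellation, because $\nabla^3 L(\cdot)$ does not commute with $A$ and the factor $\eta G$ from $I-A^2$ has no reason to land in a useful place without extracting that $g_i$.
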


The proof of \Cref{prop:sketch:OUcov} can be found in \Cref{sec:appendix:missingproofs}. Finally, because $\mathscr{D} = \tilde O(c^{5/2} \lambda^{1/2+\delta/2})$, $\|r_{t+1}\| \le \mathscr{D}$ for sufficiently large $c$. This completes the induction and the proof of \Cref{lem:sketch:coupling}.

\anote{@TengyuMa I emphasized the novelty required to get a global convergence result and combined bullets 1 and 4.}
\paragraph{Comparison with \citet{blanc2019implicit}}\label{sec:blanccomparison}
%\tnote{remove the subsection title and just make it a paragraph?}
Like \citet{blanc2019implicit}, \Cref{lem:sketch:coupling} shows that $\theta$ locally follows the trajectory of gradient descent on an implicit regularizer $R(\theta)$. However, there are a few crucial differences:
%\tnote{maybe this discussion should be centered around local vs global. we need strong local analysis to allow the global convergence.}
\begin{itemize}[leftmargin=*]
	\item Because we do not assume we start near a global minimizer where $L = 0$, we couple to a regularized loss $\tilde{L} =  L + \lambda R$ rather than just the regularizer $R(\theta)$. In this setting there is an additional correction term to the Hessian (\Cref{prop:hessiandecomp}) that requires carefully controlling the value of the loss across reference points to prove convergence to a stationary point.
%	\tnote{this sentence is locally a but weird --- we need $\eta$ to be smaller than something for sure; it just doesn't ahve to go to infinity?}\anote{We only require }
%	\tnote{maybe just say we don't need implicit depenency on condition number, which is crucial for blabla .. (e.g., changing $\nabla^2 L$ ?) }
	\item The analysis in \citet{blanc2019implicit} requires $\eta,\tau$ to be chosen in terms of the condition number of $\nabla^2 L$ which can quickly grow during training as $\nabla^2 L$ is changing. This makes it impossible to directly repeat the argument. We avoid this by precisely analyzing the error incurred by small eigenvalues, allowing us to prove convergence to an $(\epsilon,\gamma)$ stationary point of $\frac{1}{\lambda} \tilde{L}$ for fixed $\eta,\lambda$ even if the smallest nonzero eigenvalue of $\nabla^2 L$ converges to $0$ during training.
	\item Unlike in \citet{blanc2019implicit}, we do not require the learning rate $\eta$ to be small. Instead, we only require that $\lambda$ scales with $\epsilon$ which can be accomplished either by decreasing the learning rate $\eta$ or increasing the batch size $B$. This allows for stronger implicit regularization in the setting when $\eta$ is large (see \Cref{sec:strengthlargeeta}). In particular, our regularizer $R(\theta)$ changes with $\eta$ and is only equal to the regularizer in \citet{blanc2019implicit} in the limit $\eta \to 0$.
	
%	\item The analysis in \citet{blanc2019implicit} relies on the fact that $\theta^*$ is a global minimizer to write $\nabla^2 L(\theta^*) = G(\theta^*)$ so that the Hessian and the noise covariance match. When $\theta^*$ is only an approximate global minimizer, there is an additional correction term to the Hessian that needs to be controlled which requires a precise bound on $L(\theta^*)$. \anote{This last bullet feels kind of weak? Can probably remove it and get by with the first 3 bullets.}\tnote{shall we merge this with the first bullet? the first bullet kind of look hollow---it's unclear how important it is}
%	\tnote{one of these bullets perhaps need to stress why these changes are need for global convergence}
\end{itemize}

\subsection{Global Convergence}\label{sec:sketch:convergence}
In order to prove convergence to an $(\epsilon,\gamma)$-stationary point of $\frac{1}{\eta} \nabla \tilde{L}$, we will define a sequence of reference points $\theta_m^*$ and coupling times $\{\tau_m\}$ and repeatedly use a version of \Cref{lem:sketch:coupling} to describe the long term behavior of $\theta$. For notational simplicity, given a sequence of coupling times $\{\tau_m\}$, define $T_m = \sum_{k < m} \tau_k$ to be the total number of steps until we have reached the reference point $\theta_m^*$.

To be able to repeat the local analysis in \Cref{lem:sketch:coupling} with multiple reference points, we need a more general coupling lemma that allows the random process $\xi$ defined in each coupling to continue where the random process in the previous coupling ended. To accomplish this, we define $\xi$ outside the scope of the local coupling lemma:
\begin{definition}\label{def:globalOU}
	Given a sequence of reference points $\{\theta_m^*\}$ and a sequence of coupling times $\{\tau_m\}$, we define the random process $\xi$ by $\xi_0 = 0$, and for $k \in [T_m,T_{m+1})$,
	\begin{align*}
		\epsilon_k^* = \frac{\eta}{B}\sum_{i \in \mathcal{B}^{(k)}} \epsilon^{(k)}_i \nabla f_i(\theta_m^*) \qqtext{and} \xi_{k+1} = (I - \eta G(\theta_m^*))\xi_k + \epsilon_k^*.
	\end{align*}
\end{definition}
Then we can prove the following more general coupling lemma:

\begin{lemma}\label{lem:convergence:coupling}
	\sloppy
	Let $\mathscr{X}, \mathscr{L}, \mathscr{D}, \mathscr{M}, \mathscr{T}$ be defined as in \Cref{lem:sketch:coupling}. Assume $f$ satisfies \Cref{asm:smooth} and $\eta$ satisfies \Cref{asm:eta}. Let $\Delta_m = \theta_{T_m} - \xi_{T_m} - \theta_m^*$ and assume that $\|\Delta_m\| \le \mathscr{D}$ and $L(\theta_m^*) \le \mathscr{L}$ for some $0 < \delta \le 1/2$. Then for any $\tau_m \le \mathscr{T}$ satisfying $\max_{k \in [T_m,T_{m+1})} \|\Phi_{k-T_m}(\theta_m^* + \Delta_m) - \theta_m^*\| \le 8\mathscr{M}$, with probability at least $1-10 d \tau_m e^{-\iota}$ we have simultaneously for all $k \in (T_m,T_{m+1}]$,
	\begin{align*}
		\norm{\theta_k - \xi_k - \Phi_{k-T_m}(\theta_m^* + \Delta_m)} \le \mathscr{D}, \qquad \E[\xi_k]=0, \qqtext{and} \|\xi_k\| \le \mathscr{X}.
	\end{align*}
\end{lemma}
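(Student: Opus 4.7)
The plan is to replay the induction from the proof of \Cref{lem:sketch:coupling}, but anchored at the new reference point $\theta_m^*$ rather than at a zero-loss point, and with both the residual and the OU process inheriting nonzero initial conditions from the previous coupling. Define the residual $r_k := \theta_k - \xi_k - \Phi_{k-T_m}(\theta_m^* + \Delta_m)$ for $k \in [T_m, T_{m+1}]$; by the definition of $\Delta_m$ we have $r_{T_m} = 0$, which serves as the base case. I would then prove by induction on $k$ that $\|r_k\| \le \mathscr{D}$ and $\|\xi_k\| \le \mathscr{X}$ with probability at least $1 - 10 d(k - T_m)e^{-\iota}$, exactly matching the statement.

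For the inductive step, I would Taylor expand both \Cref{alg:sgdln} and the regularized gradient descent recursion around $\theta_m^*$ and subtract, following the derivation of \Cref{prop:sketch:taylor2} verbatim. \Cref{prop:hessiandecomp} applied at $\theta_m^*$ (using $L(\theta_m^*) \le \mathscr{L} = c\lambda^{1+\delta}$) still permits replacing $\nabla^2 L(\theta_m^*)$ by $G(\theta_m^*)$ up to an $\|E(\theta_m^*)\| = O(\sqrt{\mathscr{L}})$ error absorbed into the higher-order terms, while the hypothesis $\|\Phi_{k-T_m}(\theta_m^*+\Delta_m) - \theta_m^*\| \le 8\mathscr{M}$ keeps the expansion point close enough to $\theta_m^*$. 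The recursion obtained is
$$
r_{k+1} = (I - \eta G(\theta_m^*))\, r_k - \eta\bigl[\tfrac{1}{2}\nabla^3 L(\theta_m^*)(\xi_k, \xi_k) - \lambda \nabla R(\theta_m^*)\bigr] + m_k + z_k + \tilde O\!\left(c^{5/2}\eta\lambda^{1+\delta}\right),
$$
where the additional $O(\|\Delta_m\|) = O(\mathscr{D})$ terms arising from expanding $\Phi_{k-T_m}(\theta_m^* + \Delta_m)$ around $\theta_m^*$ (rather than around $\theta_m^* + \Delta_m$) are dominated by the existing error. From here the concentration of $\tfrac{1}{2}\nabla^3 L(\xi_k, \xi_k)$ against $\lambda \nabla R(\theta_m^*)$, the bounds on the martingale terms $m_k, z_k$, and the accumulation into $\|r_{T_{m+1}}\| \le \mathscr{D}$ follow the proof of \Cref{prop:sketch:OUcov} essentially unchanged.

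The main new technical point is controlling $\xi_k$ for $k \in (T_m, T_{m+1}]$ when it inherits a nonzero value $\xi_{T_m}$ and is evolved under the new contraction matrix $G_m := G(\theta_m^*)$. Solving the linear recursion inside the interval gives
$$
\xi_k = (I - \eta G_m)^{k - T_m}\,\xi_{T_m} + \sum_{j = T_m}^{k-1}(I - \eta G_m)^{k - 1 - j}\,\epsilon_j^*.
$$
By \Cref{asm:eta}, $\|I - \eta G_m\| \le 1$, so the first summand is bounded by $\|\xi_{T_m}\|$, which is at most $\mathscr{X}$ by the induction hypothesis propagated from the previous interval (and is zero when $m = 0$). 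The second summand is a martingale in the fresh label noise whose covariance is dominated by the stationary covariance $\lambda \Pi_{G_m}(2 - \eta G_m)^{-1}$, so the concentration argument behind \Cref{prop:sketch:OUproperties} bounds it by a constant times $\mathscr{X}$. Enlarging the constant $c$ inside $\mathscr{X}$ absorbs the sum and yields $\|\xi_k\| \le \mathscr{X}$; the identity $\E[\xi_k] = 0$ is immediate from $\xi_0 = 0$ and the linearity of $\xi_k$ in mean-zero label noise.

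The hard part will be the averaging step making $\tfrac{1}{2}\E[\nabla^3 L(\theta_m^*)(\xi_k, \xi_k)] \approx \lambda \nabla R(\theta_m^*)$: because $\xi$ does not restart at zero, the transient $(I - \eta G_m)^{k - T_m}\xi_{T_m}$ biases $\E[\xi_k \xi_k^T]$ away from the stationary covariance. This bias decays exponentially in the directions where $\lambda_i(G_m)$ is comparable to $1/\eta$, but persists on the kernel of $G_m$. Following the decomposition used in \Cref{prop:sketch:OUcov}, I would isolate the kernel directions, where $\nabla^3 L(\xi, \xi)$ contributes negligibly because both $\nabla R(\theta_m^*)$ and the stationary covariance are supported on the range of $G_m$, and control the remaining directions by the exponential mixing of the OU process. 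With this additional bookkeeping the single-reference-point analysis goes through and closes the induction, proving \Cref{lem:convergence:coupling}.
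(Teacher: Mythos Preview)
Your overall strategy—re-run the \Cref{lem:sketch:coupling} induction anchored at $\theta_m^*$, with the residual initialized at $r_{T_m}=0$ via the definition of $\Delta_m$—is exactly what the paper does, and your handling of the Taylor-expansion part and the transient bias in $\E[\xi_k\xi_k^T]$ is essentially correct (the paper absorbs the $D_0=\xi_{T_m}\xi_{T_m}^T-\bar S$ term into the same contract-sum bounds you cite, without needing to argue about kernel directions separately).

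The genuine gap is in your bound on $\|\xi_k\|$. Your decomposition
\[
\xi_k=(I-\eta G_m)^{k-T_m}\xi_{T_m}+\sum_{j=T_m}^{k-1}(I-\eta G_m)^{k-1-j}\epsilon_j^*
\]
yields $\|\xi_k\|\le\|\xi_{T_m}\|+\mathscr{X}$ by the triangle inequality, not $\|\xi_k\|\le\mathscr{X}$. You propose to ``enlarge the constant $c$ inside $\mathscr{X}$'', but there is no such constant: $\mathscr{X}=\sqrt{2\lambda nd\iota/\nu}$ is fixed, and the fresh-noise concentration already saturates it. Even if you inflated $\mathscr{X}$ by a factor, the same factor would be required at the next interval and the bound would grow linearly in $m$. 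Since the lemma must be chained over $\tilde O(\lambda^{-\delta})$ intervals in the proof of \Cref{thm:sgdsp}, this is fatal.

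The paper's fix is \Cref{prop:OUnormglobal}: rather than split $\xi_k$ at $T_m$ and bound pieces in norm, it realizes $\xi_k$ as the terminal value of a single martingale $X_j^{(k)}$ in \emph{all} past noise $\epsilon_0^*,\ldots,\epsilon_{k-1}^*$ and tracks its quadratic covariation directly. The key observation is that the covariation obeys the one-step recursion
\[
[X^{(k+1)},X^{(k+1)}]_{k+1}=(I-\eta G_k)[X^{(k)},X^{(k)}]_k(I-\eta G_k)+\epsilon_k^*(\epsilon_k^*)^T,
\]
and the contraction in the first term exactly compensates the noise injection in the second, so $[X^{(k)},X^{(k)}]_k\preceq\frac{n\lambda}{\nu}I$ for \emph{all} $k$, uniformly across reference-point changes. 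A single application of Azuma then gives $\|\xi_k\|\le\mathscr{X}$ with no accumulation. Once you replace your decomposition argument with this, the rest of your sketch goes through as you wrote it.
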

Unlike in \Cref{lem:sketch:coupling}, we couple to the regularized trajectory starting at $\theta_m^* + \Delta_m$ rather than at $\theta_m^*$ to avoid accumulating errors (see \Cref{fig:localcouplingsketch}). The proof is otherwise identical to that of \Cref{lem:sketch:coupling}. 

The proof of \Cref{thm:sgdsp} easily follows from the following lemma which states that we decrease the regularized loss $\tilde L$ by at least $\mathscr{F}$ after every coupling:
\begin{lemma}\label{lem:convergence:1step}
	Let $\mathscr{F} = \frac{\mathscr{D}^2}{\eta \nu \mathscr{T}}$. Let $\Delta_m = \theta_{T_m} - \xi_{T_m} - \theta_m^*$ and assume $\|\Delta_m\| \le \mathscr{D}$ and $L(\theta_m^*) \le \mathscr{L}$. Then if $\theta_{T_m}$ is not an $(\epsilon,\gamma)$-stationary point, there exists some $\tau_m < \mathscr{T}$ such that if we define
	\begin{align*}
		\theta_{m+1}^* = \Phi_{\tau_n}(\theta_m^* + \Delta_m) \qqtext{and} \Delta_{m+1} = \theta_{T_{m+1}} - \xi_{T_{m+1}} - \theta_{m+1}^*,
	\end{align*}
	then with probability $1-10d\tau_m e^{-\iota}$,
	\begin{align*}
		\tilde L(\theta_{m+1}^*) \le L(\theta_m^*) - \mathscr{F}, \qquad \|\Delta_{m+1}\| \le \mathscr{D} \qqtext{and} L(\theta_{m+1}^*) \le \mathscr{L}.
	\end{align*}
\end{lemma}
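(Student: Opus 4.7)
The plan is to pick a stopping time $\tau_m$ along the regularized gradient descent trajectory $\Psi_k := \Phi_k(\theta_m^*+\Delta_m)$ so that (i) $\{\Psi_k\}_{k\le\tau_m}$ stays inside the $8\mathscr{M}$-ball around $\theta_m^*$, making Lemma \ref{lem:convergence:coupling} applicable, and (ii) gradient descent on $\tilde L$ drops $\tilde L$ by at least $c'\mathscr{F}$ over these $\tau_m$ steps for some universal $c'\ge 2$. The bookkeeping rides on two standard identities: for $k<\tau_m$, Assumption \ref{asm:eta} and the smoothness of $\tilde L$ on a bounded neighborhood of $\theta_m^*$ give $\tilde L(\Psi_{k+1})-\tilde L(\Psi_k)\le -\tfrac{\eta\nu}{2}\|\nabla \tilde L(\Psi_k)\|^2$ together with $\|\Psi_{k+1}-\Psi_k\|=\eta\|\nabla \tilde L(\Psi_k)\|$.

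Concretely, define $\tau_m$ as the minimum of $\mathscr{T}$ and the first step at which $\|\Psi_k-\theta_m^*\|>8\mathscr{M}$, and split into two cases. If $\tau_m<\mathscr{T}$, the net displacement is at least $8\mathscr{M}-\|\Delta_m\|\ge 7\mathscr{M}$ since $\mathscr{D}\le\mathscr{M}$, so Cauchy--Schwarz on the path length gives $49\mathscr{M}^2\le\tau_m\eta^2\sum_{k<\tau_m}\|\nabla\tilde L(\Psi_k)\|^2$, and the descent identity yields total decrease at least $\tfrac{49\mathscr{M}^2\nu}{2\eta\tau_m}\ge\tfrac{49\mathscr{D}^2}{2\nu\eta\mathscr{T}}=\tfrac{49}{2}\mathscr{F}$. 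If instead $\tau_m=\mathscr{T}$, then $\|\Psi_k-\theta_{T_m}\|\le 8\mathscr{M}+\|\Delta_m\|+\|\xi_{T_m}\|\le 8\mathscr{M}+\mathscr{D}+\mathscr{X}$, which is at most $\gamma$ under the hypothesis $\lambda=\tilde\Theta(\min(\epsilon^{2/\delta},\gamma^2))$. Since $\theta_{T_m}$ is assumed not to be an $(\epsilon,\gamma)$-stationary point of $\tfrac{1}{\lambda}\tilde L$, this forces $\|\nabla\tilde L(\Psi_k)\|>\lambda\epsilon$ for every $k<\mathscr{T}$; summing the descent identity gives decrease at least $\tfrac{\eta\nu\mathscr{T}(\lambda\epsilon)^2}{2}\gtrsim \lambda^{3/2}\epsilon^2\gtrsim \lambda^{3/2+\delta}\simeq\mathscr{F}$, where the last step uses $\lambda\lesssim\epsilon^{2/\delta}$.

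With $\tau_m$ fixed I would invoke Lemma \ref{lem:convergence:coupling}, which furnishes $\|\Delta_{m+1}\|\le\mathscr{D}$ with probability at least $1-10d\tau_m e^{-\iota}$. The bound $L(\theta_{m+1}^*)\le\mathscr{L}$ follows independently from smoothness of $L$, the containment $\|\theta_{m+1}^*-\theta_m^*\|\le 8\mathscr{M}$, and the base hypothesis $L(\theta_m^*)\le\mathscr{L}$, with the arising constants absorbed into the ``sufficiently large $c$'' in the definitions of $\mathscr{L}$ and $\mathscr{M}$. The bound $\tilde L(\theta_{m+1}^*)\le L(\theta_m^*)-\mathscr{F}$ then follows from the case analysis combined with a first-order Taylor comparison $\tilde L(\Psi_0)-\tilde L(\theta_m^*)=O(\mathscr{D}\,\|\nabla \tilde L(\theta_m^*)\|+\mathscr{D}^2)$ plus the lower bound $\tilde L(\theta_m^*)\le L(\theta_m^*)+\lambda R(\theta_m^*)$, all absorbed into the slack factor $c'\ge 2$ from the $\tfrac{49}{2}\mathscr{F}$ estimate of Case~(a). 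The main obstacle is Case~(b): translating the non-stationarity of the single point $\theta_{T_m}$ into a uniform gradient lower bound along the entire $\mathscr{T}$-step trajectory of $\tilde L$. This requires $8\mathscr{M}+\mathscr{D}+\mathscr{X}\le\gamma$ (which dictates $\lambda=\tilde\Theta(\gamma^2)$) together with $\eta\nu\mathscr{T}(\lambda\epsilon)^2\gtrsim\mathscr{F}$ (which dictates $\lambda=\tilde\Theta(\epsilon^{2/\delta})$); both constraints on $\lambda$ in Theorem \ref{thm:sgdsp} are forced by this single step of the argument, and tuning the constants $c,\iota$ in $\mathscr{D},\mathscr{M},\mathscr{X},\mathscr{T}$ jointly is what makes all three conclusions hold simultaneously with the same high-probability budget $10d\tau_m e^{-\iota}$.
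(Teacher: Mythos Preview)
Your overall strategy—pick a stopping time along the regularized trajectory, invoke the descent lemma, then apply Lemma~\ref{lem:convergence:coupling}—is sound, and your Case~(b) analysis correctly recovers the constraints $\lambda=\tilde\Theta(\min(\epsilon^{2/\delta},\gamma^2))$. But there is a genuine gap in the bookkeeping step ``$\tilde L(\Psi_0)-\tilde L(\theta_m^*)=O(\mathscr{D}\|\nabla\tilde L(\theta_m^*)\|+\mathscr{D}^2)$ \ldots\ absorbed into the slack factor.'' The second-order term here is $\tfrac12\Delta_m^T\nabla^2\tilde L(\theta_m^*)\Delta_m$, and since $\|\nabla^2 L(\theta_m^*)\|$ is of order $\ell$ (not small), this term is genuinely $\Theta(\mathscr{D}^2)$. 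Meanwhile $\mathscr{F}=\mathscr{D}^2/(\eta\nu\mathscr{T})$ with $\eta\mathscr{T}=\Theta(1/\mathscr{X})=\Theta(\lambda^{-1/2})$, so $\mathscr{F}=\Theta(\mathscr{D}^2\sqrt{\lambda})\ll\mathscr{D}^2$. Your $\tfrac{49}{2}\mathscr{F}$ slack cannot absorb a $\Theta(\mathscr{D}^2)$ correction, and the whole descent argument collapses.

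The paper avoids this by running the descent analysis along $\Phi_k(\theta_m^*)$ rather than $\Psi_k=\Phi_k(\theta_m^*+\Delta_m)$, and then comparing the two trajectories at the \emph{end} rather than the beginning (Propositions~\ref{prop:regcontract}--\ref{prop:reglossascent}). The point is that gradient descent contracts the offset: $\theta_{m+1}^*-\Phi_{\tau_m}(\theta_m^*)=(I-\eta G)^{\tau_m}\Delta_m+r$ with $r$ small, so the dangerous second-order term becomes $\Delta_m^T(I-\eta G)^{\tau_m}G(I-\eta G)^{\tau_m}\Delta_m\le \tfrac{\mathscr{D}^2}{2\eta\nu\tau_m}$ via the contraction bound $\|(I-\eta G)^\tau G\|\le\tfrac{1}{\eta\nu\tau}$. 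This is now the \emph{same} order as the descent $8\mathscr{D}^2/(\eta\nu\tau_m)$, and the constants ($8$ from descent, $7$ from ascent) are tracked exactly to net $-\mathscr{D}^2/(\eta\nu\tau_m)\le-\mathscr{F}$. A secondary issue: your argument for $L(\theta_{m+1}^*)\le\mathscr{L}$ (``smoothness plus $\|\theta_{m+1}^*-\theta_m^*\|\le8\mathscr{M}$'') only gives $L(\theta_{m+1}^*)\le C\mathscr{L}$ for some $C>1$, which would blow up under iteration; the paper instead uses a dichotomy based on the KL assumption together with the already-established $\tilde L$ decrease (Proposition~\ref{prop:chainloss}).
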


We defer the proofs of \Cref{lem:convergence:coupling} and \Cref{lem:convergence:1step} to \Cref{sec:appendix:missingproofs}. \Cref{thm:sgdsp} now follows directly from repeated applications of \Cref{lem:convergence:1step}:
\begin{proof}[Proof of \Cref{thm:sgdsp}]
	By assumption there exists some $\theta_0^*$ such that $L(\theta_0^*) \le \mathscr{L}$ and $\|\theta_0 - \theta_0^*\| \le \mathscr{D}$. Then so long as $\theta_{T_m}$ is not an $(\epsilon,\gamma)$-stationary point, we can inductively apply \Cref{lem:convergence:1step} to get the existence of coupling times $\{\tau_m\}$ and reference points $\{\theta_m^*\}$ such that for any $m \ge 0$, with probability $1-10 d T_m e^{-\iota}$ we have $\tilde L(\theta_{m}^*) \le \tilde L(\theta_0^*) - m \mathscr{F}$. As $\tilde L(\theta_0^*) - \tilde L(\theta_m^*) = O(\lambda)$, this can happen for at most $m = O\left(\frac{\lambda}{\mathscr{F}}\right)$ reference points, so at most $T = O\left(\frac{\lambda\mathscr{T}}{\mathscr{F}}\right) = \tilde O\left(\eta^{-1}\lambda^{-1-\delta}\right)$ iterations of \Cref{alg:sgdln}. By the choice of $\iota$, this happens with probability $1-10 d T e^{-\iota} \ge 1-\zeta$.
\end{proof}

\section{Experiments}\label{sec:experiments}
\begin{figure}[t]
	\centering
	\begin{subfigure}[c]{0.78\textwidth}
	\includegraphics[width=\linewidth]{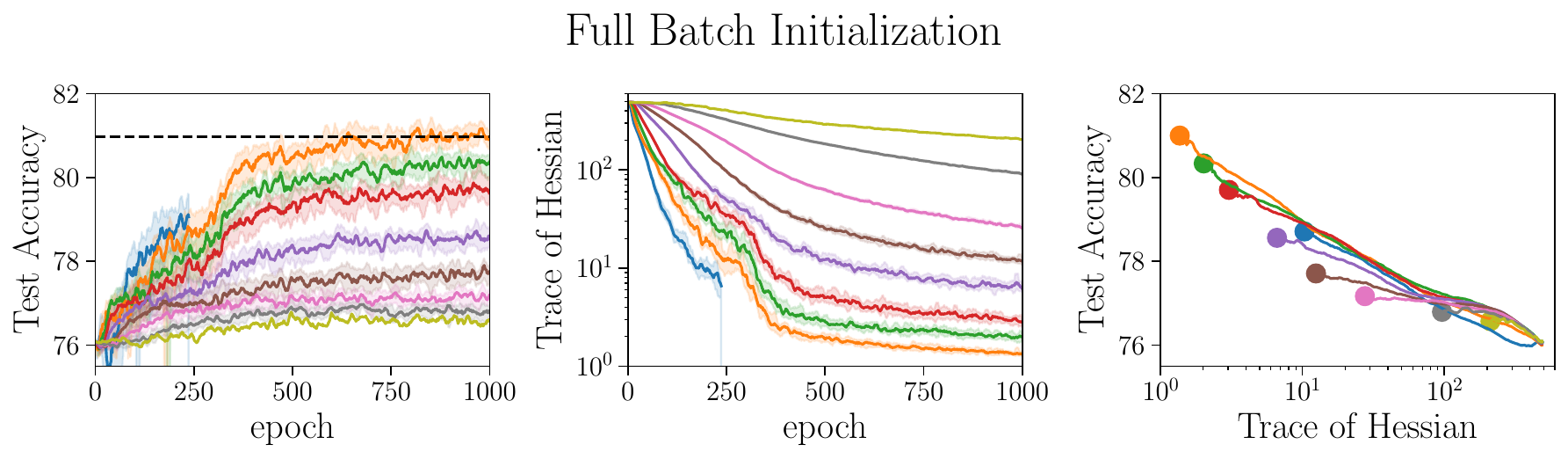}\\
	\includegraphics[width=\linewidth]{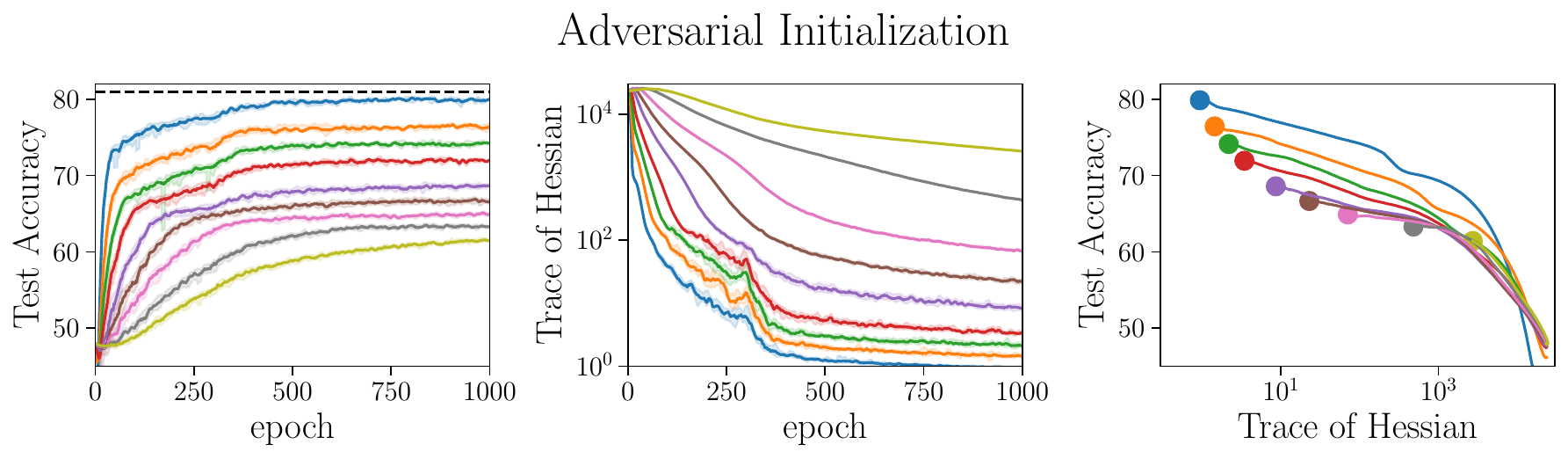}
	\end{subfigure}
	\begin{subfigure}[c]{0.2\textwidth}
	\includegraphics[width=\linewidth]{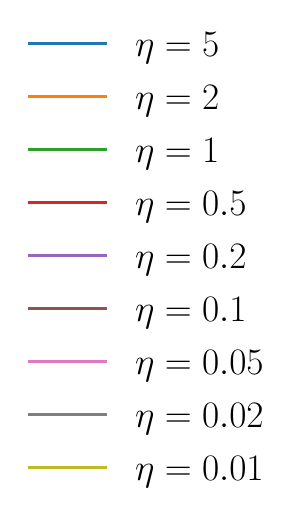}
	\end{subfigure}
	\caption{\textbf{Label Noise SGD escapes poor global minimizers.} The left column displays the training accuracy over time, the middle column displays the value of $\tr \nabla^2 L(\theta)$ over time which we use to approximate the implicit regularizer $R(\theta)$, and the right column displays their correlation. The horizontal dashed line represents the minibatch SGD baseline with random initialization. We report the median results over $3$ random seeds and shaded error bars denote the min/max over the three runs. The correlation plot uses a running average of $100$ epochs for visual clarity.}
	\label{fig:experiments}
\end{figure}

In order to test the ability of SGD with label noise to escape poor global minimizers and converge to better minimizers, we initialize \Cref{alg:sgdln} at global minimizers of the training loss which achieve $100\%$ training accuracy yet generalize poorly to the test set. Minibatch SGD would remain fixed at these initializations because both the gradient and the noise in minibatch SGD vanish at any global minimizer of the training loss. We show that SGD with label noise escapes these poor initializations and converges to flatter minimizers that generalize well, which supports \Cref{thm:sgdsp}. We run experiments with two initializations:

\textbf{Full Batch Initialization:} We run full batch gradient descent with random initialization until convergence to a global minimizer. We call this minimizer the full batch initialization. The final test accuracy of the full batch initialization was 76\%.

\textbf{Adversarial Initialization:} Following  \citet{liu2019bad}, we generate an adversarial initialization with final test accuracy $48\%$ that achieves zero training loss by first teaching the network to memorize random labels and then training it on the true labels. See \Cref{sec:appendix:experiments} for full details.

% first generate an adversarial dataset by duplicating every image in the dataset $10\times$, randomly zeroing out 10\% of the pixels in every image, and randomly assigning labels to every image in the dataset. We first run large batch SGD to converge to a global minimizer of the adversarial dataset and then run large batch SGD starting with the global minimizer of the adversarial dataset to find a global minimizer of the original dataset, which we call the adversarial initialization. The test accuracy of the adversarial initialization was 47\%.
%We trained ResNet18 with group normalization rather than batch normalization to maintain the independence of the sample gradients when evaluated in a batch.
%Note that $\tr \nabla^2 L$ is much larger at the adversarial initialization compared to the full batch initialization.

Experiments were run with ResNet18 on CIFAR10 \citep{Krizhevsky09learningmultiple} without data augmentation or weight decay. The experiments were conducted with randomized label flipping with probability $0.2$ (see \Cref{sec:classification} for the extension of \Cref{thm:sgdsp} to classification with label flipping), cross entropy loss, and batch size 256. Because of the difficulty in computing the regularizer $R(\theta)$, we approximate it by its lower bound $\tr \nabla^2 L(\theta)$. \Cref{fig:experiments} shows the test accuracy and $\tr \nabla^2 L$ throughout training.

SGD with label noise escapes both zero training loss initializations and converges to flatter minimizers that generalize much better, reaching the SGD baseline from the fullbatch initialization and getting within $1\%$ of the baseline from the adversarial initialization. The test accuracy in both cases is strongly correlated with $\tr \nabla^2 L$. The strength of the regularization is also strongly correlated with $\eta$, which supports \Cref{thm:sgdsp}. See \Cref{fig:experiments_m} for experimental results for SGD with momentum.

\section{Extensions}\label{sec:extensions}

\subsection{Classification}\label{sec:classification}
We restrict $y_i \in \{-1,1\}$, let $l:\mathbb{R} \to \mathbb{R}^+$ be an arbitrary loss function, and $p \in (0,1)$ be a smoothing factor. Examples of $l$ include logistic loss, exponential loss, and square loss (see \Cref{table:differentlosses}). We define $\bar l$ to be the expected smoothed loss where we flip each label with probability $p$:
\begin{align}
\bar{l}(x) = p l(-x) + (1-p) l(x).
\end{align}
We make the following mild assumption on the smoothed loss $\bar{l}$ which is explicitly verified for the logistic loss, exponential loss, and square loss in \Cref{sec:quadraticapprox}:

\begin{assumption}[Quadratic Approximation]\label{asm:quadraticapprox}
	If $c \in \mathbb{R}$ is the unique global minimizer of $\bar{l}$, there exist constants $\epsilon_{Q}>0,\nu > 0$ such that if $\bar{l}(x) \le \epsilon_Q$ then,
	\begin{align}
		(x-c)^2 \le \nu (\bar{l}(x)-\bar{l}(c)).
	\end{align}
	In addition, we assume that $\bar l', \bar l''$ are $\rho_l$, $\kappa_l$ Lipschitz respectively restricted to the set $\{x : \bar{l}(x) \le \epsilon_Q\}$.
\end{assumption}

Then we define the per-sample loss and the sample loss as:
\begin{align}
\ell_i(\theta) = \bar{l}(y_i f_i(\theta)) - \bar{l}(c) \qqtext{and} L(\theta) = \frac{1}{n} \sum_{i=1}^n \ell_i(\theta).
\end{align}

We will follow \Cref{alg:sgdls}:

\begin{algorithm}[H]
\SetAlgoLined
\KwIn{$\theta_0$, step size $\eta$, smoothing constant $p$, batch size $B$, steps $T$, loss function $l$}
\For{$k = 0$ to $T-1$}
{
Sample batch $\mathcal{B}^{(k)} \sim [n]^{B}$ uniformly and sample $\sigma^{(k)}_i = 1,-1$ with probability $1-p,p$ respectively for $i \in \mathcal{B}^{(k)}$.\\
Let $\hat{\ell}_i^{(k)}(\theta) = l[\sigma^{(k)}_i y_{i} f_{i}(\theta)]$ and $\hat L^{(k)} = \frac{1}{B} \sum_{i \in \mathcal{B}^{(k)}} \hat{\ell}_i^{(k)}$. \\
$\theta_{k+1} \leftarrow \theta_k - \eta \nabla \hat L^{(k)}(\theta_k)$
}
\caption{SGD with Label Smoothing}
\label{alg:sgdls}
\end{algorithm}
Now note that the noise per sample from label smoothing at a zero loss global minimizer $\theta^*$ can be written as
\begin{align}
	\nabla \hat{\ell}^{(k)}_i(\theta^*) - \nabla \ell_{i}(\theta^*) = \epsilon \nabla f_i(\theta^*)
\end{align}
where
\begin{align}
\epsilon = \begin{cases}
 	p(l'(c) + l'(-c)) \text{with probability } 1-p \\
 	-(1-p)(l'(c) + l'(-c)) \text{with probability } p
 \end{cases}	
\end{align}
so $E[\epsilon] = 0$ and
\begin{align}
\sigma^2 = E[\epsilon^2] = 	p(1-p)(l'(c) + l'(-c))^2,
\end{align}
which will determine the strength of the regularization in \Cref{thm:sgdlssp}. Finally, in order to study the local behavior around $c$ we define $\alpha = \bar{l}''(c) > 0$ by \Cref{asm:quadraticapprox}. Corresponding values for $c,\sigma^2,\alpha$ for logistic loss, exponential loss, and square loss are given in \Cref{table:differentlosses}.
\begin{table}[t]
\center
\tabulinesep=1.2mm
\begin{tabu}{|c|c|c|c|c|}
\hline
& $l(x)$ & $c = \arg\min_x \bar l(x)$ & $\sigma^2 = E[\epsilon^2]$ & $\alpha = \bar l''(c)$ \\ \hline
Logistic Loss & $\log \left[1+e^{-x}\right]$ & $\log \frac{1-p}{p}$ & $p(1-p)$ & $p(1-p)$                    \\ \hline
Exponential Loss & $e^{-x}$ & $\frac{1}{2}\log \frac{1-p}{p}$ & $1$ & $2\sqrt{p(1-p)}$ \\ \hline
Square Loss & $\frac{1}{2}(x-1)^2$ & $1-2p$ & $4p(1-p)$ & $1$ \\ \hline
\end{tabu}
\caption{Values of $l(x)$, $c$, $\sigma^2$, $\alpha$ for different binary classification loss functions}
\label{table:differentlosses}
\end{table}

As before we define:
\begin{align}
	R(\theta) = -\frac{1}{2 \eta \alpha}\tr \log \left(1-\frac{\eta}{2} \nabla^2 L(\theta)\right), && \lambda = \frac{\eta \sigma^2}{B}, && \tilde{L}(\theta) = L(\theta) + \lambda R(\theta). \label{eq:ls:regdef}
\end{align}
Our main result is a version of \Cref{thm:sgdsp}:

\begin{theorem}\label{thm:sgdlssp}
	Assume that $f$ satisfies Assumption \ref{asm:smooth}, $\eta$ satisfies Assumption \ref{asm:eta}, $L$ satisfies Assumption \ref{asm:kl} and $l$ satisfies \Cref{asm:quadraticapprox}.
	Let  $\eta, B$ be chosen such that $\lambda := \frac{\eta \sigma^2}{B} = \tilde\Theta(\min(\epsilon^{2/\delta},\gamma^2))$, and let $T = \tilde\Theta(\eta^{-1}\lambda^{-1-\delta}) = \poly(\eta^{-1},\gamma^{-1})$. Assume that $\theta$ is initialized within $O(\sqrt{\lambda^{1+\delta}})$ of some $\theta^*$ satisfying $L(\theta^*) = O(\lambda^{1+\delta})$. Then for any $\zeta \in (0,1)$, with probability at least $1 - \zeta$, if $\{\theta_k\}$ follows \Cref{alg:sgdls} with parameters $\eta,\sigma,T$, there exists $k < T$ such that $\theta_k$ is an $(\epsilon,\gamma)$-stationary point of $\frac{1}{\lambda}\tilde{L}$.
\end{theorem}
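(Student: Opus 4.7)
The plan is to mirror the regression proof of \Cref{thm:sgdsp}, reducing the classification setting to a rescaled version of the regression analysis via the local quadratic approximation of $\bar{l}$ guaranteed by \Cref{asm:quadraticapprox}. Since $\bar{l}$ has unique minimizer $c$ with $\bar{l}'(c)=0$ and $\bar{l}''(c)=\alpha>0$, Taylor expanding $\ell_i(\theta)=\bar{l}(y_i f_i(\theta))-\bar{l}(c)$ around any $\theta^*$ with $\bar{l}(y_i f_i(\theta^*))\le\epsilon_Q$ for all $i$ produces, to leading order, a scalar multiple of a square loss. This intuition will drive the analogues of every proposition in the regression proof, with the only new algebraic wrinkle being a factor of $\alpha$ that must be tracked through the Hessian decomposition, the Ornstein--Uhlenbeck covariance, and the definition of $R$ in \Cref{eq:ls:regdef}.

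First I would establish the classification analogue of \Cref{prop:hessiandecomp}: writing $g_i(\theta)=\bar{l}'(y_i f_i(\theta))$ and differentiating the chain rule, we obtain $\nabla^2 L(\theta)=\alpha G(\theta)+E(\theta)$ where $E$ collects both the $\bar{l}''-\alpha$ correction on the diagonal $\nabla f_i \nabla f_i^\top$ term and the $\bar{l}'\cdot\nabla^2 f_i$ residual; Cauchy--Schwarz together with \Cref{asm:quadraticapprox} bounds $\|E(\theta)\| = O(\sqrt{L(\theta)})$ exactly as in the regression case, since $(y_i f_i(\theta)-c)^2\lesssim\ell_i(\theta)$. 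Next, I would verify that the label-smoothing gradient noise at a reference point $\theta^*$ satisfying $y_i f_i(\theta^*)\approx c$ has the form $\epsilon_k^*=\frac{\eta}{B}\sum_{i\in\mathcal{B}^{(k)}}\epsilon_i^{(k)}\nabla f_i(\theta^*)$ with the scalars $\epsilon_i^{(k)}$ mean zero and variance $\sigma^2=p(1-p)(l'(c)+l'(-c))^2$, so that $\mathrm{Cov}(\epsilon_k^*)=\eta\lambda G(\theta^*)$. The minibatch--label-noise decomposition of \Cref{eq:labelnoisedecomposition} carries over verbatim.

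I would then reprove the local coupling lemma with the $\alpha$-scaled dynamics $\xi_{k+1}=(I-\eta\alpha G)\xi_k+\epsilon_k^*$, since to first order $\theta_{k+1}-\theta^*\approx(I-\eta\nabla^2 L)(\theta_k-\theta^*)+\epsilon_k^*\approx(I-\eta\alpha G)(\theta_k-\theta^*)+\epsilon_k^*$. Solving the discrete Lyapunov equation restricted to the range of $G$ gives the stationary covariance $\E[\xi_k\xi_k^\top]\to\frac{\lambda}{\alpha}\Pi_G(2-\eta\alpha G)^{-1}$, and the extra factor $1/\alpha$ is exactly what the definition of $R(\theta)$ in \Cref{eq:ls:regdef} compensates: using $h(x)=-\frac{1}{\eta\alpha}\log(1-\tfrac{\eta}{2}x)$ with $h'(x)=\frac{1}{\alpha(2-\eta x)}$, the chain-rule identity $\nabla\tr h(\nabla^2 L)=\nabla^3 L[h'(\nabla^2 L)]$ yields $\frac{\lambda}{2\alpha}\nabla^3 L[(2-\eta\alpha G)^{-1}]\approx\lambda\nabla R$ near a minimizer. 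With this identification, \Cref{prop:sketch:taylor2,prop:sketch:OUproperties,prop:sketch:OUcov} transfer with only cosmetic changes, and the global convergence argument of \Cref{sec:sketch:convergence}, driven by \Cref{lem:convergence:1step} and the descent on $\tilde L$, runs unchanged.

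The main obstacle is controlling the higher-order remainders from the non-quadratic tail of $\bar{l}$ along the trajectory: the Hessian decomposition and the quadratic approximation of $\ell_i$ are only valid where $\bar{l}(y_i f_i(\theta))\le\epsilon_Q$, so one must maintain an invariant of the form $L(\theta_k)=O(\mathscr{L})$ uniformly along each coupling window (the analogue of \Cref{lem:lossdecrease}). This requires the bound $(y_i f_i(\theta)-c)^2\le\nu(\bar{l}(y_i f_i(\theta))-\bar{l}(c))$ from \Cref{asm:quadraticapprox} to translate small $L$ into closeness to the optimal margin, so that smoothness of $\bar{l}'$ and $\bar{l}''$ (also in \Cref{asm:quadraticapprox}) lets us replace $\bar{l}''$ by $\alpha$ with an $O(\sqrt{\ell_i})$ error. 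Once this is set up, the rest of the proof is a bookkeeping exercise in propagating the constant $\alpha$ through the constants $\mathscr{X},\mathscr{L},\mathscr{D},\mathscr{M},\mathscr{T},\mathscr{F}$, and \Cref{thm:sgdlssp} follows by invoking the same iterative descent argument used in the proof of \Cref{thm:sgdsp}.
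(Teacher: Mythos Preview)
Your proposal is correct and follows the paper's strategy: show the Hessian and label-noise decompositions match the regression structure up to the factor $\alpha$, then reuse the regression proof of \Cref{thm:sgdsp} verbatim. The paper avoids carrying $\alpha$ through every constant by first observing that rescaling $l\to l/\alpha$ and $\eta\to\alpha\eta$ leaves both the update in \Cref{alg:sgdls} and $\frac{1}{\lambda}\tilde L$ invariant, so one may take $\alpha=1$ without loss of generality (and similarly replace $f_i$ by $y_i f_i$ to set all labels to $1$), after which \Cref{prop:hessiandecomp:ls} gives exactly the regression decompositions and the remaining argument is identical.
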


\subsection{SGD with Momentum}\label{sec:momentum}
\begin{figure}[t]
	\centering
	\begin{subfigure}[c]{0.78\textwidth}
	\includegraphics[width=\linewidth]{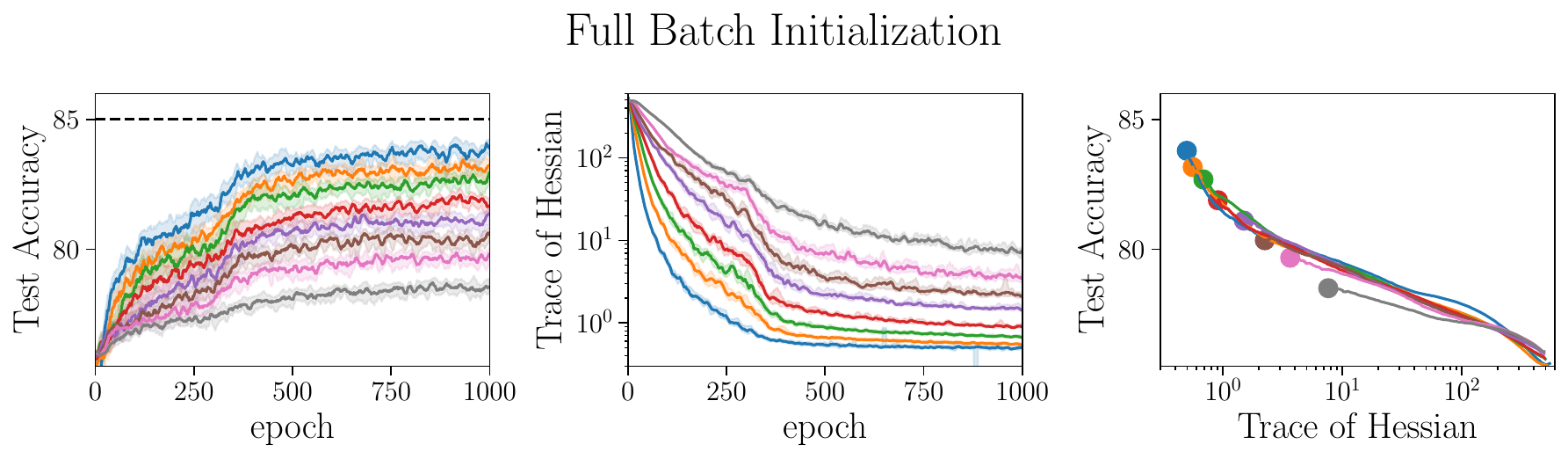}\\
	\includegraphics[width=\linewidth]{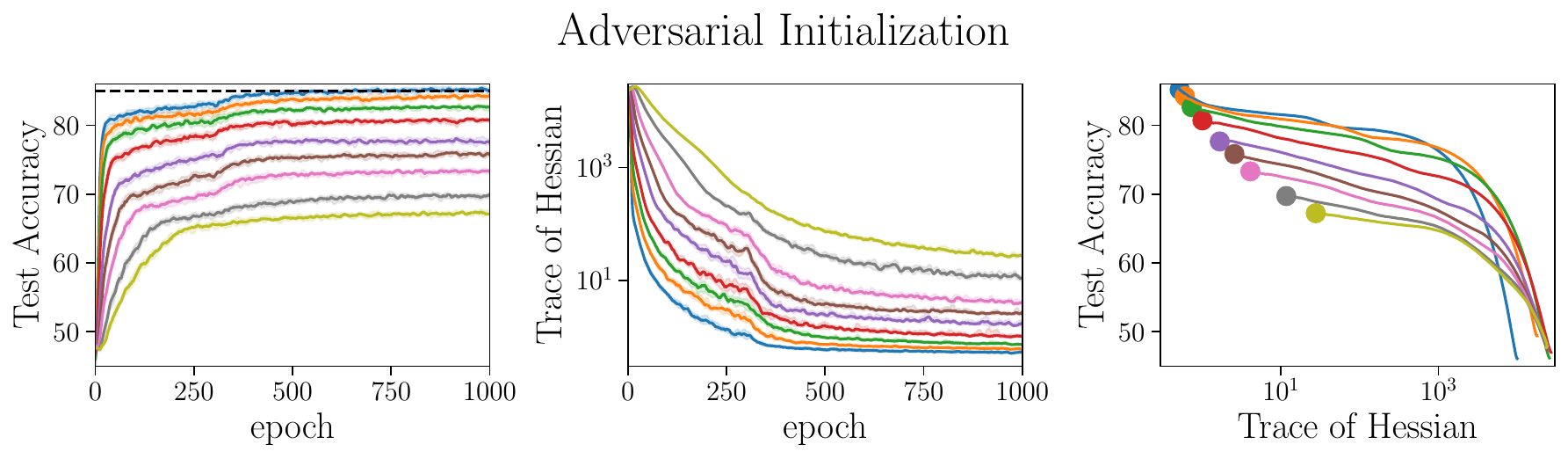}
	\end{subfigure}
	\begin{subfigure}[c]{0.2\textwidth}
	\includegraphics[width=\linewidth]{Figures/experiments/legend.pdf}
	\end{subfigure}
	\caption{\textbf{Label Noise SGD with Momentum ($\beta = 0.9$)} The left column displays the training accuracy over time, the middle column displays the value of $\tr \nabla^2 L(\theta)$ over time which we use to approximate the implicit regularizer $R(\theta)$, and the right column displays their correlation. The horizontal dashed line represents the minibatch SGD baseline with random initialization. We report the median results over $3$ random seeds and shaded error bars denote the min/max over the three runs. The correlation plot uses a running average of $100$ epochs for visual clarity.}
	\label{fig:experiments_m}
\end{figure}

We consider heavy ball momentum with momentum $\beta$, i.e. we replace the update in \Cref{alg:sgdln} with
\begin{align}
	\theta_{k+1} = \theta_k - \eta \nabla \hat L^{(k)}(\theta_k) + \beta(\theta_k - \theta_{k-1}).
\end{align}
We define:
\begin{align}
	R(\theta) = \frac{1+\beta}{2\eta} \tr \log\left(1 - \frac{\eta}{2(1+\beta)} \nabla^2 L(\theta)\right), && \lambda = \frac{\eta \sigma^2}{B(1-\beta)},
\end{align}
and as before $\tilde L(\theta) = L(\theta) + \lambda R(\theta)$. Let
\begin{align}
	\Phi_0(\theta) = \theta, && \Phi_{k+1}(\theta) = \Phi_k(\theta) - \eta \nabla \tilde L(\Phi_k(\theta)) + \beta(\Phi_k(\theta) - \Phi_{k-1}(\theta))
\end{align}
represent gradient descent with momentum on $\tilde L$. Then we have the following local coupling lemma:

\begin{lemma}\label{lem:momentumcoupling}
	Let
	\begin{align}
	\mathscr{X} = \sqrt{\frac{2\lambda n^2 \iota}{\nu}}, && \mathscr{L} = c \lambda^{1+\delta}, && \mathscr{D} = c\sqrt{\mathscr{L}}\iota, && \mathscr{T} = \frac{1}{c^2 \eta \mathscr{X}\iota},
	\end{align}
	where $c$ is a sufficiently large constant.
	Assume $f$ satisfies \Cref{asm:smooth} and $\eta \le \frac{(2-\nu)(1+\beta)}{\ell}$. Let $\theta$ follow \Cref{alg:sgdln} with momentum parameter $\beta$ starting at $\theta^*$ and assume that $L(\theta^*) \le \mathscr{L}$ for some $0 < \delta \le 1/2$. Then there exists a random process $\{\xi_k\}$ such that for any $\tau \le \mathscr{T}$ satisfying  $\max_{k \le \tau} \|\Phi_k(\theta^*) - \theta^*\| \le 8\mathscr{D}$, with probability at least $1-10 d \tau e^{-\iota}$ we have simultaneously for all $k \le \tau$,
	\begin{align}
		\norm{\theta_k - \xi_k - \Phi_k(\theta^*)} \le \mathscr{D}, \qquad \E[\xi_k]=0, \qqtext{and} \|\xi_k\| \le \mathscr{X}.
	\end{align}
\end{lemma}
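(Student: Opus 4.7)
The plan is to mirror the proof architecture of \Cref{lem:sketch:coupling} while upgrading every linear-algebraic object from a first-order to a second-order linear recurrence in order to accommodate the $\beta(\theta_k - \theta_{k-1})$ term. As before, I would decompose the stochastic update at $\theta^*$ as $\theta_{k+1} = \theta_k - \eta\nabla L(\theta_k) + m_k + \epsilon_k^* + z_k + \beta(\theta_k-\theta_{k-1})$ where $m_k$ is minibatch noise, $\epsilon_k^* = \frac{\eta}{B}\sum_{i\in\mathcal{B}^{(k)}} \epsilon_i^{(k)} \nabla f_i(\theta^*)$, and $z_k$ is the small correction arising from evaluating $\nabla f_i$ at $\theta_k$ instead of $\theta^*$. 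Taylor expanding $\nabla L$ and $\Phi_k$ to first order near $\theta^*$ and subtracting, the deviation $v_k = \theta_k - \Phi_k(\theta^*)$ obeys, to leading order, $v_{k+1} \approx (I - \eta G)v_k + \beta(v_k - v_{k-1}) + \epsilon_k^*$, which motivates defining $\xi$ by exactly this linear recurrence with $\xi_{-1} = \xi_0 = 0$.

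The central new technical step is the stationary analysis of this second-order Ornstein--Uhlenbeck-type process. Diagonalizing $G$, for each eigenvalue $\mu$ the scalar recurrence $\xi_{k+1} = (1+\beta-\eta\mu)\xi_k - \beta\xi_{k-1} + \epsilon_k^*$ is a stable AR(2) process, since $\eta \le (2-\nu)(1+\beta)/\ell$ forces the characteristic roots $z^2 - (1+\beta-\eta\mu)z + \beta = 0$ to lie strictly inside the unit disk with product $\beta$. Solving jointly for the stationary second moment $V = \E[\xi_k\xi_k^\top]$ and the lag-one covariance $C = \E[\xi_k\xi_{k-1}^\top]$ (using $C(1+\beta) = (1+\beta-\eta\mu)V$ from the one-step equation and substituting) yields the clean closed form
\begin{equation*}
V \;\longrightarrow\; \lambda\,\Pi_G\bigl(2 - \tfrac{\eta}{1+\beta}G\bigr)^{-1},
\end{equation*}
where the replacement $\eta \mapsto \eta/(1+\beta)$ and the appearance of the factor $(1-\beta)^{-1}$ inside $\lambda$ both emerge naturally from the algebra. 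Together with concentration for the mean-zero parts of $z_k$ and $m_k$ (handled exactly as in \Cref{lem:sketch:coupling} via martingale concentration, now for the augmented state $(\xi_k,\xi_{k-1})$), this gives a momentum version of \Cref{prop:sketch:OUproperties}.

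Next I would carry out the second-order Taylor expansion to derive the momentum analogue of \Cref{prop:sketch:taylor2}: the residual $r_k = \theta_k - \Phi_k(\theta^*) - \xi_k$ satisfies $r_{k+1} = (I-\eta G)r_k + \beta(r_k - r_{k-1}) - \eta[\tfrac12\nabla^3 L(\xi_k,\xi_k) - \lambda\nabla R] + m_k + z_k + \tilde O(c^{5/2}\eta\lambda^{1+\delta})$. The averaged driving term $\tfrac12 \E[\nabla^3 L(\xi_k,\xi_k)]$ matches $\lambda\nabla R(\theta^*)$ for the stated regularizer because, using the chain rule $\nabla(\tr h(\nabla^2 L)) = (\nabla^3 L)\,h'(\nabla^2 L)$ with $h(x) = -\tfrac{1+\beta}{\eta}\log\!\bigl(1 - \tfrac{\eta x}{2(1+\beta)}\bigr)$, one has $h'(x) = \bigl(2 - \tfrac{\eta x}{1+\beta}\bigr)^{-1}$, exactly the spectrum of the stationary covariance above. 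An inductive argument then bounds $\|r_k\|$ by $\mathscr{D}$: the homogeneous part contracts because the AR(2) operator inherits the same spectral gap as its scalar counterpart, and the remaining error pieces are controlled by Azuma--Hoeffding against the martingale filtration.

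The main obstacle I anticipate is the spectral control of the momentum contraction on small but nonzero eigenvalues of $G$, where the two characteristic roots approach $1$ and $\beta$ respectively and the effective mixing time degrades; this is precisely the regime that \citet{blanc2019implicit} could not handle even without momentum. I would address it by the same argument sketched before \Cref{prop:sketch:OUcov}: split the eigenspectrum into a ``fast'' part, in which the stationary covariance concentrates quickly, and a ``slow'' part, whose contribution to $\nabla^3 L(\xi_k,\xi_k)$ is bounded directly using $\|\xi_k\| \le \mathscr{X}$ and the smoothness of $\nabla^2 f_i$, with the enlarged $\mathscr{X} = \sqrt{2\lambda n^2 \iota/\nu}$ absorbing the extra factor coming from the $(1-\beta)^{-1}$ inside $\lambda$. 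Once this is in place, the induction closes exactly as in \Cref{lem:sketch:coupling}.
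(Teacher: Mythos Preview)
Your proposal is essentially correct and mirrors the paper's architecture. The paper likewise lifts the second-order recurrence to the augmented state $\bar\xi_k = (\xi_k,\xi_{k-1})^\top$, introduces the block matrix $A = \begin{bmatrix} I-\eta G+\beta I & -\beta I \\ I & 0 \end{bmatrix}$ and $B_j = J^\top A^j J$ so that $\bar\xi_{k+1} = A\bar\xi_k + J\epsilon_k^*$, diagonalizes $A$ into $2\times 2$ blocks indexed by eigenvalues of $G$, and derives the same stationary covariance $\lambda\,\Pi_G\bigl(2-\tfrac{\eta}{1+\beta}G\bigr)^{-1}$ by solving the fixed-point equation $S' = AS'A^\top + (1-\beta)\eta\lambda JGJ^\top$. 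Your chain-rule identification of $\nabla R$ is exactly the one used.

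The one place your plan diverges from the paper's execution is the handling of small eigenvalues. You propose a fast/slow eigenspectrum split, bounding the slow part crudely via $\|\xi_k\|\le\mathscr{X}$. The paper never partitions the spectrum; instead it proves uniform ``weak contraction bounds'' of the form $\sum_{k<\tau}\|A^kJg_i\|^2 = O(1/\eta)$ and $\|B_k\|\le 1/(1-\beta)$ that hold for every eigenvalue simultaneously, exploiting the fact that after the explicit square-loss decomposition $\tfrac12\nabla^3 L(S) = \tfrac1n\sum_i\bigl(H_i Sg_i + \tfrac12 g_i\tr(SH_i)\bigr) + O(\sqrt{\mathscr{L}}\|S\|)$, every error term is contracted against a vector $g_i$ lying in the span of $G$. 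This is precisely what makes the result condition-number-free; a literal threshold split would reintroduce dependence on the smallest nonzero eigenvalue of $G$ (the crude bound on the slow part gives an $O(\eta\tau\lambda)$ contribution that is not killed by $(I-\eta G)^{t-k}$ in those directions), which is the limitation of \citet{blanc2019implicit} the paper emphasizes. The body-text sketch you are following (``large eigenvalues concentrate, small ones don't contribute'') is shorthand for these uniform bounds, not a spectral partition.
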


As in \Cref{lem:sketch:coupling}, the error is $8$ times smaller than the maximum movement of the regularized trajectory. Note that momentum increases the regularization parameter $\lambda$ by $\frac{1}{1-\beta}$. For the commonly used momentum parameter $\beta = 0.9$, this represents a $10\times$ increase in regularization, which is likely the cause of the improved performance in \Cref{fig:experiments_m} ($\beta = 0.9$) over \Cref{fig:experiments} ($\beta = 0$).

\subsection{Arbitrary Noise Covariances}\label{sec:arbitrarynoise}
The analysis in \Cref{sec:sketch:coupling} is not specific to label noise SGD and can be carried out for arbitrary noise schemes. Let $\theta$ follow $\theta_{k+1} = \theta_k - \eta \nabla L(\theta_k) + \epsilon_k$ starting at $\theta_0$ where $\epsilon_k \sim N(0,\eta \lambda \Sigma(\theta_k))$ and $\Sigma^{1/2}$ is Lipschitz. Given a matrix $S$ we define the regularizer $R_S(\theta) = \left\langle  S, \nabla^2 L(\theta) \right\rangle$. The matrix $S$ controls the weight of each eigenvalue. As before we can define $\tilde L_S(\theta) = L(\theta) + \lambda R_S(\theta)$ and $\Phi_{k+1}^S(\theta) = \Phi_k^S(\theta) - \eta \nabla \tilde L_S(\Phi_k(\theta))$ to be the regularized loss and the regularized trajectory respectively. Then we have the following version of \Cref{lem:sketch:coupling}:
\begin{proposition}\label{prop:arbitrarynoise}
Let $\theta$ be initialized at a minimizer $\theta^*$ of $L$. Assume $\nabla^2 L$ is Lipschitz, let $H = \nabla^2 L(\theta^*)$ and assume that $\Sigma(\theta^*) \preceq C H$ for some absolute constant $C$. Let $\mathscr{X} = \sqrt{\frac{C d \lambda \iota}{\nu}}$, $\mathscr{D} = c \lambda^{3/4} \iota$, and $\mathscr{T} = \frac{1}{c^2 \eta \mathscr{X} \iota}$ for a sufficiently large constant $c$. Then there exists a mean zero random process $\xi$ such that for any $\tau \le \mathscr{T}$ satisfying $\max_{k < \tau} \|\Phi_k(\theta^*) - \theta^*\| \le 8\mathscr{D}$ and with probability $1-10d \tau e^{-\iota}$, we have simultaneously for all $k \le \tau$:
\begin{align*}
	\|\theta_k - \xi_k - \Phi^S_k(\theta_0)\| \le \mathscr{D} \qqtext{and} \|\xi_k\| \le \mathscr{X},
\end{align*}
where $S$ is the unique fixed point of $S \leftarrow (I - \eta H)S(I - \eta H) + \eta \lambda \Sigma(\theta^*)$ restricted to $\mathrm{span}(H)$.
\end{proposition}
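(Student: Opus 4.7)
The plan is to closely mirror the proof architecture of \Cref{lem:sketch:coupling}, with the label-noise-specific Ornstein--Uhlenbeck process replaced by a Gaussian OU process driven by the state-dependent noise $\epsilon_k$. First, I would couple the actual noise to an idealized noise at $\theta^*$ by writing $\epsilon_k = \Sigma(\theta_k)^{1/2} g_k$ for a standard Gaussian $g_k$ with covariance $\eta \lambda I$, and setting $\epsilon_k^* = \Sigma(\theta^*)^{1/2} g_k$. Then define $\xi_0 = 0$ and $\xi_{k+1} = (I - \eta H)\xi_k + \epsilon_k^*$. Since $\xi_k$ is a linear combination of independent Gaussians, it is Gaussian with mean zero and covariance $S_k = \sum_{j<k}(I - \eta H)^j\,\eta\lambda\Sigma(\theta^*)\,(I-\eta H)^j$, which (on $\mathrm{span}(H)$, where $\|I - \eta H\| \le 1 - \nu$ by \Cref{asm:eta}) converges geometrically to the unique fixed point $S$ of $S = (I-\eta H)S(I-\eta H) + \eta\lambda\Sigma(\theta^*)$. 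The assumption $\Sigma(\theta^*) \preceq CH$ plus Gaussian concentration then gives $\|\xi_k\| \le \mathscr{X}$ with probability $1 - 2de^{-\iota}$.

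Next I would Taylor expand both trajectories to second order about $\theta^*$, using $\nabla L(\theta^*) = 0$ since $\theta^*$ is a minimizer:
\begin{align*}
\theta_{k+1} - \theta^* &= (I - \eta H)(\theta_k - \theta^*) - \tfrac{\eta}{2}\nabla^3 L(\theta_k - \theta^*, \theta_k - \theta^*) + \epsilon_k + O(\eta\|\theta_k - \theta^*\|^3), \\
\Phi^S_{k+1}(\theta^*) - \theta^* &= (I - \eta H)(\Phi^S_k(\theta^*) - \theta^*) - \eta \lambda \nabla R_S + \text{higher order}.
\end{align*}
Subtracting these, substituting $\theta_k - \theta^* = \xi_k + (\Phi^S_k(\theta^*)-\theta^*) + r_k$ with $r_k := \theta_k - \xi_k - \Phi^S_k(\theta^*)$, and cancelling the OU recursion yields
\begin{align*}
r_{k+1} = (I - \eta H) r_k - \eta\bigl[\tfrac{1}{2}\nabla^3 L(\xi_k,\xi_k) - \lambda \nabla R_S\bigr] + (\epsilon_k - \epsilon_k^*) + (\text{cross and higher-order terms}).
\end{align*}

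The algebraic heart of the argument is the matching identity $\E[\tfrac{1}{2}\nabla^3 L(\xi_k,\xi_k)] = \tfrac{1}{2}\nabla^3 L[\E \xi_k\xi_k^T] \to \tfrac{1}{2}\nabla^3 L[S]$, which by $R_S(\theta) = \langle S, \nabla^2 L(\theta)\rangle$ matches $\lambda \nabla R_S$ (with the factor of $2$ absorbed into the scaling of $S$ as defined through the Lyapunov fixed-point equation). This is exactly why $S$ is chosen as the fixed point of $S \leftarrow (I - \eta H)S(I - \eta H) + \eta\lambda\Sigma(\theta^*)$: it is the unique covariance making the stationary cubic drift a conservative gradient of $\lambda R_S$. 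The state-dependent noise mismatch is controlled by Lipschitzness of $\Sigma^{1/2}$: $\|\epsilon_k - \epsilon_k^*\| \le \mathrm{Lip}(\Sigma^{1/2})\,\|\theta_k - \theta^*\|\,\|g_k\|$, which is order $\sqrt{\eta\lambda}$ times $(\mathscr{X} + \mathscr{D} + 8\mathscr{D})$ under the inductive hypothesis, and thus can be absorbed into $\mathscr{D}$.

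Finally, I would close the argument by induction on $k \le \tau$ showing $\|r_k\| \le \mathscr{D}$. Contraction of $(I - \eta H)$ on $\mathrm{span}(H)$ handles the linear drift; the Lipschitz bound on $\nabla^2 L$ controls the Taylor remainders using the hypothesis $\max_k \|\Phi^S_k(\theta^*) - \theta^*\| \le 8\mathscr{D}$; and the noise-mismatch and cross terms are bounded as above. \emph{The main obstacle} is concentrating $\sum_{k \le t}(I - \eta H)^{t-k}\bigl[\tfrac{1}{2}\nabla^3 L(\xi_k,\xi_k) - \lambda \nabla R_S\bigr]$, since this is quadratic in the correlated Gaussian sequence $\{\xi_k\}$ rather than a martingale difference sum. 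I would handle this by decomposing into its expectation (which is $O(\lambda)$ and absorbed into the regularized drift) plus a martingale fluctuation which I bound using a Hanson--Wright inequality applied to the jointly Gaussian vector $(\xi_0, \dots, \xi_\tau)$, with variance proxy controlled by $\|\nabla^3 L\|_{\mathrm{op}}^2 \cdot \|S\|^2$ and geometric decay of the filter $(I - \eta H)^{t-k}$ providing the $\iota$ factors in $\mathscr{D}$. Collecting all error terms and choosing $c$ large completes the induction, exactly as in \Cref{prop:sketch:OUcov}.
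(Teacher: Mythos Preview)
Your proposal follows the same architecture as the paper's proof: couple the state-dependent noise to $\epsilon_k^* = \sqrt{\eta\lambda}\,\Sigma^{1/2}(\theta^*)x_k$ using a shared Gaussian $x_k$, define the OU process $\xi_{k+1}=(I-\eta H)\xi_k+\epsilon_k^*$, Taylor expand to second order, and close an induction on $\|r_k\|\le\mathscr{D}$. The identification of $S$ as the Lyapunov fixed point so that $\E[\xi_k\xi_k^T]\to S$ matches the regularizer drift is exactly the paper's mechanism.

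The genuine difference is in how you concentrate the quadratic term $\eta\sum_{k\le t}(I-\eta H)^{t-k}\nabla^3 L[\xi_k\xi_k^T-S]$. The paper does \emph{not} use Hanson--Wright. Instead it writes $D_k=\xi_k\xi_k^T-S$ and derives the recursion $D_{k+1}=(I-\eta H)D_k(I-\eta H)+W_k+Z_k$, where $W_k=(I-\eta H)\xi_k(\epsilon_k^*)^T+\epsilon_k^*\xi_k^T(I-\eta H)$ is handled as a conditionally Gaussian martingale, and $Z_k=\epsilon_k^*(\epsilon_k^*)^T-\eta\lambda\Sigma$ (which is only sub-exponential) is handled by truncating $x_j$ at radius $r=\sqrt{4\iota\log\mathscr{T}}$, applying Bernstein to the truncated part, and bounding the truncation bias $\|\E[\bar x_j\bar x_j^T]-I\|_F$ separately. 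Your Hanson--Wright route is a legitimate alternative that sidesteps the truncation entirely, but it obligates you to bound $\|M\|_F$ and $\|M\|_{\mathrm{op}}$ for the block matrix $M=\sum_k L_k^T A_k L_k$ (with $L_k$ the OU filter and $A_k$ the $\nabla^3 L$ contraction), which you have only asserted rather than computed; the paper's decomposition makes these bounds explicit.

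One correction: your claim that $\|I-\eta H\|\le 1-\nu$ on $\mathrm{span}(H)$ by \Cref{asm:eta} is wrong. \Cref{asm:eta} only controls the top eigenvalue, giving $|1-\eta\lambda_1|\le 1-\nu$; on $\mathrm{span}(H)$ the norm is $\max(1-\nu,\,1-\eta\alpha)$ where $\alpha$ is the smallest nonzero eigenvalue of $H$. The paper explicitly allows (and hides) $\alpha$-dependence in this proposition, and uses it when bounding the transient term $\eta\sum_k(I-\eta H)^{t-k}\nabla^3 L[(I-\eta H)^kS(I-\eta H)^k]=O(\lambda/\alpha)$. Your geometric-decay statements should be stated with rate $\alpha$, not $\nu$.
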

As in \Cref{lem:sketch:coupling}, the error is $8$ times smaller than the maximum movement of the regularized trajectory. Although \Cref{prop:arbitrarynoise} couples to gradient descent on $R_S$, $S$ is defined in terms of the Hessian and the noise covariance at $\theta^*$ and therefore depends on the choice of reference point. Because $R_S$ is changing, we cannot repeat \Cref{prop:arbitrarynoise} as in \Cref{sec:sketch:convergence} to prove convergence to a stationary point because there is no fixed potential. Although it is sometimes possible to relate $R_S$ to a fixed potential $R$, we show in \Cref{sec:cycling} that this is not generally possible by providing an example where minibatch SGD perpetually cycles. Exploring the properties of these continuously changing potentials and their connections to generalization is an interesting avenue for future work.

\section{Discussion}\label{sec:discussion}
\subsection{Sharpness and the Effect of Large Learning Rates}\label{sec:strengthlargeeta}
Various factors can control the strength of the implicit regularization in \Cref{thm:sgdsp}. Most important is the implicit regularization parameter $\lambda = \frac{\eta \sigma^2}{|B|}$. This supports the hypothesis that large learning rates and small batch sizes are necessary for implicit regularization \citep{goyal2017accurate,smith2017don}, and agrees with the standard linear scaling rule which proposes that for constant regularization strength, the learning rate $\eta$ needs to be inversely proportional to the batch size $|B|$. 

However, our analysis also uncovers an \textit{additional} regularization effect of large learning rates. Unlike the regularizer in \citet{blanc2019implicit}, the implicit regularizer $R(\theta)$ defined in \Cref{eq:regdef} is dependent on $\eta$. It is not possible to directly analyze the behavior of $R(\theta)$ as $\eta \to 2/\lambda_1$ where $\lambda_1$ is the largest eigenvalue of $\nabla^2 L$, as in this regime $R(\theta) \to \infty$ (see \Cref{fig:regplot}). If we let $\eta = \frac{2-\nu}{\lambda_1}$, then we can better understand the behavior of $R(\theta)$ by normalizing it by $\log 2/\nu$. This gives\footnote{Here we assume $\lambda_1 > \lambda_2$. If instead $\lambda_1 = \ldots = \lambda_k > \lambda_{k+1}$, this limit will be $k\|\nabla^2 L(\theta)\|_2$.}
\begin{align*}
	\frac{R(\theta)}{\log 2/\nu} = \sum_i \frac{R(\lambda_i)}{\log 2/\nu} = \|\nabla^2 L(\theta)\|_2 + O\left(\frac{1}{\log 2/\nu}\right) \xrightarrow{\nu \to 0} \|\nabla^2 L(\theta)\|_2
\end{align*}
so after normalization, $R(\theta)$ becomes a better and better approximation of the spectral norm $\|\nabla^2 L(\theta)\|$ as $\eta \to 2/\lambda_1$. $R(\theta)$ can therefore be seen as interpolating between $\tr \nabla^2 L(\theta)$, when $\eta \approx 0$, and $\|\nabla^2 L(\theta)\|_2$ when $\eta \approx 2/\lambda_1$. This also suggests that SGD with large learning rates may be more resilient to the edge of stability phenomenon observed in \citet{cohen2021gradient} as the implicit regularization works harder to control eigenvalues approaching $2/\eta$.

The sharpness-aware algorithm (SAM) of \cite{foret2020sharpness} is also closely related to $R(\theta)$. SAM proposes to minimize $\max_{\|\delta\|_2 \le \epsilon} L(\theta +\delta)$. At a global minimizer of the training loss,
\begin{align*}
	\max_{\|\delta\|_2 \le \epsilon} L(\theta^\ast+\delta) &= \max_{\|\delta\|_2 \le \epsilon}\frac12 \delta^\top \nabla^2 L(\theta^\ast) \delta + O(\epsilon^3) \approx \frac{\epsilon^2}{2} \|\nabla^2 L(\theta^\ast)\|_{2}.
	\end{align*}
%\begin{align*}
%	\max_{\|\delta\|_2 \le \epsilon} L(\theta^\ast+\delta) &=	\max_{\|\delta\|_2 \le \epsilon} L(\theta^\ast) + \frac12 \delta^\top \nabla^2 L(\theta^\ast ) \delta +O(\epsilon^3)\\
%	&\approx \max_{\|\delta\|_2 \le \epsilon}\frac12 \delta^\top \nabla^2 L(\theta^\ast) \delta = \frac{\epsilon^2}{2} \|\nabla^2 L(\theta^\ast)\|_{2}.
%	\end{align*}
The SAM algorithm is therefore explicitly regularizing the spectral norm of $\nabla^2 L(\theta)$, which is closely connected to the large learning rate regularization effect of $R(\theta)$ when $\eta \approx 2/\lambda_1$.
%
%\subsection{Quadratic Model}\label{sec:quadraticmodel}
%Let $\odot$ denote the Hadamard product and let $f(x_i,\theta) = \frac{1}{2}\langle x_i,\theta^{\odot 2}\rangle$. Then in the case where $x_i \sim N(0,I)$,
%\begin{align}
%E_x[R(\theta)] = \frac{1}{2} \|\theta\|_2^2 = \frac{1}{2} \|\beta\|_1,
%\end{align}
%and in general $R(\theta)$ is some weighted $\ell_2$ norm in $\theta$ space and $\ell_1$ norm in $\beta$ space where the weights converge to $1$ as $n \to \infty$. In particular, $\ell_1$ regularization on $\beta$ is known to promote sparsity, which is the same effect of label noise noted in \citet{haochen2020shape}.
%
%In addition, while converging to an $r$-sparse ground truth, $G(\theta)$ begins at rank $n$ and converges to a point with rank at most $r$, which requires crossing at least $n-r$ rank drops which necessitates weak contraction (see \Cref{sec:weakcontract}). Finally, in the idealized setting where $y=0$, the quadratic model does not satisfy the PL inequality, but it does satisfy the KL inequality for $\delta = \frac{1}{3}$, so \Cref{thm:sgdsp} shows that \Cref{alg:sgdln} will converge to an $(\epsilon,\gamma)$-stationary point despite the rank drops. More generally a polynomial model of the form $f(x_i,\theta) = \langle x_i,\theta^{\odot d}\rangle$ will satisfy \Cref{asm:kl} for $\delta = \frac{1}{2d-1}$. For a full discussion of the quadratic model, see \Cref{sec:quadraticmodelappendix}.

\subsection{Generalization Bounds}\label{sec:generalization}

\sloppy The implicit regularizer $R(\theta)$ is intimately connected to data-dependent generalization bounds, which measure the Lipschitzness of the network via the network Jacobian. Specifically, \citet{wei2019improved} propose the all-layer margin, which bounds the  $\textup{generalization error}\lesssim \frac{\sum_{l=1}^L \mathcal{C}_l}{\sqrt{n}} \sqrt{\frac1n \sum_{i=1}^n \frac{1}{m_F (x_i,y_i)^2}}$, where $\mathcal{C}_l$ depends only on the norm of the parameters and $m_F$ is the all-layer margin. The norm of the parameters is generally controlled by weight decay regularization, so we focus our discussion on the all-layer margin. Ignoring higher-order secondary terms, \citet[Heuristic derivation of Lemma 3.1]{wei2019improved} showed for a feed-forward network $f(\theta;x) = \theta_L \sigma(\theta_{L-1} \ldots \sigma(\theta_1x))$, the all-layer margin satisfies\footnote{The output margin is defined as $\min_i f_i (\theta) y_i$. The following uses Equation (3.3) and the first-order approximation provided \citet{wei2019improved} and the chain rule $\frac{\partial f}{\partial \theta_l}=\frac{\partial f }{\partial h_l} \frac{\partial h_l}{\partial \theta_{l-1} }= \frac{\partial f}{\partial h_l}h_{l-1}^\top$.}:
\begin{align*}
	\frac{1}{m_F(x,y)}\lesssim\frac{\|\{\frac{\partial f}{\partial \theta_{l}}\}_{l \in[L]}\|_2}{\textup{output margin of } (x,y)} \implies \textup{generalization error} \lesssim \frac{\sum_{l=1}^L \mathcal{C}_l}{\sqrt{n}} \sqrt{\frac{R(\theta)}{\textup{output margin}}}
\end{align*}
as $R(\theta)$ is an upper bound on the squared norm of the Jacobian at any global minimizer $\theta$.
%	\tnote{changed the sign}
%\jnote{ I squinted at equation 3.3 in the all-layer paper , and used the $dF/dW_l = dF/dh_{l+1} \cdot dh_{l+1}/dW_l$. Tengyu can you check I handled the activation derivatives correctly (they dont enter $m_F$ explicitly and are all in the $dF/dh_{l+1}$ term)?}
%As $R(\theta)$ is an upper bound on the norm of the Jacobian  at any global minimizer $\theta$, the generalization error bound can be stated as
%\begin{align*}
%	\textup{generalization error} \lesssim \frac{\sum_{l=1}^L \mathcal{C}_l}{\sqrt{n}} \sqrt{\frac{R(\theta)}{\textup{output margin}}}.
%\end{align*}
We emphasize this bound is informal as we discarded the higher-order terms in controlling the all-layer margin, but it accurately reflects that the regularizer $R(\theta)$ lower bounds the all-layer margin $m_F$ up to higher-order terms. Therefore SGD with label noise implicitly regularizes the all-layer margin.

%\%jnote{Tengyu, do you think should discuss also behnam's sharpness aware paper? That one is also kind of like jacobian on theta regularization}.

\section{Acknowledgements}
AD acknowledges support from a NSF Graduate Research Fellowship. TM acknowledges support of Google Faculty Award and NSF IIS 2045685. JDL acknowledges support of the ARO under MURI Award W911NF-11-1-0303,  the Sloan Research Fellowship, NSF CCF 2002272, and an ONR Young Investigator Award.

The experiments in this paper were performed on computational resources managed and supported by Princeton Research Computing, a consortium of groups including the Princeton Institute for Computational Science and Engineering (PICSciE) and the Office of Information Technology's High Performance Computing Center and Visualization Laboratory at Princeton University.

We would also like to thank Honglin Yuan and Jeff Z. HaoChen for useful discussions throughout various stages of the project.

\bibliography{myref}

\tableofcontents

\appendix
\section{Limitations}\label{sec:limitations}

In \Cref{sec:setup} we make three main assumptions: \Cref{asm:smooth} (smoothness), \Cref{asm:eta} (learning rate separation), and \Cref{asm:kl} (KL).

\Cref{asm:smooth} imposes the necessary smoothness conditions on $f$ to enable second order Taylor expansions of $\nabla L$. These smoothness conditions may not hold, e.g. if ReLU activations are used. This can be easily resolved by using a smooth activation like softplus or SiLU \citep{hendrycks2020gaussian}.

\Cref{asm:eta} is a very general assumption that lets $\eta$ be arbitrarily close to the maximum cutoff for gradient descent on a quadratic, $2/\ell$. However, for simplicity we do not track the dependence on $\nu$. This work therefore does not explain the ability of gradient descent to optimize neural networks at the "edge of stability" \citep{cohen2021gradient} when $\eta > 2/\ell$. Because we only assume \Cref{asm:smooth} of the model, our results must apply to quadratics as a special case where any $\eta > 2/\ell$ leads to divergence so this assumption is strictly necessary.

Although \Cref{asm:kl} is very general (see \Cref{lem:klanalytic}), the specific value of $\delta$ plays a large role in our \Cref{thm:sgdsp}. In particular, if $L$ satisfies \Cref{asm:kl} for any $\delta \ge 1/2$ then the convergence rate in $\epsilon$ is $\epsilon^{-6}$. However, this convergence rate can become arbitrarily bad as $\delta \to 0$. This rate is driven by the bound on $E(\theta^*)$ in \Cref{prop:hessiandecomp}, which does not contribute to implicit regularization and cannot be easily controlled. The error introduced at every step from bounding $E(\theta)$ at a minimizer $\theta^*$ is $\tilde O(\eta\sqrt{\lambda L(\theta^*)})$ and the size of each step in the regularized trajectory is $\eta \lambda \|\nabla R(\theta^*)\|$. Therefore if $L(\theta^*) = \Omega(\lambda)$, the error term is greater than the movement of the regularized trajectory. \Cref{sec:arbitrarynoise} repeats the argument in \Cref{sec:sketch:coupling} without making \Cref{asm:kl}. However, the cost is that you can no longer couple to a fixed potential $R$ and instead must couple to a changing potential $R_S$.

One final limitation is our definition of stationarity (\Cref{def:epsgamma}). As we discuss in \Cref{sec:epsgammasp}, this limitation is fundamental as the more direct statement of converging to an $\epsilon$-stationary point of $\frac{1}{\lambda} \tilde L$ is not true. Although we do not do so in this paper, if $\theta$ remains in a neighborhood of a fixed $\epsilon$-stationary point $\theta^*$ for a sufficiently long time, then it might be possible to remove this assumption by tail-averaging the iterates. However, this requires a much stronger notion of stationarity than first order stationarity which does not guarantee that $\theta$ remains in a neighborhood of $\theta^*$ for a sufficiently long time (e.g. it may converge to a saddle point which it then escapes).

\section{Missing Proofs}\label{sec:appendix:missingproofs}

\begin{proof}[Proof of \Cref{prop:hessiandecomp}]
	We have
	\begin{align}
		\nabla L(\theta) = \frac{1}{n} \sum_{i=1}^n (f_i(\theta)-y_i)\nabla f_i(\theta)
	\end{align}
	so
	\begin{align}
		\nabla^2 L(\theta) &= \frac{1}{n} \sum_{i=1}^n \left[\nabla f_i(\theta)\nabla f_i(\theta)^T + (f_i(\theta)-y_i)\nabla^2 f_i(\theta)\right] \\
		&= G(\theta) + E(\theta).
	\end{align}	
	In addition if we define $e_i(\theta) = f_i(\theta)-y_i$,
	\begin{align}
		\|E(\theta)\| &= \frac{1}{n}\norm{\sum_{i=1}^n e_i(\theta) \nabla^2 f_i(\theta)} \\
		&\le \frac{1}{n}\left[\sum_{i=1}^n e_i(\theta)^2\right]^{1/2}\left[\sum_{i=1}^n \|\nabla^2 f_i(\theta)\|^2\right]^{1/2} \\
		&= \frac{1}{n} \sqrt{2nL(\theta)}\cdot\sqrt{n \rho_f^2} \\
		&= \sqrt{2\rho_f L(\theta)} \\
		&= O(\sqrt{L(\theta)}).
	\end{align}
\end{proof}

\begin{definition}
	We define the quadratic variation $[\cdot]$ and quadratic covariation $[\cdot,\cdot]$ of a martingale $X$ to be
	\begin{align}
		[X]_k = \sum_{j < k} \|\xi_{j+1} - \xi_j\|^2 \qqtext{and} [X,X]_k = \sum_{j < k} (\xi_{j+1} - \xi_j)(\xi_{j+1} - \xi_j)^T.
	\end{align}
\end{definition}

\begin{lemma}[Azuma-Hoeffding]\label{lem:azuma}
	Let $X \in \mathbb{R}^d$ be a mean zero martingale with $[X]_k \le \sigma^2$. Then with probability at least $1-2de^{-\iota}$,
	\begin{align}
		\|X_k\| \le \sigma \sqrt{2 \iota}.
	\end{align}
\end{lemma}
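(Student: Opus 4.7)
The plan is to reduce this vector-valued concentration to a matrix concentration via the Hermitian dilation trick and then invoke a standard matrix Freedman/Azuma inequality. The scalar Azuma--Hoeffding bound immediately controls any fixed linear functional $\langle v, X_k\rangle$ with the same radius $\sigma\sqrt{2\iota}$, but bounding $\|X_k\|_2 = \sup_{\|v\|=1}\langle v, X_k\rangle$ uniformly over the sphere is what produces the dimension factor $d$ in the failure probability, and the cleanest way to get that factor is through the matrix dimension in Tropp's inequality rather than through a covering net on the sphere.

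First, I would embed $\{X_k\}$ into a matrix martingale via the $(d+1)\times(d+1)$ symmetric dilation
\begin{align*}
    Y_k = \begin{pmatrix} 0 & X_k^\top \\ X_k & 0 \end{pmatrix},
\end{align*}
which is itself a mean-zero Hermitian-matrix martingale for the same filtration. A direct eigenvalue computation shows that the nonzero eigenvalues of $Y_k$ are exactly $\pm\|X_k\|_2$, so $\|X_k\|_2 = \lambda_{\max}(Y_k)$ and the problem becomes one of bounding $\lambda_{\max}$ of a matrix martingale.

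Next, I would bound the matrix quadratic variation of $\{Y_k\}$. Writing $\Delta X_j = X_{j+1} - X_j$, a block computation gives $(\Delta Y_j)^2 = \mathrm{diag}\bigl(\|\Delta X_j\|^2,\, \Delta X_j\Delta X_j^\top\bigr)$. Summing and using $\|\Delta X_j \Delta X_j^\top\|_{\mathrm{op}} = \|\Delta X_j\|^2$ with the triangle inequality on the PSD sum gives
\begin{align*}
    \Bigl\|\textstyle\sum_{j<k}(\Delta Y_j)^2\Bigr\|_{\mathrm{op}} \;\le\; \textstyle\sum_{j<k}\|\Delta X_j\|^2 \;=\; [X]_k \;\le\; \sigma^2,
\end{align*}
which is precisely the matrix-form hypothesis required for matrix Freedman.

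Finally, I would apply the matrix Freedman/Azuma inequality (Tropp, 2012) to $\{Y_k\}$ in dimension $d+1$ to obtain $\Pr[\lambda_{\max}(Y_k) \ge t] \le (d+1)\exp(-t^2/(2\sigma^2))$, and combine with the symmetric lower-tail bound on $\lambda_{\min}(Y_k) = -\|X_k\|_2$ to get a two-sided bound with prefactor $\le 2d+2$. Setting $t = \sigma\sqrt{2\iota}$ and absorbing the additive constant into $d$ delivers $\|X_k\|_2 \le \sigma\sqrt{2\iota}$ with probability at least $1 - 2de^{-\iota}$. The only mildly technical step is the matrix quadratic-variation computation, specifically verifying that summing the rank-one outer products $\Delta X_j\Delta X_j^\top$ does not inflate the operator norm past $[X]_k$; this follows from the triangle inequality on the PSD cone and is the only place the hypothesis $[X]_k \le \sigma^2$ is used. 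There is no substantive obstacle — this is essentially a repackaging of matrix Azuma for vector (Hilbert-space-valued) martingales.
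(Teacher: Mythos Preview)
The paper does not actually prove this lemma; it is stated as a standard black-box result (immediately followed by \Cref{cor:azumacov}, also unproved) and then used throughout. So there is no ``paper's own proof'' to compare against, and your task reduces to supplying a valid argument.

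Your Hermitian-dilation-plus-matrix-Azuma route is the standard one and is essentially correct. Two minor points worth tightening. First, Tropp's matrix Azuma (2012) as usually stated requires \emph{deterministic} per-increment bounds $\Delta Y_j^2 \preceq A_j^2$ and gives the exponent $-t^2/(8\|\sum A_j^2\|)$, not $-t^2/(2\sigma^2)$; the hypothesis here is an almost-sure bound on the \emph{random} total $[X]_k$, which does not directly furnish deterministic $A_j$. You can repair this either with a stopping-time truncation (stop the first time the running quadratic variation would exceed $\sigma^2$, which never happens under the hypothesis) or by invoking matrix Freedman with predictable quadratic variation; either way the constant in the exponent may come out as $1/8$ rather than $1/2$. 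Since the paper only ever uses this lemma inside $\tilde O(\cdot)$ bounds with $\iota = c\log(\cdot)$ for a free constant $c$, the discrepancy is immaterial. Second, ``absorbing'' $2(d+1)$ into $2d$ is not literally correct; write $O(d)e^{-\iota}$ or note that the slack in $\iota$ covers it.
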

\begin{corollary}\label{cor:azumacov}
	Let $X \in \mathbb{R}^d$ be a mean zero martingale with $[X,X]_k \preceq M$. Then with probability at least $1-2de^{-\iota}$,
	\begin{align}
		\|X_k\| \le \sqrt{2 \tr(M) \iota}.
	\end{align}
\end{corollary}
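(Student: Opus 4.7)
The plan is to reduce the matrix-valued quadratic-covariation bound to the scalar quadratic-variation hypothesis of Lemma \ref{lem:azuma}, by taking traces. The key observation is that the scalar quadratic variation $[X]_k$ is simply the trace of the matrix-valued quadratic covariation $[X,X]_k$: for each increment $\Delta_j := X_{j+1}-X_j$, we have $\|\Delta_j\|^2 = \tr(\Delta_j \Delta_j^T)$, so summing gives
\begin{align*}
[X]_k \;=\; \sum_{j<k} \|\Delta_j\|^2 \;=\; \tr\!\Bigl(\sum_{j<k} \Delta_j \Delta_j^T\Bigr) \;=\; \tr\bigl([X,X]_k\bigr).
\end{align*}

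Next I would use monotonicity of the trace under the PSD order. Since $[X,X]_k \preceq M$, the difference $M - [X,X]_k$ is positive semidefinite, and so its trace is nonnegative. This yields
\begin{align*}
[X]_k \;=\; \tr\bigl([X,X]_k\bigr) \;\le\; \tr(M).
\end{align*}
Thus the martingale $X$ satisfies the hypothesis of Lemma \ref{lem:azuma} with $\sigma^2 := \tr(M)$.

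Applying Lemma \ref{lem:azuma} directly then gives, with probability at least $1 - 2d e^{-\iota}$,
\begin{align*}
\|X_k\| \;\le\; \sqrt{2\,\tr(M)\,\iota},
\end{align*}
which is the desired bound. There is no real obstacle here: the only subtlety is recognizing that the PSD inequality $[X,X]_k \preceq M$ passes to a scalar inequality on $[X]_k$ precisely by taking traces (using that trace is monotone on PSD matrices), after which the corollary is immediate from Lemma \ref{lem:azuma}.
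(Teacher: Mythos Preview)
Your proof is correct and is exactly the intended (and essentially only) argument: the paper states the result as an immediate corollary of Lemma~\ref{lem:azuma} without writing out a proof, and your reduction via $[X]_k=\tr([X,X]_k)\le \tr(M)$ is precisely the step that makes it a corollary.
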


\begin{proof}[Proof of \Cref{prop:sketch:OUproperties}]
	A simple induction shows that
	\begin{align}
		\xi_k = \sum_{j < k} (I-\eta G)^j \epsilon_{k-j-1}^*. \label{eq:OUprop:sum}
	\end{align}
	Then
	\begin{align}
		\E[\xi_k\xi_k^T] &= \sum_{j < k} (I - \eta G)^j \eta \lambda G (I - \eta G)^j \\
		&= \eta \lambda G (2\eta G - \eta^2 G^2)^\dagger (I-(I - \eta G)^{2k}) \\
		&= \lambda \Pi_G (2 - \eta G)^{-1} (I-(I - \eta G)^{2k}).
	\end{align}
	Therefore $\E[\xi_k\xi_k^T] \preceq \frac{\eta}{\nu}I$ and $\E[\xi_k\xi_k^T] \to \lambda \Pi_G (2 - \eta G)^{-1}$.
	The partial sums of \Cref{eq:OUprop:sum} form a martingale with quadratic covariation bounded by
	\begin{align}
		&\sum_{j < k} (I-\eta G)^j \epsilon_{k-j-1}^*(\epsilon_{k-j-1}^*)^T (I-\eta G)^j \\
		&\preceq \sum_{j < k} (I-\eta G)^j n \eta \lambda G (I - \eta G)^j \\
		&= n \lambda \Pi_G (2 - \eta G)^{-1} (I-(I - \eta G)^{2k}) \\
		&\preceq \frac{n\lambda}{\nu} I
	\end{align}
	therefore by \Cref{cor:azumacov}, with probability at least $1-2de^{-\iota}$, $\|\xi_k\| \le \mathscr{X}$.
\end{proof}

We prove the following version of \Cref{prop:sketch:OUproperties} for the setting of \Cref{lem:convergence:coupling}:
\begin{proposition}\label{prop:OUnormglobal}
	Let $\xi_k$ be defined as in \Cref{def:globalOU}. Then for any $t \ge 0$, with probability $1-2de^{-\iota}$, $\|\xi_t\| \le \mathscr{X}$.
\end{proposition}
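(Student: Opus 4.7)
The plan is to apply matrix Azuma--Hoeffding (\Cref{cor:azumacov}) to an explicit sum representation of $\xi_t$. Unrolling the recursion in \Cref{def:globalOU} gives
\begin{align*}
\xi_t \;=\; \sum_{s < t} N_{s,t}\, \epsilon_s^*, \qquad N_{s,t} \;=\; A_{m(t-1)} A_{m(t-2)} \cdots A_{m(s+1)},
\end{align*}
where $A_{m'} := I - \eta G(\theta_{m'}^*)$ and $m(s)$ denotes the segment containing step $s$ (with the convention $N_{t-1,t} = I$). For a fixed terminal time $t$, conditioning on the reference points through segment $m(t)$ renders each $N_{s,t}$ deterministic, so the partial sums $X_j := \sum_{s < j} N_{s,t}\,\epsilon_s^*$ form a martingale in $j$ with independent mean-zero increments $N_{s,t}\,\epsilon_s^*$, satisfying $X_t = \xi_t$.

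For the quadratic covariation bound, I would reuse the almost-sure Cauchy--Schwarz bound that underlies the proof of \Cref{prop:sketch:OUproperties}: for every $s$ in segment $m'$, $\epsilon_s^*(\epsilon_s^*)^T \preceq n\eta\lambda\, G(\theta_{m'}^*)$ deterministically. Plugging this in gives $[X,X]_t \preceq \bar V_t$, where
\begin{align*}
\bar V_t \;:=\; \sum_{s < t} N_{s,t}\bigl(n\eta\lambda\, G(\theta_{m(s)}^*)\bigr) N_{s,t}^T.
\end{align*}
The hard part, and where the analysis goes genuinely beyond the single-reference setting, is bounding $\bar V_t$ uniformly in $t$: a naive termwise bound via $\|N_{s,t}\|\le 1$ would produce an extra factor of $m(t)$, because the products $N_{s,t}$ from earlier segments do not contract in directions orthogonal to the subsequent $G(\theta_{m''}^*)$'s.

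To circumvent this, I would prove the matrix invariant $\bar V_t \preceq \frac{n\lambda}{\nu}I$ by induction on $t$. From $N_{s,t+1} = A_{m(t)} N_{s,t}$ for $s < t$ and $N_{t,t+1} = I$, one reads off the one-step recursion $\bar V_{t+1} = A_{m(t)} \bar V_t A_{m(t)} + n\eta\lambda\, G(\theta_{m(t)}^*)$. Assuming the invariant at time $t$, the inductive step reduces to checking $\frac{n\lambda}{\nu}(I - A_{m(t)}^2) \succeq n\eta\lambda\, G(\theta_{m(t)}^*)$; since $A_m$ and $G_m:=G(\theta_m^*)$ commute and $I - A_m^2 = \eta G_m(2I - \eta G_m)$, this simplifies to $G_m\bigl((2-\nu)I - \eta G_m\bigr) \succeq 0$, which follows from \Cref{asm:eta} via $\eta\|G_m\|\le \eta\ell \le 2-\nu$ together with commutativity of the two PSD factors. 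With the invariant established, $\tr([X,X]_t)\le \tr(\bar V_t)\le n\lambda d/\nu$, and \Cref{cor:azumacov} yields $\|\xi_t\|=\|X_t\|\le \sqrt{2n\lambda d\iota/\nu}=\mathscr{X}$ with probability at least $1-2de^{-\iota}$. Since this high-probability bound holds for every realization of the (possibly adaptively chosen) reference points, it holds unconditionally as well.
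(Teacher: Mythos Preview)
Your proof is correct and follows essentially the same route as the paper's: represent $\xi_t$ as a martingale sum, bound the quadratic covariation via the Lyapunov-type induction $\bar V_{t+1}=A_{m(t)}\bar V_tA_{m(t)}+n\eta\lambda G_{m(t)}\preceq \frac{n\lambda}{\nu}I$ (the paper runs the same induction directly on the random $[X^{(k)},X^{(k)}]_k$, which is an immaterial difference), and then invoke \Cref{cor:azumacov}. Your explicit unrolling of $N_{s,t}$ is a cleaner presentation of what the paper achieves through its recursive martingale family $\{X^{(k)}_j\}$, and the key algebraic step $G_m((2-\nu)I-\eta G_m)\succeq 0$ is identical; the closing remark about adaptive reference points is unnecessary here since \Cref{def:globalOU} takes the sequence as given.
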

\begin{proof}
	For $k \in (T_m,T_{m+1}]$ define $G_k = G(\theta_m^*)$. Then we can write for any $k \ge 0$,
	\begin{align}
		\xi_{k+1} = (I - \eta G_k)\xi_k + \epsilon_k^*.
	\end{align}
	Let $\mathcal{F}_t = \sigma\{\mathcal{B}^{(k)}, \epsilon^{(k)} : k < t\}$. To each $k$ we will associate a martingale $\{X^{(k)}_j\}_{j \le k}$ adapted to $\mathcal{F}$ as follows. First let $X^{(k)}_0 = 0$. Then for all $k  \ge 0$ and all $j \ge 0$,
	\begin{align}
		X_{j+1}^{(k)} =
		\begin{cases}
			(I - \eta G_{k-1})X_j^{(k-1)} & j < k-1 \\
			X_j^{(k)} + \epsilon_{k-1}^* & j=k-1.
		\end{cases}
	\end{align}
	First we need to show $X^{(k)}$ is in fact a martingale. We will show this by induction on $k$. The base case of $k = 0$ is trivial. Next, it is easy to see that $X_j^{(k)} \in \mathcal{F}_j$. Therefore,
	\begin{align}
		\E[X_k^{(k)}|\mathcal{F}_{k-1}] = \E[X_{k-1}^{(k)}|\mathcal{F}_{k-1}] = X_{k-1}^{(k)}
	\end{align}
	and for $j < k - 1$:
	\begin{align}
		\E[X_{j+1}^{(k)}|\mathcal{F}_{j}] &= (I - \eta G_{k-1}) \E[X_{j+1}^{(k-1)} | \mathcal{F}_{j}] \\
		&= (I - \eta G_{k-1}) X_{j}^{(k-1)} \\
		&= X_{j}^{k}
	\end{align}
	where the second line followed from the induction hypothesis and the third line followed from the definition of $X_j^{(k)}$. Therefore $X^{(k)}$ is a martingale for all $k$.
	
	Next, I claim that $\xi_k = X_k^{(k)}$. We can prove this by induction on $k$. The base case is trivial as $\xi_0 = X^{(0)}_0 = 0$. Then,
	\begin{align}
		X_{k+1}^{(k+1)} &= X_{k}^{(k+1)} + \epsilon_k^* \\
		&= (I - \eta G_k)X_k^{(k)} + \epsilon_k^* \\
		&= \xi_{k+1}.
	\end{align}
	Finally, I claim that $[X^{(k)},X^{(k)}]_k \preceq \frac{n \lambda}{\nu}I$. We will prove this by induction on $k$. The base case is trivial as $X^{(0)}_0 = 0$. Then,
	\begin{align}
		[X^{(k+1)},X^{(k+1)}]_{k+1} &= [X^{(k+1)},X^{(k+1)}]_k + \epsilon_k^*(\epsilon_k^*)^T \\
		&= (I - \eta G_k)[X^{(k)},X^{(k)}]_{k}(I - \eta G_k) + \epsilon_k^*(\epsilon_k^*)^T \\
		&\preceq \frac{n \lambda}{\nu} \left[(I - \eta G_k)^2 + \eta \nu G_k\right] \\
		&\preceq \frac{n \lambda}{\nu} \left[I - G_k(2 - \eta G_k - \nu I)\right] \\
		&\preceq \frac{n \lambda}{\nu} I.
	\end{align}
	Therefore by \Cref{cor:azumacov}, $\|\xi_k\| \le \mathscr{X}$ with probability at least $1-2de^{-\iota}$.
\end{proof}

We will prove \Cref{prop:sketch:taylor2} and \Cref{prop:sketch:OUcov} in the more general setting of \Cref{lem:convergence:coupling}. For notational simplicity we will apply the Markov property and assume that $m = 0$. We define $\Delta = \Delta_0$ and $\theta^* = \theta_0^*$ and note that due to this time change that $\xi_0$ is not necessarily $0$. We define $v_k = \theta_k - \Phi_k(\theta^* + \Delta)$ and $r_k = \theta_k - \xi_k - \Phi_k(\theta^* + \Delta)$.

\begin{proof}[Proof of \Cref{prop:sketch:taylor2}]
	First, by \Cref{prop:OUnormglobal}, $\|\xi_t\| \le \mathscr{X}$ with probability at least $1-2de^{-\iota}$. Then note that for $k \le t$,
\begin{align}
	\|\theta_k - \theta^*\| \le \|\xi_k\| + \|r_k\| + \|\Phi_k(\theta^*+\Delta)-\theta^*\| = O(\mathscr{X}) \qqtext{and} \theta_k - \theta^* = \xi_k + O(\mathscr{M})
\end{align}
so Taylor expanding the update in \Cref{alg:sgdln} and \Cref{eq:regularizedtrajectory} to second order around $\theta^*$ and subtracting gives
\begin{align}
	v_{k+1} &= (I - \eta G)v_k + \epsilon_k^* + m_k + z_k \\
	&\qquad - \eta\left[\frac{1}{2}\nabla^3 L(\theta_{k}-\theta^*,\theta_{k}-\theta^*) - \frac{1}{2}\nabla^3 L(\Phi_{k}(\theta^*)-\theta^*,\Phi_{k}(\theta^*)-\theta^*)-\lambda \nabla R\right] \notag\\
	&\qquad + O(\eta \mathscr{X} (\sqrt{\mathscr{L}} + \mathscr{X}^2)) \notag \\
	&= (I - \eta G)v_k + \epsilon_k^* + m_k + z_k - \eta\left[\frac{1}{2}\nabla^3 L(\xi_k,\xi_k)-\lambda \nabla R\right] + O(\eta \mathscr{X}(\sqrt{\mathscr{L}} + \mathscr{M} + \mathscr{X}^2)) \notag.
\end{align}
Subtracting \Cref{eq:sketch:OUdef}, we have
\begin{align}
	r_{k+1} 
	&= (I - \eta G)r_k - \eta\left[\frac{1}{2}\nabla^3 L(\xi_k,\xi_k) - \lambda \nabla R \right] + m_k + z_k + O(\eta \mathscr{X}(\sqrt{\mathscr{L}} + \mathscr{M} + \mathscr{X}^2)) \\
	&= (I - \eta G)r_k - \eta\left[\frac{1}{2}\nabla^3 L(\xi_k,\xi_k) - \lambda \nabla R \right] + m_k + z_k + \tilde O(c^{5/2} \eta \lambda^{1+\delta/2}).
\end{align}
\end{proof}

\begin{proof}[Proof of \Cref{prop:sketch:OUcov}]
Note that for each $i \in \mathcal{B}^{(k)}$,
	\begin{align}
		\|\epsilon^{(k)}_i (\nabla f_i(\theta) - \nabla f_i(\theta^*))\| \le \sigma \rho_f \|\theta-\theta^*\|.
	\end{align}
	Therefore by \Cref{lem:azuma}, with probability $1-2de^{-\iota}$,
	\begin{align}
		\norm{\sum_{j < k} (I - \eta G)^j z_{k-j}} = O(\sqrt{\eta \lambda k \iota} \mathscr{X}).
	\end{align}
	Next, note that because $\|\nabla \ell_i(\theta)\|=O(L(\theta))$, by \Cref{lem:azuma}, with probability at least $1-2de^{-\iota}$,
	\begin{align}
		\norm{\sum_{j < k} (I - \eta G)^j m_{k-j}} = O(\sqrt{\eta \lambda k \iota} \sqrt{L(\theta)}).
	\end{align}
	Next, by a second order Taylor expansion around $\theta^*$ we have
	\begin{align}
		\sqrt{L(\theta)} \le O(\sqrt{\mathscr{L}} + \mathscr{X})
	\end{align}
	so
	\begin{align}
		r_{t+1} 
		&= -\eta \sum_{k \le t}(I - \eta G)^{t-k} \left[\frac{1}{2}\nabla^3 L(\xi_k,\xi_k) - \lambda \nabla R \right] \\
		&\qquad +  O\left(\sqrt{\eta \lambda t}\left(\sqrt{\mathscr{L}}+\mathscr{X}\right) + \eta t \mathscr{X}\left(\sqrt{\mathscr{L}} + \mathscr{M} + \mathscr{X}^2\right)\right) \notag \\
		&= -\eta \sum_{k \le t}(I - \eta G)^{t-k} \left[\frac{1}{2}\nabla^3 L(\xi_k,\xi_k) - \lambda \nabla R \right] +\tilde O\left(\frac{\lambda^{1/2+\delta/2}}{\sqrt{c}}\right).%&\qquad\qquad + O\left(\sqrt{\eta \lambda t}\left(\sqrt{\mathscr{L}}+\mathscr{X}\right) + \eta t \mathscr{X}\left(\sqrt{\mathscr{L}} + \mathscr{M} + \mathscr{X}^2\right)\right).
	\end{align}
	Now we will turn to concentrating $\xi_k\xi_k^T$. We will use the shorthand $g_i = \nabla f_i(\theta^*)$. Let
	\begin{align}
		S^* = \lambda(2-\eta \nabla^2 L)^{-1},\qquad \bar S = \lambda(2-\eta G)^{-1}, \qqtext{and} S_k = \xi_k \xi_k^T.
	\end{align}
	It suffices to bound
	\begin{align}
		\eta \sum_{k \le t}(I - \eta G)^{t-k} \frac{1}{2} \nabla^3 L(S_k - S^*).
	\end{align}
	We can expand out $\nabla^3 L$ using the fact that $L$ is square loss to get
	\begin{align}
		\frac{1}{2} \nabla^3 L(S_k - S^*) = \frac{1}{n} \sum_{i=1}^n \left( H_i (S_k - S^*) g_i + \frac{1}{2}g_i \tr\left[(S_k - S^*) H_i\right] \right) + O(\sqrt{\mathscr{L}}\mathscr{X}^2),
	\end{align}
	so it suffices to bound the contribution of the first two terms individually. Starting with the second term, we have $\tr\left[(S_k - S^*) H_i\right] = O(\mathscr{X}^2)$, so by \Cref{contract:sum1},
	\begin{align}
		\eta\frac{1}{n} \sum_{i=1}^n \sum_{k \le t} (I - \eta G)^{t-k} g_i \tr\left[(S_k - S^*) H_i\right] = O\left(\sqrt{\eta t}\mathscr{X}^2 \right).
	\end{align}
	For the first term, note that
	\begin{align}
		S^* - \bar S = \lambda \left[(2-\eta \nabla^2 L)^{-1}\left((2-\eta G)-(2-\eta \nabla^2 L)\right)(2-\eta G)^{-1}\right] = O(\eta \lambda \sqrt{\mathscr{L}})
	\end{align}
	so this difference contributes at most $O(\eta^2 \lambda t \sqrt{\mathscr{L}}) = O(\eta t \mathscr{X} \sqrt{\mathscr{L}})$
	so it suffices to bound
	\begin{align}
		\frac{1}{n} \sum_{i=1}^n \sum_{k \le t} (I - \eta G)^{t-k} H_i (S_k - \bar S) g_i.
	\end{align}
	Now note that
	\begin{align}
		S_{k+1} = (I - \eta G)S_k(I - \eta G) + (I - \eta G)\xi_k (\epsilon_k^*)^T + \epsilon_k^* \xi_k (I - \eta G) + (\epsilon_k^*)(\epsilon_k^*)^T
	\end{align}
	and that\footnote{This identity directly follows from multiplying both sides by $2 - \eta G$ and the fact that all of these matrices commute .}
	\begin{align}
		\bar S = (I - \eta G)\bar S(I - \eta G) + \eta \lambda G.
	\end{align}
	Let $D_k = S_k - \bar S$. Then subtracting these two equations gives
	\begin{align}
		D_{k+1} = (I - \eta G)D_k(I - \eta G) + (I - \eta G)\xi_k (\epsilon_k^*)^T + \epsilon_k^* \xi_k (I - \eta G) + ((\epsilon_k^*)(\epsilon_k^*)^T - \eta \lambda G).
	\end{align}
	Let $W_k = (I - \eta G)\xi_k (\epsilon_k^*)^T + \epsilon_k^* \xi_k^T (I - \eta G)$ and let $Z_k = ((\epsilon_k^*)(\epsilon_k^*)^T - \eta \lambda G)$ so that
	\begin{align}
		D_{k+1} = (I - \eta G)D_k(I - \eta G) + W_k + Z_k.
	\end{align}
	Then,
	\begin{align}
		D_k = (I - \eta G)^k D_0 (I - \eta G)^k + \sum_{j < k} (I - \eta G)^{k-j-1} (W_j + Z_j) (I - \eta G)^{k-j-1}.
	\end{align}
	Substituting the first term gives
	\begin{align}
		\eta\frac{1}{n} \sum_{i=1}^n \sum_{k \le t} (I - \eta G)^{t-k} H_i (I - \eta G)^k D_0 (I - \eta G)^k g_i = O(\sqrt{\eta t}\mathscr{X}^2)
	\end{align}
	so we are left with the martingale part in the second term. The final term to bound is therefore
	\begin{align}
		\eta\frac{1}{n} \sum_{i=1}^n \sum_{k \le t} (I - \eta G)^{t-k} H_i \left[\sum_{j < k} (I - \eta G)^{k-j-1} (W_j + Z_j) (I - \eta G)^{k-j-1}\right] g_i.\label{eq:wjzj}
	\end{align}
	We can switch the order of summations to get 
	\begin{align}
		\eta\frac{1}{n} \sum_{i=1}^n \sum_{j \le t} \sum_{k = j+1}^t (I - \eta G)^{t-k} H_i (I - \eta G)^{k-j-1} (W_j + Z_j) (I - \eta G)^{k-j-1}g_i.
	\end{align}
	Now if we extract the inner sum, note that 
	\begin{align}
		\sum_{k = j+1}^t (I - \eta G)^{t-k} H_i (I - \eta G)^{k-j-1} (W_j + Z_j) (I - \eta G)^{k-j-1}g_i
	\end{align}
	is a martingale difference sequence. Recall that
	\begin{align}
		\epsilon_j^* = \frac{\eta}{B} \sum_{l \in \mathcal{B}^{(j)}} \epsilon^{(j)}_l g_l
	\end{align}
	First, isolating the $W$ term, we get
	\begin{align}
		&\sum_{k = j+1}^t (I - \eta G)^{t-k} H_i (I - \eta G)^{k-j} \xi_j (\epsilon_j^*)^T (I - \eta G)^{k-j-1}g_i \\
		&\qquad+ \sum_{k = j+1}^t (I - \eta G)^{t-k} H_i (I - \eta G)^{k-j-1} \epsilon_j^* \xi_j^T (I - \eta G)^{k-j}g_i. \notag \\
		&= \frac{\eta}{B} \sum_{l \in \mathcal{B}^{(j)}} \epsilon^{(j)}_l \Bigl[\sum_{k = j+1}^t (I - \eta G)^{t-k} H_i (I - \eta G)^{k-j} \xi_j g_l^T (I - \eta G)^{k-j-1}g_i \label{eq:OUcov:W}\\
		&\qquad+ \sum_{k = j+1}^t (I - \eta G)^{t-k} H_i (I - \eta G)^{k-j-1} g_l \xi_j^T (I - \eta G)^{k-j}g_i\Bigr] \notag.
	\end{align}
	The inner sums are bounded by $O(\mathscr{X}\eta^{-1})$ by \Cref{contract:sumsqapart}. Therefore by \Cref{lem:azuma}, with probability at least $1-2de^{-\iota}$, the contribution of the $W$ term in \Cref{eq:wjzj} is at most $O(\sqrt{\eta \lambda k \iota}\mathscr{X}) = O(\sqrt{\eta k}\mathscr{X}^2)$. The final remaining term to bound is the $Z$ term in \eqref{eq:wjzj}. We can write the inner sum as
	\begin{align}
		\frac{\eta\lambda}{B^2}\sum_{k = j+1}^t (I - \eta G)^{t-k} H_i (I - \eta G)^{k-j-1} \left(\frac{1}{\sigma^2}\sum_{l_1,l_2 \in \mathcal{B}^{(k)}} \epsilon^{(j)}_{l_1} \epsilon^{(j)}_{l_2} g_{l_1} g_{l_2}^T  - G\right) (I - \eta G)^{k-j-1}g_i
	\end{align}
	which by \Cref{contract:sumsqapart} is bounded by $O(\lambda)$. Therefore by \Cref{lem:azuma}, with probability at least $1-2de^{-\iota}$, the full contribution of $Z$ to \Cref{eq:wjzj} is $O(\eta \lambda \sqrt{t \iota}) = O(\sqrt{\eta t}\mathscr{X}^2)$. Putting all of these bounds together we get with probability at least $1-10de^{-\iota}$,
	\begin{align}
		\|r_{t+1}\| &= O\left[\sqrt{\eta \mathscr{T}}\mathscr{X}(\sqrt{\mathscr{L}} + \mathscr{X}) + \eta \mathscr{T} \mathscr{X}(\sqrt{\mathscr{L}} + \mathscr{M} + \mathscr{X}^2)\right] \\
		&= \tilde O\left(\frac{\lambda^{1/2+\delta/2}}{\sqrt{c}}\right).
	\end{align}
\end{proof}

The following lemma is necessary for some of the proofs below:
\begin{lemma}\label{lem:lossdecrease}
	Assume that $L(\theta) \le \mathscr{L}$. Then for any $k \ge 0$, $L(\Phi_k(\theta)) \le \mathscr{L}$.
\end{lemma}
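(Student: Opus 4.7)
The plan is to prove $L(\Phi_k(\theta)) \le \mathscr{L}$ by induction on $k$, using the KL assumption to force a descent direction for $L$ whenever the loss is near the threshold. The base case $k=0$ is the hypothesis.

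Two preliminary facts are needed on the sublevel set $S = \{\theta' : L(\theta') \le \mathscr{L}\}$. First, by \Cref{prop:hessiandecomp} and \Cref{asm:smooth}, $\|\nabla^2 L(\theta')\| \le \ell + \sqrt{2 \rho_f \mathscr{L}}$, and since $\mathscr{L} = c\lambda^{1+\delta}$ shrinks with $\lambda$, \Cref{asm:eta} ensures a spectral gap in $I - \tfrac{\eta}{2}\nabla^2 L(\theta')$. Applying the chain rule to the trace-log definition of $R$ and using the bound on $\nabla^3 L$ that follows from \Cref{asm:smooth}, this yields a uniform bound $\|\nabla R(\theta')\| \le C_R$ for some constant $C_R$ depending only on the problem parameters. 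Second, by \Cref{asm:kl} and the fact that $\mathscr{L} \le \epsilon_{KL}$ for $\lambda$ small, we have $\|\nabla L(\theta')\| \ge (L(\theta')/\mu)^{1/(1+\delta)}$ throughout $S$.

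For the inductive step, applying the standard $\ell$-smoothness descent inequality for $L$ to the step $\Phi_{k+1}(\theta) - \Phi_k(\theta) = -\eta(\nabla L + \lambda \nabla R)$ and using $1 - \eta \ell/2 \ge \nu/2$ from \Cref{asm:eta} yields
\begin{align*}
L(\Phi_{k+1}(\theta)) \le L(\Phi_k(\theta)) - \tfrac{\eta \nu}{2}\|\nabla L(\Phi_k(\theta))\|^2 + \eta \lambda C_R \|\nabla L(\Phi_k(\theta))\| + \tfrac{\eta^2 \ell \lambda^2 C_R^2}{2}.
\end{align*}
Viewed as a quadratic in $\|\nabla L(\Phi_k(\theta))\|$, the right side is bounded by $L(\Phi_k(\theta))$ once $\|\nabla L(\Phi_k(\theta))\| \ge C'\lambda$ for an explicit $C' = C'(\nu, \ell, C_R)$. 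I would then split on whether $L(\Phi_k(\theta)) \ge \mathscr{L}/2$. If so, KL forces $\|\nabla L(\Phi_k(\theta))\| \ge (c/(2\mu))^{1/(1+\delta)}\lambda$, which exceeds $C'\lambda$ provided the absolute constant $c$ in the definition of $\mathscr{L}$ is chosen sufficiently large, yielding $L(\Phi_{k+1}(\theta)) \le L(\Phi_k(\theta)) \le \mathscr{L}$. Otherwise $L(\Phi_k(\theta)) \le \mathscr{L}/2$ and the displayed inequality (with $\|\nabla L\|$ unconstrained) gives only $L(\Phi_{k+1}(\theta)) \le \mathscr{L}/2 + O(\eta \lambda^2)$, which remains bounded by $\mathscr{L} = c\lambda^{1+\delta}$ whenever $\eta \lambda^{1-\delta}$ is small relative to $c$.

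The main obstacle is the uniform bound $\|\nabla R\| \le C_R$: one must differentiate the matrix logarithm everywhere on $S$, for which the identity $\nabla R(\theta') = \tfrac{1}{4}\nabla^3 L(\theta')\,[(I - \tfrac{\eta}{2}\nabla^2 L(\theta'))^{-1}]$ obtained by applying the chain rule to the trace-log formula is needed, combined with the spectral gap from \Cref{asm:eta} and the Lipschitzness of $\nabla^2 f_i$ in \Cref{asm:smooth}. The rest of the proof is routine descent-lemma bookkeeping whose only delicate quantitative input is that the KL-forced lower bound $\|\nabla L\| = \Omega(c^{1/(1+\delta)}\lambda)$ strictly beats the $\Theta(\lambda)$-sized perturbation from $\lambda \nabla R$ once $c$ is taken to be the absolute constant in the definition of $\mathscr{L}$.
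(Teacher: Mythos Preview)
Your argument is correct and close in spirit to the paper's, but the case split is genuinely different. The paper splits on the size of $\|\nabla L\|$ at the \emph{next} iterate $\theta' = \Phi_1(\theta)$: if $\|\nabla L(\theta')\| \le (\mathscr{L}/\mu)^{1/(1+\delta)}$, the KL inequality applied at $\theta'$ immediately gives $L(\theta') \le \mathscr{L}$; otherwise the large gradient at $\theta'$ is pulled back via the Lipschitz bound on $\nabla L$ to force $\|\nabla L(\theta)\| = \Omega(c\lambda)$, after which the descent lemma yields $L(\theta') \le L(\theta)$. You instead split on the loss at the \emph{current} iterate, using KL there to guarantee a large gradient when $L(\Phi_k(\theta)) \ge \mathscr{L}/2$, and separately arguing that the possible increase is $O(\eta\lambda^2) \ll \mathscr{L}$ when $L(\Phi_k(\theta)) \le \mathscr{L}/2$.

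What each buys: the paper's split avoids your second case entirely (no ``loss may rise but only a little'' bookkeeping, no appeal to $\eta\lambda^{1-\delta}$ being small), at the price of a one-line pullback from $\theta'$ to $\theta$. Your split stays at the current point throughout, which is arguably more natural, but requires the extra quantitative comparison $O(\eta\lambda^2) \le \mathscr{L}/2$. Both routes use the same two ingredients---the uniform bound on $\|\nabla R\|$ on the sublevel set and the KL-versus-$\lambda\nabla R$ competition---and both sweep the same technicality under the rug (the descent lemma needs a Hessian bound along the segment to $\Phi_{k+1}$, which a priori may leave $S$; this is harmless since the step has length $O(\eta\sqrt{\mathscr{L}})$).
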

\begin{proof}
	By induction it suffices to prove this for $k=1$. Let $\theta' = \Phi_1(\theta)$. First consider the case when
	\begin{align}
		\|\nabla L(\theta')\| \le \left(\frac{\mathscr{L}}{\mu}\right)^{1/(1+\delta)}.		
	\end{align}
Then by \Cref{asm:kl}, $L(\theta') \le \mathscr{L}$ so we are done. Otherwise, note that
\begin{align}
	\|\nabla L(\theta)\| &\ge \|\nabla L(\theta')\| - \ell \|\theta - \theta'\| \\
	&\ge \Omega(c\lambda) - \eta \ell \|\nabla L(\theta)\|
\end{align}
so $\|\nabla L(\theta)\| \ge \Omega(c \lambda)$ and therefore $\|\nabla \tilde L(\theta)\| \ge \Omega(c \lambda)$ Then by the standard descent lemma,
\begin{align}
	L(\theta') &\le L(\theta) - \eta \nabla \tilde{L}(\theta)^T \nabla L(\theta) + \frac{\eta^2 \ell}{2} \|\nabla \tilde{L}(\theta)^2\| \\
	&\le L(\theta) - \frac{\eta}{2}(2 - \eta \ell) \|\nabla \tilde{L}(\theta)\|^2 + O(\eta \lambda \|\nabla L(\theta)\|) \\
	&= L(\theta) - \frac{\eta\nu}{2} \|\nabla \tilde{L}(\theta)\|^2 + O(\eta \lambda \|\nabla L(\theta)\|) \\
\end{align}
and for $c$ sufficiently large, the second term is larger than the third so $L(\theta') \le L(\theta) \le \mathscr{L}$.
\end{proof}

We break the proof of \Cref{lem:convergence:1step} into a sequence of propositions. The idea behind \Cref{lem:convergence:1step} is to consider the trajectory $\Phi_k(\theta_m^*)$, for $k \le \mathscr{T}$. First, we want to carefully pick $\tau_m$ so that $\eta \sum_{k < \tau_m} \|\nabla \tilde L(\Phi_k(\theta_m^*))\|$ is sufficiently large to decrease the regularized loss $\tilde{L}$ but sufficiently small to be able to apply \Cref{lem:convergence:coupling}:
\begin{proposition}\label{prop:picktau}
	In the context of \Cref{lem:convergence:1step}, if $\theta_{T_m}$ is not an $(\epsilon,\gamma)$-stationary point, there exists $\tau_m \le \mathscr{T}$ such that:
	\begin{align}
		5\mathscr{M} \ge \eta \sum_{k < \tau_n} \|\nabla \tilde L(\Phi_k(\theta_n^*))\| \ge 4\mathscr{M}.
	\end{align}
\end{proposition}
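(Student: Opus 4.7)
The plan is a first-crossing argument applied to the running sum
\[ S_k := \eta \sum_{j < k} \|\nabla \tilde L(\Phi_j(\theta_m^*))\|. \]
Since $S_k$ is monotone non-decreasing with $S_0 = 0$, if I can show that every per-step increment is strictly smaller than $\mathscr{M}$ and that $S_{\mathscr{T}} \ge 4\mathscr{M}$, then choosing $\tau_m$ to be the first index at which $S_{\tau_m} \ge 4\mathscr{M}$ automatically forces $S_{\tau_m} < S_{\tau_m-1} + \mathscr{M} < 5\mathscr{M}$, which is exactly the conclusion of the proposition.

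For the increment bound I would first invoke \Cref{lem:lossdecrease} to conclude $L(\Phi_k(\theta_m^*)) \le \mathscr{L}$ along the full regularized trajectory, and then $\ell$-smoothness yields $\|\nabla L(\Phi_k(\theta_m^*))\| \le \sqrt{2 \ell \mathscr{L}} = O(\lambda^{(1+\delta)/2})$. Differentiating the explicit formula for $R$ in \Cref{eq:regdef} and using that $(I - (\eta/2)\nabla^2 L)^{-1}$ has norm at most $2/\nu$ under \Cref{asm:eta} yields $\lambda \|\nabla R\| = O(\lambda/\nu)$. Both terms are dominated by $\mathscr{M} = c^{3/2} \lambda^{(1+\delta)/2} \iota/\nu$ for $c$ sufficiently large, so each increment of $S_k$ is strictly less than $\mathscr{M}$.

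To prove the threshold is actually crossed by step $\mathscr{T}$, I would argue by contradiction: assume $S_{\mathscr{T}} < 4\mathscr{M}$ and derive that $\theta_{T_m}$ is already an $(\epsilon,\gamma)$-stationary point of $\tfrac{1}{\lambda}\tilde L$. Pigeonhole produces an index $k_0 < \mathscr{T}$ with $\|\nabla \tilde L(\Phi_{k_0}(\theta_m^*))\| \le 4\mathscr{M}/(\eta\mathscr{T})$. Substituting $\mathscr{M} = \tilde\Theta(\lambda^{(1+\delta)/2})$, $\mathscr{X} = \tilde\Theta(\sqrt{\lambda})$ and $\eta \mathscr{T} = \tilde \Theta(1/\sqrt{\lambda})$ collapses this bound to $\tilde O(\lambda^{1+\delta/2})$, and the hypothesis $\lambda = \tilde\Theta(\min(\epsilon^{2/\delta},\gamma^2))$ gives $\lambda^{\delta/2} = \tilde O(\epsilon)$, hence $\|\nabla \tilde L(\Phi_{k_0}(\theta_m^*))\| \le \lambda \epsilon$, i.e.\ $\Phi_{k_0}(\theta_m^*)$ is $\epsilon$-stationary for $\tfrac{1}{\lambda}\tilde L$. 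For the distance, $\|\Phi_{k_0}(\theta_m^*) - \theta_m^*\| \le S_{k_0} \le 4\mathscr{M}$ and $\|\theta_m^* - \theta_{T_m}\| \le \|\Delta_m\| + \|\xi_{T_m}\| \le \mathscr{D} + \mathscr{X}$ using the hypothesis on $\Delta_m$ and the high-probability bound from \Cref{prop:OUnormglobal}. Summing and again using $\lambda = \tilde O(\gamma^2)$ gives total distance $\tilde O(\sqrt\lambda) \le \gamma$. So $\Phi_{k_0}(\theta_m^*)$ witnesses $(\epsilon,\gamma)$-stationarity of $\theta_{T_m}$, contradicting the standing assumption.

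The argument is almost entirely structural; the only genuine obstacle is bookkeeping, namely verifying the two scale inequalities $4\mathscr{M}/(\eta\mathscr{T}) \le \lambda \epsilon$ and $4\mathscr{M} + \mathscr{D} + \mathscr{X} \le \gamma$ under the prescribed scaling of $\lambda$, which requires unpacking the polylog and polynomial factors hidden in $\tilde\Theta$. Everything else — monotonicity, the first-crossing step, and the pigeonhole — is mechanical, and the only probabilistic input is conditioning on the high-probability event of \Cref{prop:OUnormglobal} that $\|\xi_{T_m}\| \le \mathscr{X}$.
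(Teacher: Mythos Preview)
Your proposal is correct and follows essentially the same route as the paper: a first-crossing argument on the cumulative sum, with the per-step increment bounded via \Cref{lem:lossdecrease} and the boundedness of $\nabla R$, and the contradiction branch handled by pigeonhole to exhibit an $\epsilon$-stationary $\Phi_{k_0}(\theta_m^*)$ within distance $4\mathscr{M}+\mathscr{D}+\mathscr{X}$ of $\theta_{T_m}$. The only cosmetic difference is that the paper bounds $\|\nabla L\|$ directly by Cauchy--Schwarz on the per-sample gradients rather than invoking $\ell$-smoothness, but the resulting scales are identical.
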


We can use this to lower bound the decrease in $\tilde{L}$ from $\theta_m^*$ to $\Phi_{\tau_m}(\theta_m^*)$:
\begin{proposition}\label{prop:reglossdescent}
	$\tilde L(\Phi_{\tau_m}(\theta_m^*)) \le \tilde L(\theta_m^*) - 8\frac{\mathscr{D}^2}{\eta \nu \tau_m}$.
\end{proposition}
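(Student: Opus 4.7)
The plan is to combine the standard descent lemma for gradient descent on $\tilde L$ with the lower bound on $\eta \sum_{k < \tau_m} \|\nabla \tilde L(\Phi_k(\theta_m^*))\|$ provided by \Cref{prop:picktau}, then convert the sum of norms into a sum of squared norms via Cauchy--Schwarz. Since $\tau_m$ appears in the denominator of the target bound, a sum-of-squared-gradients lower bound of the form $\Omega(\mathscr{M}^2/(\eta^2 \tau_m))$ is exactly what is needed, so the strategy is dimensionally forced.

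First, I would verify that $\tilde L$ is effectively $\ell$-smooth along the trajectory $\{\Phi_k(\theta_m^*)\}_{k \le \tau_m}$. By \Cref{prop:hessiandecomp}, $\nabla^2 L(\theta) = G(\theta) + E(\theta)$ with $\|G(\theta)\| \le \ell$ from the definition of $\ell = \ell_f^2$ and $\|E(\theta)\| = O(\sqrt{L(\theta)})$. \Cref{lem:lossdecrease} gives $L(\Phi_k(\theta_m^*)) \le \mathscr{L} = O(\lambda^{1+\delta})$ throughout, so $\|E\|$ is negligible. The Hessian of $R$ is controlled using $\nabla^3 L$, which is bounded via $\nabla^2 f_i$ being $\kappa_f$-Lipschitz, and then $\lambda \|\nabla^2 R\| = \tilde O(\lambda)$, which is dominated by $\ell$ once $c$ is large enough. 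Thus the effective smoothness constant of $\tilde L$ satisfies $\eta \ell_{\tilde L} \le 2 - \nu$ by \Cref{asm:eta}.

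The descent lemma then yields, at every step along the trajectory,
\begin{align*}
\tilde L(\Phi_{k+1}(\theta_m^*)) \le \tilde L(\Phi_k(\theta_m^*)) - \tfrac{\eta \nu}{2} \|\nabla \tilde L(\Phi_k(\theta_m^*))\|^2.
\end{align*}
Telescoping over $k < \tau_m$ and applying Cauchy--Schwarz gives
\begin{align*}
\tilde L(\theta_m^*) - \tilde L(\Phi_{\tau_m}(\theta_m^*)) \ge \tfrac{\eta \nu}{2} \sum_{k < \tau_m} \|\nabla \tilde L(\Phi_k(\theta_m^*))\|^2 \ge \tfrac{\nu}{2 \eta \tau_m} \left(\eta \sum_{k < \tau_m} \|\nabla \tilde L(\Phi_k(\theta_m^*))\|\right)^2.
\end{align*}
Plugging in the lower bound $\eta \sum_{k < \tau_m} \|\nabla \tilde L(\Phi_k(\theta_m^*))\| \ge 4\mathscr{M}$ from \Cref{prop:picktau} and using $\mathscr{M} = \mathscr{D}/\nu$ produces $\tfrac{\nu}{2 \eta \tau_m} \cdot 16 \mathscr{M}^2 = \tfrac{8 \mathscr{D}^2}{\eta \nu \tau_m}$, which is the desired bound.

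The main obstacle is the smoothness verification of step one: one must ensure $\eta \ell_{\tilde L} \le 2 - \nu$ uniformly along the regularized trajectory, not just at $\theta_m^*$. This reduces to showing $L(\Phi_k(\theta_m^*)) \le \mathscr{L}$ for all $k \le \tau_m$ (handled by \Cref{lem:lossdecrease}) and that $\lambda \|\nabla^2 R\|$ is absorbed into the $\nu$ slack; the latter is the delicate bookkeeping step, but uses only polynomial bounds in the smoothness parameters together with the polynomially small choice of $\lambda$.
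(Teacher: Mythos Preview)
Your proposal is correct and follows essentially the same route as the paper: apply the descent lemma to get a sum of squared gradients, use Cauchy--Schwarz to convert to the square of the sum of norms, then invoke the lower bound $\eta\sum_{k<\tau_m}\|\nabla\tilde L(\Phi_k(\theta_m^*))\|\ge 4\mathscr{M}$ from \Cref{prop:picktau} together with $\mathscr{M}=\mathscr{D}/\nu$. The paper simply invokes ``the standard descent lemma'' without spelling out the smoothness verification you sketch in your first step, so your write-up is actually more careful than the original on that point.
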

We now bound the increase in $\tilde{L}$ from $\Phi_{\tau_m}(\theta_m^*)$ to $\theta_{m+1}^*$. This requires relating the regularized trajectories starting at $\theta_m^*$ and $\theta_m^* + \Delta_m$. The following proposition shows that the two trajectories converge in the directions where the eigenvalues of $G(\theta_m^*)$ are large:
\begin{proposition}\label{prop:regcontract} Let $G = G(\theta_m^*)$ and let $\tau_m$ be chosen as in \Cref{prop:picktau}. Then, $\theta_{m+1}^* - \Phi_{\tau_m}(\theta_m^*) = (I - \eta G)^{\tau_m} \Delta_m + r$ where $\|r\| = O(\eta \tau_m \mathscr{M}^2)$ and $\|r\|_G^2 = O(\eta \tau_m \mathscr{M}^4)$.
\end{proposition}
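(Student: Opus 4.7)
The plan is to analyze the difference trajectory $\Psi_k := \Phi_k(\theta_m^* + \Delta_m) - \Phi_k(\theta_m^*)$, which starts at $\Psi_0 = \Delta_m$ and satisfies $\Psi_{\tau_m} = \theta_{m+1}^* - \Phi_{\tau_m}(\theta_m^*)$ by the definition $\theta_{m+1}^* = \Phi_{\tau_m}(\theta_m^* + \Delta_m)$ in Lemma \ref{lem:convergence:1step}. Subtracting the two regularized-GD recursions and Taylor-expanding $\nabla \tilde L$ to first order along the segment from $\Phi_k(\theta_m^*)$ to $\Phi_k(\theta_m^*) + \Psi_k$ yields
\begin{align*}
\Psi_{k+1} = (I - \eta \bar H_k)\Psi_k + O(\eta \|\Psi_k\|^2),
\end{align*}
where $\bar H_k = \int_0^1 \nabla^2 \tilde L(\Phi_k(\theta_m^*) + s\Psi_k)\,ds$ and the quadratic remainder uses the Lipschitzness of $\nabla^2 \tilde L$, which follows from Assumption \ref{asm:smooth} combined with the smooth spectral definition of $R$ in \eqref{eq:regdef}. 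A short induction using $\|\Delta_m\| \le \mathscr{D}$, $\tau_m \le \mathscr{T}$, and the operator bound $\|I-\eta \bar H_k\| \le 1$ from Assumption \ref{asm:eta} shows that $\|\Psi_k\| = O(\mathscr{M})$ for all $k \le \tau_m$.

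Next, I would replace $\bar H_k$ with the constant operator $G := G(\theta_m^*)$. The discrepancy $\bar H_k - G$ splits into the $E$-term from Proposition \ref{prop:hessiandecomp}, the drift of $\nabla^2 L$ along the regularized trajectory, and $\lambda \nabla^2 R$. These have operator norms $O(\sqrt{\mathscr{L}})$, $O(\mathscr{M})$ (using $\max_k \|\Phi_k(\theta_m^*) - \theta_m^*\| \le 5\mathscr{M}$ from Proposition \ref{prop:picktau} together with Lemma \ref{lem:lossdecrease} and Hessian Lipschitzness) and $O(\lambda)$ respectively. Since $\mathscr{M} = \mathscr{D}/\nu = c\sqrt{\mathscr{L}}\iota/\nu$ dominates both $\sqrt{\mathscr{L}}$ and $\lambda$, the per-step replacement error is $O(\eta \mathscr{M} \cdot \|\Psi_k\|) = O(\eta \mathscr{M}^2)$, reducing the recursion to
\begin{align*}
\Psi_{k+1} = (I - \eta G)\Psi_k + e_k, \qquad \|e_k\| = O(\eta \mathscr{M}^2).
\end{align*}
Unrolling gives $\Psi_{\tau_m} = (I - \eta G)^{\tau_m}\Delta_m + r$ with $r = \sum_{k=0}^{\tau_m-1}(I-\eta G)^{\tau_m - 1 - k} e_k$, and a triangle inequality immediately yields $\|r\| \le \sum_k \|e_k\| = O(\eta \tau_m \mathscr{M}^2)$.

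For the sharper $G$-norm bound, I would apply Cauchy--Schwarz,
\begin{align*}
\|r\|_G^2 \;\le\; \tau_m \sum_{k=0}^{\tau_m-1} \|G^{1/2}(I-\eta G)^{\tau_m - 1 - k}\|^2 \,\|e_k\|^2,
\end{align*}
and invoke the geometric-series identity $\sum_{j \ge 0} G(I-\eta G)^{2j} = \eta^{-1}(2-\eta G)^{-1}\Pi_G$, whose operator norm is bounded by $1/(\eta \nu)$ under Assumption \ref{asm:eta}. This turns the above into $\|r\|_G^2 = O(\tau_m \cdot \eta^{-1} \cdot \eta^2 \mathscr{M}^4) = O(\eta \tau_m \mathscr{M}^4)$, exactly as required. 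The exact same stationary-covariance computation underlies Proposition \ref{prop:sketch:OUproperties}, so this step is really the deterministic shadow of the Ornstein--Uhlenbeck calculation already used for $\xi$.

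The main obstacle is the book-keeping in the Hessian-replacement step: showing that the drift $\|\nabla^2 L(\Phi_k(\theta_m^*)) - G\|$ along the \emph{entire} trajectory (not just near $\theta_m^*$) remains $O(\mathscr{M})$ requires chaining the trajectory bound from Proposition \ref{prop:picktau} with Lemma \ref{lem:lossdecrease} (so that $L(\Phi_k(\theta_m^*)) \le \mathscr{L}$ throughout, which keeps Proposition \ref{prop:hessiandecomp}'s $E$-term small) and with Hessian Lipschitzness. Everything else reduces to manipulating the contraction $(I-\eta G)$.
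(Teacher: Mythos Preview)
Your proposal follows essentially the same route as the paper: define the difference trajectory, linearize about $\theta_m^*$, reduce to the recursion $\Psi_{k+1} = (I-\eta G)\Psi_k + e_k$ with $\|e_k\| = O(\eta\mathscr{M}^2)$, and unroll. The $\|r\|$ bound is identical. For $\|r\|_G^2$ the paper instead expands $r^T G r = \frac{1}{n}\sum_i (g_i^T r)^2$ and applies Lemma~\ref{contract:sum1} directly, whereas you go via Cauchy--Schwarz plus a spectral sum; both reach the same $O(\eta\tau_m\mathscr{M}^4)$.

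One small slip to patch: after Cauchy--Schwarz you need $\sum_{j}\|G^{1/2}(I-\eta G)^{j}\|_{\mathrm{op}}^{2} = \sum_{j}\|G(I-\eta G)^{2j}\|_{\mathrm{op}}$, i.e.\ the \emph{sum of operator norms}, but the geometric-series identity you cite only controls the \emph{operator norm of the sum}. These differ because the maximizing eigenvalue of $G(I-\eta G)^{2j}$ shifts with $j$. The fix is immediate---bound $\|G(I-\eta G)^{2j}\|_{\mathrm{op}} \le \sum_i \lambda_i(1-\eta\lambda_i)^{2j}$ and swap the sums, or simply invoke Lemma~\ref{contract:sumG}. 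A second minor point: $\|I-\eta\bar H_k\| \le 1$ is not literally delivered by Assumption~\ref{asm:eta}, since $\bar H_k$ differs from $G$ by the $E$-term of Proposition~\ref{prop:hessiandecomp} and by $\lambda\nabla^2 R$; you only get $\|I-\eta\bar H_k\| \le 1 + O(\eta\sqrt{\mathscr{L}}+\eta\lambda)$. Over $\tau_m \le \mathscr{T}$ steps this compounds to $\exp(O(\eta\mathscr{T}\sqrt{\mathscr{L}})) = O(1)$, so the bound $\|\Psi_k\| = O(\mathscr{M})$ survives, but the inequality as stated needs this correction.
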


Substituting the result in \Cref{prop:regcontract} into the second order Taylor expansion of $\tilde{L}$ centered at $\Phi_{\tau_m}(\theta_m^*)$ gives:
\begin{proposition}\label{prop:reglossascent}
	$\tilde L(\theta_{m+1}^*) \le \tilde L(\Phi_{\tau_m}(\theta_m^*)) + 7\frac{\mathscr{D}^2}{\eta \nu \tau_m}$
\end{proposition}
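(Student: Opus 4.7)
The plan is a second-order Taylor expansion of $\tilde L$ around $\Phi_{\tau_m}(\theta_m^*)$, combined with the structural decomposition of the endpoint displacement from \Cref{prop:regcontract}. Writing $u = \theta_{m+1}^* - \Phi_{\tau_m}(\theta_m^*)$, \Cref{prop:regcontract} gives $u = (I-\eta G)^{\tau_m}\Delta_m + r$ with $G = G(\theta_m^*)$, $\|r\| = O(\eta\tau_m\mathscr{M}^2)$, and $\|r\|_G^2 = O(\eta\tau_m\mathscr{M}^4)$. Taylor expansion yields
\begin{align*}
\tilde L(\theta_{m+1}^*) - \tilde L(\Phi_{\tau_m}(\theta_m^*)) = \langle \nabla \tilde L(\Phi_{\tau_m}(\theta_m^*)), u\rangle + \tfrac{1}{2}u^\top \nabla^2 \tilde L(\zeta)u + O(\|u\|^3),
\end{align*}
so it suffices to bound each of these terms by a small multiple of $\mathscr{D}^2/(\eta\nu\tau_m)$.

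For the quadratic term, I would use \Cref{prop:hessiandecomp} to write $\nabla^2 L(\zeta) = G(\zeta) + E(\zeta)$ with $\|E(\zeta)\| = O(\sqrt{\mathscr{L}})$, combine with Lipschitzness of $G$ and the bound $\|\zeta - \theta_m^*\| = O(\mathscr{M})$ to replace $G(\zeta)$ by $G$ up to $O(\mathscr{M})$ spectral error, and note $\lambda\|\nabla^2 R\| = O(\lambda)$. This reduces the analysis to controlling $\tfrac{1}{2}\|u\|_G^2$ up to controlled error. The key calculus inequality $\max_{x \ge 0} x(1-\eta x)^{2\tau} \le \tfrac{1}{2e\eta\tau}$ then gives
\begin{align*}
\|(I-\eta G)^{\tau_m}\Delta_m\|_G^2 \le \tfrac{\|\Delta_m\|^2}{2e\eta\tau_m} \le \tfrac{\mathscr{D}^2}{2e\eta\tau_m},
\end{align*}
producing the dominant $O(\mathscr{D}^2/(\eta\tau_m))$ contribution; all remaining errors from $r$ and from the Hessian approximation are lower order after substituting the definitions of $\mathscr{D}, \mathscr{M}, \mathscr{L}$ and using the separation $\nu$.

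The main obstacle is the first-order term, since $\|\nabla \tilde L(\Phi_{\tau_m}(\theta_m^*))\|$ is generically $O(\sqrt{\mathscr{L}})$ and the naive bound $|\langle \nabla \tilde L, u\rangle| \le \|\nabla \tilde L\|\|u\| = O(\sqrt{\mathscr{L}}\mathscr{D})$ is too loose. My plan is to exploit the structural fact that $\nabla L(\phi) = \tfrac{1}{n}\sum_i(f_i(\phi)-y_i)\nabla f_i(\phi)$ always lies in the image of $G(\phi)$: writing $\nabla L = \tfrac{1}{n}F^\top e$ for the Jacobian matrix $F$ with rows $\nabla f_i^\top$ and residuals $e_i = f_i(\phi) - y_i$, and $G(\phi) = \tfrac{1}{n}F^\top F$, yields $\|\nabla L(\phi)\|_{G(\phi)^\dagger}^2 \le 2L(\phi) \le 2\mathscr{L}$. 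Transferring this bound from $G(\Phi_{\tau_m}(\theta_m^*))$ to $G = G(\theta_m^*)$ via Lipschitzness of $G$ and applying Cauchy-Schwarz in the $G$-norm gives $|\langle \nabla L, u\rangle| \le \|\nabla L\|_{G^\dagger}\|u\|_G = O(\sqrt{\mathscr{L}}\mathscr{D}/\sqrt{\eta\tau_m})$. An AM-GM step combined with the lower bound $\eta\tau_m = \Omega(\mathscr{M}/\sqrt{\mathscr{L}})$ implied by \Cref{prop:picktau} (each $\|\nabla\tilde L(\phi_k)\| = O(\sqrt{\mathscr{L}})$ while $\eta\sum\|\nabla \tilde L(\phi_k)\| \ge 4\mathscr{M}$) absorbs this into a small multiple of $\mathscr{D}^2/(\eta\nu\tau_m)$. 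The regularizer contribution $\lambda|\langle \nabla R, u\rangle| \le \lambda\|\nabla R\|\mathscr{D} = O(\lambda\mathscr{D})$ is at the target scale directly. Carefully tracking constants so that the first-order contribution fits within the margin between the $8$ in \Cref{prop:reglossdescent} and the $7$ here yields the claim; this margin is exactly what produces the net decrease $\mathscr{F}$ used in the proof of \Cref{lem:convergence:1step}.
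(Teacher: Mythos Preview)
Your treatment of the quadratic and remainder terms mirrors the paper's approach and is fine. The gap is in the first-order term.

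Your bound $|\langle \nabla L, u\rangle| \le \|\nabla L\|_{G^\dagger}\|u\|_G = O(\sqrt{\mathscr{L}}\,\mathscr{D}/\sqrt{\eta\tau_m})$ is not strong enough. Compare it to the target $\mathscr{D}^2/(\eta\nu\tau_m)$: since $\mathscr{D} = c\sqrt{\mathscr{L}}\,\iota$, the ratio is $\sqrt{\eta\tau_m}/(c\iota)$. When $\tau_m$ is near $\mathscr{T}$ one has $\eta\tau_m = \Theta(1/(c^2\mathscr{X}\iota)) = \Theta(\lambda^{-1/2})$, so this ratio diverges as $\lambda\to 0$. The lower bound $\eta\tau_m \ge \Omega(\mathscr{M}/\sqrt{\mathscr{L}})$ you extract from \Cref{prop:picktau} goes the wrong direction: it only makes the ratio larger, and no AM--GM rearrangement can rescue a bound whose comparison to the target is unbounded in $\lambda$. (There is also a secondary issue: transferring the $G^\dagger$-norm bound from $G(\Phi_{\tau_m}(\theta_m^*))$ to $G(\theta_m^*)$ is delicate because pseudoinverses are not Lipschitz and the ranges need not coincide.)

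The paper instead controls $\|\nabla\tilde L(\Phi_{\tau_m}(\theta_m^*))\|$ directly, and this is the missing idea. A short auxiliary lemma (\Cref{lem:normdecrease}) shows that $\|\nabla\tilde L(\Phi_k(\theta_m^*))\|$ is nearly non-increasing in $k$ along the regularized trajectory: $\|\nabla\tilde L(\Phi_{\tau_m})\| \le \tfrac{12}{11}\|\nabla\tilde L(\Phi_k)\|$ for every $k<\tau_m$. Averaging and using the \emph{upper} bound $\eta\sum_{k<\tau_m}\|\nabla\tilde L(\Phi_k)\|\le 5\mathscr{M}$ from \Cref{prop:picktau} then yields $\|\nabla\tilde L(\Phi_{\tau_m})\| \le \tfrac{60\mathscr{M}}{11\eta\tau_m} = \tfrac{60\mathscr{D}}{11\eta\nu\tau_m}$. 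Now the crude Cauchy--Schwarz $|\langle\nabla\tilde L,u\rangle|\le\|u\|\|\nabla\tilde L\|$ with $\|u\|\le 11\mathscr{D}/10$ already gives the dominant $\tfrac{6\mathscr{D}^2}{\eta\nu\tau_m}$ contribution, and the remaining pieces (the $G$-quadratic form via \Cref{contract:G}, the $r^\top G r$ bound, and the cubic remainder) assemble to the stated total of $7\mathscr{D}^2/(\eta\nu\tau_m)$. The key point you are missing is that when $\tau_m$ is large the endpoint gradient must itself be small, and near-monotonicity of the gradient along the trajectory is what delivers this.
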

Combining \Cref{prop:reglossdescent,prop:reglossascent}, we have that
\begin{align}
	\tilde{L}(\theta_{m+1}^*) - \tilde L(\theta_m^*) \le  - \frac{\mathscr{D}^2}{\eta \nu \tau_m} \le -\mathscr{F}.\label{eq:chaindescent}
\end{align}
where the last line follows from $\tau_m \le \mathscr{T}$ and the definition of $\mathscr{F}$. Finally, the following proposition uses this bound on $\tilde{L}$ and \Cref{asm:kl} to bound $L(\theta_{m+1}^*)$:
\begin{proposition}\label{prop:chainloss}
	$L(\theta_{m+1}^*) \le \mathscr{L}$.
\end{proposition}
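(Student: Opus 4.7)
The plan is to mirror the case split from the proof of \Cref{lem:lossdecrease}, now applied to the terminal iterate $\theta_{m+1}^* = \Phi_{\tau_m}(\theta_m^* + \Delta_m)$ of the regularized trajectory, with the case distinction based on $\|\nabla L(\theta_{m+1}^*)\|$. If $\|\nabla L(\theta_{m+1}^*)\| \le (\mathscr{L}/\mu)^{1/(1+\delta)}$, then \Cref{asm:kl} immediately gives $L(\theta_{m+1}^*) \le \mu\|\nabla L(\theta_{m+1}^*)\|^{1+\delta} \le \mathscr{L}$. The difficult case is $\|\nabla L(\theta_{m+1}^*)\| > (\mathscr{L}/\mu)^{1/(1+\delta)} = \Omega(c^{1/(1+\delta)}\lambda)$, which for the constant $c$ in $\mathscr{L} = c\lambda^{1+\delta}$ chosen sufficiently large dominates $\lambda\|\nabla R\|$.

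In the large-gradient case I would propagate the lower bound on $\|\nabla L\|$ backward along the trajectory $\{\Phi_k(\theta_m^* + \Delta_m)\}_{k \le \tau_m}$. Using Lipschitzness of $\nabla L$ from \Cref{asm:smooth} together with $\|\Phi_{k+1} - \Phi_k\| = \eta\|\nabla\tilde L(\Phi_k)\|$, as long as $\|\nabla L(\Phi_{k+1})\|$ stays above the Case~1 threshold so does $\|\nabla L(\Phi_k)\|$, and the standard descent calculation (exactly as in the proof of \Cref{lem:lossdecrease}, Case~2) then gives $L(\Phi_{k+1}) \le L(\Phi_k) - \Omega(\eta\nu\|\nabla L(\Phi_k)\|^2)$. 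The chain of Case~2 steps cannot extend all the way back to $k = 0$: its cumulative decrease would exceed the initial value $L(\theta_m^* + \Delta_m) \le \mathscr{L} + O(\sqrt{\mathscr{L}}\,\mathscr{D} + \mathscr{D}^2) = \tilde O(c^2\mathscr{L})$, which comes from Taylor-expanding $L$ at $\theta_m^*$ and using the hypotheses $L(\theta_m^*) \le \mathscr{L}$ and $\|\Delta_m\| \le \mathscr{D}$. Therefore some intermediate $k^\ast \le \tau_m$ must fall into Case~1, which gives $L(\Phi_{k^\ast}) \le \mathscr{L}$; applying \Cref{lem:lossdecrease} itself from $\Phi_{k^\ast}$ onward preserves $L(\Phi_k) \le \mathscr{L}$ out to $k = \tau_m$, in particular at $\theta_{m+1}^*$.

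The main obstacle is the quantitative verification that Case~1 is triggered within the $\tau_m$ steps allotted by \Cref{prop:picktau}. This requires feeding the KL inequality $\|\nabla L\|^2 \gtrsim (L/\mu)^{2/(1+\delta)}$ into the discrete-time descent recursion and checking that the cumulative decrease over $\tau_m$ steps comfortably overcomes the $\tilde O(c^2\mathscr{L})$ overshoot. Matching the Case~2 descent rate $\Omega(\eta \|\nabla L\|^2)$ against the lower bound on $\tau_m$ implicit in $\eta\sum_{k<\tau_m}\|\nabla\tilde L(\Phi_k(\theta_m^*))\| \ge 4\mathscr{M}$ is exactly where the freedom to take the constant $c$ (hidden in $\mathscr{L}$, $\mathscr{D}$, and $\mathscr{T}$) sufficiently large is exploited -- the same constant-tuning mechanism that closes the preceding propositions in this section.
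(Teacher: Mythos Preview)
Your approach is genuinely different from the paper's, and the ``main obstacle'' you flag is real --- but it cannot be closed by constant tuning alone, so the proposal has a gap.

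Concretely: if Case~2 held at every $k\le\tau_m$ along the $\Phi_k(\theta_m^*+\Delta_m)$ trajectory, the per-step descent is only guaranteed to be $\Omega(\eta(\mathscr{L}/\mu)^{2/(1+\delta)})=\Omega(\eta c^{2/(1+\delta)}\lambda^2)$, so the cumulative drop is $\Omega(\eta\tau_m c^{2/(1+\delta)}\lambda^2)$. For this to exceed the initial value $\tilde O(c^3\lambda^{1+\delta})$ you need $\eta\tau_m \gtrsim \lambda^{\delta-1}$. But the only lower bound on $\tau_m$ that \Cref{prop:picktau} supplies comes from $\eta\sum_{k<\tau_m}\|\nabla\tilde L(\Phi_k(\theta_m^*))\|\ge 4\mathscr{M}$ together with the upper bound $\|\nabla\tilde L(\Phi_k(\theta_m^*))\|=O(\sqrt{\mathscr{L}})$, yielding merely $\eta\tau_m\ge\Omega(\mathscr{M}/\sqrt{\mathscr{L}})=\Omega(c\iota)$ --- a quantity that is $O(1)$ in $\lambda$. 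Since $\lambda^{\delta-1}\to\infty$ as $\lambda\to 0$ for $\delta<1$, no choice of the constant $c$ closes this polynomial-in-$\lambda$ gap. Trying to transfer gradient information between the two trajectories does not help either: by \Cref{prop:regcontract} the trajectories differ by $O(\mathscr{D})=\tilde O(\lambda^{(1+\delta)/2})$, which already swamps the Case~1 threshold $\Theta(\lambda)$.

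The paper's proof avoids this by not re-running a descent argument at all. It argues by contradiction and exploits the \emph{already established} $\tilde L$ decrease \eqref{eq:chaindescent}. Because $\nabla R$ is Lipschitz and $\|\theta_{m+1}^*-\theta_m^*\|=O(\mathscr{M})$, the $R$-part of $\tilde L$ can change by at most $O(\lambda\mathscr{M})$; so if $L(\theta_{m+1}^*)>L(\theta_m^*)$ one is forced into $\frac{\mathscr{D}^2}{\eta\nu\tau_m}\le O(\lambda\mathscr{M})$, i.e.\ $\eta\tau_m\ge\Omega(\mathscr{D}/\lambda)$. This is exactly the strong, $\lambda$-dependent lower bound on $\tau_m$ that your argument is missing. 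Combined with the \emph{upper} bound $\eta\sum\|\nabla\tilde L(\Phi_k(\theta_m^*))\|\le 5\mathscr{M}$ from \Cref{prop:picktau} and \Cref{lem:normdecrease}, it forces $\|\nabla\tilde L(\Phi_{\tau_m}(\theta_m^*))\|=O(\lambda)$, hence $\|\nabla L(\Phi_{\tau_m}(\theta_m^*))\|=O(\lambda)$, and then KL plus a Taylor expansion (using \Cref{prop:regcontract} to control $\theta_{m+1}^*-\Phi_{\tau_m}(\theta_m^*)$) gives $L(\theta_{m+1}^*)=O(\lambda^{1+\delta})\le\mathscr{L}$. The key conceptual point is that \eqref{eq:chaindescent}, not a fresh descent analysis, is what supplies the needed control on $\tau_m$.
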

The following corollary also follows from the choice of $\tau_m$,  \Cref{prop:regcontract}, and \Cref{lem:convergence:coupling}:
\begin{corollary}\label{cor:chainerrors}
	$\|\Phi_{\tau_m}(\theta_m^* + \Delta_m) - \theta_m^*\| \le 8\mathscr{M}$ and with probability at least $1-8d\tau_m e^{-\iota}$, $\|\Delta_{m+1}\| \le \mathscr{D}$.
\end{corollary}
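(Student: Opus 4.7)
The plan is to prove the two claims separately: the first is a deterministic statement about the regularized trajectory, and the second is a direct application of \Cref{lem:convergence:coupling} once the hypothesis of that lemma has been verified.

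For the deterministic bound $\|\Phi_{\tau_m}(\theta_m^* + \Delta_m) - \theta_m^*\| \le 8\mathscr{M}$, I would use the triangle inequality to split
\begin{align*}
\Phi_{\tau_m}(\theta_m^* + \Delta_m) - \theta_m^* = \bigl(\Phi_{\tau_m}(\theta_m^* + \Delta_m) - \Phi_{\tau_m}(\theta_m^*)\bigr) + \bigl(\Phi_{\tau_m}(\theta_m^*) - \theta_m^*\bigr).
\end{align*}
The second summand is controlled by the choice of $\tau_m$ in \Cref{prop:picktau}: telescoping the definition of $\Phi$ and the triangle inequality gives $\|\Phi_{\tau_m}(\theta_m^*) - \theta_m^*\| \le \eta \sum_{k<\tau_m}\|\nabla \tilde L(\Phi_k(\theta_m^*))\| \le 5\mathscr{M}$. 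The first summand is controlled by \Cref{prop:regcontract}, which writes it as $(I-\eta G)^{\tau_m}\Delta_m + r$ with $\|r\| = O(\eta \tau_m \mathscr{M}^2)$. Since \Cref{asm:eta} gives $\eta\|G\| \le 2-\nu$, the matrix $I - \eta G$ is non-expansive, so $\|(I-\eta G)^{\tau_m}\Delta_m\| \le \|\Delta_m\| \le \mathscr{D} \le \mathscr{M}$ (using $\nu \le 1$). For $c$ sufficiently large, $\tau_m \le \mathscr{T} = 1/(c^2 \eta \mathscr{X}\iota)$ forces $\|r\| = O(\mathscr{M}^2/(c^2\mathscr{X}\iota)) = o(\mathscr{M})$ since $\mathscr{M}$ and $\mathscr{X}$ are of comparable order up to polynomial and polylog factors absorbed by $c$. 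Summing: $5\mathscr{M} + \mathscr{M} + o(\mathscr{M}) \le 8\mathscr{M}$.

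For the probabilistic bound on $\|\Delta_{m+1}\|$, I would first upgrade the endpoint bound above to a uniform bound $\max_{k \in [T_m, T_{m+1})} \|\Phi_{k-T_m}(\theta_m^* + \Delta_m) - \theta_m^*\| \le 8\mathscr{M}$, which is the hypothesis of \Cref{lem:convergence:coupling}. This is immediate from the same triangle-inequality argument: by the minimality in the choice of $\tau_m$ in \Cref{prop:picktau}, the partial sums $\eta\sum_{j<k}\|\nabla\tilde L(\Phi_j(\theta_m^*))\|$ are bounded by $5\mathscr{M}$ for every $k \le \tau_m$, and the intermediate-time analog of \Cref{prop:regcontract} yields the same $O(\eta k \mathscr{M}^2) = o(\mathscr{M})$ remainder uniformly in $k \le \tau_m$. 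With the hypothesis verified, \Cref{lem:convergence:coupling} gives $\|\theta_{T_{m+1}} - \xi_{T_{m+1}} - \Phi_{\tau_m}(\theta_m^* + \Delta_m)\| \le \mathscr{D}$ with probability at least $1 - 10d\tau_m e^{-\iota}$, and since $\theta_{m+1}^* = \Phi_{\tau_m}(\theta_m^* + \Delta_m)$ by definition, this is exactly $\|\Delta_{m+1}\| \le \mathscr{D}$. The slightly tighter $1 - 8d\tau_m e^{-\iota}$ stated in the corollary reflects that two of the $2de^{-\iota}$ events used in proving \Cref{lem:convergence:coupling} (the $\|\xi\|$ bound and one of the martingale bounds) can be absorbed into the overall induction so that they are not double-paid here.

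The main obstacle is really bookkeeping rather than novelty: one must verify that the remainder term $r$ from \Cref{prop:regcontract} remains uniformly small over \emph{all} intermediate $k \le \tau_m$, and that the constants in $5\mathscr{M} + \mathscr{M} + o(\mathscr{M})$ leave enough headroom to land at exactly $8\mathscr{M}$ (the number the coupling lemma expects). Once the uniform bound is in place, the probabilistic conclusion is a direct plug-in to \Cref{lem:convergence:coupling}.
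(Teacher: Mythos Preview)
Your proposal is correct and takes essentially the same approach as the paper, which simply states that the corollary follows from the choice of $\tau_m$, \Cref{prop:regcontract}, and \Cref{lem:convergence:coupling} without giving further details. Your triangle-inequality decomposition and verification of the $8\mathscr{M}$ hypothesis uniformly in $k$ is exactly the intended bookkeeping; the $8$ versus $10$ discrepancy in the failure probability is cosmetic and your explanation for it is reasonable.
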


The proof of \Cref{lem:convergence:1step} follows directly from \Cref{eq:chaindescent}, \Cref{prop:chainloss}, and \Cref{cor:chainerrors}. The proofs of the above propositions can be found below:

\begin{proof}[Proof of \Cref{prop:picktau}]
	First, assume that
	\begin{align}
		\eta \sum_{k < \mathscr{T}} \|\nabla \tilde{L}(\Phi_k(\theta_m^*))\| \ge 4\mathscr{M}.
	\end{align}
	Then we can upper bound each element in this sum by
	\begin{align}
		\eta \|\nabla \tilde{L}(\Phi_k(\theta_m^*))\| \le \eta \|\nabla L(\Phi_k(\theta_m^*))\| + \eta \lambda \|\nabla R(\Phi_k(\theta_m^*))\|.
	\end{align}
	Note that
	\begin{align}
		\|\nabla L(\theta)\| &= \norm{\frac{1}{n} \sum_{i=1}^n (f_i(\theta)-y_i)\nabla f_i(\theta)} \\
		&\le \frac{1}{n}\left[\sum_{i=1}^n (f_i(\theta)-y_i)^2\right]^{1/2}\left[\sum_{i=1}^n \|\nabla f_i(\theta)\|^2\right]^{1/2} \\
		&\le \sqrt{2\ell_f L(\theta)}
	\end{align}
	and because $\nabla R$ is bounded,
	\begin{align}
		\eta \|\nabla \tilde{L}(\Phi_k(\theta_m^*))\| \le O(\eta L(\Phi_k(\theta_m^*)) + \eta \lambda).
	\end{align}
	Then by \Cref{lem:lossdecrease},
	\begin{align}
		\eta \|\nabla \tilde{L}(\Phi_k(\theta_m^*))\| \le O(\eta \sqrt{\mathscr{L}} + \eta \lambda) \le \mathscr{M}
	\end{align}
	for sufficiently large $c$.	Therefore there must exist $\tau_m$ such that
	\begin{align}
		5\mathscr{M} \ge \eta \sum_{k < \mathscr{T}} \|\nabla \tilde{L}(\Phi_k(\theta_m^*))\| \ge 4\mathscr{M}.
	\end{align}
	
	Otherwise,
	\begin{align}
	\eta \sum_{k < \mathscr{T}} \|\nabla \tilde{L}(\Phi_k(\theta_m^*))\| < 4\mathscr{M}.	
	\end{align}
	Therefore there must exist some $k$ such that
	\begin{align}
		\frac{1}{\lambda}\|\nabla \tilde{L}(\Phi_k(\theta_m^*))\| < \frac{4\mathscr{M}}{\eta\lambda \mathscr{T}} = O(\lambda^{\delta/2}\iota^{3/2}) \le \epsilon
	\end{align}
	by the choice of $\lambda$ in \Cref{thm:sgdsp}. In addition,
	\begin{align}
		\|\theta_{T_m} - \Phi_k(\theta_m^*)\| \le \|\theta_{T_m} - \theta_m^*\| + 4\mathscr{M} \le \mathscr{X} + \mathscr{D} + 4\mathscr{M} \le \gamma
	\end{align}
	again by the choice of $\lambda$. Therefore $\theta_{T_m}$ is an $(\epsilon,\gamma)$-stationary point.
\end{proof}

\begin{proof}[Proof of \Cref{prop:reglossdescent}]
	We have by the standard descent lemma
	\begin{align}
		\tilde L(\Phi_{\tau_m}(\theta_m^*)) &\le -\frac{\eta \nu}{2}\sum_{k < \tau_m} \|\nabla \tilde L(\Phi_{\tau_k}(\theta_m^*))\|^2 \\
		&\le -\frac{\eta \nu}{2 \tau_m} \left[\sum_{k < \tau_m} \|\nabla \tilde L(\Phi_{\tau_k}(\theta_m^*))\|\right]^2 \\
		&\le -\frac{\nu \mathscr{M}^2}{2 \eta \tau_m} \\
		&= -8\frac{\mathscr{D}^2}{\eta \nu \tau_m}.
	\end{align}
\end{proof}

\begin{proof}[Proof of \Cref{prop:regcontract}]
	Let $v_k = \Phi_k(\theta_m^* + \Delta_m) - \Phi_k(\theta_m^*)$, so that $v_0 = \Delta_m$ and let $r_k = v_k - (I - \eta G)^{\tau_m}$ so that $r_0 = 0$. Let $C$ be a sufficiently large absolute constant. We will prove by induction that $r_k \le C\eta \tau_m \mathscr{M}^2$. Note that
	\begin{align}
		\|\Phi_k(\theta_m^* + \Delta_m) - \theta_m^*\| &\le \|\Delta_m\| + \|\Phi_k(\theta_m^*) - \theta_m^*\| + \|v_k\| \\
		&\le O(\mathscr{M} + \eta \mathscr{T} \mathscr{M}^2) \\
		&= O(\mathscr{M})
	\end{align}
	because of the values chosen for $\mathscr{M}$, $\mathscr{T}$. Therefore Taylor expanding around $\theta_m^*$ gives:
	\begin{align}
		v_{k+1} &= v_k - \eta\left[\nabla \tilde L(\Phi_k(\theta_m^* + \Delta_m)) - \nabla \tilde L(\Phi_k(\theta_m^*))\right] \\
		&= v_k - \eta \nabla^2 \tilde L v_k + O(\eta \mathscr{M}^2) \\
		&= (I - \eta G)v_k + O(\eta \mathscr{M}^2 + \eta\lambda \mathscr{M} + \eta \sqrt{\mathscr{L}}\mathscr{M}) \\
		&= (I - \eta G)v_k + s_k
	\end{align}
	where $\|s_k\| =  O(\eta \mathscr{M}^2)$ by the definition of $\mathscr{M}$. Therefore
	\begin{align}
		v_k = (I - \eta G)^k \Delta_m + O(\eta k \mathscr{M}^2).
	\end{align}
	In addition,
	\begin{align}
		r_k = \sum_{j < k} (I - \eta G)^j s_{k-j}
	\end{align}
	so if $g_i = \nabla f_i(\theta_m^*)$,
	\begin{align}
		r_k^T G r_k &= \frac{1}{n} \sum_{i=1}^n (s_{k-j}^T \sum_{j < k} (I - \eta G)^j g_i)^2 \\
		&\le O(\eta^2 \mathscr{M}^4) \frac{1}{n} \sum_{i=1}^n \|\sum_{j < k} (I - \eta G)^j g_i\|^2 \\
		&= O(\eta \tau_m \mathscr{M}^4)
	\end{align}
	by \Cref{contract:sum1}, so we are done.
\end{proof}

We will need the following lemma before the next proof:
\begin{lemma}\label{lem:normdecrease} For any $k < \tau_m$,
	\begin{align}
	\|\nabla \tilde L(\Phi_k(\theta_m^*))\| \ge 11\|\nabla \tilde{L}(\Phi_{\tau_m}(\theta_m^*))\|/12.	
	\end{align}
\end{lemma}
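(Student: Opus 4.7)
My plan is to show that $\|\nabla \tilde L(\Phi_k(\theta_m^*))\|$ cannot grow by more than a factor of $12/11$ over the at most $\mathscr T$ steps between index $k$ and index $\tau_m$ of the regularized trajectory; since $\tau_m \le \mathscr T$ by \Cref{prop:picktau}, this immediately yields the lemma. Although $\tilde L$ is not globally convex, its Hessian is ``almost PSD'' along the trajectory because the trajectory stays in a low-loss region: \Cref{lem:lossdecrease} ensures $L(\Phi_j(\theta_m^*)) \le \mathscr L = c\lambda^{1+\delta}$ for every relevant $j$.

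The first step is to Taylor-expand between consecutive iterates to obtain
\[
    \nabla \tilde L(\Phi_{k+1}(\theta_m^*)) = (I - \eta M_k)\,\nabla \tilde L(\Phi_k(\theta_m^*)), \qquad M_k := \int_0^1 \nabla^2 \tilde L\bigl(\Phi_k(\theta_m^*) - t\eta\nabla\tilde L(\Phi_k(\theta_m^*))\bigr)\,dt.
\]
I would then lower bound the spectrum of $M_k$ by writing $\nabla^2 \tilde L = G + E + \lambda \nabla^2 R$ via \Cref{prop:hessiandecomp}: $G \succeq 0$, $\|E\| \le \sqrt{2\rho_f L(\theta)} = O(\sqrt{c\lambda^{1+\delta}})$ by \Cref{lem:lossdecrease}, and $\|\lambda \nabla^2 R\| = O(\lambda)$ by the smoothness of $R$ inherited from \Cref{asm:smooth} and the explicit formula in \Cref{eq:regdef}. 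Together these give $M_k \succeq -C\, I$ with $C = O(\sqrt{c\lambda^{1+\delta}})$ for $\lambda$ small, while \Cref{asm:eta} bounds the positive part of the spectrum of $\nabla^2 \tilde L$ away from $2/\eta$, so that $\|I - \eta M_k\| \le 1 + \eta C$. Consequently $\|\nabla \tilde L(\Phi_{k+1}(\theta_m^*))\| \le (1+\eta C)\,\|\nabla \tilde L(\Phi_k(\theta_m^*))\|$.

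Iterating this per-step multiplicative bound from $k$ to $\tau_m - 1$ and using $\tau_m - k \le \mathscr T$ gives $\|\nabla \tilde L(\Phi_{\tau_m}(\theta_m^*))\| \le \exp(\eta C \mathscr T)\,\|\nabla \tilde L(\Phi_k(\theta_m^*))\|$. Substituting $\eta \mathscr T = 1/(c^2 \mathscr X \iota)$ and $\mathscr X \propto \sqrt \lambda$ (up to $\mathrm{poly}(n, d, 1/\nu, \iota)$), one obtains $\eta C \mathscr T = \tilde O(\lambda^{\delta/2}/c^{3/2})$, which is at most $\log(12/11)$ for $c$ sufficiently large and $\lambda$ sufficiently small. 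This delivers the desired bound $\|\nabla \tilde L(\Phi_k(\theta_m^*))\| \ge (11/12)\|\nabla \tilde L(\Phi_{\tau_m}(\theta_m^*))\|$ for every $k < \tau_m$.

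The main obstacle is that $\eta \mathscr T$ itself grows like $1/\sqrt \lambda$ as $\lambda \to 0$, so the naive per-step growth $(1+\eta \ell)^{\mathscr T}$ would be useless. What saves the argument is that the non-PSD part of $\nabla^2 \tilde L$ has size only $O(\sqrt{c\lambda^{1+\delta}})$, which is strictly smaller than $1/(\eta \mathscr T) = \tilde\Theta(\sqrt \lambda)$ whenever $\delta > 0$, so the per-step growth constant shrinks faster than the number of steps grows. A secondary subtlety is justifying $\|\lambda \nabla^2 R\| = O(\lambda)$ along the trajectory, which requires that $(2 - \eta \nabla^2 L)^{-1}$ is bounded (guaranteed through $\nu$ by \Cref{asm:eta}) and that the higher derivatives of $f_i$ appearing in $\nabla^2 R$ are controlled uniformly on the trajectory, exploiting that the iterates remain in a compact region of size $O(\mathscr M)$ around $\theta_m^*$.
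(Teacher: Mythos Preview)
Your proposal is correct and follows essentially the same route as the paper's proof: both use \Cref{prop:hessiandecomp} and \Cref{lem:lossdecrease} to show that the non-PSD part of the Hessian along the trajectory has size $O(\sqrt{\mathscr{L}})$, deduce a per-step multiplicative growth factor $1+O(\eta\sqrt{\mathscr{L}})$ for $\|\nabla\tilde L(\Phi_k(\theta_m^*))\|$, and then iterate over at most $\mathscr{T}$ steps using $\eta\mathscr{T}\sqrt{\mathscr{L}}=\tilde O(\lambda^{\delta/2}/c^{3/2})\le\log(12/11)$ for $c$ sufficiently large. The only cosmetic difference is that you use the exact integral-remainder form $(I-\eta M_k)$, whereas the paper writes a first-order expansion with an explicit $O(\eta^2\|\nabla\tilde L\|^2)$ error term absorbed into the same growth factor.
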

\begin{proof}
	\begin{align}
		\nabla \tilde{L}(\Phi_{k+1}(\theta)) = (I - \eta \nabla^2 L(\Phi_k(\theta)))\nabla \tilde{L}(\Phi_{k}(\theta)) + O(\eta^2 \|\nabla \tilde{L}(\Phi_{k}(\theta))\|^2)
	\end{align}
	By \Cref{lem:lossdecrease} and \Cref{prop:hessiandecomp},
	\begin{align}
		\|I - \eta \nabla^2 L(\Phi_k(\theta))\| \le 1 + \eta \sqrt{2 \rho_f \mathscr{L}}.
	\end{align}
	In addition, 
	\begin{align}
		\|\nabla \tilde{L}(\Phi_k(\theta))\| = O(\lambda + \sqrt{\mathscr{L}})
	\end{align}
	so
	\begin{align}
		\|\nabla \tilde{L}(\Phi_{k+1}(\theta))\| \le (1 + O(\mathscr{L})) \|\nabla \tilde{L}(\Phi_{k}(\theta))\|.
	\end{align}
	Therefore,
	\begin{align}
		\|\nabla \tilde{L}(\Phi_{\tau_m}(\theta))\| &\le (1 + O(\mathscr{L}))^{\tau_m-k} \|\nabla \tilde{L}(\Phi_{k}(\theta))\| \\
		&\le \exp(O(\mathscr{T} \mathscr{L})) \|\nabla \tilde{L}(\Phi_{k}(\theta))\| \\
		&\le 12\|\nabla \tilde{L}(\Phi_{k}(\theta))\|/11
	\end{align}
	for sufficiently large $c$.
\end{proof}

\begin{proof}[Proof of \Cref{prop:reglossascent}]
	Let $v = \theta_{m+1} - \Phi_{\tau_m}(\theta_m^*) = (I - \eta G)^{\tau_m}\Delta_m + r$ where by \Cref{prop:regcontract}, $\|r\| = O(\eta \tau_m \mathscr{M}^2)$, $G = G(\theta_m^*)$, and $r^T G r = O(\eta \tau_m \mathscr{M}^4)$. Then,
	\begin{align}
		&\tilde L(\theta_{m+1}^*) - \tilde L(\Phi_{\tau_m}(\theta_m^*)) \\
		&\le \|v\|\|\|\tilde L(\Phi_{\tau_m}(\theta_m^*)\| + \frac{1}{2} v^T \nabla^2 \tilde L(\Phi_{\tau_m}(\theta_m^*)) v + O(\|v\|^3) \\
		&\le \|v\| \|\tilde L(\Phi_{\tau_m}(\theta_m^*)\| + \frac{1}{2} v^T G v + O(\mathscr{D}^2(\mathscr{D}+\sqrt{\mathscr{L}}+\lambda)) \\
		&\le \|v\| \|\tilde L(\Phi_{\tau_m}(\theta_m^*)\| + \Delta_m^T (I - \eta G)^{\tau_m} G (I - \eta G)^{\tau_m} \Delta_m + r^T G r \\
		&\qquad+ O(\mathscr{D}^2(\mathscr{D}+\sqrt{\mathscr{L}}+\lambda)).
	\end{align}
	By \Cref{prop:regcontract},
	\begin{align}
		\|v\| \le \mathscr{D} + O(\eta \tau_m \mathscr{M}^2) = \mathscr{D} + \mathscr{D}\cdot O\left(\frac{\lambda^{\delta/2}}{\sqrt{c}}\right) \le 11\mathscr{D}/10
	\end{align}
	for sufficiently large $c$. Therefore by \Cref{lem:normdecrease} and \Cref{prop:picktau},
	\begin{align}
		\|v\|\|\tilde L(\Phi_{\tau_m}(\theta_m^*)\| \le \mathscr{D}\frac{6}{5\tau_m} \sum_{k < \tau_m} \|\tilde L(\Phi_k(\theta_m^*)\| \le \frac{6\mathscr{D}^2}{\eta \nu \tau_m}.
	\end{align}
	By \Cref{contract:G},
	\begin{align}
		\Delta_m^T (I - \eta G)^{\tau_m} G (I - \eta G)^{\tau_m} \Delta_m \le \frac{\mathscr{D}^2}{2\eta \nu \tau_m}.
	\end{align}
	By \Cref{prop:regcontract},
	\begin{align}
		r^T G r = O(\eta \tau_m \mathscr{M}^4) &= \frac{\mathscr{D}^2}{\eta \nu \tau_m} O(\eta^2 \tau_m^2 \mathscr{D}^2) \\
		&= \frac{\mathscr{D}^2}{\eta \nu \tau_m} O\left(\frac{\lambda^{\delta}}{c}\right) \\
		&\le \frac{\mathscr{D}^2}{4\eta \nu \tau_m}
	\end{align}
	for sufficiently large $c$. Finally, the remainder term is bounded by
	\begin{align}
		\frac{\mathscr{D}^2}{\eta \nu \tau_m} \cdot O(\eta \tau_m \mathscr{D}) \le \frac{\mathscr{D}^2}{4\eta \nu \tau_m}
	\end{align}
	for sufficiently large $c$ for the same reason as above. Putting it all together,
	\begin{align}
		\tilde L(\theta_{m+1}^*) - \tilde L(\Phi_{\tau_m}(\theta_m^*)) \le \frac{6\mathscr{D}^2}{\eta \nu \tau_m} + \frac{\mathscr{D}^2}{2\eta \nu \tau_m} + \frac{\mathscr{D}^2}{4\eta \nu \tau_m} + \frac{\mathscr{D}^2}{4\eta \nu \tau_m} = \frac{7\mathscr{D}^2}{\eta \nu \tau_m}.
	\end{align}
\end{proof}

\begin{proof}[Proof of \Cref{prop:chainloss}]
	Assume otherwise for the sake of contradiction. Because $\nabla R$ is Lipschitz, $R(\theta_{m+1}^*)-R(\theta_m^*) = O(\mathscr{M})$. Therefore by \Cref{eq:chaindescent},
	\begin{align}
		L(\theta_{m+1}^*) \le L(\theta_m^*) - \frac{\mathscr{D}^2}{\eta \nu \tau_m} + O(\lambda \mathscr{M}).
	\end{align}
	Therefore we must have $\mathscr{D} = O(\eta \lambda \tau_m)$ so by \Cref{prop:picktau} and \Cref{lem:normdecrease} we have that $\|\nabla \tilde L (\Phi_{\tau_m}(\theta_m^*))\| = O(\lambda)$ and because $\lambda \nabla R = O(\lambda)$ we must have $\|\nabla L(\Phi_{\tau_m}(\theta_m^*))\| = O(\lambda)$. Therefore by \Cref{asm:kl},
	\begin{align}
		L(\Phi_{\tau_m}(\theta_m^*)) = O(\lambda^{1+\delta}).
	\end{align}
	Then by the same arguments as in \Cref{prop:reglossascent}, we can Taylor expand around $\Phi_{\tau_m}(\theta_m^*)$ to get
	\begin{align}
		&L(\theta_{m+1}^*) - L(\Phi_{\tau_m}(\theta_m^*)) \\
		&\le \|\nabla L(\Phi_{\tau_m}(\theta_m^*))\|v + \frac{1}{2} v^T \nabla^2 L(\Phi_{\tau_m}(\theta_m^*)) v + O(\mathscr{D}^3) \\
		&\le O\left(\lambda \mathscr{D} + \frac{\mathscr{D}^2}{\eta \tau} + \mathscr{D}^3\right) \\
		&\le O(\lambda^{1+\delta})
	\end{align}
	because $\delta \le 1/2$. Therefore $L(\theta_{m+1}^*) = O(\lambda^{1+\delta}) \le \mathscr{L}$ for sufficiently large $c$.
\end{proof}

\section{Reaching a global minimizer with NTK}
\label{sec:NTK}

It is well known that overparameterized neural networks in the kernel regime trained by gradient descent reach global minimizers of the training loss \cite{jacot2018neural,du2019gradient}. In this section we describe how to extend the proof in \cite{du2019gradient} to show that SGD with label noise (\Cref{alg:sgdln}) converges to a neighborhood of a global minimizer $\theta^*$ as required by \Cref{thm:sgdsp}. We will use the following lemma from \cite{du2019gradient}:
\begin{lemma}[\cite{du2019gradient}, Lemma B.4]
	There exists $R = \tilde O(\sqrt{m} \lambda_0)$ such that every $\theta \in B_R(\theta_0)$ satisfies $\lambda_{min}(\mathcal{G}(\theta)) \ge \lambda_0/2$ where $\mathcal{G}_{ij}(\theta) = \langle \nabla f_i(\theta),\nabla f_j(\theta) \rangle$ and $\lambda_0$ is the minimum eigenvalue of the infinite width NTK matrix.
\end{lemma}

Let $\xi_0 = 0$ and $\theta_0^* = \theta_0$. We will define $\xi_k,\theta_k^*$ iteratively as follows:

\begin{align}
	\xi_{k+1} = (I - \eta G(\theta_k^*))\xi_k + \epsilon_k \qqtext{and} \theta_{k+1}^* = \theta_k^* - \eta \nabla L(\theta_k^*) - \eta E(\theta_k^*)(\theta_k - \theta_k^*) - z_k.
\end{align}

Let $v_k = \theta_k - \theta_k^*$ and let $r_k = v_k - \xi_k$. We will prove by induction that for all $t \le T = \frac{4\log\left[L(\theta_0)\lambda_0/\lambda^2\right]}{\eta \lambda_0}$ we have $\|r_k\| \le \mathscr{D}$. The base case follows from $r_0 = 0$. For $k \ge 0$ we have
\begin{align*}
	v_{k+1}
	&= v_k - \eta[\nabla L(\theta_k) - \nabla L(\theta_k^*) - E(\theta_k^*)v_k] + \epsilon_k^* \\
	&= (I - \eta G)v_k + \epsilon_k^* + O(\eta \mathscr{X}^2)
\end{align*}
so
\begin{align*}
	r_{k+1} &= (I - \eta G)r_k + O(\eta \mathscr{X}^2) \\
	&= O(\eta T \mathscr{X}^2) \\
	&= O(\mathscr{D})
\end{align*}
which completes the induction. Therefore it suffices to show that the loss of $\theta_T^*$ is small. We have
\begin{align*}
	\E[L(\theta_{k+1}^*)|\theta_k^*] &\le L(\theta_k^*) - \nabla L(\theta_k^*)^T [\eta \nabla L + \eta E(\theta_k^*)v_k] \\
	&\quad+ O[\eta^2 \|\nabla L\|^2 + \eta^2 \|E(\theta_k^*)\|^2\|v_k\|^2 + \|\epsilon_k - \epsilon_k^*\|^2] \\
	&\le L(\theta_k^*) - \frac{\eta}{4} \|\nabla L(\theta_k^*)\|^2 + O\left(\eta \|E(\theta_k^*)\|^2\|v_k\|^2 + \|\epsilon_k - \epsilon_k^*\|^2\right)
\end{align*}
where the last line follows from Young's inequality. Therefore,
\begin{align*}
	L(\theta_{k+1}^*) &\le L(\theta_k^*) - \frac{\eta}{4} \|\nabla L(\theta_k^*)\|^2 + O\left(\eta L(\theta_k^*) \mathscr{X}^2 + \eta \lambda \mathscr{X}^2\right) \\
	&= L(\theta_k^*) - \frac{\eta}{4} \|\nabla L(\theta_k^*)\|^2 + \tilde O\left(\eta \lambda L(\theta_k^*) + \eta \lambda^2 \right) \\
	&= (1 + \tilde O(\eta \lambda))L(\theta_k^*) - \frac{\eta}{4} \|\nabla L(\theta_k^*)\|^2 + \tilde O\left(\eta \lambda^2 \right).
\end{align*}
Let $J$ be the Jacobian of $f$ and $e$ be the vector of residuals. Then $\nabla L = Je$. Now so long as $\|\theta_k^* - \theta_0\| \le R$,
\begin{align}	
	\|\nabla L(\theta_k^*)\|^2 = e(\theta_k^*)^T J(\theta_k^*)^T J(\theta_k^*) e(\theta_k^*) \ge \lambda_0 \|e(\theta_k^*)\|^2 = 2\lambda_0 L(\theta_k^*).
\end{align}
Therefore,
\begin{align*}
	L(\theta_{k+1}^*) &\le \left(1 - \frac{\eta\lambda_0}{2} + \tilde O(\eta \lambda)\right)L(\theta_k^*) + \tilde O\left(\eta \lambda^2 \right).
\end{align*}
Now for $\lambda = \tilde O(\lambda_0)$,
\begin{align*}
	L(\theta_{k+1}^*) &\le \left(1 - \frac{\eta\lambda_0}{4}\right)L(\theta_k^*) + \tilde O\left(\eta \lambda^2 \right)
\end{align*}
so
\begin{align*}
	L(\theta_{T}^*) &\le \left(1 - \frac{\eta\lambda_0}{4}\right)^T L(\theta_0) + \tilde O\left(\frac{\lambda^2}{\lambda_0}\right) \\
	&\le \tilde O\left(\frac{\lambda^2}{\lambda_0}\right) = O(\lambda^{1+\delta})
\end{align*}
for small $\lambda$ by the choice of $T$. It only remains to check that $\|\theta_k^* - \theta_0\| \le R$. Note that
\begin{align}
	\|\theta_k^* - \theta_0\| &\le \eta \sum_{j < k} \|\nabla L(\theta_j^*)\| + \tilde O(\eta T \sqrt{\lambda} +\sqrt{\eta T \lambda^2}) \\
	&\le \tilde O\left(\eta \sum_{j \le k} \sqrt{L(\theta_j^*)} + \sqrt{\lambda}\right) \\
	&\le \tilde O\left(\frac{L(\theta_0)}{\lambda_0}\right)
\end{align}
so for $m \ge \tilde \Omega(1/\lambda_0^4)$ we are done.

Note that a direct application of \Cref{thm:sgdsp} requires starting $\xi$ at $0$. However, this does not affect the proof in any way and the $\xi$ from this proof can simply be continued as in \Cref{lem:convergence:coupling}.

Finally, note that although $\|\frac{1}{\lambda} \nabla \tilde L(\theta)\| = O(1/\sqrt{m})$ at any global minimizer, \Cref{thm:sgdsp} guarantees that for any $\lambda > 0$ we can find a point $\theta$ where $\|\frac{1}{\lambda} \nabla \tilde L(\theta)\| \lesssim \lambda^{\delta/2} \ll 1/\sqrt{m}$, as $m$ only needs to be larger than a fixed constant depending on the condition number of the infinite width NTK kernel.

\section{Additional Experimental Details}\label{sec:appendix:experiments}

The model used in our experiments is ResNet18 with GroupNorm instead of BatchNorm to maintain independence of sample gradients when computed in a batch. We used a fixed group size of 32.

For the full batch initialization, we trained ResNet18 on the CIFAR10 training set (50k images, 5k per class) \citep{Krizhevsky09learningmultiple}, with cross entropy loss. CIFAR10 images are provided under an MIT license. We trained using SGD with momentum with $\eta = 1$ and $\beta = 0.9$ for $2000$ epochs. We used learning rate warmup starting at $0$ which linearly increased until $\eta = 1$ at epoch $600$ and then it decayed using a cosine learning rate schedule to $0$ between epochs $600$ and $2000$. We also used a label smoothing value of $0.2$ (non-randomized) so that the expected objective function is the same for when we switch to SGD with label flipping (see \Cref{sec:classification}). The final test accuracy was $76\%$.

For the adversarial initialization, we first created an augmented adversarial dataset as follows. We duplicate every image in CIFAR10 $10\times$, for a total of 500k images. In each image, we randomly zero out $10\%$ of the pixels in the image and we assign each of the 500k images a random label. We trained ResNet18 to interpolate this dataset without label smoothing with the following hyperparameters: $\eta = 0.01$, $300$ epochs, batch size $256$. Starting from this initialization we ran SGD on the true dataset with $\eta = 0.01$ and a label smoothing value of $0.2$ with batch size $256$ for $1000$ epochs. The final test accuracy was $48\%$.

For the remaining experiments starting at these two initializations we ran both with and without momentum (see \Cref{fig:experiments_m} for the results with momentum) for $1000$ epochs per run. We used a fixed batch size of $256$ and varied the maximum learning rate $\eta$. We used learning rate warmup by linearly increasing the learning rate from $0$ to the max learning rate over $300$ epochs, and we kept the learning rate constant from epochs $300$ to $1000$. The regularizer was estimated by computing the strength of the noise in each step and then averaging over an epoch. More specifically, we compute the average of $\|\nabla \hat L^{(k)}(\theta_k) - \nabla L^{(k)}(\theta_k)\|^2$ over an epoch and then renormalize by the batch size.

The experiments were run on NVIDIA P100 GPUs through Princeton Research Computing. Code was written in Python using PyTorch \citep{pytorch2019} and PyTorch Lightning \citep{pytorchlightning2019}, and experiments were logged using Wandb \citep{wandb}. Code can be found at \url{https://github.com/adamian98/LabelNoiseFlatMinimizers}.

%\begin{figure}[t]
%	\centering
%	\begin{subfigure}[c]{0.78\textwidth}
%	\includegraphics[width=\linewidth]{Figures/experiments/fullbatch_m.pdf}\\
%	\includegraphics[width=\linewidth]{Figures/experiments/adversarial_m.pdf}
%	\end{subfigure}
%	\begin{subfigure}[c]{0.2\textwidth}
%	\includegraphics[width=\linewidth]{Figures/experiments/legend.pdf}
%	\end{subfigure}
%	\caption{\textbf{Label Noise SGD with Momentum ($\beta = 0.9$)} The left column displays the training accuracy over time, the middle column displays the value of $\tr \nabla^2 L(\theta)$ over time which we use to approximate the implicit regularizer $R(\theta)$, and the right column displays their correlation. The horizontal dashed line represents the minibatch SGD baseline with random initialization. We report the median results over $3$ random seeds and shaded error bars denote the min/max over the three runs. The correlation plot uses a running average of $100$ epochs for visual clarity.}
%	\label{fig:experiments_m}
%\end{figure}

\section{Extension to Classification}\label{sec:classification}

\subsection{Proof of \Cref{thm:sgdlssp}}

The proof of \Cref{thm:sgdlssp} is virtually identical to that of \Cref{thm:sgdsp}. First we make a few simplifications without loss of generality:

First note that if we scale $l$ by $\frac{1}{\alpha}$ and $\eta$ by $\alpha$ then the update in \Cref{alg:sgdls} i remain constant. In addition, $\frac{1}{\lambda} \tilde L = \frac{1}{\lambda} L + R$ remains constant. Therefore it suffices to prove \Cref{thm:sgdlssp} in the special case when $\alpha = 1$.

Next note that without loss of generality we can replace each $f_i$ with $y_i f_i$ and set all of the true labels $y_i$ to $1$. Therefore from now on we will simply speak of $f_i$.

Let $\{\tau_m\}$ be a sequence of coupling times and $\{\theta_m^*\}$ a sequence of reference points. Let $T_m = \sum_{j < m} \tau_m$. Then for $k \in [T_m,T_{m+1})$, if $L^{(k)}$ denotes true value of the loss on batch $\mathcal{B}^{(k)}$, we can decompose the loss as
\begin{align}
	\theta_{k+1}
	&= \theta_k - \underbrace{\eta \nabla L(\theta_k)}_\text{gradient descent} - \underbrace{\eta[\nabla L^{(k)}(\theta_k) - \nabla L(\theta_k)]}_\text{minibatch noise} + \underbrace{\frac{\eta}{B} \sum_{i \in \mathcal{B}^{(k)}} \epsilon_i^{(k)} \nabla f_i(\theta_k)}_\text{label noise}
\end{align}
where
\begin{align}
\epsilon_i^{(k)} = \begin{cases}
 	-p[l'(f_i(\theta_k)) + l'(-f_i(\theta_k))] & \sigma_i^{(k)} = 1 \\
 	(1-p)[l'(f_i(\theta_k)) + l'(-f_i(\theta_k))] & \sigma_i^{(k)} = -1.
 \end{cases}
\end{align}
We define
\begin{align}
	\epsilon_k = \frac{\eta}{B} \sum_{i \in \mathcal{B}^{(k)}} \epsilon_i^{(k)} \nabla f_i(\theta_k) \qqtext{and} m_k = \nabla L^{(k)}(\theta_k) - \nabla L(\theta_k).
\end{align}
We decompose $\epsilon_k = \epsilon_k^* + z_k$ where
\begin{align}
	\epsilon_k^* = \frac{\eta}{B} \sum_{i \in \mathcal{B}^{(k)}} \epsilon_i^{(k)*} \nabla f_i(\theta_m^*) \qqtext{where} \epsilon_i^{(k)*} = \begin{cases}
 	-p[l'(f_i(c)) + l'(-f_i(c))] & \sigma_i^{(k)} = 1 \\
 	(1-p)[l'(f_i(c)) + l'(-f_i(c))] & \sigma_i^{(k)} = -1
 \end{cases}
\end{align}
and $z_k = \epsilon_k - \epsilon_k^*$. Note that $\epsilon_k^*$ has covariance $\eta \lambda G(\theta_m^*)$. We define $\xi_0 = 0$ and for $k \in [T_m,T_{m+1})$,
\begin{align}
	\xi_{k+1} = (I - \eta G(\theta_m^*))\xi_k + \epsilon_k^*.
\end{align}

Then we have the following version of \Cref{prop:OUnormglobal}:
\begin{proposition}\label{prop:OUnormglobal:ls}
	Let $\mathscr{X} = \sqrt{\max\left(\frac{p}{1-p},\frac{1-p}{p}\right) \cdot \frac{2 \lambda d \iota}{\nu}}$. Then for any $t \ge 0$, with probability $1-2de^{-\iota}$, $\|\xi_t\| \le \mathscr{X}$.
\end{proposition}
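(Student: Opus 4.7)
The plan is to mirror the argument given for Proposition \ref{prop:OUnormglobal} in the regression setting, making one targeted change to accommodate the label-smoothing noise distribution. For each $k$ I would construct an auxiliary martingale $\{X_j^{(k)}\}_{j \le k}$ adapted to the filtration $\mathcal{F}_j = \sigma\{\mathcal{B}^{(l)}, \sigma^{(l)} : l < j\}$ by setting $X_0^{(k)} = 0$ and then $X_{j+1}^{(k)} = (I - \eta G_{k-1}) X_j^{(k-1)}$ for $j < k-1$, and $X_k^{(k)} = X_{k-1}^{(k)} + \epsilon_{k-1}^*$, with the convention $G_l = G(\theta_m^*)$ for $l \in [T_m, T_{m+1})$. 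The martingale property of $X^{(k)}$, the identification $\xi_k = X_k^{(k)}$, and the closed-form expansion are inherited from the regression proof once one verifies $\E[\epsilon_k^* \mid \mathcal{F}_k] = 0$; the latter follows directly because for each $i$, $\E[\epsilon_i^{(k)*}] = -p(1-p)[l'(c)+l'(-c)] + (1-p)p[l'(c)+l'(-c)] = 0$ under the label-smoothing distribution, and the batch is independent of the past.

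The only nontrivial change is the almost-sure bound on the rank-one increment $\epsilon_k^*(\epsilon_k^*)^T$. In the regression proof this was controlled by $|\epsilon_i^{(k)}| \le \sigma$, which gave $\epsilon_k^*(\epsilon_k^*)^T \preceq n \eta \lambda G(\theta_m^*)$ via Cauchy-Schwarz. For label smoothing, $\epsilon_i^{(k)*}$ takes the two values $-p\alpha$ and $(1-p)\alpha$ with $\alpha = l'(c) + l'(-c)$, so its variance is $\sigma^2 = p(1-p)\alpha^2$ while its deterministic magnitude satisfies $|\epsilon_i^{(k)*}|^2 \le \max(p,1-p)^2\alpha^2 = \max\!\left(\tfrac{p}{1-p},\tfrac{1-p}{p}\right)\sigma^2$. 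Writing $C(p) := \max(p/(1-p),(1-p)/p)$ and repeating the Cauchy-Schwarz step in the regression proof yields
\[
\epsilon_k^*(\epsilon_k^*)^T \;\preceq\; C(p)\cdot n\,\eta\lambda\, G(\theta_m^*).
\]

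From here the induction $[X^{(k)}, X^{(k)}]_k \preceq \frac{C(p)\, n\lambda}{\nu} I$ proceeds line-for-line as in the proof of Proposition \ref{prop:OUnormglobal}, using the inequality $(I-\eta G_k)^2 + \eta\nu G_k \preceq I$ which follows from Assumption \ref{asm:eta} and the observation that $G_k \preceq \ell I$. An application of Corollary \ref{cor:azumacov} with $\tr(M) = C(p)\, n\lambda d/\nu$ then yields $\|\xi_t\| = \|X_t^{(t)}\| \le \sqrt{2 C(p)\, n\lambda d\iota / \nu}$ with probability at least $1 - 2de^{-\iota}$, matching the stated $\mathscr{X}$ (up to the polynomial-in-$n$ factor that the paper's $\tilde O$ convention absorbs).

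The main obstacle is purely bookkeeping, namely verifying the pointwise-to-variance ratio for the asymmetric two-point label-smoothing distribution, and checking that the analogues of $\E[\epsilon_k^*]=0$ and $\E[\epsilon_k^*(\epsilon_k^*)^T] = \eta\lambda G(\theta_m^*)$ hold in this setting. Both of these are elementary computations on a $\{-p\alpha,(1-p)\alpha\}$-valued variable. Beyond that the argument is entirely analogous to Proposition \ref{prop:OUnormglobal}: the noise distribution only entered that proof through the scalar bound on each $|\epsilon_i^{(k)}|$, the first two moments of $\epsilon_k^*$, and the conditional independence across $k$, all of which transfer unchanged to the label-smoothing setting.
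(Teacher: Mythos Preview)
Your proposal is correct and follows essentially the same approach as the paper: define the martingale family $X^{(k)}_j$ exactly as in Proposition~\ref{prop:OUnormglobal}, upgrade the almost-sure bound on $\epsilon_k^*(\epsilon_k^*)^T$ by the factor $P = \max(p/(1-p),(1-p)/p)$, run the same induction to get $[X^{(k)},X^{(k)}]_k \preceq \frac{n\lambda P}{\nu}I$, and apply Corollary~\ref{cor:azumacov}. The paper's proof is terser (it simply asserts the induction bound without spelling out the two-point computation you give), but the argument is identical; your observation about the stray factor of $n$ between the displayed $\mathscr{X}$ and what the covariation bound actually yields is also accurate and matches the $n$ that appears in the regression-case $\mathscr{X}$.
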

\begin{proof}
	Let $P = \max\left(\frac{p}{1-p},\frac{1-p}{p}\right)$. Define the martingale sequence $X^{(k)}_j$ as in \Cref{prop:OUnormglobal}. I claim that $[X^{(k)},X^{(k)}]_k \preceq \frac{n \lambda P}{\nu}I$. We will prove this by induction on $k$. The base case is trivial as $X^{(0)}_0 = 0$. Then,
	\begin{align}
		[X^{(k+1)},X^{(k+1)}]_{k+1} &= [X^{(k+1)},X^{(k+1)}]_k + \epsilon_k^*(\epsilon_k^*)^T \\
		&= (I - \eta G_k)[X^{(k)},X^{(k)}]_{k}(I - \eta G_k) + \epsilon_k^*(\epsilon_k^*)^T \\
		&\preceq \frac{n \lambda P}{\nu} \left[(I - \eta G_k)^2 + \eta \nu G_k\right] \\
		&\preceq \frac{n \lambda P}{\nu} \left[I - G_k(2 - \eta G_k - \nu I)\right] \\
		&\preceq \frac{n \lambda P}{\nu} I.
	\end{align}
	Therefore by \Cref{cor:azumacov} we are done.
\end{proof}

Define $\iota, \mathscr{D},\mathscr{M},\mathscr{T}, \mathscr{L}$ as in \Cref{lem:sketch:coupling}. Then we have the following local coupling lemma:

\begin{lemma}\label{lem:coupling:ls}
	Assume $f$ satisfies \Cref{asm:smooth}, $\eta$ satisfies \Cref{asm:eta}, and $l$ satisfies \Cref{asm:quadraticapprox}. Let $\Delta_m = \theta_{T_m} - \xi_{T_m} - \theta_m^*$ and assume that $\|\Delta_m\| \le \mathscr{D}$ and $L(\theta_m^*) \le \mathscr{L}$ for some $0 < \delta \le 1/2$. Then for any $\tau_m \le \mathscr{T}$ satisfying $\max_{k \in [T_m,T_{m+1})} \|\Phi_{k-T_m}(\theta_m^* + \Delta_m) - \theta_m^*\| \le 8\mathscr{M}$, with probability at least $1-10 d \tau_m e^{-\iota}$ we have simultaneously for all $k \in (T_m,T_{m+1}]$,
	\begin{align}
		\norm{\theta_k - \xi_k - \Phi_{k-T_m}(\theta_m^* + \Delta_m)} \le \mathscr{D}, \qquad \E[\xi_k]=0, \qqtext{and} \|\xi_k\| \le \mathscr{X}.
	\end{align}
\end{lemma}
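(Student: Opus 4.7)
The plan is to execute the same coupling program as in the proof of Lemma~\ref{lem:convergence:coupling}, with the square-loss-specific ingredients replaced by their classification analogues justified by Assumption~\ref{asm:quadraticapprox}. After the two reductions noted in the text (scaling $l$ by $1/\alpha$ and $\eta$ by $\alpha$ to normalize $\alpha=1$, and replacing each $f_i$ by $y_i f_i$ so all labels equal $1$), the key structural fact is an analogue of Proposition~\ref{prop:hessiandecomp}: since $\nabla^2 \ell_i(\theta) = \bar{l}''(f_i(\theta))\nabla f_i(\theta)\nabla f_i(\theta)^T + \bar{l}'(f_i(\theta))\nabla^2 f_i(\theta)$ and $\bar{l}'(c)=0$, we can write $\nabla^2 L(\theta) = G(\theta) + E(\theta)$ with $\|E(\theta)\| = O(\sqrt{L(\theta)})$. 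Here the bound uses $|\bar{l}'(x)| \lesssim \sqrt{\bar{l}(x)-\bar{l}(c)}$ on the sublevel set $\{\bar{l}\le \epsilon_Q\}$, which in turn follows from the quadratic approximation $(x-c)^2 \le \nu(\bar{l}(x)-\bar{l}(c))$ together with the Lipschitzness of $\bar{l}'$ near $c$. In particular, at $\theta^*$ with $L(\theta^*)=0$ we have $f_i(\theta^*)=c$ for all $i$, $\nabla L(\theta^*)=0$, and $\nabla^2 L(\theta^*) = G(\theta^*)$.

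Next, decompose the \Cref{alg:sgdls} update exactly as in \Cref{eq:labelnoisedecomposition}, with label noise $\epsilon_k = \epsilon_k^* + z_k$ where $\epsilon_k^*$ is the frozen version in which both the per-sample perturbation $\epsilon_i^{(k)*}$ (using $l'(\pm c)$ in place of $l'(\pm f_i(\theta_k))$) and the gradient $\nabla f_i(\theta_m^*)$ are evaluated at the reference point; by construction $\mathrm{Cov}(\epsilon_k^*) = \eta\lambda G(\theta_m^*)$, so the Ornstein--Uhlenbeck process of Definition~\ref{def:globalOU} has the same covariance structure as in the regression setting, and \Cref{prop:OUnormglobal:ls} already provides $\|\xi_k\|\le \mathscr{X}$. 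Defining $r_k = \theta_k - \xi_k - \Phi_{k-T_m}(\theta_m^*+\Delta_m)$ and proceeding by induction on $k\in(T_m,T_{m+1}]$, a second-order Taylor expansion of both the SGD update and of $\Phi$ around $\theta_m^*$ yields the analogue of \Cref{prop:sketch:taylor2},
\begin{align*}
r_{k+1} = (I-\eta G)r_k - \eta\Bigl[\tfrac{1}{2}\nabla^3 L(\xi_k,\xi_k) - \lambda \nabla R\Bigr] + m_k + z_k + \tilde O(c^{5/2}\eta\lambda^{1+\delta/2}),
\end{align*}
where the normalization $\alpha=1$ makes $R$ in \Cref{eq:ls:regdef} exactly the function whose gradient satisfies $\nabla R = \frac{1}{2}\nabla^3 L((2-\eta\nabla^2 L)^{-1})$. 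The remainder of the argument mirrors \Cref{prop:sketch:OUcov}: unroll $r_{t+1}$ as a sum over $k\le t$, split $\nabla^3 L(\xi_k,\xi_k) = \nabla^3 L(\xi_k\xi_k^T - S^*) + \nabla^3 L(S^*)$, and use the fixed-point identity $\bar{S} = (I-\eta G)\bar{S}(I-\eta G)+\eta\lambda G$ together with the switching-of-summations trick to reduce everything to martingale differences controlled by \Cref{lem:azuma} and the geometric-sum contractions (\Cref{contract:sum1}, \Cref{contract:sumsqapart}).

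The main obstacle is that, unlike the square-loss case where $\epsilon_i^{(k)} \in \{\pm\sigma\}$ is literally independent of $\theta$, the label-smoothing perturbation $\epsilon_i^{(k)}$ depends on $\theta_k$ through $l'(\pm f_i(\theta_k))$. We must therefore verify that $z_k = \epsilon_k - \epsilon_k^*$ is a mean-zero martingale difference of size $O(\sigma\,(\rho_l + \rho_f)\,\|\theta_k-\theta_m^*\|)$ per coordinate, so that its cumulative contribution $\sum_{j<k}(I-\eta G)^j z_{k-j}$ is dominated by $\tilde O(\sqrt{\eta\lambda k\iota}\,\mathscr{X})$ as in the proof of \Cref{prop:sketch:OUcov}. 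This in turn requires that all iterates remain in the sublevel set $\{\bar{l}\le \epsilon_Q\}$ on which \Cref{asm:quadraticapprox} supplies the needed Lipschitz bounds on $\bar{l}', \bar{l}''$; this is guaranteed by the hypotheses $L(\theta_m^*)\le \mathscr{L}$ and $\|\Delta_m\|\le \mathscr{D}$, the a priori bound $\max_{k}\|\Phi_{k-T_m}(\theta_m^*+\Delta_m)-\theta_m^*\|\le 8\mathscr{M}$, and the OU bound $\|\xi_k\|\le \mathscr{X}$, provided $\lambda$ is chosen small enough (which is ensured by the choice of $\lambda$ in \Cref{thm:sgdlssp}). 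Once these regularity bounds are in hand, the third-order concentration $\tfrac{1}{2}\E[\nabla^3 L(\xi_k,\xi_k)] \to \lambda\nabla R$ goes through verbatim, closing the induction with $\|r_{t+1}\| = \tilde O(\lambda^{1/2+\delta/2}/\sqrt{c}) \le \mathscr{D}$ for sufficiently large $c$.
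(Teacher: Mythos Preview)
Your proposal is correct and follows essentially the same approach as the paper. The paper's proof is even more terse: it records the two reductions (normalize $\alpha=1$, absorb $y_i$ into $f_i$), states the classification analogue of the Hessian and third-derivative decompositions (\Cref{prop:hessiandecomp:ls}), and then simply observes that these are exactly the structural ingredients consumed by the proofs of \Cref{prop:sketch:taylor2} and \Cref{prop:sketch:OUcov}, so the argument goes through verbatim. You supply more detail than the paper does---in particular you flag and handle the $\theta$-dependence of $\epsilon_i^{(k)}$ via the $z_k$ term and the quadratic-approximation sublevel-set control, which the paper leaves implicit---but the underlying route is identical.
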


The proof of \Cref{lem:coupling:ls} follows directly from the following decompositions:
\begin{proposition}\label{prop:hessiandecomp:ls}
Let $\nabla^2 L = \nabla^2 L(\theta_m^*)$, $\nabla^3 L = \nabla^3 L(\theta_m^*)$, $G = G(\theta_m^*)$, $f_i = f_i(\theta_m^*)$, $g_i = \nabla f_i(\theta_m^*)$, $H_i = \nabla^2 f_i(\theta_m^*)$. Then,
\begin{align}
	\nabla^2 L = G + O(\sqrt{\mathscr{L}}) \qqtext{and} \frac{1}{2} \nabla^3 L(S) = \frac{1}{n} \sum_i H_i vv^T g_i + g_i O(\|v\|^2) + O(\|v\|^2\sqrt{\mathscr{L}}).
\end{align}
\end{proposition}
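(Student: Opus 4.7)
}
The plan is to differentiate $\ell_i(\theta)=\bar l(f_i(\theta))-\bar l(c)$ twice and three times via the chain rule, evaluate at $\theta_m^\ast$, and then replace each derivative of $\bar l$ at $f_i(\theta_m^\ast)$ by its value at the minimizer $c$. The replacement cost will be paid for by Assumption~\ref{asm:quadraticapprox}. After the WLOG rescaling made in the preceding paragraph, $\alpha=\bar l''(c)=1$, and $\bar l'(c)=0$ because $c$ minimizes $\bar l$.

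The key preliminary estimate will be pointwise control of $f_i(\theta_m^\ast)-c$. Since $\ell_i(\theta_m^\ast)\le nL(\theta_m^\ast)\le n\mathscr{L}\le\epsilon_Q$ for $\mathscr{L}$ small enough, Assumption~\ref{asm:quadraticapprox} gives $|f_i-c|^2\le\nu\,\ell_i=O(\mathscr{L})$, and the Lipschitz bounds on $\bar l'$ and $\bar l''$ then yield, uniformly in $i$,
\begin{align*}
\bar l'(f_i)=O(\sqrt{\mathscr{L}}),\qquad \bar l''(f_i)=1+O(\sqrt{\mathscr{L}}),\qquad \bar l'''(f_i)=O(1).
\end{align*}

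For the Hessian, the chain rule gives $\nabla^2\ell_i(\theta_m^\ast)=\bar l''(f_i)g_ig_i^T+\bar l'(f_i)H_i$; substituting the above estimates and averaging over $i$ immediately yields $\nabla^2L=\tfrac1n\sum_i g_ig_i^T+O(\sqrt{\mathscr{L}})=G+O(\sqrt{\mathscr{L}})$. For the third derivative, one more application of the chain rule, collapsing the two symmetric contributions coming from $\bar l''$, produces at $\theta_m^\ast$
\begin{align*}
\tfrac12\nabla^3\ell_i(v,v)=\tfrac12\bar l'''(f_i)(g_i^Tv)^2g_i+\bar l''(f_i)H_ivv^Tg_i+\tfrac12\bar l''(f_i)(v^TH_iv)g_i+\tfrac12\bar l'(f_i)\nabla^3f_i(v,v),
\end{align*}
using $(g_i^Tv)H_iv=H_ivv^Tg_i$. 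Plugging in the above estimates: the second term contributes $H_ivv^Tg_i$ up to an $O(\sqrt{\mathscr{L}}\|v\|^2)$ error; the first and third terms are of the form $g_i\cdot O(\|v\|^2)$ by boundedness of $\bar l'''(c)$ and by $|v^TH_iv|\le\rho_f\|v\|^2$; and the fourth term is $O(\sqrt{\mathscr{L}}\|v\|^2)$ from the bound on $\bar l'(f_i)$ together with Assumption~\ref{asm:smooth}. Averaging over $i$ then collects these into the claimed decomposition.

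The only real subtlety is bookkeeping: ensuring $\mathscr{L}$ is small enough that $\ell_i\le\epsilon_Q$ for every $i$ so Assumption~\ref{asm:quadraticapprox} applies, and verifying that every hidden constant is uniform in $i$ and $v$. I do not expect a genuine technical obstacle beyond this, since both identities are direct chain-rule expansions and Assumption~\ref{asm:quadraticapprox} is precisely designed to reduce the smoothed-loss case to the square-loss situation already handled in Proposition~\ref{prop:hessiandecomp}.
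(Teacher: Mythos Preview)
Your proposal is correct and follows essentially the same approach as the paper: a chain-rule expansion of $\nabla^2\ell_i$ and $\nabla^3\ell_i$, combined with Assumption~\ref{asm:quadraticapprox} to replace $\bar l'(f_i),\bar l''(f_i)$ by their values at $c$ up to $O(\sqrt{\mathscr{L}})$. The only cosmetic difference is that you control each $|f_i-c|$ pointwise via $\ell_i\le nL\le n\mathscr{L}$, whereas the paper bounds the aggregate error $\tfrac1n\sum_i[\bar l''(f_i)-1]g_ig_i^T$ by Cauchy--Schwarz against $\sqrt{\tfrac1n\sum_i(f_i-c)^2}=O(\sqrt{\mathscr{L}})$; both routes land on the same $O(\sqrt{\mathscr{L}})$ since the $O(\cdot)$ notation hides polynomial dependence on $n$.
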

\begin{proof}
	First, note that
	\begin{align}
		\nabla^2 L &= \frac{1}{n} \sum_i l''(f_i) g_ig_i^T + l'(f_i)H_i \\
		&= G + \ell \sqrt{\frac{1}{n}\sum_i [l''(f_i)-l''(c)]^2} + \rho_f \sqrt{\frac{1}{n}\sum_i [l'(f_i)]^2} \\
		&= G + O(\sqrt{\mathscr{L}})
	\end{align}
	by \Cref{asm:quadraticapprox}.
	Next,
	\begin{align}
		\frac{1}{2}\nabla^3 L(v,v) &= \frac{1}{2n} \sum_i 2l''(f_i) H_i vv^T g_i + g_i [l'''(f_i) (g_i^T v)^2 + l''(f_i)v^T H_i v] + O(l'(f_i)) \\
		&= \frac{1}{n} \sum_i H_i vv^T g_i + g_i O(\|v\|^2) + O(\|v\|^2\sqrt{\mathscr{L}}).
	\end{align}
\end{proof}

These are the exact same decompositions used \Cref{prop:sketch:taylor2} and \Cref{prop:sketch:OUcov}, so \Cref{lem:coupling:ls} immediately follows. In addition, as we never used the exact value of the constant in $\mathscr{X}$ in the proof of \Cref{thm:sgdsp}, the analysis there applies directly as well showing that we converge to an $(\epsilon,\gamma)$-stationary point and proving \Cref{thm:sgdlssp}.

\subsection{Verifying \Cref{asm:quadraticapprox}}\label{sec:quadraticapprox}

We verify \Cref{asm:quadraticapprox} for the logistic loss, the exponential loss, and the square loss and derive the corresponding values of $c,\sigma^2$ found in \Cref{table:differentlosses}.

\subsubsection{Logistic Loss}

For logistic loss, we let $l(x) = \log(1+e^{-x})$, and $\bar l(x) = pl(-x) + (1-p)l(x)$. Then

\begin{align}
	\bar l'(x) = \frac{pe^x - (1-p)}{1+e^x}
\end{align}
which is negative when $x < \log \frac{1-p}{p}$ and positive when $x > \log \frac{1-p}{p}$ so it is minimized at $c = \log \frac{1-p}{p}$. To show the quadratic approximation holds at $c$, it suffices to show that $\bar l'''(x)$ is bounded. We have $\bar l''(x) = \frac{e^x}{(1+e^x)^2}$ and
\begin{align}
	\bar l'''(x) = \frac{e^x(1-e^x)}{(1+e^x)^3} < \frac{1}{4}
\end{align}
so we are done. Finally, to calculate the strength of the noise at $c$ we have
\begin{align}
	\sigma^2 = p(1-p)(l'(c) + l'(-c))^2 = p(1-p)(-p + p-1)^2 = p(1-p).
\end{align}

\subsubsection{Exponential Loss}

We have $l(x) = e^{-x}$ and $\bar l(x) = pl(-x) + (1-p)l(x)$. Then,
\begin{align}
	\bar l'(x) = pe^x - (1-p)e^{-x}
\end{align}
which is negative when $x < \frac{1}{2}\log \frac{1-p}{p}$ and positive when $x > \frac{1}{2}\log \frac{1-p}{p}$ so it is minimized at $c = \frac{1}{2}\log \frac{1-p}{p}$. Then we can compute
\begin{align}
\bar l(c+x) = 2\sqrt{p(1-p)}\cosh(x) \ge 2\sqrt{p(1-p)} + \sqrt{p(1-p)}x^2 = L^* + \sqrt{p(1-p)}x^2
\end{align}
because $\cosh x \ge 1 + \frac{x^2}{2}$. Finally to compute the strength of the noise we have
\begin{align}
	\sigma^2 = p(1-p)(l'(c) + l'(-c))^2 = p(1-p)\left(-\sqrt{\frac{p}{1-p}} - \sqrt{\frac{1-p}{p}}\right)^2 = 1.
\end{align}

\subsubsection{Square Loss}

We have $l(x) = \frac{1}{2}(1-x)^2$ and $\bar l(x) = pl(-x) + (1-p)l(x)$. Then,
\begin{align}
	\bar l(x) = \frac{1}{2} [p(1+x)^2 + (1-p)(1-x)^2] = \frac{1}{2}[x^2 + x(4p-2) + 1]
\end{align}
which is a quadratic minimized at $c=1-2p$. The quadratic approximation trivially holds and the strength of the noise is:
\begin{align}
	\sigma^2 = p(1-p)(l'(c) + l'(-c))^2 = p(1-p)(-2p -2(1-p))^2 = 4p(1-p)
\end{align}

\section{Arbitrary Noise}

\subsection{Proof of \Cref{prop:arbitrarynoise}}

We follow the proof of \Cref{lem:convergence:coupling}. First, let $\epsilon_k = \sqrt{\eta \lambda} \Sigma^{1/2}(\theta_k) x_k$ with $x_k \sim N(0,I)$ and define $\epsilon_k^* = \sqrt{\eta \lambda}\Sigma^{1/2}(\theta^*) x_k$ and $z_k = \epsilon_k - \epsilon_k^*$. Let $H = \nabla^2 L(\theta^*)$, $\Sigma = \Sigma(\theta^*)$, and $\nabla R_S = \nabla R_S(\theta^*)$. Let $\alpha$ be the smallest nonzero eigenvalue of $H$. Unlike in \Cref{lem:sketch:coupling}, we will omit the dependence on $\alpha$.

First we need to show $S$ exists. Consider the update
\begin{align}
	S \leftarrow (I - \eta H)S(I - \eta H) + \eta \lambda \Sigma(\theta^*)
\end{align}
Restricted to the span of $H$, this is a contraction so it must converge to a fixed point. In fact, we can write this fixed point in a basis of $H$ explicitly. Let $\{\lambda_i\}$ be the eigenvalues of $H$. The following computation will be performed in an eigenbasis of $H$. Then the above update is equivalent to:
\begin{align}
	S_{ij} = (1 - \eta \lambda_i)(1 - \eta \lambda_j) S_{ij} + \eta \lambda \Sigma_{ij}(\theta^*).
\end{align}
Therefore if $\lambda_i,\lambda_j \ne 0$ we can set
\begin{align}
	S_{ij} = \frac{\lambda \Sigma_{ij}(\theta^*)}{\lambda_i + \lambda_j - \eta \lambda_i \lambda_j}.
\end{align}
Otherwise we set $S_{ij} = 0$. Note that this is the unique solution restricted to $\mathrm{span}(H)$. Next, define the Ornstein-Uhlenbeck process $\xi$ as follows:
\begin{align}
	\xi_{k+1} = (I - \eta H)\xi_k + \epsilon_k^*.
\end{align}
Then note that
\begin{align}
	\xi_k = \sum_{j < k} (I - \eta H)^j \epsilon_{k-j}
\end{align}
so $\xi$ is Gaussian with covariance
\begin{align}
	\eta \lambda \sum_{j < k} (I - \eta H)^j \Sigma (I - \eta H)^j.
\end{align}
This is bounded by
\begin{align}
	C \eta \lambda \sum_{j < k} (I - \eta H)^j H (I - \eta H)^j \preceq C \lambda (2 - \eta H)^{-1} \preceq \frac{C\lambda}{\nu} I
\end{align}
so by \Cref{cor:azumacov}, $\|\xi_k\| \le \mathscr{X}$ with probability $1-2de^{-\iota}$. Define $v_k = \theta_k - \Phi_k(\theta_0)$ and $r_k = \theta_k - \xi_k - \Phi_k(\theta_0)$. We will prove by induction that $\|r_t\| \le \mathscr{D}$ with probability at least $1-8d t e^{-\iota}$. First, with probability $1-2de^{-\iota}$, $\|\xi_t\| \le \mathscr{X}$. In addition, for $k \le t$,
\begin{align}
	\|\theta_k - \theta^*\| \le 9\mathscr{D} + \mathscr{X} = O(\mathscr{X}).
\end{align}
Therefore from the second order Taylor expansion:
\begin{align}
	r_{k+1} = (I - \eta H)r_k - \eta \left[\frac{1}{2}\nabla^3 L(\xi_k,\xi_k) -  \lambda \nabla R_S \right] + z_k + O(\eta \mathscr{X}(\mathscr{D} + \mathscr{X}^2)).
\end{align}
Because $z_k$ is Gaussian with covariance bounded by $O(\eta \lambda \mathscr{X}^2)$ by the assumption that $\Sigma^{1/2}$ is Lipschitz, we have by the standard Gaussian tail bound that its contribution after summing is bounded by $\sqrt{\eta \lambda \mathscr{X} k\iota}$ with probability at least $1-2de^{-\iota}$ so summing over $k$ gives
\begin{align}
	r_{t+1} = - \eta \sum_{k \le t} (I - \eta H)^{t-k} \left[\frac{1}{2}\nabla^3 L(\xi_k,\xi_k) -  \lambda \nabla R_S \right] + O(\sqrt{\eta \lambda t}\mathscr{X} + \eta t \mathscr{X}(\mathscr{D} + \mathscr{X}^2)).
\end{align}
Now denote $S_k = \xi_k\xi_k^T$. Then we need to bound
\begin{align}
	\eta \sum_{k \le t} (I - \eta H)^{t-k} \nabla^3 L(S_k - S).
\end{align}
Let $D_k = S_k - S$. Then plugging this into the recurrence for $S_k$ gives
\begin{align}
	D_{k+1} = (I - \eta H)D_k(I - \eta H) + W_k + Z_k	
\end{align}
where
\begin{align}
	W_k = (I - \eta H)\xi_k (\epsilon_k^*)^T + \epsilon_k^* (\xi_k)^T (I - \eta H) \qqtext{and} Z_k = \epsilon_k^* (\epsilon_k^*)^T - \eta \lambda \Sigma.
\end{align}
Then,
\begin{align}
	D_k = (I - \eta H)^k S (I - \eta H)^k + \sum_{j < k} (I - \eta H)^{k-j-1} (W_j + Z_j) (I - \eta H)^{k-j-1}
\end{align}
so we need to bound
\begin{align}
	\eta \sum_{k \le t} (I - \eta H)^{t-k} \nabla^3 L\left[(I - \eta H)^k S (I - \eta H)^k + \sum_{j < k} (I - \eta H)^{k-j-1} (W_j + Z_j) (I - \eta H)^{k-j-1}
\right].
\end{align}
Because $S$ is in the span of $H$,
\begin{align}
	\norm{\eta \sum_{k < t} (I - \eta H)^{t-k} \nabla^3 L\left[(I - \eta H)^k S (I - \eta H)^k\right]} = O(\eta \lambda) \norm{\sum_{k \le t} (I - \eta H)^k \Pi_H} = O(\lambda/\alpha) = O(\lambda).
\end{align}
where $\Pi_H$ is the projection onto $H$. We switch the order of summation for the next two terms to get 
\begin{align}
	\eta \sum_{j \le t} \sum_{k = j+1}^t (I - \eta H)^{t-k} \nabla^3 L \left[(I - \eta H)^{k-j-1} (W_j + Z_j) (I - \eta H)^{k-j-1}\right].
\end{align}
Note that conditioned on $\epsilon_l^*$, $l < j$, the $W_j$ part of the inner sum is Gaussian with variance bounded by $O(\eta \lambda \mathscr{X}^2)$ so by \Cref{lem:subgaussian}, with probability at least $1-2de^{-\iota}$, the contribution of $W$ is bounded by $O(\sqrt{\eta \lambda t \iota}\mathscr{X})$.

For the $Z$ term, we will define a truncation parameter $r$ to be chosen later. Then define $\bar x_j = x_j[\|x_j\| \le r]$ where $x_j \sim N(0,I)$ is defined above. Define $\bar X = \E[\bar x_j \bar x_j^T]$. Then we can decompose the $Z$ term into:
\begin{align}
	&\eta^2 \lambda \sum_{j \le t} \sum_{k = j+1}^t (I - \eta H)^{t-k} \nabla^3 L \left[(I - \eta H)^{k-j-1}\Sigma^{1/2} \left(x_jx_j^T - \bar x_j\bar x_j^T\right)(\Sigma^{1/2})^T (I - \eta H)^{k-j-1}\right] \\
	+~&\eta^2 \lambda \sum_{j \le t} \sum_{k = j+1}^t (I - \eta H)^{t-k} \nabla^3 L \left[(I - \eta H)^{k-j-1}\Sigma^{1/2} \left(\bar x_j\bar x_j^T - \bar X_j\right)(\Sigma^{1/2})^T (I - \eta H)^{k-j-1}\right] \\
	+~&\eta^2 \lambda \sum_{j \le t} \sum_{k = j+1}^t (I - \eta H)^{t-k} \nabla^3 L \left[(I - \eta H)^{k-j-1}\Sigma^{1/2} \left(\bar X - I\right)(\Sigma^{1/2})^T (I - \eta H)^{k-j-1}\right].
\end{align}
With probability $1-2dte^{-r^2/2}$ we can assume that $x_j x_j^T = \bar x_j \bar x_j^T$ for all $j \le t$ so the first term is zero. For the second term the inner sum is bounded by $O(r^2 \eta^{-1})$ and has variance bounded by $O(\eta^{-2})$ by the same arguments as above. Therefore by Bernstein's inequality, the whole term is bounded by $O(\eta \lambda \sqrt{t \iota} + r^2 \eta \lambda \iota)$ with probability $1-2de^{-\iota}$. Finally, to bound the third term note that
\begin{align}
	\norm{\bar X - I}_F = \E[\|x_j\|^2 [\|x_j\| > r]] \le \sqrt{\E[\|x_j\|^4]\Pr[\|x_j\| > r]} \le (d+1)\sqrt{2d}e^{-r^2/4}.
\end{align}
Therefore the whole term is bounded by $O(\eta \lambda t e^{-r^2/4})$. Finally, pick $r = \sqrt{4 \iota \log \mathscr{T}}$. Then the final bound is
\begin{align}
r_{t+1} &\le O\left(\sqrt{\eta \mathscr{T}} \mathscr{X}^2 + \eta \mathscr{T} \mathscr{X}(\mathscr{D} + \mathscr{X}^2)\right)	\\
&= O\left(\frac{\lambda^{3/4}\iota^{1/4}}{c}\right) \\
&\le \mathscr{D}
\end{align}
for sufficiently large $c$. This completes the induction.

\subsection{SGD Cycling}\label{sec:cycling}

Let $\theta = (x,y,z_1,z_2,z_3,z_4)$. We will define a set of functions $f_i$ as follows:
\begin{align}
	f_1(\theta) = (1-y)z_1-1, && f_2(\theta) = (1-y)z_1+1, && f_3(\theta) = (1+y)z_2-1, && f_4(\theta) = (1+y)z_2+1, \\
	f_5(\theta) = (1-x)z_3-1, && f_6(\theta) = (1-x)z_4+1, && f_7(\theta) = (1+x)z_4-1, && f_8(\theta) = (1+x)z_4+1, \\
	f_9(\theta) = (1-x)z_1, && f_{10}(\theta)=(1+x)z_2, && f_{11}(\theta) = (1+y)z_3, && f_{12}(\theta) = (1-y)z_4, \\
	f_{13}(\theta) = x^2+y^2-1
\end{align}
and we set all labels $y_i = 0$. Then we verify empirically that if we run minibatch SGD with the loss function $\ell_i(\theta) = \frac{1}{2}(f_i(\theta)-y_i)^2$ then $(x,y)$ cycles counter clockwise over the set $x^2+y^2=1$:

\begin{figure}[H]
	\centering
	\includegraphics[width=\linewidth]{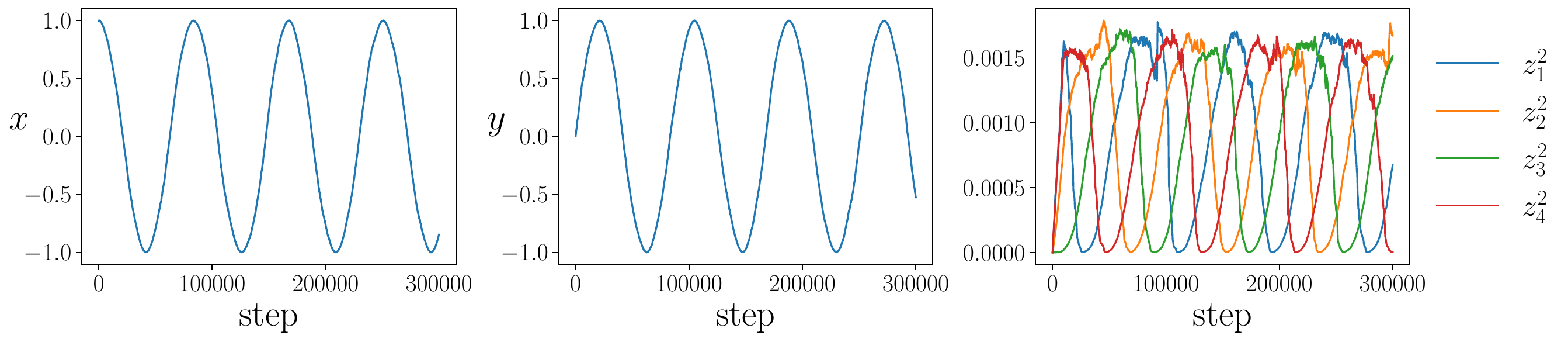}
	\caption{\textbf{Minibatch SGD can cycle.} We initialize at the point $\theta = (1,0,0,0,0,0)$. The left column shows $x$ over time which follows a cosine curve. The middle column shows $y$ over time which follows a sine curve. Finally, the right column shows moving averages of $z_i^2$ for $i=1,2,3,4$, which periodically grow and shrink depending on the current values of $x,y$.}
	\label{fig:cycling}
\end{figure}

The intuition for the definition of $f$ above is as follows. When $x = 1$ and $y=0$, due to the constraints from $f_9$ to $f_12$, only $z_1$ can grow to become nonzero. Then locally, $f_1 = z_1 - 1$ and $f_2 = z_1 + 1$ so this will cause oscillations in the $z_1$ direction, so $S$ will concentrate in the $z_1$ direction which will bias minibatch SGD towards decreasing the corresponding entry in $\nabla^2 L(\theta)$ which is proportional to $(1-x)^2 + 2(1-y)^2$, which means it will increase $y$. Similarly when $x=0,y=1$ there is a bias towards decreasing $x$, when $x=-1,y=0$ there is a bias towards decreasing $y$, and when $x=0,y=-1$ there is a bias towards increasing $x$. Each of these is handled by a different Ornstein Uhlenbeck process $z_i$. $f_{13}$ ensures that $\theta$ remains on $x^2+y^2=1$ throughout this process. This cycling is a result of minimizing a rapidly changing potential and shows that the implicit bias of minibatch SGD cannot be recovered by coupling to a fixed potential.

\section{Weak Contraction Bounds and Additional Lemmas}\label{sec:contract}

Let $\{g_i\}_{i \in [n]}$ be a collection of $n$ vectors such that $\|g_i\|_2 \le \ell_f$ for all $i$ and let $G = \frac{1}{n} \sum_i g_i g_i^T$. Let the eigenvalues of $G$ be $\lambda_1,\ldots,\lambda_n$, and assume that $\eta$ satisfies \Cref{asm:eta}. Then we have the following contraction bounds:
\begin{lemma}\label{contract:G}
	\begin{align}
	\norm{(I - \eta G)^\tau G} \le \frac{1}{\eta \nu \tau} = O\left(\frac{1}{\eta \tau}\right)
	\end{align}
\end{lemma}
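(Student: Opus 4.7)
The plan is to diagonalize $G$ (which is symmetric PSD) and reduce the operator norm bound to a scalar maximization over eigenvalues. Specifically, $(I-\eta G)^\tau G$ is simultaneously diagonalizable with $G$, so its spectral norm equals $\max_i |1-\eta\lambda_i|^\tau \lambda_i$. Writing $x = \eta \lambda_i$, \Cref{asm:eta} and the assumption $\|G\| \le \ell$ give $x \in [0, 2-\nu]$, and the task becomes bounding $\frac{1}{\eta}\max_{x \in [0, 2-\nu]} |1-x|^\tau x$.

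I would split the range at $x=1$. On $[0,1]$, the function $(1-x)^\tau x$ has interior critical point $x = 1/(\tau+1)$ (by elementary calculus), where it evaluates to $(\tau/(\tau+1))^\tau \cdot 1/(\tau+1) \le 1/(\tau+1) \le 1/(\nu \tau)$ since $\nu \le 1$. On $[1, 2-\nu]$, we use the uniform bound $|1-x| \le 1-\nu$ (from $x \le 2-\nu$), so $|1-x|^\tau x \le (1-\nu)^\tau (2-\nu) \le 2 e^{-\nu\tau}$. Applying the elementary inequality $y e^{-y} \le 1/e$ with $y = \nu \tau$ yields $e^{-\nu\tau} \le 1/(e\nu\tau)$, hence $|1-x|^\tau x \le 2/(e\nu\tau) \le 1/(\nu\tau)$.

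Combining the two regimes gives $\max_{x \in [0, 2-\nu]}|1-x|^\tau x \le 1/(\nu \tau)$, and dividing by $\eta$ yields the claimed bound $\|(I - \eta G)^\tau G\| \le \frac{1}{\eta \nu \tau}$. The only mildly delicate step is the second regime, where the naive factor $|1-x|^\tau$ does not decay by itself as $\tau$ grows unless one exploits the strict separation $x \le 2-\nu$; this is precisely where the $\nu$ in the denominator enters. The first regime is a routine Lagrange-style maximization, and no probabilistic or structural content beyond the spectral decomposition is needed.
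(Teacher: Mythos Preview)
Your proof is correct and follows essentially the same approach as the paper: reduce to the scalar problem $\max_i |1-\eta\lambda_i|^\tau \lambda_i$, split at $\eta\lambda_i = 1$, and in the large-eigenvalue regime use $|1-\eta\lambda_i|\le 1-\nu$ together with $ye^{-y}\le 1/e$. The only cosmetic difference is in the small-eigenvalue regime, where the paper bounds $(1-x)^\tau \le e^{-x\tau}$ and applies the same $ye^{-y}\le 1/e$ inequality, whereas you instead locate the exact maximizer $x=1/(\tau+1)$ by calculus; both routes yield the same $1/(\nu\tau)$ bound.
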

\begin{lemma}\label{contract:power}
	\begin{align}
	\norm{(I - \eta G)^\tau g_{i}} = O\left(\sqrt{\frac{1}{\eta \tau}}\right)
	\end{align}
\end{lemma}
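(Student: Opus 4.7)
The plan is to exploit the fact that $g_i$ is one of the very vectors defining $G$, so $g_i g_i^\top$ is dominated by $n G$ in the PSD order. This turns the pointwise norm bound into a trace bound that only involves the spectrum of $G$, where scalar calculus gives the desired $O(1/(\eta\tau))$ rate.

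First I would write
\begin{align*}
\|(I-\eta G)^\tau g_i\|^2 = g_i^\top (I-\eta G)^{2\tau} g_i = \tr\bigl[(I-\eta G)^{2\tau}\, g_i g_i^\top\bigr].
\end{align*}
The key observation is that $g_i g_i^\top \preceq \sum_{j=1}^n g_j g_j^\top = n G$. Since $(I-\eta G)^{2\tau}$ is PSD (it is the square of a symmetric matrix), taking the trace against both sides preserves the inequality, giving
\begin{align*}
\|(I-\eta G)^\tau g_i\|^2 \le n \tr\bigl[(I-\eta G)^{2\tau} G\bigr] = n\sum_{k=1}^d \lambda_k (1-\eta\lambda_k)^{2\tau}.
\end{align*}

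Second, I would bound each term $\lambda_k(1-\eta\lambda_k)^{2\tau}$ by $O(1/(\eta\tau))$ via elementary calculus, splitting into two cases based on the sign of $1-\eta\lambda_k$. In the ``small eigenvalue'' regime $\eta\lambda_k \in [0,1]$, I would use $(1-\eta\lambda_k)^{2\tau} \le e^{-2\tau\eta\lambda_k}$ and maximize $\lambda e^{-2\tau\eta\lambda}$ over $\lambda \ge 0$, which is attained at $\lambda = 1/(2\tau\eta)$ and equals $1/(2e\eta\tau)$. In the ``large eigenvalue'' regime $\eta\lambda_k \in (1, 2-\nu]$ (which is where Assumption~\ref{asm:eta} is used), I would use $|1-\eta\lambda_k| \le 1-\nu$, giving $\lambda_k(1-\eta\lambda_k)^{2\tau} \le \tfrac{2-\nu}{\eta}(1-\nu)^{2\tau}$; since $(1-\nu)^{2\tau} \le e^{-2\nu\tau}$ decays exponentially, this is also $O(1/(\eta\tau))$ (with the constant absorbing the $\nu$-dependence).

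Third, summing over the $d$ eigenvalues gives $\tr[(I-\eta G)^{2\tau} G] \le d\cdot O(1/(\eta\tau))$, so
\begin{align*}
\|(I-\eta G)^\tau g_i\|^2 \le \frac{nd}{\eta\tau}\cdot O(1) = O\!\left(\frac{1}{\eta\tau}\right),
\end{align*}
where in the last step the factors of $n$ and $d$ are hidden by the paper's $O$ convention (which absorbs polynomial dependence on $n,d,1/\nu$). Taking square roots yields the claim.

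The only subtlety is the large-eigenvalue regime: without Assumption~\ref{asm:eta} the term $(1-\eta\lambda_k)^{2\tau}$ could blow up, so the separation $\eta \le (2-\nu)/\ell$ is used precisely to keep $|1-\eta\lambda_k|$ bounded away from $1$ on the top end of the spectrum. Everything else is a one-line PSD comparison plus calculus on a scalar function, so I do not expect any significant obstacle beyond checking these two regimes.
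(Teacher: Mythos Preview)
Your proposal is correct and essentially identical to the paper's proof: both use the PSD comparison $g_i g_i^\top \preceq nG$ to reduce to the spectral sum $n\sum_k \lambda_k(1-\eta\lambda_k)^{2\tau}$, then handle each eigenvalue by the same two-regime split (the bound $xe^{-x}\le 1/e$ when $\eta\lambda_k$ is small, and Assumption~\ref{asm:eta}'s separation $|1-\eta\lambda_k|\le 1-\nu$ when it is large). The only cosmetic difference is that you are slightly more explicit about absorbing the $n,d,\nu$ factors into the $O(\cdot)$.
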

\begin{lemma}\label{contract:sum1}
	\begin{align}
	\sum_{k < \tau} \norm{(I - \eta G)^k g_{i_k}} = O\left(\sqrt{\frac{\tau}{\eta}}\right)
	\end{align}
\end{lemma}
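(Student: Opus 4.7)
The plan is to reduce the sum of norms to a sum of squared norms via Cauchy--Schwarz, and then exploit the fact that each $g_{i_k}$ automatically lies in the range of $G$ in order to extract an extra $1/\lambda_j$ factor from the geometric series.

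First, by Cauchy--Schwarz,
\begin{align*}
\sum_{k<\tau} \|(I-\eta G)^k g_{i_k}\|
\;\le\; \sqrt{\tau \cdot \sum_{k<\tau} \|(I-\eta G)^k g_{i_k}\|^2},
\end{align*}
so it suffices to show $\sum_{k<\tau} \|(I-\eta G)^k g_{i_k}\|^2 = O(1/\eta)$, uniformly over the (adversarial) index sequence $\{i_k\}$. Since $(I-\eta G)^{2k}$ is PSD, each summand $\|(I-\eta G)^k g_{i_k}\|^2 = g_{i_k}^\top (I-\eta G)^{2k} g_{i_k}$ is nonnegative, so it can be bounded by the maximum over $i$, and then by the sum over $i$:
\begin{align*}
\max_i g_i^\top (I-\eta G)^{2k} g_i
\;\le\; \sum_{i=1}^n g_i^\top (I-\eta G)^{2k} g_i
\;=\; n \cdot \tr\!\left(G\,(I-\eta G)^{2k}\right).
\end{align*}

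Next I would diagonalize $G$ in an orthonormal eigenbasis with eigenvalues $\lambda_1,\dots,\lambda_d$ and swap the order of summation:
\begin{align*}
\sum_{k<\tau} \tr\!\left(G\,(I-\eta G)^{2k}\right)
\;=\; \sum_{j=1}^d \lambda_j \sum_{k<\tau} (1-\eta\lambda_j)^{2k}.
\end{align*}
For $\lambda_j=0$ the inner sum contributes nothing because of the factor $\lambda_j$ out front. For $\lambda_j>0$, using the geometric series bound $\sum_{k<\tau}(1-\eta\lambda_j)^{2k}\le 1/\bigl(1-(1-\eta\lambda_j)^2\bigr) = 1/\bigl(\eta\lambda_j(2-\eta\lambda_j)\bigr)$ and the learning rate separation $2-\eta\lambda_j\ge \nu$ (\Cref{asm:eta}), the $\lambda_j$ factor cancels and each term is at most $1/(\eta\nu)$. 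Summing over the at most $d$ nonzero eigenvalues yields
\begin{align*}
\sum_{k<\tau} \tr\!\left(G\,(I-\eta G)^{2k}\right) \;\le\; \frac{d}{\eta\nu} \;=\; O(1/\eta).
\end{align*}
Combining with the previous display gives $\sum_{k<\tau}\|(I-\eta G)^k g_{i_k}\|^2 \le nd/(\eta\nu)=O(1/\eta)$, and the Cauchy--Schwarz step above then yields the claimed $O(\sqrt{\tau/\eta})$.

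The main conceptual step is recognizing that a naive bound $\|(I-\eta G)^k g_i\|\le \|g_i\|\le \ell_f$ only gives $O(\tau)$, which is far too weak; the improvement comes from the fact that every $g_i$ lies in $\mathrm{range}(G)$, so in the spectral decomposition only strictly positive eigenvalues contribute, and the resulting $1/\lambda_j$ in the geometric series is exactly cancelled by the $\lambda_j$ produced by the $\max_i\le n\cdot \tr(G\,\cdot)$ trick. I expect this cancellation to be the only delicate point; everything else is a routine Cauchy--Schwarz and geometric-series manipulation.
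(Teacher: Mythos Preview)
Your proposal is correct and follows essentially the same approach as the paper: Cauchy--Schwarz to pass to squared norms, then the bound $g_{i_k}^\top(I-\eta G)^{2k}g_{i_k}\le n\,\tr\!\bigl(G(I-\eta G)^{2k}\bigr)$, followed by a diagonalization and geometric-series estimate. The paper states the last step a bit more tersely (writing $O(\tau/\eta)$ directly), while you spell out the role of the separation $2-\eta\lambda_j\ge\nu$, but the argument is the same.
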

\begin{lemma}\label{contract:sumsq}
	\begin{align}
	\sum_{k < \tau} \|(I - \eta G)^k g_{i_k}\|^2 = O\left(\frac{1}{\eta}\right)
	\end{align}
\end{lemma}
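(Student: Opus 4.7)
The plan is to rewrite each summand as a trace, $\|(I-\eta G)^k g_{i_k}\|^2 = \tr\bigl((I-\eta G)^{2k}\, g_{i_k} g_{i_k}^\top\bigr)$, and then exploit the uniform PSD domination $g_{i_k} g_{i_k}^\top \preceq \sum_{i=1}^n g_i g_i^\top = nG$, which holds because each omitted term $g_j g_j^\top$ is PSD. Since $(I - \eta G)^{2k}$ is PSD (it is the square of a symmetric matrix), combining the two facts yields $\|(I - \eta G)^k g_{i_k}\|^2 \le n\, \tr\bigl((I-\eta G)^{2k} G\bigr)$ uniformly in $i_k$. This is the key move: the adversarial index sequence $\{i_k\}$ is neutralized in one shot by a single PSD inequality, which is what separates \Cref{contract:sumsq} from the type of bound one would get via Cauchy-Schwarz.

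Next I would sum over $k < \tau$ via the matrix geometric series in $(I-\eta G)^2$. Working in an eigenbasis of $G$, each nonzero eigenvalue $\lambda_i \in (0,\ell]$ contributes at most $\sum_{k\ge 0} \lambda_i (1-\eta\lambda_i)^{2k} = \lambda_i/(1-(1-\eta\lambda_i)^2) = 1/\bigl(\eta(2-\eta\lambda_i)\bigr) \le 1/(\eta\nu)$, where the last step uses \Cref{asm:eta} to ensure $2-\eta\lambda_i \ge \nu$ on the spectrum of $G$. The kernel of $G$ contributes nothing since $G$ annihilates it. Summing over at most $d$ nonzero eigenvalues gives $\tr\bigl(\sum_{k < \tau}(I-\eta G)^{2k}\, G\bigr) \le d/(\eta\nu)$.

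Combining the two estimates gives $\sum_{k<\tau} \|(I-\eta G)^k g_{i_k}\|^2 \le nd/(\eta\nu) = O(1/\eta)$, since the $O(\cdot)$ notation in this paper hides polynomial dependence on $n$, $d$, and $1/\nu$. I do not expect a serious obstacle: the only nontrivial ingredient is the domination $g_{i_k} g_{i_k}^\top \preceq nG$, which handles the worst-case choice of indices uniformly; everything else is a routine matrix geometric series whose convergence rate is supplied by the learning-rate separation assumption. The same template (write as a trace, dominate by $nG$, sum a geometric series) also directly yields \Cref{contract:G}, \Cref{contract:power}, and, via Cauchy-Schwarz on top, \Cref{contract:sum1}.
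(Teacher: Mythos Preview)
Your proposal is correct and essentially identical to the paper's proof: both rewrite the norm as a trace, use the PSD domination $g_{i_k} g_{i_k}^\top \preceq nG$ to neutralize the index sequence, and then sum the resulting geometric series $\sum_k \lambda_i(1-\eta\lambda_i)^{2k}$ eigenvalue-by-eigenvalue. The only cosmetic difference is that you make the final bound $nd/(\eta\nu)$ explicit, whereas the paper simply writes $O(1/\eta)$ after the eigenvalue sum.
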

\begin{lemma}\label{contract:sumsqapart}
	\begin{align}
	\sum_{k < \tau} \|(I - \eta G)^k g_{i_k}\|\|(I - \eta G)^k g_{j_k}\| = O\left(\frac{1}{\eta}\right)
	\end{align}
\end{lemma}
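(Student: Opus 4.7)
The plan is to reduce this mixed-index product bound to the already-stated single-index squared sum bound in \Cref{contract:sumsq} via an elementary pointwise inequality. The key observation is that for any two nonnegative reals $a,b$, the AM-GM inequality gives $ab \le \tfrac{1}{2}(a^2 + b^2)$, so applying this termwise with $a = \|(I-\eta G)^k g_{i_k}\|$ and $b = \|(I-\eta G)^k g_{j_k}\|$ decouples the two factors inside the sum.

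Concretely, the first step is to write
\begin{align*}
\sum_{k < \tau} \|(I - \eta G)^k g_{i_k}\|\,\|(I - \eta G)^k g_{j_k}\|
&\le \frac{1}{2}\sum_{k<\tau}\|(I-\eta G)^k g_{i_k}\|^2 + \frac{1}{2}\sum_{k<\tau}\|(I-\eta G)^k g_{j_k}\|^2.
\end{align*}
Both sums on the right-hand side are exactly of the form handled by \Cref{contract:sumsq} (which does not impose any restriction on how the index sequence $\{i_k\}$ or $\{j_k\}$ is chosen; the bound holds for any sequences in $[n]$). Applying \Cref{contract:sumsq} to each of the two sums gives an $O(1/\eta)$ bound, and adding the two contributions preserves the $O(1/\eta)$ rate.

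There is no real obstacle in this proof once \Cref{contract:sumsq} is in hand: the entire difficulty of establishing the $1/\eta$ scaling is already absorbed into that lemma, which in turn relies on the spectral calculation showing that $\|(I-\eta G)^k g\|^2$ summed over $k$ telescopes (roughly) against $(\eta G(2-\eta G))^{-1}$ on the span of $G$, using $\eta \le (2-\nu)/\ell$ from \Cref{asm:eta}. The current lemma simply extends that bound from squared norms of a single sequence to cross products of two sequences, and AM-GM is the only new ingredient required. One could alternatively invoke Cauchy-Schwarz across the sum, writing the sum as an inner product of two $\ell^2$ sequences with norms bounded by $O(\eta^{-1/2})$ each, but the AM-GM route is slightly cleaner and uses exactly the lemma already proved.
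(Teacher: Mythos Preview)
Your proof is correct and follows essentially the same approach as the paper: both reduce to \Cref{contract:sumsq} via a one-line elementary inequality. The only cosmetic difference is that the paper uses Cauchy--Schwarz (the alternative you mention at the end) rather than AM--GM, but either inequality immediately gives the $O(1/\eta)$ bound from \Cref{contract:sumsq}.
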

\begin{lemma}\label{contract:sumG}
	\begin{align}
	\sum_{k < \tau} \|(I - \eta G)^k G\|_2 = O\left(\frac{1}{\eta}\right)
	\end{align}
\end{lemma}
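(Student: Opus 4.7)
The plan is to reduce the operator norm to a scalar eigenvalue computation and then sum a geometric series, split by whether each eigenvalue lies in the "contracting" regime $\eta\lambda_i \le 1$ or the "overshooting" regime $\eta\lambda_i \in (1, 2-\nu]$. Since $G$ is symmetric PSD, the matrix $(I-\eta G)^k G$ is simultaneously diagonalizable with $G$, so
\begin{align*}
\|(I - \eta G)^k G\|_2 = \max_{i \in [d]} |1-\eta\lambda_i|^k \lambda_i \le \sum_{i=1}^d |1-\eta\lambda_i|^k \lambda_i.
\end{align*}
Swapping the order of summation reduces the problem to bounding, for each eigenvalue $\lambda_i$ separately, the quantity $\sum_{k < \tau} |1-\eta \lambda_i|^k \lambda_i$, which I can upper bound by the infinite geometric sum $\lambda_i/(1-|1-\eta\lambda_i|)$ provided $|1-\eta\lambda_i|<1$.

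Next I would split into cases depending on the size of $\eta\lambda_i$. In the first case $\eta\lambda_i \le 1$, we have $|1-\eta\lambda_i|=1-\eta\lambda_i$, and the geometric sum equals $\lambda_i/(\eta\lambda_i)=1/\eta$ (with the degenerate case $\lambda_i=0$ giving $0$). In the second case $\eta\lambda_i > 1$, Assumption \ref{asm:eta} gives $\eta\lambda_i \le 2-\nu$, so $|1-\eta\lambda_i|=\eta\lambda_i-1 \le 1-\nu < 1$ and the denominator $1-|1-\eta\lambda_i|=2-\eta\lambda_i \ge \nu$. Therefore the geometric sum is bounded by $\lambda_i/\nu \le \ell/\nu \le (2-\nu)/(\eta\nu) = O(1/\eta)$, where the last step again uses Assumption \ref{asm:eta}.

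Combining the two cases, every eigenvalue contributes $O(1/\eta)$ to the sum, so summing over the $d$ eigenvalues and recalling that the $O(\cdot)$ notation in the paper absorbs polynomial factors in $d$ and $1/\nu$, we conclude
\begin{align*}
\sum_{k < \tau} \|(I - \eta G)^k G\|_2 \le \sum_{i=1}^d \sum_{k=0}^\infty |1-\eta\lambda_i|^k \lambda_i = O\!\left(\frac{1}{\eta}\right).
\end{align*}
There is no real obstacle: the only subtlety worth flagging is that the bound in the overshoot regime $\eta\lambda_i > 1$ genuinely relies on the learning-rate separation $\nu > 0$ of Assumption \ref{asm:eta}; without it, eigenvalues approaching $2/\eta$ would destroy the geometric decay and the sum would fail to be $O(1/\eta)$. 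Note also that the resulting bound is $\tau$-independent, which is a small but sometimes useful strengthening of the stated lemma.
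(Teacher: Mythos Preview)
Your proof is correct and follows essentially the same approach as the paper: bound the operator norm by the sum over eigenvalues $\sum_i |1-\eta\lambda_i|^k\lambda_i$, swap the order of summation, and sum the resulting geometric series. The paper's own proof is the terse two-line version of exactly this argument; your case split on $\eta\lambda_i \le 1$ versus $\eta\lambda_i \in (1,2-\nu]$ simply makes explicit how Assumption~\ref{asm:eta} enters, which the paper leaves implicit.
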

\begin{proof}[Proof of \Cref{contract:G}]
	\begin{align}
		\norm{(I - \eta G)^\tau G} &= \max_i \abs{1-\eta \lambda_i}^\tau \lambda_i \le \max\left(\frac{\eta \lambda_i \tau}{\eta \tau} \exp(-\eta \lambda_i \tau),\ell (1-\nu)^\tau\right) \\
		&\le \max\left(\frac{1}{e \eta \tau}, \frac{1}{\eta\nu\tau} (\eta \ell) [\nu \tau] e^{-\nu \tau}\right)\\
		&\le \frac{1}{\eta \nu \tau}
		&= O\left(\frac{1}{\eta \tau}\right)
	\end{align}
	where we used that the function $xe^{-x} < \frac{1}{e}$ is bounded.
\end{proof}

\begin{proof}[Proof of \Cref{contract:power}]
	Note that
	\begin{align}
	\|(I - \eta G)^\tau g_i\|^2 &= \tr\left[(I - \eta G)^\tau g_i g_i^T (I - \eta G)^\tau \right] \\
	&\le n\tr((I - \eta G)^{2\tau} G)\\
	&= n\sum_i \lambda_i (1-\eta \lambda_i)^{2\tau} \\
	&\le n\sum_i \max\left(\lambda_i \exp(-2\eta \lambda_i \tau),\ell (1-\nu)^\tau\right) \\
	&= O\left(\frac{1}{\eta \tau}\right),
	\end{align}
	where we used the fact that the function $x e^{-x} \le \frac{1}{e}$ is bounded.
\end{proof}
\begin{proof}[Proof of \Cref{contract:sum1}]
	Following the proof of \Cref{contract:power},
	\begin{align}
	\left(\sum_{k < \tau} \norm{(I - \eta G)^k g_{i_k}}\right)^2 &\le \tau \sum_{k < \tau} \norm{(I - \eta G)^k g_{i_k}}^2 \\
	&\le n\tau \sum_{k < \tau} \sum_i \lambda_i (1 - \eta \lambda_i)^{2k} \\
	&= O\left(\frac{\tau}{\eta}\right)
	\end{align}
\end{proof}
\begin{proof}[Proof of \Cref{contract:sumsq}]
	\begin{align}
	\sum_{k < \tau} \|(I - \eta G)^k g_{i_k}\|^2 &\le \sum_{k < \tau} \tr\left[(I - \eta G)^k g_{i_k}g_{i_k}^T (I - \eta G)^k \right] \\
	&\le n \sum_{k < \tau} \tr\left[(I - \eta G)^{2k} G \right] \\
	&\le n \sum_{k < \tau} \sum_i \lambda_i (1-\eta \lambda_i)^{2k} \\
	&= O\left(\frac{1}{\eta}\right)
	\end{align}
\end{proof}
\begin{proof}[Proof of \Cref{contract:sumsqapart}]
	\begin{align}
	&\sum_{k < \tau} \|(I - \eta G)^k g_{i_k}\|\|(I - \eta G)^k g_{j_k}\| \\
	&\le \left[\sum_{k < \tau} \|(I - \eta G)^k g_{i_k}\|^2\right]^{1/2} \left[\sum_{k < \tau} \|(I - \eta G)^k g_{j_k}\|^2\right]^{1/2} \\
	&= O(1/\eta)
	\end{align}
	by \Cref{contract:sumsq}.
\end{proof}
\begin{proof}[Proof of \Cref{contract:sumG}]
	\begin{align}
	\sum_{k < \tau} \|(I - \eta G)^k G\| &\le \sum_{k \le \tau} \sum_i (I - \eta \lambda_i)^k \lambda_i \\
	&= O\left(\frac{1}{\eta}\right).
	\end{align}
\end{proof}

The following concentration inequality is from \citet{jin2019stochastic}:
\begin{lemma}[Hoeffding-type inequality for norm-subGaussian vectors]\label{lem:subgaussian}
	Given $X_1,\ldots,X_n \in \mathbb{R}^d$ and corresponding filtrations $\mathcal{F}_i = \sigma(X_1,\ldots,X_n)$ for $i \in [n]$ such that for some fixed $\sigma_1,\ldots,\sigma_n$:
	\begin{align}
		\E[X_i | F_{i-1}] = 0, \mathbb{P}[\|X_i\| \ge t | \mathcal{F}_{i-1}] \le 2e^{-\frac{t^2}{2 \sigma_i^2}},
	\end{align}
	we have that for any $\iota > 0$ there exists an absolute constant $c$ such that with probability at least $1 - 2d e^{-\iota}$,
	\begin{align}
		\norm{\sum_{i=1}^n X_i} \le c \cdot \sqrt{\sum_{i=1}^n \sigma_i^2 \cdot \iota}.
	\end{align}
\end{lemma}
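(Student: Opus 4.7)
The plan is to reduce the statement to a matrix martingale concentration inequality via a Hermitian dilation, then apply the matrix Azuma/Hoeffding framework of Tropp. The dilation is what lets us match the $2d$ prefactor in the stated bound; a naive coordinate union bound would only give $O(\sqrt{d}\,\|\cdot\|_\infty)$ and hence an extra $\sqrt{d}$ in the radius, and an $\varepsilon$-net on the sphere would give a prefactor of $e^{O(d)}$, both worse than claimed.

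First, let $Y_i$ be the $(d+1)\times(d+1)$ Hermitian block matrix whose $(1{:}d,\,d+1)$ off-diagonal block is $X_i$ (with transpose in the symmetric position) and whose other entries are zero. Then $\{Y_i\}$ is a Hermitian matrix martingale difference sequence adapted to $\{\mathcal{F}_i\}$ satisfying $\|Y_i\|_{\mathrm{op}} = \|X_i\|$, $\|\sum_i Y_i\|_{\mathrm{op}} = \|\sum_i X_i\|$, and crucially $Y_i^2 \preceq \|X_i\|^2\,I_{d+1}$, so bounding the operator norm of $\sum_i Y_i$ will finish the proof.

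Second, establish the conditional matrix subGaussian MGF bound
\[
\E\bigl[\exp(\theta Y_i)\,\big|\,\mathcal{F}_{i-1}\bigr] \preceq \exp(c\theta^2 \sigma_i^2\,I_{d+1}) \qquad \text{for all } \theta \in \mathbb{R}.
\]
The key inputs are (i) $\E[Y_i \mid \mathcal{F}_{i-1}] = 0$, (ii) the moment bound $\E[\|X_i\|^{2m}\mid \mathcal{F}_{i-1}] \le 2\,(2\sigma_i^2)^m\,m!$ from integrating the norm-subGaussian tail, and (iii) the operator inequality $Y_i^{2m} \preceq \|X_i\|^{2m}\,I_{d+1}$ inherited from $Y_i^2 \preceq \|X_i\|^2 I$. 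Expanding $\exp(\theta Y_i)$ as a power series and taking conditional expectation term by term, the even-order terms are controlled by (ii)–(iii), and the odd-order terms are handled by combining the mean-zero property with a symmetrization exploiting that the nonzero spectrum of $Y_i$ is sign-symmetric (eigenvalues $\pm\|X_i\|$ with the rest zero). Then Tropp's master tail bound for Hermitian matrix martingales gives
\[
\Pr\bigl[\|\textstyle\sum_i Y_i\|_{\mathrm{op}} \ge t\bigr] \le 2(d+1)\,\exp\bigl(-t^2/(4c\sum_i \sigma_i^2)\bigr),
\]
and setting $t = c'\sqrt{\sum_i\sigma_i^2\cdot\iota}$ yields failure probability at most $2de^{-\iota}$ after absorbing the $(d+1)/d$ factor into the absolute constant.

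The main obstacle is Step 2: producing a clean subGaussian matrix MGF for all real $\theta$, since a direct power-series bound on $\E[\exp(|\theta|\|X_i\|)]$ picks up Bernstein-type corrections $R_{\max}|\theta|$ rather than a pure Gaussian form. Two routes around this: (a) truncate $\|X_i\|$ at $R_i = \sigma_i\sqrt{2\log(4n/\delta)}$, establish a matrix Bennett MGF on the truncation event, and fold the $\Pr[\mathcal{E}^c] \le de^{-\iota}$ failure into the final probability; or (b) apply matrix Bernstein directly in Step 3 with conditional variance $V = 4\sum_i\sigma_i^2$ and verify at the chosen $t = O(\sqrt{\sum_i\sigma_i^2\cdot\iota})$ that the quadratic term in the Bernstein denominator dominates the linear $R_{\max}t$ term in the regime $\iota \gtrsim \log n$, handling the complementary regime (where the conclusion is essentially vacuous) by a separate union bound.
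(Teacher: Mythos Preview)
The paper does not prove this lemma; it simply states it and cites \citet{jin2019stochastic} as the source. So there is no ``paper's own proof'' to compare against. Your proposal is therefore an attempt to supply a self-contained argument where the authors chose to import a known result.

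On the substance of your sketch: the Hermitian-dilation route is a legitimate way to obtain the $2d$ prefactor, and you correctly identify that the real difficulty is Step~2, since $Y_i^2 \preceq \|X_i\|^2 I$ is a \emph{random} bound and Tropp's matrix Hoeffding lemma needs a deterministic one. One point to flag: your remark that odd-order terms are handled because the spectrum of $Y_i$ is sign-symmetric is not right as written. Sign-symmetry of the spectrum of each realization of $Y_i$ does not force $\E[Y_i^{2k+1}\mid\mathcal{F}_{i-1}]$ to vanish; the off-diagonal block of $Y_i^{2k+1}$ is $\|X_i\|^{2k} X_i$, and only $\E[X_i\mid\mathcal{F}_{i-1}]=0$ is assumed, not $\E[\|X_i\|^{2k} X_i\mid\mathcal{F}_{i-1}]=0$. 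This gap is, however, absorbed by either of your proposed workarounds (truncation then matrix Hoeffding, or matrix Bernstein directly), both of which are standard and would close the argument.

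For what it is worth, the proof in \citet{jin2019stochastic} does not go through a matrix dilation; it reduces to scalar sub-Gaussian concentration on one-dimensional projections and handles the supremum over directions more directly. Either route yields the stated bound.
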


\begin{lemma}\label{lem:klanalytic}
	Assume that $L$ is analytic and $\theta$ is restricted to some compact set $\mathcal{D}$. Then there exist $\delta > 0, \mu > 0, \epsilon_{KL} > 0$ such that \Cref{asm:kl} is satisfied.
\end{lemma}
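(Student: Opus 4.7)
The plan is to invoke the classical Łojasiewicz gradient inequality at each global minimizer, patch the local estimates together by compactness, and then absorb the exponent into the range $\delta \in (0, 1/2]$ using boundedness of $L$ and $\nabla L$ on $\mathcal{D}$. Let $L^\ast = \min_{\theta \in \mathcal{D}} L(\theta)$ and $M = \{\theta \in \mathcal{D} : L(\theta) = L^\ast\}$; as the continuous preimage of a point inside a compact set, $M$ is compact. The substantive input is the classical Łojasiewicz gradient inequality for real analytic functions: for each $\theta^\ast \in M$, there exist an open neighborhood $V_{\theta^\ast}$, an exponent $\alpha_{\theta^\ast} \in (0,1)$, and a constant $c_{\theta^\ast} > 0$ satisfying
\[
|L(\theta) - L^\ast|^{\alpha_{\theta^\ast}} \le c_{\theta^\ast} \|\nabla L(\theta)\| \qquad \text{for all } \theta \in V_{\theta^\ast}.
\]
I would treat this as a black box from the real algebraic geometry literature.

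Next, by compactness of $M$, I would extract a finite subcover $V_1, \ldots, V_k$ with associated exponents $\alpha_1, \ldots, \alpha_k$ and constants $c_1, \ldots, c_k$, and set $V = \bigcup_i V_i$. To combine the local inequalities into a single estimate on $V$ with a common exponent $\alpha := \max_i \alpha_i$, I would use the identity $|L - L^\ast|^{\alpha} = |L - L^\ast|^{\alpha - \alpha_i}\, |L - L^\ast|^{\alpha_i}$; since $|L - L^\ast|$ is uniformly bounded on the compact set $\mathcal{D}$, the first factor is bounded, producing a single constant $\bar{c}$ with $|L(\theta) - L^\ast|^\alpha \le \bar{c}\|\nabla L(\theta)\|$ for all $\theta \in V$. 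Rearranging gives $L(\theta) - L^\ast \le \bar{c}^{1/\alpha}\|\nabla L(\theta)\|^{1 + \delta_0}$ with $\delta_0 := 1/\alpha - 1 > 0$. If $\delta_0 \le 1/2$ I am done; otherwise, the factorization $\|\nabla L\|^{1+\delta_0} = \|\nabla L\|^{\delta_0 - 1/2}\,\|\nabla L\|^{3/2}$, combined with the uniform bound on $\|\nabla L\|$ over $\mathcal{D}$, lets me absorb the extra factor into the constant and obtain the target inequality with $\delta = 1/2$.

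To fix $\epsilon_{KL}$, I would observe that on the compact set $\mathcal{D} \setminus V$ the continuous function $L - L^\ast$ is strictly positive, hence bounded below by some $m > 0$. Any $\theta \in \mathcal{D}$ with $L(\theta) - L^\ast < m$ must lie in $V$, so setting $\epsilon_{KL} = m/2$ (or any value strictly less than $m$) guarantees that the combined Łojasiewicz inequality applies throughout $\{\theta \in \mathcal{D}: L(\theta) - L^\ast \le \epsilon_{KL}\}$, yielding the conclusion with some $\mu > 0$ and $\delta \in (0,1/2]$, which is exactly \Cref{asm:kl}.

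The main obstacle, beyond invoking the Łojasiewicz inequality as a black box, is the unification step: producing a single exponent and constant valid across the entire finite cover, while additionally satisfying the constraint $\delta \le 1/2$ imposed by the assumption. Both issues are resolved by the elementary observation that $L - L^\ast$ and $\|\nabla L\|$ are uniformly bounded on $\mathcal{D}$, which lets us freely interchange powers at the cost of multiplicative constants, and in particular lets us force $\delta$ down into $(0, 1/2]$.
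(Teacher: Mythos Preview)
Your proposal is correct and follows essentially the same route as the paper: invoke the local {\L}ojasiewicz gradient inequality at each point of the (compact) global-minimum set, pass to a finite subcover, unify the exponents using boundedness, and pick $\epsilon_{KL}$ from the positive infimum of $L-L^\ast$ on the compact complement of the cover. Your write-up is in fact more explicit than the paper's on two points: the mechanism for merging exponents into a single one via the boundedness of $|L-L^\ast|$, and the additional step of forcing $\delta \le 1/2$ using the uniform bound on $\|\nabla L\|$, which the paper's proof omits.
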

\begin{proof}
	It is known that there exist $\mu_{\theta},\delta_{\theta}$ satisfying the KL-inequality in the neighborhood of any critical point $\theta$ of $L$, i.e. for every critical point $\theta$, there exists a neighborhood $U_\theta$ of $\theta$ such that for any $\theta' \in U_\theta$,
		\begin{align}
			L(\theta')-L(\theta) \le \mu_\theta\|\nabla L(\theta')\|^{1+\delta_\theta}.
		\end{align}
 Let $S = \{\theta \in \mathcal{D} : L(\theta) = L(\theta^*)\}$ for any global minimizer $\theta^*$. For every global min $\theta \in S$, let $U_{\theta}$ be a neighborhood of $\theta$ such that the KL inequality holds with constants $\mu_{\theta},\delta_{\theta}$. Because $\mathcal{D}$ is compact and $S$ is closed, $S$ is compact and there must exist some $\theta_1,\ldots,\theta_n$ such that $S \subset \bigcup_{i \in [k]} U_{\theta_i}$. Let $\delta = \min_i \delta_{\theta_i}$. Then for all $i$, there must exist some $\mu_i$ such that $\mu_i,\delta$ satisfies the KL inequality and let $\mu = \max_i \mu_i$. Finally, let $U = \bigcup_i U_{\theta_i}$ which is an open set containing $S$. Then $\mathcal{D} \setminus U$ is a compact set and therefore $L$ must achieve a minimum $\epsilon_{KL}$ on this set. Note that $\epsilon_{KL} > 0$ as $S \subset U$. Then if $L(\theta) \le \epsilon_{KL}$, $\theta \in U$ so $\mu,\delta$ satisfy the KL inequality at $\theta$.
\end{proof}
\section{Extension to SGD with Momentum}

We now prove \Cref{lem:momentumcoupling}. We will copy all of the notation from \Cref{sec:sketch:coupling}. As before we define $v_k = \theta_k - \Phi_k(\theta^*)$. Define $\xi$ by $\xi_0 = 0$ and
\begin{align}
	\xi_{k+1} = (I - \eta G)\xi_k + \epsilon_k^* + \beta(\xi_k - \xi_{k-1}).
\end{align}
We now define the following block matrices that will be crucial in our analysis:
\begin{align}
A = \begin{bmatrix}
 I - \eta G + \beta I & - \beta I \\
 1 & 0
 \end{bmatrix} \qqtext{and} J = \begin{bmatrix} I \\ 0 \end{bmatrix} \qqtext{and} B_j = J^T A^j J.
\end{align}
Then we are ready to prove the following proposition:
\begin{proposition}
	With probability $1-2de^{-\iota}$, $\|\xi_k\| \le \mathscr{X}$.
\end{proposition}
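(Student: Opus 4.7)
The plan is to mirror the proof of Proposition 7 (OUnormglobal) for the momentum recursion, using the block-matrix formulation set up in the statement. First I would rewrite the recursion in augmented state form: setting $\Xi_k = \begin{pmatrix} \xi_k \\ \xi_{k-1}\end{pmatrix}$ (with $\xi_{-1} = 0$), the update becomes $\Xi_{k+1} = A\Xi_k + J\epsilon_k^*$. Unrolling this gives
$$\xi_k = J^T \Xi_k = \sum_{j=0}^{k-1} J^T A^{k-1-j} J \,\epsilon_j^* = \sum_{j=0}^{k-1} B_{k-1-j}\,\epsilon_j^*,$$
which is a sum of independent mean-zero terms with deterministic matrix coefficients.

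Next, for a fixed target index $k$, I would construct an auxiliary martingale $X^{(k)}_l = \sum_{j=0}^{l-1} B_{k-1-j}\,\epsilon_j^*$ adapted to $\mathcal{F}_l = \sigma(\mathcal{B}^{(0)},\epsilon^{(0)},\ldots,\mathcal{B}^{(l-1)},\epsilon^{(l-1)})$, so that $X^{(k)}_k = \xi_k$. Using the uniform bound $\epsilon_j^*(\epsilon_j^*)^T \preceq n\eta\lambda(1-\beta)G$ (derived exactly as in the label-noise case, adjusting for the momentum rescaling of $\lambda$), the quadratic covariation satisfies
$$[X^{(k)},X^{(k)}]_k \preceq n\eta\lambda(1-\beta)\sum_{m=0}^{k-1} B_m G B_m^T.$$
So it suffices to bound the trace of $\sum_{m\ge 0} B_m G B_m^T$ uniformly in $k$ and apply Corollary 5 (matrix Azuma--Hoeffding) to conclude $\|\xi_k\|\le \mathscr{X}$ with probability at least $1-2de^{-\iota}$.

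The main obstacle is the spectral analysis of $A$ and hence of the $B_j$. I would work in the eigenbasis of $G$: for each eigenvalue $\lambda_i$ of $G$, the matrix $A$ restricts to the $2\times 2$ block $A_i = \begin{pmatrix} 1-\eta\lambda_i+\beta & -\beta \\ 1 & 0\end{pmatrix}$, whose characteristic polynomial is $\mu^2 - (1-\eta\lambda_i+\beta)\mu + \beta$. Under the hypothesis $\eta \le (2-\nu)(1+\beta)/\ell$, a direct computation shows both roots lie strictly inside the unit disk, so the series $\sum_{m\ge 0} B_m^{(i)}$ and $\sum_{m\ge 0} (B_m^{(i)})^2$ converge. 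The clean way to get a closed-form bound is to observe that $S := \sum_{m\ge 0} B_m G B_m^T$ restricted to each eigendirection is (up to a scalar) the stationary covariance of the heavy-ball OU process at eigenvalue $\lambda_i$ with unit-variance driving noise; solving the resulting Lyapunov equation $S_i = (A_i S_i A_i^T) + \text{(noise block)}$ yields the per-eigenvalue bound $(B_\infty)_i \lambda_i \le \frac{1}{(1-\beta)\nu}$ once the learning rate constraint is applied to lower-bound $2(1+\beta)-\eta\lambda_i \ge \nu(1+\beta)$.

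Combining the previous two paragraphs, I get $[X^{(k)},X^{(k)}]_k \preceq M$ for a deterministic $M$ with $\tr M = O\!\bigl(\tfrac{n^2 \lambda}{\nu}\bigr)$ (absorbing the $1-\beta$ factors and using the rank/dimension bounds analogously to Proposition 7). Corollary 5 then yields $\|\xi_k\|\le \sqrt{2\tr(M)\iota} \le \mathscr{X}$ with probability at least $1-2de^{-\iota}$, as required. The one bookkeeping subtlety is making sure the constants in $\mathscr{X}$ are consistent with the definition in Lemma 5 after all factors of $(1\pm\beta)$ are tracked through $\lambda = \eta\sigma^2/(B(1-\beta))$; this is routine once the Lyapunov equation is solved but is the step where the momentum-specific algebra enters.
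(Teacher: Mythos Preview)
Your proposal is correct and follows essentially the same route as the paper: rewrite the momentum recursion in block form $\bar\xi_{k+1}=A\bar\xi_k+J\epsilon_k^*$, unroll to $\xi_k=\sum_{j<k}B_{k-1-j}\epsilon_j^*$, bound the quadratic covariation by $n\eta\lambda(1-\beta)\sum_j B_j G B_j^T$, evaluate this sum via the Lyapunov/fixed-point equation in the eigenbasis of $G$ (the paper packages this as a separate proposition yielding $\sum_j B_j G B_j^T=\tfrac{1}{\eta(1-\beta)}\Pi_G(2-\tfrac{\eta}{1+\beta}G)^{-1}$), and finish with the matrix Azuma corollary. The only superfluous step is invoking the auxiliary-martingale construction from the changing-reference-point proposition: here $G$ is fixed, so the partial sums are already a martingale with deterministic coefficients and no extra bookkeeping is needed.
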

\begin{proof}
	Define $\bar \xi_k = \cvec{\xi_k}{\xi_{k-1}}$. Then the above can be written as:
	\begin{align}
		\bar \xi_{k+1} = A \bar \xi_k + J\epsilon_k^*
	\end{align}
	Therefore by induction,
	\begin{align}
		\bar \xi_k = \sum_{j < k} A^{k-j-1} J\epsilon_j^* \implies \xi_k = \sum_{j < k} B_{k-j-1}\epsilon_j^*.
	\end{align}
	The partial sums form a martingale and by \Cref{prop:momentumlimit}, the quadratic covariation is bounded by
	\begin{align}
		(1-\beta) n\eta \lambda \sum_{j=0}^\infty B_j G B_j^T \preceq \frac{n\lambda}{\nu} \Pi_G
	\end{align}
	so by \Cref{cor:azumacov} we are done.
\end{proof}

We will prove \Cref{lem:momentumcoupling} by induction on $t$. Assume that $\|r_k\| \le \mathscr{D}$ for $k \le t$. First, we have the following version of \Cref{prop:sketch:taylor2}:

\begin{proposition} Let $\bar r_k = \cvec{r_k}{r_{k-1}}$. Then,
	\begin{align}
		\bar r_{k+1} 
	&= A \bar r_k + J\left(- \eta \left[\frac{1}{2}\nabla^3 L(\xi_k,\xi_k) - \lambda \nabla R \right] + m_k + z_k + O(\eta \mathscr{X}(\sqrt{\mathscr{L}} + \mathscr{M} + \mathscr{X}^2))\right)
	\end{align}
\end{proposition}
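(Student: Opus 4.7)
The plan is to mirror the argument of \Cref{prop:sketch:taylor2}, accounting for the extra momentum term $\beta(\theta_k - \theta_{k-1})$ in \Cref{alg:sgdln}. First, I would Taylor expand the label-noise momentum update to second order around $\theta^*$ and perform the same decomposition of the stochastic gradient into $\nabla L(\theta_k) + (\nabla L^{(k)}(\theta_k)-\nabla L(\theta_k)) + \epsilon_k^* + z_k$ that was used in the vanilla SGD case. This yields
\[
\theta_{k+1} = \theta_k - \eta\bigl[\nabla L + \nabla^2 L\,(\theta_k-\theta^*) + \tfrac12 \nabla^3 L(\theta_k-\theta^*,\theta_k-\theta^*)\bigr] + \beta(\theta_k-\theta_{k-1}) + \epsilon_k^* + m_k + z_k + O(\eta\|\theta_k-\theta^*\|^3).
\]
Applying the same second-order Taylor expansion to the regularized trajectory $\Phi_k$, which contains the additional term $-\eta\lambda\nabla R$ from the explicit regularizer, and then subtracting the two recurrences eliminates the $\nabla L$ and linear-in-$\nabla^2L$ terms at a common base point. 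After applying \Cref{prop:hessiandecomp} to replace $\nabla^2 L$ with $G$ (at cost $O(\eta\mathscr{X}\sqrt{\mathscr{L}})$), this produces a second-order linear recurrence for $v_k=\theta_k-\Phi_k(\theta^*)$ of the form $v_{k+1} = (I-\eta G+\beta I)v_k - \beta v_{k-1} + \epsilon_k^* + (\text{third-order and noise terms})$.

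Next, I would simplify the third-order contribution. Setting $a=\theta_k-\theta^*$ and $b=\Phi_k(\theta^*)-\theta^*$, so that $a-b=v_k=\xi_k+r_k$, use the identity $\nabla^3L(a,a)-\nabla^3L(b,b)=\nabla^3L(v_k,v_k)+2\nabla^3L(b,v_k)$ and expand $\nabla^3L(v_k,v_k)=\nabla^3L(\xi_k,\xi_k)+O(\|\xi_k\|\|r_k\|+\|r_k\|^2)$. Under the inductive hypothesis $\|r_k\|\le\mathscr{D}$ and the hypothesis $\|\Phi_k(\theta^*)-\theta^*\|\le 8\mathscr{D}$ from \Cref{lem:momentumcoupling}, together with the high-probability bound $\|\xi_k\|\le\mathscr{X}$, all cross terms are absorbed into the error budget $O(\eta\mathscr{X}(\sqrt{\mathscr{L}}+\mathscr{M}+\mathscr{X}^2))$. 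What survives is precisely $\tfrac12\nabla^3L(\xi_k,\xi_k)-\lambda\nabla R$.

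Finally, subtract the momentum Ornstein--Uhlenbeck recurrence $\xi_{k+1}=(I-\eta G+\beta I)\xi_k-\beta\xi_{k-1}+\epsilon_k^*$ from the recurrence for $v_k$ to cancel $\epsilon_k^*$ and obtain
\[
r_{k+1} = (I-\eta G+\beta I)r_k - \beta r_{k-1} - \eta\bigl[\tfrac12\nabla^3L(\xi_k,\xi_k) - \lambda\nabla R\bigr] + m_k + z_k + O\bigl(\eta\mathscr{X}(\sqrt{\mathscr{L}}+\mathscr{M}+\mathscr{X}^2)\bigr).
\]
Stacking this into the companion-form block update with $\bar r_k=\cvec{r_k}{r_{k-1}}$ realizes the first block row as the scalar recurrence above and the second block row as the trivial shift $r_k\mapsto r_k$, which is exactly the action of $A$. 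The forcing term enters only in the top block, giving the $J(\cdots)$ structure in the statement.

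The main obstacle is purely bookkeeping. Because momentum makes the dynamics a \emph{second}-order recurrence, one must be careful that the Taylor remainder is controlled uniformly in $k$: the quantity $\|\theta_k-\theta^*\|\le\|\xi_k\|+\|r_k\|+\|\Phi_k(\theta^*)-\theta^*\|=O(\mathscr{X})$ must be invoked at each step, so that the cubic Taylor remainder $O(\eta\|\theta_k-\theta^*\|^3)=O(\eta\mathscr{X}^3)$ and the analogous remainder from expanding $\nabla R(\Phi_k(\theta^*))$ both fit into the stated error. Once this is verified, the block-matrix assembly is immediate from the companion form of the scalar recurrence for $r_k$.
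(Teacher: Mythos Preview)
Your proposal is correct and follows essentially the same approach as the paper: Taylor expand both the momentum SGD update and the regularized momentum trajectory to second order at $\theta^*$, subtract to obtain a second-order recurrence for $v_k$ with the extra $\beta(v_k-v_{k-1})$ term, replace $\nabla^2 L$ by $G$ via \Cref{prop:hessiandecomp}, reduce the cubic term to $\tfrac12\nabla^3L(\xi_k,\xi_k)$ using $\theta_k-\theta^*=\xi_k+O(\mathscr{M})$, subtract the momentum Ornstein--Uhlenbeck recurrence for $\xi_k$, and finally stack into companion form. The paper's own proof is a two-line ``as before'' argument deferring entirely to \Cref{prop:sketch:taylor2}, so your more explicit bookkeeping is a faithful (and more detailed) rendering of the same idea.
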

\begin{proof}
	As before we have that
	\begin{align}
		v_{k+1} &= (I - \eta G)v_k - \eta \left[\frac{1}{2}\nabla^3 L(\xi_k,\xi_k) - \lambda \nabla R \right] + \epsilon_k^* + m_k + z_k \notag \\
		&\qquad + O(\eta \mathscr{X}(\sqrt{\mathscr{L}} + \mathscr{M} + \mathscr{X}^2)) + \beta(v_k - v_{k-1})
	\end{align}
	and subtracting the definition of $\xi_k$ proves the top block of the proposition. The bottom block is equivalent to the identity $r_k = r_k$.
\end{proof}

\begin{proposition}
	\begin{align}
	r_{t+1} = -\eta \sum_{k \le t} B_{t-k} \left[\frac{1}{2}\nabla^3 L(\xi_k,\xi_k) - \lambda \nabla R\right] + O\left(\sqrt{\eta \lambda t}\left(\sqrt{\mathscr{L}}+\mathscr{X}\right) + \eta t \mathscr{X}\left(\sqrt{\mathscr{L}} + \mathscr{M} + \mathscr{X}^2\right)\right).
	\end{align}
\end{proposition}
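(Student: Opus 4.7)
The plan is to unroll the block recurrence from the preceding proposition, project back to the first block via $J^T$, and then separately handle the three summands in the forcing term. Introducing the shorthand
\begin{align*}
e_k = -\eta\left[\tfrac{1}{2}\nabla^3 L(\xi_k,\xi_k) - \lambda \nabla R\right] + m_k + z_k + O(\eta \mathscr{X}(\sqrt{\mathscr{L}} + \mathscr{M} + \mathscr{X}^2)),
\end{align*}
the preceding proposition says $\bar r_{k+1} = A\bar r_k + J e_k$. Since $\bar r_0 = 0$, telescoping gives $\bar r_{t+1} = \sum_{k \le t} A^{t-k} J e_k$, and applying $J^T$ yields $r_{t+1} = \sum_{k \le t} B_{t-k} e_k$. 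The $-\eta[\tfrac{1}{2}\nabla^3 L(\xi_k,\xi_k) - \lambda \nabla R]$ piece is precisely the structured term in the claimed identity, so the remaining work is to bound the contributions of $m_k + z_k$ and of the deterministic $O(\cdot)$ remainder.

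For the deterministic remainder I would use a uniform operator-norm bound $\|B_j\| = O(1)$, a momentum analog of the trivial bound on $(I-\eta G)^j$; summing over $t+1$ indices gives a contribution of $O(\eta t \mathscr{X}(\sqrt{\mathscr{L}} + \mathscr{M} + \mathscr{X}^2))$, matching the second error term in the display. For the martingale $\sum_{k \le t} B_{t-k}(m_k + z_k)$, I would use the per-step bounds $\|m_k\|^2 = O(\eta\lambda(\mathscr{L} + \mathscr{X}^2))$ (from $\|\nabla L^{(k)} - \nabla L\|^2 = O(L(\theta_k)/B)$ together with $\lambda = \eta\sigma^2/B$) and $\|z_k\|^2 = O(\eta\lambda \mathscr{X}^2)$ (from Lipschitzness of $\nabla f_i$), and the fact that both are mean zero conditional on $\mathcal{F}_{k-1}$. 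Combining with $\|B_{t-k}\|=O(1)$, the quadratic variation is at most $O(\eta \lambda t (\mathscr{L} + \mathscr{X}^2))$, so \Cref{lem:azuma} yields, with probability $\ge 1-2de^{-\iota}$, a bound of $O(\sqrt{\eta \lambda t \,\iota}\,(\sqrt{\mathscr{L}} + \mathscr{X}))$, which is the first error term (absorbing the $\sqrt\iota$ into the implicit constants).

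The main obstacle is establishing the required control on $B_j = J^T A^j J$. In an eigenbasis of $G$, $A$ block-diagonalizes into $2\times 2$ blocks
\begin{align*}
A_i = \begin{bmatrix} 1 - \eta \lambda_i + \beta & -\beta \\ 1 & 0\end{bmatrix},
\end{align*}
whose characteristic polynomial $\mu^2 - (1-\eta\lambda_i+\beta)\mu + \beta$ has roots with product $\beta$. The delicate point is verifying that the extended stability condition $\eta \le (2-\nu)(1+\beta)/\ell$ forces both roots of every block to lie strictly inside the unit disk with a uniform spectral margin, which gives the uniform bound $\|B_j\| = O(1)$ (and in fact decay bounds analogous to those of \Cref{sec:contract}, needed in subsequent steps). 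Once this block-diagonalized analysis is in hand, the remainder of the proof is a direct transcription of \Cref{prop:sketch:OUcov} with $(I-\eta G)^j$ replaced by $B_j$, and no new probabilistic ideas beyond \Cref{lem:azuma} are required.
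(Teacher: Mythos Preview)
Your approach is essentially the paper's: unroll $\bar r_{k+1}=A\bar r_k+Je_k$, hit with $J^T$ to get $r_{t+1}=\sum_{k\le t}B_{t-k}e_k$, then use a uniform bound $\|B_j\|=O(1)$ on the deterministic remainder and Azuma on the $m_k,z_k$ martingale, exactly as in the non-momentum case.

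One correction worth flagging: your proposed route to $\|B_j\|=O(1)$ via a uniform spectral margin does not quite work. For $\lambda_i=0$ the $2\times2$ block $A_i$ has eigenvalues $1$ and $\beta$, so one root sits exactly on the unit circle and there is no uniform gap. The paper instead proves the sharp bound $\|B_k\|\le \tfrac{1}{1-\beta}$ purely algebraically: writing $J_i^TA_i^kJ_i=s_{k+1}$ with $s_{k+1}=\sum_{j\le k}a_i^{k-j}b_i^{j}$ and using $|a_ib_i|=\beta$ together with the rearrangement inequality gives $|s_{k+1}|\le\sum_{j\le k}\beta^{j}\le\tfrac{1}{1-\beta}$, with no appeal to spectral radius at all. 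Once you swap in this argument for your spectral-margin step, the rest of your write-up goes through verbatim.
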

\begin{proof}
	We have from the previous proposition that
	\begin{align}
	\bar r_{t+1} = \sum_{k \le t} A^{t-k} J \left(- \eta \left[\frac{1}{2}\nabla^3 L(\xi_k,\xi_k) - \lambda \nabla R \right] + m_k + z_k + O(\eta \mathscr{X}(\sqrt{\mathscr{L}} + \mathscr{M} + \mathscr{X}^2))\right)	
	\end{align}
	so
	\begin{align}
	r_{t+1} = \sum_{k \le t} B_{t-k} \left(- \eta \left[\frac{1}{2}\nabla^3 L(\xi_k,\xi_k) - \lambda \nabla R \right] + m_k + z_k + O(\eta \mathscr{X}(\sqrt{\mathscr{L}} + \mathscr{M} + \mathscr{X}^2))\right).
	\end{align}
	By \Cref{cor:Bbound}, we know that $B_k$ is bounded by $\frac{1}{1-\beta}$ so the remainder term is bounded by $O(\eta t \mathscr{X}(\sqrt{\mathscr{L}} + \mathscr{M} + \mathscr{X}^2))$. Similarly, by the exact same concentration inequalities used in the proof of \Cref{prop:sketch:OUcov}, we have that the contribution of the $m_k,z_k$ terms is at most $O\left(\sqrt{\eta \lambda t}\left(\sqrt{\mathscr{L}}+\mathscr{X}\right)\right)$ which completes the proof.
\end{proof}

\begin{proposition}
	\begin{align}
		\eta \sum_{k \le t} B_{t-k} \left[\frac{1}{2}\nabla^3 L(\xi_k,\xi_k) - \lambda \nabla R\right] = O\left(\sqrt{\eta t} \mathscr{X}^2 + \eta t \mathscr{X} \sqrt{\mathscr{L}}\right).
	\end{align}
\end{proposition}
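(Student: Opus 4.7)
The plan mirrors the proof of \Cref{prop:sketch:OUcov} with the scalar contraction $(I-\eta G)^k$ replaced by the block operator $B_k = J^T A^k J$. First, identify the stationary second moment of $\xi_k$. Writing $\xi_k = \sum_{j<k} B_{k-j-1}\epsilon_j^*$ as in the preceding norm bound, let $\bar S := \eta\lambda(1-\beta)\sum_{j=0}^\infty B_j G B_j^T$. A direct computation in an eigenbasis of $G$ (equivalently, solving the Lyapunov-type identity that comes from projecting $\bar T = A\bar T A^T + \eta\lambda(1-\beta) JGJ^T$ onto its top-left block) gives $\bar S = \lambda \Pi_G\bigl(2 - \tfrac{\eta}{1+\beta} G\bigr)^{-1}$. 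With $h(x) = -\tfrac{1+\beta}{\eta}\log\bigl(1 - \tfrac{\eta}{2(1+\beta)}x\bigr)$ so that $h'(x) = \bigl(2 - \tfrac{\eta}{1+\beta}x\bigr)^{-1}$, the matrix chain rule used in \Cref{sec:regularizer} gives $\lambda\nabla R = \tfrac{1}{2}\nabla^3 L\bigl[\lambda(2-\tfrac{\eta}{1+\beta}\nabla^2 L)^{-1}\bigr]$, which by \Cref{prop:hessiandecomp} equals $\tfrac{1}{2}\nabla^3 L[\bar S]$ up to an $O(\eta\lambda\sqrt{\mathscr{L}})$ correction from replacing $\nabla^2 L$ by $G$. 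After summation this correction contributes precisely the $O(\eta t\,\mathscr{X}\sqrt{\mathscr{L}})$ term in the stated bound, and the task reduces to controlling $\eta\sum_{k\le t} B_{t-k}\,\tfrac{1}{2}\nabla^3 L[D_k]$ where $D_k := \xi_k\xi_k^T - \bar S$.

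Next, I would derive a recurrence for $D_k$ by lifting to the augmented state $\bar\xi_k = (\xi_k,\xi_{k-1})$. A short computation yields $\bar\xi_{k+1}\bar\xi_{k+1}^T - \bar T = A(\bar\xi_k\bar\xi_k^T - \bar T)A^T + W_k + Z_k$, where the cross term $W_k = A\bar\xi_k(J\epsilon_k^*)^T + J\epsilon_k^*\bar\xi_k^T A^T$ is a martingale difference conditional on $\bar\xi_k$, and $Z_k = J\epsilon_k^*(\epsilon_k^*)^T J^T - \eta\lambda(1-\beta)JGJ^T$ is a martingale difference in the label noise. Iterating and reading off the top-left block expresses $D_k$ as $J^T A^k (\bar\xi_0\bar\xi_0^T - \bar T)A^{Tk} J + \sum_{j<k} J^T A^{k-j-1}(W_j+Z_j)A^{T(k-j-1)} J$, with the boundary piece contributing only a geometric tail. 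Expanding $\tfrac{1}{2}\nabla^3 L[D_k] = \tfrac{1}{n}\sum_i H_i D_k g_i + \tfrac{1}{2} g_i \tr[D_k H_i] + O(\sqrt{\mathscr{L}}\mathscr{X}^2)$ exactly as in \Cref{prop:sketch:OUcov}, substituting into $\eta\sum_{k\le t} B_{t-k}\,\tfrac{1}{2}\nabla^3 L[D_k]$, and swapping the order of summation turns the inner sum over $k\in(j,t]$ into a martingale difference with respect to $\sigma\{\epsilon_l^* : l\le j\}$.

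The scalar trace term is bounded directly by the momentum analogue of \Cref{contract:sum1} applied to $\|B_{t-k}g_i\|$, yielding $O(\sqrt{\eta t}\,\mathscr{X}^2)$. For the $W$- and $Z$-contributions, the inner conditional sums factor through quadratic forms $g_l^T B_{\cdot} g_i$ for $l\in\mathcal{B}^{(j)}$ and are bounded by $O(\mathscr{X}/\eta)$ and $O(1/\eta)$ respectively by the momentum analogues of \Cref{contract:sumsqapart} and \Cref{contract:sumG}. Applying \Cref{lem:azuma} to each piece produces an overall high-probability bound of $O(\sqrt{\eta\lambda t\iota}\,\mathscr{X} + \eta\lambda\sqrt{t\iota}) = \tilde O(\sqrt{\eta t}\,\mathscr{X}^2)$ with probability $1-O(d)e^{-\iota}$, which combined with the earlier Gauss--Newton correction gives the claim.

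The main obstacle is establishing the block-matrix contraction bounds for $B_j = J^T A^j J$, i.e.\ the momentum counterparts of \Cref{contract:sum1}, \Cref{contract:sumsqapart}, and \Cref{contract:sumG}. In a joint eigenbasis of $G$, $A$ block-diagonalizes into $2\times 2$ blocks with characteristic polynomial $\mu^2 - (1-\eta\lambda_i+\beta)\mu + \beta$; under the hypothesis $\eta \le (2-\nu)(1+\beta)/\ell$, both roots lie strictly inside the unit disk for every nonzero eigenvalue $\lambda_i$ of $G$, and a direct spectral computation on each $2\times 2$ block reproduces the no-momentum bounds up to an overall factor of $1/(1-\beta)$, which is exactly the factor absorbed by the extra $(1-\beta)$ in the definition of $\lambda$. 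The one delicate regime is the double-root case $\eta\lambda_i = (1-\sqrt\beta)^2$, where a linear-in-$k$ prefactor appears in the closed form of $A^k$; keeping track of this prefactor to maintain the correct $\nu$-dependence is the most technical step, but it does not change the form of the final bound.
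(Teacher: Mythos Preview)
Your proposal is correct and follows essentially the same approach as the paper: the same $\bar S/S^*$ split with the $\nabla^2 L$ versus $G$ correction producing the $O(\eta t\,\mathscr{X}\sqrt{\mathscr{L}})$ term, the same lifted recurrence $D_{k+1}' = A D_k' A^T + W_k + Z_k$, the same expansion of $\nabla^3 L$ and swap of summations, and the same appeal to momentum contraction bounds plus Azuma for the $W$ and $Z$ martingales. One small note on the obstacle you flag: the paper sidesteps the double-root case entirely by writing $J_i^T A_i^k J_i = s_{k+1}$ with $s_k = \sum_{j<k} a_i^{k-1-j} b_i^j$ and bounding $|s_{k+1}| \le \sum_{j\le k} |a_i b_i|^j = \sum_{j\le k} \beta^j \le 1/(1-\beta)$ via the rearrangement inequality, which is uniform in the discriminant and avoids any case split.
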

\begin{proof}
	As in the proof of \Cref{prop:sketch:OUcov}, we define
	\begin{align}
		S^* = \lambda\left(2-\frac{\eta}{1+\beta}\nabla^2 L\right)^{-1},\qquad \bar S = \lambda\left(2-\frac{\eta}{1+\beta} G\right)^{-1}, \qqtext{and} S_k = \xi_k \xi_k^T.
	\end{align}
	Then note that $\nabla R = \frac{1}{2}\nabla^3 L(S^*)$ so it suffices to bound
	\begin{align}
		\eta \sum_{k \le t} B_{t-k} \nabla^3 L(S_k - S^*).
	\end{align}
	As before we can decompose this as
	\begin{align}
		\eta \sum_{k \le t} B_{t-k} \nabla^3 L(S_k - \bar S) + \eta \sum_{k \le t} B_{t-k} \nabla^3 L(\bar S - S^*).
	\end{align}
	We will begin by bounding the second term. Note that
	\begin{align}
		\eta \sum_{k \le t} B_{t-k} \nabla^3 L(\bar S - S^*) = O(\eta \|\bar S - S^*\|).
	\end{align}
	We can rewrite this as
	\begin{align}
		S^* - \bar S = \lambda \left[(2-\eta \nabla^2 L)^{-1}\left((2-\eta G)-(2-\eta \nabla^2 L)\right)(2-\eta G)^{-1}\right] = O(\eta \lambda \sqrt{\mathscr{L}})
	\end{align}
	so this difference contributes at most $O(\eta^2 \lambda t \sqrt{\mathscr{L}}) = O(\eta t \mathscr{X} \sqrt{\mathscr{L}})$. For the first term, let $D_k = S_k - \bar S$. We will decompose $\nabla^3 L$ as before to get
	\begin{align}
		\frac{1}{n} \sum_{i=1}^n \eta \sum_{k \le t} B_{t-k} \left[H_i D_k g_i + \frac{1}{2} g_i \tr(D_k H_i) + O(\sqrt{\mathscr{L}}\mathscr{X}^2)\right].
	\end{align}
	The third term can be bound by the triangle inequality by \Cref{cor:Bbound} to get $O(\eta t \sqrt{\mathscr{L}}\mathscr{X}^2)$. The second term can be bound by \Cref{prop:momentum:contractsum1} to get $O(\sqrt{\eta t}\mathscr{X}^2)$. 
	
	The final remaining term is the first term. Define
	\begin{align}
	\bar S' = \lambda \begin{bmatrix} \bar S & (I - \frac{\eta}{1+\beta}G) \bar S \\ (I - \frac{\eta}{1+\beta}G) \bar S & \bar S \end{bmatrix}.
	\end{align}
	From the proof of \Cref{prop:momentumlimit}, we can see that $\bar S'$ satisfies
	\begin{align}
		\bar S' = A \bar S' A^T + (1-\beta)\eta \lambda J G J^T.
	\end{align}
	We also have:
	\begin{align}
		\bar \xi_{k+1} = A \bar \xi_k + J \epsilon_k^*
	\end{align}
	so
	\begin{align}
		\bar \xi_{k+1}\bar \xi_{k+1}^T = A \bar \xi_k \bar \xi_k^T A^T + J \epsilon_k^* \bar \xi_k^T A^T + A \bar \xi_k (\epsilon_k^*)^T J^T + J \epsilon_k^* \epsilon_k^* J^T.
	\end{align}
	Let $D_k' = \bar \xi_k \bar \xi_k^T - \bar S'$. Then,
	\begin{align}
		D_{k+1}' = AD_k' A^T + W_k + Z_k
	\end{align}
	where $W_k = J \epsilon_k^* \bar \xi_k^T A^T + A \bar \xi_k (\epsilon_k^*)^T J^T$ and $Z_k = J [\epsilon_k^* \epsilon_k^*-(1-\beta)\eta \lambda G] J^T$.
	Then,
	\begin{align}
		D_k' =  A^k \bar S' A^k + \sum_{j < k} A^{k-j-1} [W_j + Z_j] (A^T)^{k-j-1}
	\end{align}
	so
	\begin{align}
		D_k =  J^T A^k \bar S' A^k J + \sum_{j < k} J^T A^{k-j-1} [W_j + Z_j] (A^T)^{k-j-1} J.
	\end{align}
	Plugging this into the first term, which we have not yet bounded, we get
	\begin{align}
		\frac{1}{n} \sum_{i=1}^n \eta \sum_{k \le t} B_{t-k} H_i \left[J^T A^k \bar S' A^k J + \sum_{j < k} J^T A^{k-j-1} [W_j + Z_j] (A^T)^{k-j-1} J\right] g_i.
	\end{align}
	For the first term in this expression we can use \Cref{prop:momentum:contractsum1} to bound it by $O(\sqrt{\eta t} \lambda) \le O(\sqrt{\eta t}\mathscr{X}^2)$. Therefore we are just left with the second term. Changing the order of summation gives
	\begin{align}
		\eta\frac{1}{n} \sum_{i=1}^n \sum_{j \le t} \sum_{k = j+1}^t B_{t-k} H_i J^T A^{k-j-1} (W_j + Z_j) (A^T)^{k-j-1} J g_i.
	\end{align}
	Recall that $\epsilon_j^* = \frac{\eta}{B} \sum_{l \in \mathcal{B}^{(j)}} \epsilon^{(j)}_l g_l$. First, isolating the inner sum for the $W$ term, we get
	\begin{align}
		&\sum_{k = j+1}^t B_{t-k} H_i J^T A^{k-j} \bar \xi_j (\epsilon_j^*)^T J^T A^{k-j-1} J g_i \\
		&\qquad+ \sum_{k = j+1}^t B_{t-k} H_i J^T A^{k-j-1} J \epsilon_j^* \bar \xi_j^T A^{k-j} J g_i. \notag \\
		&= \frac{\eta}{B} \sum_{l \in \mathcal{B}^{(j)}} \epsilon^{(j)}_l \Bigl[\sum_{k = j+1}^t B_{t-k} H_i J^T A^{k-j} \bar \xi_j g_l^T B_{k-j-1} g_i \label{eq:momentum:OUcov:W}\\
		&\qquad+ \sum_{k = j+1}^t B_{t-k} H_i B_{k-j-1} g_l \bar \xi_j^T A^{k-j} J g_i\Bigr] \notag.
	\end{align}
	The inner sums are bounded by $O(\mathscr{X}\eta^{-1})$ by \Cref{prop:momentum:contractsumapart}. Therefore by \Cref{lem:azuma}, with probability at least $1-2de^{-\iota}$, the contribution of the $W$ term in \Cref{eq:wjzj} is at most $O(\sqrt{\eta \lambda k \iota}\mathscr{X}) = O(\sqrt{\eta k}\mathscr{X}^2)$. The final remaining term to bound is the $Z$ term in \eqref{eq:wjzj}. We can write the inner sum as
	\begin{align}
		\frac{\eta\lambda(1-\beta)}{B^2}\sum_{k = j+1}^t B_{t-k} H_i J^T A^{k-j-1} J \left(\frac{1}{\sigma^2}\sum_{l_1,l_2 \in \mathcal{B}^{(k)}} \epsilon^{(j)}_{l_1} \epsilon^{(j)}_{l_2} g_{l_1} g_{l_2}^T  - G\right) B_{k-j-1} g_i
	\end{align}
	which by \Cref{prop:momentum:contractsumapart} is bounded by $O(\lambda)$. Therefore by \Cref{lem:azuma}, with probability at least $1-2de^{-\iota}$, the full contribution of $Z$ is $O(\eta \lambda \sqrt{t \iota}) = O(\sqrt{\eta t}\mathscr{X}^2)$.
\end{proof}

Putting all of these bounds together we get with probability at least $1-10de^{-\iota}$,
	\begin{align}
		\|r_{t+1}\| &= O\left[\sqrt{\eta \mathscr{T}}\mathscr{X}(\sqrt{\mathscr{L}} + \mathscr{X}) + \eta \mathscr{T} \mathscr{X}(\sqrt{\mathscr{L}} + \mathscr{M} + \mathscr{X}^2)\right] \\
		&= O\left(\frac{\lambda^{1/2+\delta/2}\iota}{\sqrt{c}}\right) \le \mathscr{D}
	\end{align}
	for sufficiently large $c$ which completes the induction.

\subsection{Momentum Contraction Bounds}
Let $u_i,\lambda_i$ be the eigenvectors and eigenvalues of $G$. Consider the basis $\bar U$ of $\mathbb{R}^{2d}$: $[u_1,0],[0,u_1],\ldots,[u_d,0],[0,u_d]$. Then in this basis, $A,J$ are block diagonal matrix with $2 \times 2$ and $2 \times 1$ diagonal blocks:
\begin{align}
A_i = \begin{bmatrix}
 1 - \eta \lambda_i	+ \beta & -\beta \\ 1 & 0
 \end{bmatrix} \qqtext{and} J_i = \begin{bmatrix} 1 \\ 0 \end{bmatrix}.
\end{align}
Let the eigenvalues of $A_i$ be $a_i,b_i$ so
\begin{align}
	a_i = \frac{1}{2} \left(1 - \eta \lambda_i + \beta +\sqrt{(1 - \eta \lambda_i + \beta)^2-4 \beta }\right) && b_i = \frac{1}{2} \left(1 - \eta \lambda_i + \beta -\sqrt{(1 - \eta \lambda_i + \beta)^2-4 \beta }\right).
\end{align}
Note that these satisfy $a_i + b_i = 1 - \eta \lambda_i + \beta$ and $a_i b_i = \beta$.
\begin{proposition}
If $\eta \in (0,\frac{2(1+\beta)}{\ell})$, then $\rho(A_i) \le 1$. If $\lambda_i \ne 0$ then $\rho(A_i) < 1$.
\end{proposition}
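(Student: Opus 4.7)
The plan is to directly analyze the roots of the characteristic polynomial of $A_i$ using the Schur--Cohn (a.k.a.\ Jury) stability criterion for degree-two polynomials. The characteristic polynomial is
\begin{equation*}
p(x) = x^2 - (1 - \eta\lambda_i + \beta)\, x + \beta = x^2 - s\, x + \beta,
\end{equation*}
where I abbreviate $s = 1 - \eta\lambda_i + \beta$. So the two eigenvalues $a_i, b_i$ satisfy $a_i + b_i = s$ and $a_ib_i = \beta$.

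The first step is to split into cases based on the sign of the discriminant $\Delta = s^2 - 4\beta$. If $\Delta < 0$, the eigenvalues are complex conjugates, so $|a_i|^2 = a_i\bar a_i = a_ib_i = \beta$, giving $\rho(A_i) = \sqrt{\beta} < 1$ (using the standard assumption $\beta \in [0,1)$). If $\Delta \ge 0$, both eigenvalues are real, and a short calculation shows $\rho(A_i) = \max(|a_i|,|b_i|) = \tfrac{1}{2}(|s| + \sqrt{\Delta})$. The inequality $\rho(A_i) \le 1$ is then equivalent (after rearranging $\sqrt{\Delta} \le 2 - |s|$ and squaring, which is valid once $|s| \le 2$) to the single scalar condition $|s| \le 1 + \beta$, and strict $\rho(A_i) < 1$ corresponds to strict $|s| < 1+\beta$.

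The second step is to verify $|s| \le 1 + \beta$ under the hypothesis $\eta < \frac{2(1+\beta)}{\ell}$, together with the fact $0 \le \lambda_i \le \ell$ (since $\lambda_i$ is an eigenvalue of the Gauss--Newton matrix $G$, which is PSD, and $\|G\|_2 \le \ell$ by the definition of $\ell$). The upper bound is immediate: $s = 1 + \beta - \eta\lambda_i \le 1 + \beta$ with equality iff $\lambda_i = 0$. The lower bound is $s \ge 1 + \beta - \eta\ell > 1 + \beta - 2(1+\beta) = -(1+\beta)$, which is always strict by the assumed open interval for $\eta$. Combining, $|s| \le 1 + \beta$ unconditionally, and $|s| < 1 + \beta$ as soon as $\lambda_i > 0$; this yields the two halves of the claim. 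The case $\lambda_i = 0$ is the boundary: then $p(x) = (x-1)(x-\beta)$ and $\rho(A_i) = 1$, consistent with the weak inequality.

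There is no real obstacle here beyond careful bookkeeping in the case split; the only thing one has to be slightly careful about is the complex case, which requires $\beta < 1$ to guarantee $\rho(A_i) < 1$ (standard for heavy-ball momentum), and the boundary real case where $\lambda_i = 0$ forces one eigenvalue to be exactly $1$. The whole argument is essentially a one-paragraph check once the characteristic polynomial has been written down.
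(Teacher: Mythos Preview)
Your proof is correct and follows essentially the same approach as the paper: both split on the sign of the discriminant, handle the complex case via $|a_i||b_i|=\beta$, and in the real case reduce $\rho(A_i)\le 1$ to the scalar condition $|s|\le 1+\beta$. The only cosmetic difference is that the paper invokes the symmetry $\eta\lambda_i\mapsto 2(1+\beta)-\eta\lambda_i$ (equivalently $s\mapsto -s$) to assume $s\ge 0$ before bounding the larger root, whereas you work with $|s|$ directly; the computations are otherwise identical.
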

\begin{proof}
	First, if $(1 - \eta \lambda_i + \beta)^2 - 4B \le 0$ then $\abs{a_i} = \abs{b_i} = \sqrt{\beta} < 1$ so we are done. Otherwise, we can assume WLOG that $\eta \lambda_i < 1+\beta$ because $\rho(A_i)$ remains fixed by the transformation $\eta \lambda_i \to 2(1+\beta) - \eta\lambda_i$. Then $a_i > b_i > 0$ so it suffices to show $a_i < 1$. Let $x = 1 - \eta \lambda_i + \beta$. Then,
	\begin{align}
		\frac{x + \sqrt{x^2-4\beta}}{2} < 1 &\iff \sqrt{x^2-4\beta} < 2-x \iff x < 1+\beta.
	\end{align}
	and similarly for $\le$ in place of $<$ so we are done.
\end{proof}

\begin{proposition} Let $s_k = \sum_{j < k} a_i^{k-j-1} b_i^j$. Then,
\begin{align}
	A_i^k = \begin{bmatrix} s_{k+1} & -\beta s_k \\ s_k & -\beta s_{k-1} \end{bmatrix}.
\end{align}	
\end{proposition}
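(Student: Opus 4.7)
The proof strategy is induction on $k \ge 1$, leveraging the Vieta-like identities $a_i+b_i = 1-\eta\lambda_i+\beta$ and $a_i b_i = \beta$ that characterize the eigenvalues of $A_i$.

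For the base case $k=1$, I would verify the formula directly. The sum definition gives $s_0=0$ (empty sum), $s_1 = a_i^{0}b_i^{0}=1$, and $s_2 = a_i+b_i = 1-\eta\lambda_i+\beta$, so the claimed matrix is exactly
\[
\begin{bmatrix} 1-\eta\lambda_i+\beta & -\beta \\ 1 & 0 \end{bmatrix} = A_i.
\]

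For the inductive step I would expand $A_i^{k+1} = A_i \cdot A_i^k$ and apply the inductive hypothesis. Because the bottom row of $A_i$ is $(1,0)$, the bottom row of the product is automatically the top row of $A_i^k$, namely $(s_{k+1},\, -\beta s_k)$, which already matches the claimed formula for $A_i^{k+1}$. The top row of the product collapses into two linear combinations of the shape $(a_i+b_i)\,s_m - a_i b_i\,s_{m-1}$, so the whole induction reduces to establishing the three-term recurrence
\[
s_{m+1} \;=\; (a_i+b_i)\,s_m \;-\; a_i b_i\,s_{m-1},
\]
applied at $m=k+1$ (yielding the $(1,1)$ entry $s_{k+2}$) and at $m=k$ (yielding the $(1,2)$ entry $-\beta s_{k+1}$). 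This recurrence is the standard linear recursion with characteristic roots $a_i,b_i$; it can be proved in one line either from the closed form $s_m = (a_i^{m}-b_i^{m})/(a_i-b_i)$ when $a_i\ne b_i$, or uniformly from the sum definition by splitting $(a_i+b_i)s_m$ into two re-indexed copies of $s_m$ and cancelling against $a_i b_i s_{m-1}$; the latter route handles the degenerate case $a_i=b_i$ (where $s_m = m a_i^{m-1}$) without any case split.

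The only minor subtlety is the interpretation of $s_{-1}$ if one tries to include $k=0$; starting the induction at $k=1$ avoids this, and if desired $A_i^0=I$ is recovered via the convention $s_{-1} = -1/\beta$, which is consistent with the closed form. Otherwise there is no real obstacle — this is a bookkeeping identity that rewrites powers of the $2\times 2$ momentum update matrix in a form amenable to the contraction bounds used immediately afterwards for $B_j = J^T A^j J$.
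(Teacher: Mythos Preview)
Your proposal is correct and follows essentially the same approach as the paper: induction on $k$, verifying the base case from $s_0=0$, $s_1=1$, $s_2=a_i+b_i$, and reducing the inductive step to the three-term recurrence $(a_i+b_i)s_m - a_ib_i s_{m-1} = s_{m+1}$. The only cosmetic difference is that the paper multiplies on the right ($A_i^{k+1}=A_i^k\cdot A_i$) whereas you multiply on the left, but the computation and the key identity are identical.
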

\begin{proof}
	We proceed by induction on $k$. The base case is clear as $s_2 = a_i + b_i = 1 - \eta \lambda_i + \beta$, $s_1 = 1$, and $s_0 = 0$. Now assume the result for some $k \ge 0$. Then,
	\begin{align}
		A_i^{k+1} = \begin{bmatrix} s_{k+1} & -\beta s_k \\ s_k & -\beta s_{k-1} \end{bmatrix} \begin{bmatrix} a_i + b_i & -\beta \\ 1 & 0 \end{bmatrix} = \begin{bmatrix} s_{k+1} & -\beta s_k \\ s_k & -\beta s_{k-1} \end{bmatrix}
	\end{align}
	because $(a_i + b_i) s_k - \beta s_{k-1} = (a_i + b_i) s_k - a_ib_i s_{k-1} = s_{k+1}$.
\end{proof}
\begin{proposition}
	\begin{align}
		\abs{J_i^T A_i^k J_i} \le \frac{1}{1-\beta}.
	\end{align}
\end{proposition}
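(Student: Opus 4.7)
The plan is to identify $J_i^T A_i^k J_i$ with the $(1,1)$ entry of $A_i^k$, which by the immediately preceding proposition equals $s_{k+1}=\sum_{j=0}^k a_i^{k-j}b_i^j$. The earlier spectral-radius proposition gives $|a_i|,|b_i|\le 1$, and by assumption $a_i b_i=\beta\in[0,1)$. I would split into two cases based on whether the discriminant $(1-\eta\lambda_i+\beta)^2-4\beta$ is negative or non-negative.

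If the discriminant is negative, $a_i$ and $b_i$ are complex conjugates with $|a_i|=|b_i|=\sqrt\beta$, and the triangle inequality gives $|s_{k+1}|\le (k+1)\beta^{k/2}$. To conclude, it suffices to show $(k+1)(1-\beta)\beta^{k/2}\le 1$ for every $k\ge 0$ and $\beta\in[0,1)$. Maximizing the left-hand side in $\beta$ yields the critical point $\beta^{\ast}=k/(k+2)$ and maximum value $\tfrac{2(k+1)}{k+2}\bigl(\tfrac{k}{k+2}\bigr)^{k/2}$; this is exactly $1$ at $k=0$ and one checks it is strictly less than $1$ for each $k\ge 1$ (approaching $2/e<1$ as $k\to\infty$).

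If the discriminant is non-negative, $a_i$ and $b_i$ are real with common sign. Replacing $(a_i,b_i)$ by $(|a_i|,|b_i|)$ multiplies $s_{k+1}$ only by $(-1)^k$, so I may assume $a_i,b_i\in[0,1]$ with $a_ib_i=\beta$. Parametrizing by $a_i\in[\sqrt\beta,1]$ and $b_i=\beta/a_i$ gives $s_{k+1}=h(a_i)$ with $h(a):=\sum_{j=0}^k a^{k-2j}\beta^j$. Pairing the $j$-th and $(k-j)$-th summands of $h'(a)$ and writing $m=k-2j$ yields
\[
h'(a)=\sum_{0\le j<k/2} m\,\beta^j\,a^{-m-1}\bigl[(a^2)^m-\beta^m\bigr],
\]
each summand of which is non-negative on $[\sqrt\beta,\infty)$ because $a^2\ge\beta$. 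Hence $h$ is non-decreasing on $[\sqrt\beta,1]$, and
\[
s_{k+1}\;\le\; h(1)\;=\;\sum_{j=0}^k \beta^j\;=\;\frac{1-\beta^{k+1}}{1-\beta}\;\le\;\frac{1}{1-\beta}.
\]

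The main technical step is the monotonicity argument in the real case: the terms of $h'(a)$ have signs alternating with $j$ around the midpoint $k/2$, and it is only after pairing $j$ with $k-j$ that the common factor $(k-2j)$ combines with $[(a^2)^{k-2j}-\beta^{k-2j}]$ to make each summand manifestly non-negative on the interval of interest. The complex case reduces to a one-variable calculus maximization that is routine, and in fact reveals that the bound $\tfrac{1}{1-\beta}$ has slack (of order $2/e$) for complex conjugate eigenvalues; the real extremizer $(a,b)=(1,\beta)$ is what pins the constant $\tfrac{1}{1-\beta}$.
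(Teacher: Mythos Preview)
Your argument is correct. The paper's proof takes a different and considerably shorter route: after the triangle inequality $|s_{k+1}|\le\sum_{j=0}^k|a_i|^{k-j}|b_i|^j$, it invokes the rearrangement inequality---since $|a_i|,|b_i|\le 1$, the sequence $\{|a_i|^{k-j}\}_j$ is increasing in $j$ while $\{|b_i|^j\}_j$ is decreasing, so this ``oppositely sorted'' sum is bounded above by the ``same-sorted'' pairing $\sum_{j}|a_i|^j|b_i|^j=\sum_{j}\beta^j\le\tfrac{1}{1-\beta}$. This single step covers the real and complex cases uniformly, with no case split and no calculus. Your real-case monotonicity argument reaches the same extremal value $\sum_j\beta^j$ by instead maximizing over $a$ subject to $ab=\beta$, and your complex case needs a separate optimization (whose ``one checks'' step, that $\tfrac{2(k+1)}{k+2}\bigl(\tfrac{k}{k+2}\bigr)^{k/2}<1$ for all $k\ge 1$, is true but would benefit from a line of justification). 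What your approach does buy is the explicit observation that the bound is only saturated at the real extremizer $(a,b)=(1,\beta)$ and carries uniform slack of order $2/e$ in the complex-conjugate regime.
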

\begin{proof}
	From the above proposition,
	\begin{align}
		\abs{J_i^T A_i^k J_i} \le \sup_k \abs{s_{k+1}}.
	\end{align}
	Then for any $k$,
	\begin{align}
		\abs{s_{k+1}} = \abs{\sum_{j \le k} a_i^{k-j} b_i^j} \le \sum_{j \le k} \abs{a_i}^{k-j}\abs{b_i}^j \le \sum_{j \le k} \abs{a_i}^j \abs{b_i}^j = \sum_{j \le k} \beta^j \le \frac{1}{1-\beta}.
	\end{align}
	where the second inequality follows from the rearrangement inequality as $\{\abs{a_i}^{k-j}\}_j$ is an increasing sequence and $\{\abs{b_i}^j\}_j$ is a decreasing sequence.
\end{proof}

\begin{corollary}\label{cor:Bbound}
	\begin{align}
		\|B_k\|_2 \le \frac{1}{1-\beta}.
	\end{align}
\end{corollary}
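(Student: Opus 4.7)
The plan is to reduce the bound on the full $d \times d$ operator $B_k$ to the scalar bound established in the preceding proposition by simultaneously block-diagonalizing $A$ in an orthonormal basis adapted to $G$. The key observation is that although $B_k$ mixes the $d$ eigendirections of $G$ a priori, the specific block structure of $A$ (with each block depending only on $\eta \lambda_i$) decouples the problem into $d$ independent $2\times 2$ pieces.

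First I would let $u_1, \ldots, u_d$ be an orthonormal eigenbasis of $G$ with corresponding eigenvalues $\lambda_1, \ldots, \lambda_d$ and pass to the basis $\bar U = \{[u_1,0],[0,u_1],\ldots,[u_d,0],[0,u_d]\}$ of $\mathbb{R}^{2d}$ that is already set up in the text preceding the corollary. In this basis, $A$ is block diagonal with $2\times 2$ blocks $A_i = \begin{bmatrix} 1-\eta\lambda_i+\beta & -\beta \\ 1 & 0 \end{bmatrix}$, and $J$, viewed as a linear map from $(\mathbb{R}^d,\{u_i\}) \to (\mathbb{R}^{2d},\bar U)$, is block diagonal with $2\times 1$ blocks $J_i = \begin{bmatrix} 1 \\ 0 \end{bmatrix}$. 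Equivalently, one can note that $A^k$ has the form $\bigl[\begin{smallmatrix} \alpha_k(G) & \beta_k(G) \\ \gamma_k(G) & \delta_k(G) \end{smallmatrix}\bigr]$ for scalar polynomials $\alpha_k,\beta_k,\gamma_k,\delta_k$, so $B_k = J^T A^k J = \alpha_k(G)$ is a polynomial in $G$ and is hence diagonalized by $\{u_i\}$.

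Next, since both the change of basis on $\mathbb{R}^d$ to $\{u_i\}$ and the induced change on $\mathbb{R}^{2d}$ to $\bar U$ are orthogonal, spectral norms are preserved. In the basis $\{u_i\}$, $B_k$ is diagonal with the $i$-th entry equal to $J_i^T A_i^k J_i$, so
\[
\|B_k\|_2 = \max_{i} \bigl| J_i^T A_i^k J_i \bigr|.
\]
Finally I would invoke the preceding proposition, which bounds $|J_i^T A_i^k J_i| \le \frac{1}{1-\beta}$ for every $i$ via the explicit formula $s_{k+1} = \sum_{j \le k} a_i^{k-j} b_i^j$ together with $|a_i b_i|=\beta$ and the rearrangement inequality. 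Taking the maximum over $i$ yields $\|B_k\|_2 \le \frac{1}{1-\beta}$.

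There is no real obstacle here: the substantive work (the closed form for $A_i^k$ and the scalar bound on each block) is already carried out in the preceding propositions, and this corollary is essentially the bookkeeping step that lifts the scalar $2\times 2$ bound to the $d\times d$ operator via simultaneous diagonalization in the eigenbasis of $G$.
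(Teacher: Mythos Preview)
Your proposal is correct and is exactly the argument the paper has in mind: the corollary is stated without proof precisely because the block-diagonalization in the basis $\bar U$ has already been set up, so $B_k$ is diagonal in $\{u_i\}$ with $i$-th entry $J_i^T A_i^k J_i$, and the preceding proposition bounds each entry by $\frac{1}{1-\beta}$. There is nothing to add.
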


\begin{proposition}\label{prop:momentumlimit}
	\begin{align}
	\sum_{j=0}^\infty B_j G B_j^T = \frac{1}{\eta (1-\beta)} \Pi_G \left(2-\frac{\eta}{(1+\beta)}G\right)^{-1}.
	\end{align}
\end{proposition}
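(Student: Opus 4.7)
The plan is to diagonalize $G$ and reduce to a scalar calculation in each eigendirection. Since $G$ is symmetric PSD, write $G = U \Lambda U^T$ with $\Lambda = \mathrm{diag}(\lambda_1, \ldots, \lambda_d)$. In the ordered basis $\{[u_i;0],[0;u_i]\}_{i=1}^d$ of $\mathbb{R}^{2d}$, the matrices $A$ and $J$ become block-diagonal with $2\times 2$ blocks $A_i$ and $2\times 1$ blocks $J_i$ exactly as defined in the preceding subsection. Consequently $B_j = J^T A^j J$ is diagonal in the eigenbasis of $G$ with diagonal entries $J_i^T A_i^j J_i$, so
\begin{align*}
\sum_{j=0}^\infty B_j G B_j^T = U \, \mathrm{diag}\!\left(\lambda_i \sum_{j=0}^\infty (J_i^T A_i^j J_i)^2\right)_{i=1}^d U^T.
\end{align*}
It therefore suffices to check the identity entry-by-entry: for $\lambda_i = 0$ the coefficient vanishes, which matches the projector $\Pi_G$ on the right-hand side, while for $\lambda_i \ne 0$ we must show $\lambda_i \sum_{j=0}^\infty (J_i^T A_i^j J_i)^2 = \frac{1}{\eta(1-\beta)(2 - \eta \lambda_i / (1+\beta))}$.

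For the nonzero case, I would invoke the previous proposition to get $J_i^T A_i^j J_i = s_{j+1}$ where $s_k = \sum_{\ell < k} a_i^{k-\ell-1} b_i^\ell$; when $a_i \ne b_i$ this equals $(a_i^k - b_i^k)/(a_i - b_i)$, and the degenerate case $a_i = b_i$ follows by continuity. Because $\lambda_i \ne 0$, the preceding proposition gives $\rho(A_i) < 1$, so $|a_i|, |b_i| < 1$ and all the geometric series involved converge absolutely. Expanding the square and summing gives
\begin{align*}
\sum_{j=0}^\infty s_{j+1}^2 = \frac{1}{(a_i-b_i)^2}\left(\frac{a_i^2}{1-a_i^2} - \frac{2 a_i b_i}{1-a_i b_i} + \frac{b_i^2}{1-b_i^2}\right).
\end{align*}
Setting $u = a_i+b_i = 1-\eta\lambda_i+\beta$ and $v = a_i b_i = \beta$ lets us rewrite everything symmetrically: $(a_i-b_i)^2 = u^2 - 4v$, $a_i^2+b_i^2 = u^2 - 2v$, $(1-a_i^2)(1-b_i^2) = 1 - u^2 + 2v + v^2$.

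The main algebraic step, and the only place that requires any care, is verifying the identity
\begin{align*}
(u^2 - 2v - 2v^2)(1-v) - 2v(1-u^2+2v+v^2) = (u^2-4v)(1+v),
\end{align*}
which lets the $(a_i-b_i)^2$ factor cancel between numerator and denominator, leaving $(1+v)/[(1-u^2+2v+v^2)(1-v)]$. I would then factor $1-u^2+2v+v^2 = (1+v-u)(1+v+u)$ and substitute $1+v-u = \eta\lambda_i$ and $1+v+u = (1+\beta)(2 - \eta\lambda_i/(1+\beta))$, together with $1-v = 1-\beta$ and $1+v = 1+\beta$, to obtain
\begin{align*}
\sum_{j=0}^\infty s_{j+1}^2 = \frac{1}{\eta \lambda_i (1-\beta)\bigl(2-\tfrac{\eta \lambda_i}{1+\beta}\bigr)}.
\end{align*}
Multiplying by $\lambda_i$ matches the claimed value, and reassembling the direct sum yields the proposition. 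The only conceptual subtlety is the $\Pi_G$ on the right: it appears precisely because the $\lambda_i = 0$ directions contribute zero to the left-hand side (despite $A_i$ not being contracting there) thanks to the factor $G$ sandwiched between $B_j$ and $B_j^T$.
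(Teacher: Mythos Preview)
Your argument is correct. You diagonalize $G$, reduce to the scalar quantity $\lambda_i\sum_{j\ge 0} s_{j+1}^2$, and evaluate it by expressing $s_{j+1}$ via the closed form $(a_i^{j+1}-b_i^{j+1})/(a_i-b_i)$ and summing the resulting geometric series; the symmetric-function algebra in $u=a_i+b_i$, $v=a_ib_i$ is clean and your key identity and the factorization $1-u^2+2v+v^2=(1+v-u)(1+v+u)$ check out.

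The paper reaches the same endpoint by a different mechanism. Instead of summing the series directly, it observes that the full $2\times 2$ block $S_i=\lambda_i\sum_{j\ge 0} A_i^j J_iJ_i^T (A_i^T)^j$ satisfies the discrete Lyapunov equation $S_i = A_i S_i A_i^T + \lambda_i J_iJ_i^T$, and solves this four-entry linear system to obtain all of $S_i$ at once; the $(1,1)$ entry is then exactly $\frac{1}{\eta(1-\beta)}\bigl(2-\tfrac{\eta}{1+\beta}\lambda_i\bigr)^{-1}$. This route avoids the case split $a_i\neq b_i$ versus $a_i=b_i$ (and the continuity patch), and as a byproduct yields the off-diagonal entries of the limiting covariance, which the paper reuses later when defining $\bar S'$ in the proof of the momentum coupling lemma. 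Your approach, by contrast, is more self-contained and elementary, needing only geometric series and the already-established formula $J_i^T A_i^k J_i = s_{k+1}$; the price is the explicit algebra and the handling of the repeated-root case.
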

\begin{proof} Consider $\sum_{j=0}^\infty A^j J G J^T (A^T)^j$. We will rewrite this expression in the basis $\bar U$. Then the $i$th diagonal block will be equal to
	\begin{align}
		\lambda_i \sum_{j=0}^\infty A_i^j J_i \lambda_i J_i^T (A_i^T)^j = \lambda_i \sum_{j=0}^\infty \begin{bmatrix}
 s_{j+1}^2 & s_j s_{j+1} \\ s_j s_{j+1} & s_j^2	
 \end{bmatrix}.
	\end{align}
If $\lambda_i = 0$ then this term is $0$. Otherwise, we know that $\abs{a_i},\abs{b_i} < 1$ so this infinite sum converges to some matrix $S = \begin{bmatrix}s_{11} & s_{12} \\ s_{21} & s_{22} \end{bmatrix}$. Then plugging this into the fixed point equation gives
\begin{align}
	S_i = A_i S_i A_i^T + J_i \lambda_i J_i^T
\end{align}
and solving this system entry wise for $s_{11},s_{12},s_{21},s_{22}$ gives
\begin{align}
	S_i = \frac{1}{\eta (1-\beta)} \begin{bmatrix}
		\frac{1}{2-\frac{\eta}{1+\beta}\lambda_i} & \frac{1+\beta-\eta\lambda_i}{2(1+\beta)-\eta \lambda_i} \\ \frac{1+\beta-\eta\lambda_i}{2(1+\beta)-\eta \lambda_i} & \frac{1}{2-\frac{\eta}{1+\beta}\lambda_i}.
	\end{bmatrix}	
\end{align}
Converting back to the original basis gives the desired result.
\end{proof}

\begin{proposition}\label{prop:momentum:contractsum1}
	\begin{align}\sum_{k < \tau} \|A^k J g_i\| = O\left(\sqrt{\frac{\tau}{\eta}}\right)	
	\end{align}
\end{proposition}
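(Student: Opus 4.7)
The plan is to follow the same template as the non-momentum bounds \Cref{contract:sum1,contract:sumsq}: use Cauchy--Schwarz to reduce the claim to the squared sum, and then bound the squared sum at the matrix (PSD) level. Concretely,
$$\sum_{k<\tau}\|A^k J g_i\| \;\le\; \sqrt{\tau}\Big(\sum_{k<\tau}\|A^k J g_i\|^2\Big)^{1/2},$$
so it suffices to prove $\sum_{k\ge 0}\|A^k J g_i\|^2 = O(1/\eta)$.

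For the squared sum I will use the PSD inequality $g_i g_i^T \preceq n G$ (immediate from $G=\tfrac{1}{n}\sum_j g_j g_j^T$), which after left-right conjugation by $J$ and $A^k$ gives
$$\sum_{k\ge 0}\|A^k J g_i\|^2 = \tr\Big(\sum_{k\ge 0}A^k J g_i g_i^T J^T (A^T)^k\Big) \;\le\; n\,\tr\Big(\sum_{k\ge 0}A^k J G J^T (A^T)^k\Big).$$
Thus everything reduces to bounding $\tr\big(\sum_{k\ge 0}A^k J G J^T (A^T)^k\big) = O(d/\eta)$.

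To evaluate this trace I will diagonalize $A$ in the basis $\bar U$ used earlier, where $A$ becomes block diagonal with $2\times 2$ blocks $A_j$, and $J G J^T$ becomes block diagonal with blocks $\lambda_j J_j J_j^T$. Blocks with $\lambda_j=0$ contribute zero, and for $\lambda_j>0$ we have $\rho(A_j)<1$, so $M_j := \sum_{k\ge 0} A_j^k J_j J_j^T (A_j^T)^k$ is the unique solution of the Lyapunov equation $M_j = A_j M_j A_j^T + J_j J_j^T$. Solving this $2\times 2$ system in exactly the same style as the closed-form computation in \Cref{prop:momentumlimit} (parameterizing by $a_j+b_j = 1-\eta\lambda_j+\beta$ and $a_j b_j = \beta$) yields
$$\tr(M_j) \;=\; \frac{2(1+\beta)}{(1-\beta)\,\eta \lambda_j\,[2(1+\beta)-\eta\lambda_j]}.$$
The assumption $\eta \le \frac{(2-\nu)(1+\beta)}{\ell}$ ensures $2(1+\beta)-\eta\lambda_j \ge \nu$, so $\lambda_j \tr(M_j) = O(1/(\eta\nu))$ uniformly in $j$. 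Summing over the $d$ blocks gives the claimed $O(d/\eta)$ trace bound, and combining with Cauchy--Schwarz yields the proposition.

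The only nonroutine step is this $2\times 2$ Lyapunov solve producing a denominator of order $\eta \lambda_j$ uniformly in $\beta$; this is the momentum analogue of the geometric series $\sum_k (1-\eta\lambda_j)^{2k} = O(1/(\eta\lambda_j\nu))$ that drives \Cref{contract:sumsq}, and the learning-rate separation assumption plays exactly the same role here as it does in the non-momentum weak contraction bounds. Everything else -- the Cauchy--Schwarz reduction, the $g_i g_i^T\preceq nG$ domination, and the block-diagonal reduction via $\bar U$ -- is mechanical.
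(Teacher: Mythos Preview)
Your proposal is correct and essentially identical to the paper's proof: the paper applies Cauchy--Schwarz, bounds $\|A^k J g_i\|^2$ by (a constant times) $\tr[A^k J G J^T (A^T)^k]$, and then invokes \Cref{prop:momentumlimit} directly to get the $O(\tau/\eta)$ bound. You redo the per-block Lyapunov solve yourself rather than citing \Cref{prop:momentumlimit}, but the computation is the same one carried out in that proposition's proof, and your closed form for $\tr(M_j)$ indeed agrees (after multiplying by $\lambda_j$) with the diagonal entries of the $S_i$ computed there.
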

\begin{proof}
	By Cauchy we have that
	\begin{align}
	\left(\sum_{k < \tau} \|A^k J g_i\|\right)^2 &\le \tau \sum_{k < \tau} \|A^k J g_i\|^2 \\
	&\le \tau \sum_{k < \tau} \tr[A^k J G J^T (A^k)^T] \\
	&\le O\left(\frac{\tau}{\eta}\right)
	\end{align}
	by \Cref{prop:momentumlimit}.
\end{proof}

\begin{proposition}\label{prop:momentum:contractsum2}
	\begin{align}
		\sum_{k < \tau} \|A^k J g_{i_k}\|^2 = O\left(\frac{1}{\eta}\right).
	\end{align}
\end{proposition}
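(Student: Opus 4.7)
The plan is to prove \Cref{prop:momentum:contractsum2} as the momentum analog of \Cref{contract:sumsq}, reusing the explicit spectral identity already established in the proof of \Cref{prop:momentumlimit} rather than rederiving it.

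First, I would rewrite each summand as a trace and apply the rank-one PSD bound $g_{i_k}g_{i_k}^T \preceq n G$, which is immediate from $G = \tfrac{1}{n}\sum_j g_j g_j^T$. This gives the index-free upper bound
\[
\|A^k J g_{i_k}\|^2 = \tr\bigl[A^k J g_{i_k}g_{i_k}^T J^T (A^T)^k\bigr] \le n\,\tr\bigl[A^k J G J^T (A^T)^k\bigr],
\]
which crucially no longer depends on the arbitrary index $i_k$.

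Second, since every remaining term is nonnegative, I can bound the partial sum over $k<\tau$ by the infinite sum and swap the trace with the summation:
\[
\sum_{k<\tau} \|A^k J g_{i_k}\|^2 \le n\,\tr\!\left[\sum_{k=0}^{\infty} A^k J G J^T (A^T)^k\right].
\]
The matrix inside the trace is precisely the $2d\times 2d$ quantity computed inside the proof of \Cref{prop:momentumlimit}. In the basis $\bar U$ it is block-diagonal with $i$-th block $S_i$ satisfying the Lyapunov equation $S_i = A_i S_i A_i^T + \lambda_i J_i J_i^T$, whose diagonal entries both equal $\tfrac{1}{\eta(1-\beta)(2 - \tfrac{\eta}{1+\beta}\lambda_i)}$ when $\lambda_i>0$, and $S_i = 0$ when $\lambda_i=0$. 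The hypothesis $\eta \le (2-\nu)(1+\beta)/\ell$ from \Cref{lem:momentumcoupling} forces $2 - \tfrac{\eta}{1+\beta}\lambda_i \ge \nu$ for all $\lambda_i \in (0,\ell]$, so $\tr(S_i) \le \tfrac{2}{\eta(1-\beta)\nu}$. Summing over the at most $d$ positive eigenvalues of $G$ then yields
\[
\tr\!\left[\sum_{k=0}^\infty A^k J G J^T (A^T)^k\right] \le \frac{2d}{\eta(1-\beta)\nu} = O(1/\eta),
\]
where the $O(\cdot)$ absorbs polynomial dependence on $n$, $d$, $\nu$, and $1/(1-\beta)$.

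No step is a genuine obstacle: all the nontrivial spectral algebra has already been performed inside the proof of \Cref{prop:momentumlimit}, and the only new ingredient is the elementary PSD domination $g_i g_i^T \preceq nG$ that removes the dependence on the arbitrary sequence $\{i_k\}$ and reduces the claim to a uniform trace bound on an already-analyzed operator.
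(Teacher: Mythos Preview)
Your proposal is correct and follows essentially the same approach as the paper: bound $g_{i_k}g_{i_k}^T \preceq nG$ to eliminate the arbitrary index, then control $\sum_k \tr[A^k J G J^T (A^T)^k]$ via the block-diagonal Lyapunov computation already carried out in the proof of \Cref{prop:momentumlimit}. The paper's printed proof contains evident typos (a stray factor of $\tau$ and a spurious square root in the final bound), but the intended argument is exactly the one you give, and your version is in fact the cleaner write-up.
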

\begin{proof}
	\begin{align}
		\sum_{k < \tau} \|A^k J g_i\|^2 \le \tau \sum_{k < \tau} \tr[A^k J G J^T (A^k)^T] \le O\left(\sqrt{\frac{1}{\eta}}\right).
	\end{align}
\end{proof}

\begin{proposition}\label{prop:momentum:contractsumapart}
	\begin{align}
		\sum_{k < \tau} \|A^k J g_{i_k}\|\|A^k J g_{j_k}\| = O\left(\frac{1}{\eta}\right).
	\end{align}
\end{proposition}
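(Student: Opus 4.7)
The plan is to obtain this bound as an immediate consequence of Cauchy--Schwarz together with the preceding \Cref{prop:momentum:contractsum2}, mirroring exactly how \Cref{contract:sumsqapart} is deduced from \Cref{contract:sumsq} in the momentum-free setting. The statement is a ``cross-term'' inequality: we want to control a sum of products $\|A^k J g_{i_k}\|\|A^k J g_{j_k}\|$ by the same order $O(1/\eta)$ that we would get if $i_k = j_k$.

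First I would apply the Cauchy--Schwarz inequality to the sum, separating the two families of indices $\{i_k\}$ and $\{j_k\}$:
\begin{align*}
\sum_{k < \tau} \|A^k J g_{i_k}\|\|A^k J g_{j_k}\|
\le \left[\sum_{k < \tau} \|A^k J g_{i_k}\|^2\right]^{1/2}\left[\sum_{k < \tau} \|A^k J g_{j_k}\|^2\right]^{1/2}.
\end{align*}
Each of the two factors on the right-hand side is exactly the quantity bounded in \Cref{prop:momentum:contractsum2}, which gives $\sum_{k<\tau}\|A^k J g_{i_k}\|^2 = O(1/\eta)$ uniformly over any choice of indices. Substituting this bound into both factors yields $O(\sqrt{1/\eta})\cdot O(\sqrt{1/\eta}) = O(1/\eta)$, which is the desired estimate.

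There is no real obstacle here: the propostion follows formally once \Cref{prop:momentum:contractsum2} is in hand, and \Cref{prop:momentum:contractsum2} in turn rests on the spectral identity \Cref{prop:momentumlimit}, which provides the summability $\sum_{j} B_j G B_j^T \preceq \frac{1}{\eta(1-\beta)}\Pi_G(2-\tfrac{\eta}{1+\beta}G)^{-1}$. The only point to check is that the Cauchy--Schwarz step is valid regardless of how $i_k$ and $j_k$ depend on $k$, which it is because we only pair the two terms indexwise. Thus the proof is essentially a one-line application of Cauchy--Schwarz followed by a citation of \Cref{prop:momentum:contractsum2}, in complete analogy with the proof of \Cref{contract:sumsqapart}.
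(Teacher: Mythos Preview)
Your proposal is correct and matches the paper's own proof essentially verbatim: the paper applies Cauchy--Schwarz to split the sum into two factors $\sum_{k}\|A^k J g_{i_k}\|^2$ and $\sum_{k}\|A^k J g_{j_k}\|^2$, then invokes \Cref{prop:momentum:contractsum2} to bound each by $O(1/\eta)$.
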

\begin{proof}
	\begin{align}
		\left(\sum_{k < \tau} \|A^k J g_{i_k}\|\|A^k J g_{j_k}\|\right)^2 \le \left(\sum_{k \le \tau}\|A^k J g_{i_k}\|^2 \right)\left(\sum_{k \le \tau}\|A^k J g_{j_k}\|^2 \right) = O(1/\eta^2)
	\end{align}
	by \Cref{prop:momentum:contractsum2}.
\end{proof}

\end{document}